\renewcommand \paragraph [1]{{\noindent\bf #1:}}
\newcommand{\edited}[1]{{\color{red}}}
\newtheorem{theorem}{Theorem}
\newtheorem{lemma}[theorem]{Lemma}
\newtheorem{definition}{Definition}
\newcommand \reals {\mathbb{R}}
\newcommand \NN {\operatorname{NN}}
\newcommand \norm [1]{\Vert #1 \Vert}
\newcommand \setc [2]{\{#1 \,:\, #2\}}
\newcommand \expect {\operatorname*{\mathbb{E}}}
\newcommand \prob {\operatorname*{\mathbb{P}}}
\newcommand \ind [1]{\mathbb{I}\{#1\}}
\newcommand \X {\mathcal{X}}
\newcommand \pdist {\mu}
\author{
Travis Dick \\
Carnegie Mellon University\\
\texttt{tdick@cs.cmu.edu}
\and
Mu Li \\
Carnegie Mellon University\\
\texttt{muli@cs.cmu.edu}
\and
Venkata Krishna Pillutla \\
University of Washington\\
\texttt{pillutla@cs.washington.edu}
\and
Colin White \\
Carnegie Mellon University\\
\texttt{crwhite@cs.cmu.edu}
\and
Maria Florina Balcan \\
Carnegie Mellon University\\
\texttt{ninamf@cs.cmu.edu}
\and
Alex Smola \\
Carnegie Mellon University\\
and AWS Deep Learning\\
\texttt{alex@smola.org}
}
\date{}
\begin{document}

\title{Data Driven Resource Allocation for Distributed Learning%
\footnote{
This work was supported in part by NSF grants CCF-1451177,
CCF-1422910, CCF-1535967, IIS-1618714, IIS-1409802, a Sloan Research
Fellowship, a Microsoft Research Faculty Fellowship, a Google Research
Award, Intel Research, Microsoft Research, and a National Defense Science
\& Engineering Graduate (NDSEG) fellowship.
}}

\maketitle

\begin{abstract}
In distributed machine learning, data is dispatched to multiple
machines for processing.
Motivated by the fact that similar data points often belong to the same
or similar classes, and more generally, classification rules of high accuracy
tend to be ``locally simple but globally complex''~\citep{VB-local}, we propose
data dependent dispatching that takes advantage of such structure.
We present an in-depth analysis of this model, providing new algorithms with
provable worst-case guarantees, analysis proving existing scalable heuristics
perform well in natural non worst-case conditions, and techniques for extending
a dispatching rule from a small sample to the entire distribution.
We overcome novel technical challenges
to satisfy important conditions for accurate distributed learning,
including fault tolerance and balancedness.
We empirically compare our approach with baselines based on random
partitioning, balanced partition trees, and locality sensitive
hashing, showing that we achieve significantly higher accuracy on
both synthetic and real world image and advertising datasets. We
also demonstrate that our technique strongly scales with the
available computing power.
\end{abstract}

\section{Introduction}
\label{sec:intro}
\paragraph{Motivation and Overview}
We consider distributed learning settings where massive amounts of
data are collected centrally, and for space and efficiency reasons
this data must be dispatched to distributed machines in order to
perform the processing needed~\citep{bbfm12,disClustering13,disPCA14,alexps,zdjw12,zdjw13}. The simplest
approach and the focus of past work (both theoretical and empirical) is to perform the dispatching
randomly~\citep{zdjw12,zdjw13}. Random dispatching has the advantage
that dispatching is easy, and because each machine receives data from
the same distribution, it is rather clean to analyze theoretically.

\begin{figure}
  \centering
\includegraphics[width=0.3\textwidth]{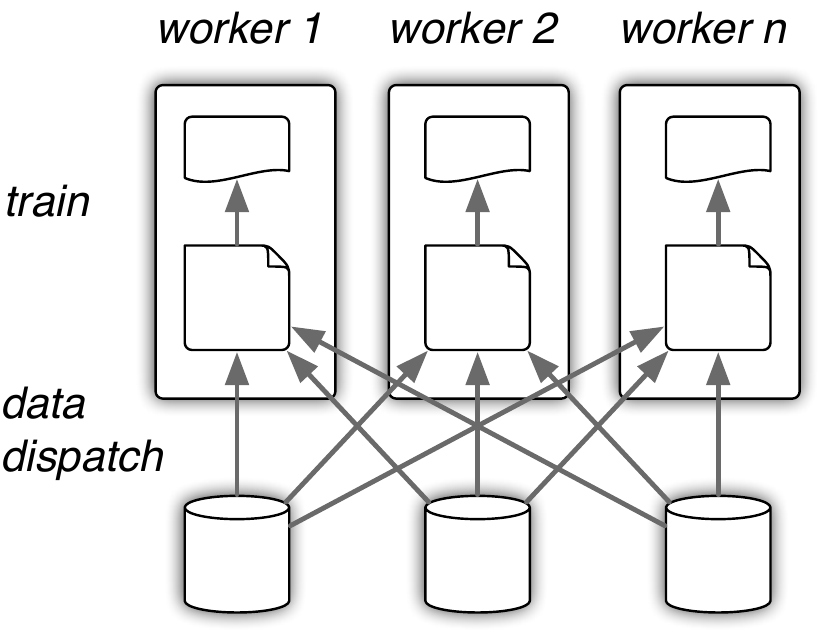}
\caption{Data is partitioned and dispatched to multiple workers. Each
  worker then trains a local model using its local data. There is no
  communication between workers during training.}
\vspace{-2ex}
\label{mu}
\end{figure}
However, since the
distributions of the data on each machine are
identical, such techniques could lead to sub-optimal results in
practice in terms of the accuracy of the resulting learning rule.
Motivated by the fact that in practice, similar data points tend to
have the same or similar classification, and more generally,
classification rules of high accuracy tend to be ``locally simple but
globally complex''~\citep{VB-local}, we propose a new paradigm for
performing {\em data-dependent dispatching} that takes advantage of
such structure by sending similar datapoints to the same machines.
For example, a {\em globally} accurate classification rule may be
complicated, but each machine can accurately classify its {\em local}
region with a simple classifier.

We introduce and analyze dispatching techniques that partition a set
of points such that similar examples end up on the same
machine/worker, while satisfying key constraints present in a real
world distributed system including balancedness and
fault-tolerance. Such techniques can then be used within a simple, but
highly efficient distributed system that first partitions a small
initial segment of data into a number of sets equal to the number of
machines. Then each machine locally and independently applies a
learning algorithm, with no communication between the workers during
training. In other words, the learning is embarrassingly parallel. 
See Figure~\ref{mu} for a schematic representation.
At prediction time, we use a super-fast sublinear algorithm for
directing new data points to the most appropriate machine.

\paragraph{Our Contributions}
We propose a novel scheme for partitioning data which leads to better
accuracy in distributed machine learning tasks, and we give a
theoretical and experimental analysis of this approach.  We present
new algorithms with provable worst-case guarantees, analysis proving
existing scalable heuristics perform well in natural non worst-case
conditions, techniques for extending a dispatching rule from a small
sample to the entire distribution, and an experimental evaluation of
our proposed algorithms and several baselines on both synthetic and
real-world image and advertising datasets.  We empirically show that
our method strongly scales and that we achieve significantly higher
accuracy over baselines based on random partitioning, balanced
partition trees, and locality-sensitive hashing.

In our framework,
a central machine starts by clustering a small sample of data into
roughly equal-sized clusters, where the number of clusters is equal to
the number of available machines.
Next, we extend this clustering into an efficient dispatch rule that
can be applied to new points.
This dispatch rule is used to send the remaining training data to the
appropriate machines and to direct new points at prediction time.
In this way, similar datapoints wind up on the same machine.
Finally, each machine independently learns a classifier using its own data
(in an embarrassingly parallel manner).
To perform the initial clustering used for dispatch,
we use classic clustering objectives
($k$-means, $k$-median, and $k$-center).
However, we need to add novel constraints
to ensure that the clusters give a data partition that respects the
constraints of real distributed learning systems:

\emph{Balancedness:} We need to ensure our dispatching procedure
balances the data across the different machines.
If a machine receives much more data than other machines, then it will
be the bottleneck of the algorithm.
If any machine receives very little data, then its processing power is
wasted.
Thus, enforcing upper and lower bound constraints on the cluster sizes
leads to a faster, more efficient setup.

\emph{Fault-Tolerance:}
In order to ensure that our system is robust to machine failures, we assign each point to multiple distinct clusters.
This way, even if a machine fails, the data on that machine is still present on other machines.
Moreover, this has the added benefit that
our algorithms behave well on points near the boundaries of the clusters.
We say a clustering algorithm satisfies $p$-replication if each point is assigned to $p$ distinct clusters.

\emph{Efficiency:}
To improve efficiency, we apply our clustering algorithms to a small sample of data.
Therefore, we need to be able to extend the clustering to new examples from the same distribution while maintaining a good objective value and satisfying all constraints.
The extension technique should be efficient for both the initial partitioning, and dispatching at prediction time.

When designing clustering algorithms,
adding balancedness and fault tolerance makes the task significantly harder.
Prior work has considered upper bounds on the cluster sizes
\citep{ckc3,byrka2015bi,Cygan_lprounding,Khuller96thecapacitated,li2014,li2014approximating}
and lower bounds \citep{aggarwal2006achieving,clustering-lb},
but no prior work has shown provable guarantees with
upper and lower bounds on the cluster sizes simultaneously.
\footnote{
Note that enforcing only upper (resp.\ lower) bounds
implies a weak lower (resp.\ upper) bound on the cluster sizes,
but this is only nontrivial if the upper (resp.\ lower) bounds are extremely tight or
the number of clusters is a small constant.}
With upper bounds, the objective functions are nondecreasing as the number of clusters $k$
increases, but with lower bounds we show the objective function can oscillate arbitrarily with respect to $k$.
This makes the problem especially challenging from a combinatorial optimization perspective.
Existing capacitated clustering algorithms work by rounding a fractional linear program solution, but the erratic
nature of the objective function makes this task more difficult for us.

The balance constraints also introduce challenges when extending a 
clustering-based partitioning from a small sample to unseen data.
The simple rule that assigns a new point to the cluster with the nearest 
center provides the best objective value on new data,
but it can severely violate the balance constraints.
Therefore, any balanced extension rule must take into account the 
distribution of data.

We overcome these challenges, presenting a variety of complementary
results, which together provide strong justification for our
distributed learning framework.  We summarize each of our main results
below.

$\bullet$ \textbf{Balanced fault-tolerant clustering:} We provide the first clustering algorithms
with provable guarantees that simultaneously handle
upper and lower bounds on the cluster sizes, as well as fault tolerance.
Clustering is NP-hard and adding more constraints makes it significantly harder,
as we will see in Section \ref{sec:structure}.
For this reason, we first devise approximation algorithms with strong worst-case
guarantees, demonstrating this problem is tractable.
Specifically, in Section~\ref{sec:bicriteria} we provide an
algorithm that produces a fault-tolerant clustering that approximately optimizes
$k$-means, $k$-median, and $k$-center objectives while also roughly satisfying
the given upper and lower bound constraints.  At a high level, our algorithm
proceeds by first solving a linear program,
followed by a careful balance and replication aware rounding scheme.
We use a novel min-cost flow technique to finish off rounding the
LP solution into a valid clustering solution.

$\bullet$ \textbf{$k$-means++ under stability:} 
We give complementary results showing that for `typical' problem instances, it
is possible to achieve better guarantees with simpler, more scalable
algorithms.  Specifically, in Section~\ref{sec:stability} we show the
popular $k$-means++ algorithm outputs a balanced clustering with
stronger theoretical guarantees, provided the data satisfies a natural
notion of stability.  We make nontrivial extensions of previous work
to ensure the upper and lower size constraints on the clusters are
satisfied.  No previous work gives provable guarantees while
satisfying both upper and lower bounds on the cluster sizes, and
Sections~\ref{sec:bicriteria} and \ref{sec:stability} may be of
independent interest beyond distributed learning.

$\bullet$ \textbf{Structure of balanced clustering:} 
We show that adding lower bound constraints on the cluster sizes makes
clustering objectives highly nontrivial. Specifically, we show that
for $k$-means, $k$-median, and $k$-center, the objective values may
oscillate arbitrarily with respect to $k$.
In light of this structure,
our aforementioned algorithmic results are more surprising, since
it is not obvious that algorithms with constant-factor guarantees exist.

$\bullet$ \textbf{Efficient clustering by subsampling:} For datasets
large enough to require distributed processing, clustering the entire
dataset is prohibitively expensive. A natural way to avoid this cost
is to only cluster a small subset of the data and then efficiently
extend this clustering to the entire dataset.
In Section~\ref{sec:sampleComplexity} we show that assigning a new
example to the same $p$ clusters as its nearest neighbor in the
clustered subsample approximately preserves both the objective value
and all constraints. We also use this technique at prediction time to
send new examples to the most appropriate machines.

$\bullet$ \textbf{Experimental results:} We conduct experiments with
both our LP rounding algorithms and $k$-means++ together with our
nearest neighbor extension technique. 
We include empirical (and theoretical) 
comparisons which show the effectiveness of both algorithms in different situations. 
The $k$-means++ algorithm is competitive on real world image and
advertising datasets, complementing the results of
Section~\ref{sec:stability} by showing empirically that
$k$-means++ produces high-quality balanced clusterings for `typical'
datasets. 
We then compare the performance of our framework (using
$k$-means++ with nearest neighbor extension) against three baseline
methods (random partitioning, balanced partition trees, and locality
sensitive hashing) in large scale learning experiments where each
machine trains an SVM classifier. We find that for all datasets and
across a wide range of $k$ values, our algorithm achieves higher
accuracy than any of the baselines. Finally, we show that our
technique strongly scales, meaning that doubling the available
computational power while keeping the workload fixed reduces the
running time by a constant factor, demonstrating that our method can
scale to very large datasets.

\paragraph{Related Work}
Currently, the most popular method of dispatch in distributed
learning is random dispatch~\citep{zdjw12,zdjw13}.
This may not produce optimal results because each
machine must learn a global model. Another notion is to dispatch the data to
pre-determined locations e.g., Yahoo!'s geographically distributed database,
PNUTS~\citep{pnuts}. However, it does not look at any properties of the data
other than physical location.

In a recent paper, \citet{wei2015mixed} study partitioning for
distributed machine learning, however, they give no formal guarantees
on balancing the data each machine receives. \citet{you2015} use
$k$-means clustering to distribute data for parallel training of
support vector machines, but their clustering algorithms do not have
approximation guarantees and are applied to the entire dataset, so
their clustering step is much slower than ours.  There is also work on
distributed graph partitioning
\citep{aydin2016distributed,bourse2014balanced,delling2011graph}, in
which the data points are set up in a graph structure, and must be
distributed to different machines, minimizing the number of edges
across machines.  These techniques do not apply more generally for non
graph-based objectives, e.g.\ $k$-means, $k$-median, or $k$-center.

Previous work in theoretical computer science has considered capacitated clustering,
or clustering with upper bounds
\citep{ckc3,byrka2015bi,Cygan_lprounding,Khuller96thecapacitated,li2014,li2014approximating},
and lower bounds \citep{aggarwal2006achieving,clustering-lb},
but our algorithm is the first to solve a more general and challenging question of simultaneously 
handling upper and lower bounds on the cluster sizes, and $p$-replication.
See Section \ref{app:intro} in the Appendix for a more detailed discussion about related work.

%\vspace{2em}

\section{Fault Tolerant Balanced Clustering} \label{sec:bicriteria}

In this section, we give an algorithm to cluster a small initial
sample of data to create a dispatch rule that sends similar points to
the same machine.
There are many ways to measure the similarity of points in the same
cluster. We consider three classic clustering objectives
while imposing upper and lower
bounds on the cluster sizes and replication constraints.
It is well-known that solving the objectives optimally are NP-hard, even 
without the capacity and fault tolerance generalizations \cite{jain2003greedy}.
In Section \ref{sec:structure}, we show that the objectives with balance
constraints behave erratically with respect to the number of clusters $k$,
in particular, there may exist an arbitrary number of local minima and maxima.
In light of this difficulty, one might ask whether any
approximation algorithm is possible for this problem.  We
  answer affirmatively, by extending previous work \citep{li2014}
to fit our more challenging constrained
    optimization problem.  Our algorithm returns a clustering whose
cost is at most a constant factor multiple of the optimal solution,
while violating the capacity and replication
  constraints by a small constant factor.
This is the first algorithm with provable guarantees 
to simultaneously handle both upper and lower bounds on the cluster sizes.

	\begin{theorem}\label{thm:bicriteria}
Algorithm \ref{alg:overview} returns a constant factor approximate solution for the
balanced $k$-clustering with $p$-replication problem for $p> 1$, where the upper capacity constraints are violated
by at most a factor of $\frac{p+2}{p}$,
and each point can be assigned to each center at most twice.
\end{theorem}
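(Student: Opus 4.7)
The plan is to formulate balanced $k$-clustering with $p$-replication as an integer program, relax it to an LP, and then round in two stages, following and extending the scheme of \citet{li2014}. The natural LP has opening variables $y_j \in [0,1]$ and assignment variables $x_{ij} \in [0,1]$ subject to the budget $\sum_j y_j \le k$, replication $\sum_j x_{ij} = p$ for each $i$, balance $L\, y_j \le \sum_i x_{ij} \le U\, y_j$, and the coupling $x_{ij} \le y_j$, with objective equal to the fractional $k$-median, $k$-means, or $k$-center cost. The LP optimum is a valid lower bound on $\mathrm{OPT}$, and the goal is to round it into an integer clustering whose cost is within a constant factor and whose constraint violations match the stated bounds.

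The first rounding stage is a facility grouping step adapted from \citet{li2014}. Each fractionally open center is given a ball whose radius is proportional to its average fractional connection cost; facilities whose balls overlap are consolidated into local groups, and within each group the opening mass and incoming $x_{ij}$ mass are concentrated on a small number of surviving representatives. A standard triangle-inequality argument bounds the connection-cost blow-up by a constant factor. The nontrivial extension beyond \citet{li2014} here is carrying the $p$-replication accounting and the lower size bound through the consolidation: the grouping must leave enough mass per surviving representative to satisfy $L$ from below while preserving the replication count $p$ on each client.

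The second stage, which contains the main technical novelty, is a min-cost flow rounding to finalize the assignment once the set of open centers is fixed. I would build a bipartite transportation network with one node per client supplying $p$ units, one node per open center with demand interval $[L,\, \tfrac{p+2}{p}\, U]$, and one edge per client-center pair with capacity $2$ and cost equal to the clustering distance. The stage-one fractional $x$ is a feasible flow in this network, and because min-cost flow LPs with integral capacities have integral optimal extreme points, an optimal vertex yields an integral assignment of cost at most the LP value. The ``at most twice'' property is baked into the per-edge capacity, and the $\tfrac{p+2}{p}$ upper-bound violation drops out of the fact that an integral vertex solution departs from the fractional one by at most two units of load per center, so a true cap of $U$ inflates to at most $U+2$, which is tight when $U=p$.

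The main obstacle is that the combination of upper and lower bounds together with $p$-replication breaks the monotonicity that simpler LP-rounding schemes rely on, so neither the grouping step nor the flow step can be lifted from prior work as a black box. The key task is to show that the two stages compose: the grouping must preserve enough structure that the stage-two min-cost flow instance is always feasible within the stated slack, and the cost of both stages must be charged simultaneously to the LP value. Once this composition is established, the constant-factor approximation on the clustering objective together with the $\tfrac{p+2}{p}$ upper-bound violation and the ``assigned at most twice'' property follow directly.
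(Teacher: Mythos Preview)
Your high-level plan (LP, grouping/consolidation, min-cost flow) matches the paper, but your account of where the $\tfrac{p+2}{p}$ factor comes from is wrong, and this is the heart of the argument.

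You claim the violation arises in the flow step because ``an integral vertex solution departs from the fractional one by at most two units of load per center, so a true cap of $U$ inflates to at most $U+2$, which is tight when $U=p$.'' This does not make sense: $U$ is the upper capacity $nL$, not $p$, and the min-cost flow step in the paper incurs \emph{no} additional violation---it rounds the $x$'s at the same cost and with the same capacity slack already present in its fractional input.

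The $\tfrac{p+2}{p}$ actually comes from the \emph{aggregation} (your ``grouping'') stage, and it hinges on a fact you never invoke. Because each point has total fractional assignment $p$ (the LP constraint $\sum_i x_{ij}=p$), a Markov-inequality argument on the connection cost $C_u$ of each monarch $u$ shows that the total fractional opening inside every empire is at least $p/2$. When this opening $Y_u\ge p/2$ is redistributed evenly onto $\lfloor Y_u\rfloor$ representatives, each receives opening $Y_u/\lfloor Y_u\rfloor \le (\lfloor Y_u\rfloor+1)/\lfloor Y_u\rfloor \le (p+2)/p$. The Move operation preserves the LP constraint $\sum_j x_{ij}\le nL\,y_i$, so after aggregation each surviving center carries at most $\tfrac{p+2}{p}\,nL$ clients. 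This is also precisely why the theorem needs $p>1$: for $p=1$ an empire can have opening strictly below $1$, $\lfloor Y_u\rfloor=0$, and the aggregation is not even defined.

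The ``at most twice'' bound is likewise a \emph{consequence} of the aggregation bound, not an independent capacity you impose on flow edges. After aggregation $x'_{ij}\le y'_i\le (p+2)/p\le 2$, so the fractional $x'$ is feasible in a flow network with edge capacities $2$, and the Integral Flow Theorem then yields an integral optimum of no greater cost with $x''_{ij}\in\{0,1,2\}$. If you set the edge capacity to $2$ without first establishing $x'_{ij}\le 2$, you cannot certify feasibility of the fractional solution in the network, and the flow argument breaks.
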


A clustering instance consists of a set $V$ of $n$ points, and a distance
metric $d$.
Given two points $i$ and $j$ in $V$,
denote the distance between $i$ and $j$ by $d(i,j)$.
The task is to find a set of $k$ centers $C=\{c_1,\dots,c_k\}$
and assignments of each point to $p$ of the centers
 $f: V \rightarrow {{C}\choose{p}}$,
where ${C}\choose{p}$ represents the subset of $C^p$
with no duplicates.
In this paper, we study three popular clustering objectives:

\textit{(1)}~$k$-median: $\min_{C,f} \sum_{i\in V} \sum_{j \in f(i)}  d(i,j)$

\textit{(2)}~$k$-means: $\min_{C,f} \sum_{i\in V} \sum_{j \in f(i)}  d(i,j)^2$

\textit{(3)}~$k$-center: $\min_{C,f} \max_{i\in V}\max_{j\in f(i)} d(i,j)$

In this section, 
we focus on the first two, and put the details for $k$-center in the
appendix.
We add size constraints $0 < \ell \leq L < 1$, also known as capacity
constraints, so each cluster must have a size between $n\ell$ and
$nL$.  For simplicity, we assume these values are integral (or 
replace them by $\lceil n\ell \rceil$ and $\lfloor nL \rfloor$
respectively).

%%%%%%%%%%%%

At a high level, our algorithm proceeds by first solving a linear
program, followed by careful rounding.  
In particular, we set up an LP whose optimal integral solution is the optimal clustering. We can
use an LP solver which will give a fractional solution (for example, the LP may open up $2k$ ‘half’ centers).
Then, using a greedy procedure from \citet{charikar1999}, we pick $\leq k$ points (called the `monarchs’) 
which are spread out. Furthermore, the distance from a non-monarch to its closest monarch is a constant-factor
multiple of the non-monarch's connection cost in the LP solution.
The empire of a monarch is defined to be its cell in the Voronoi
partition induced by the monarchs. 
By a Markov inequality, every empire has $\geq p/2$ total fractional centers, 
which is at least one for $p \geq 2$. Then we merely open the
innermost points in the empires as centers, ending with $\leq k$ centers. Once we have the centers, we find the
optimal assignments by setting up a min-cost flow problem. 

The key insight is that
$p$-replication helps to mitigate the capacity violation in the
rounding phase.  Together with a novel min-cost flow technique, this
allows us to simultaneously handle upper and lower bounds on the
cluster sizes.  The procedure is summarized in Algorithm
\ref{alg:overview}, and below we provide details, together with the
key ideas behind its correctness (see Appendix
\ref{app:bicriteria} for the
full details).

\noindent\textbf{Step 1: Linear Program}
The first step is to solve a linear program (LP)
  relaxation of the standard integer program (IP) formulation of our
  constrained clustering problem. The variables are as follows: for
  each $i\in V$, let $y_i$ be an indicator for whether $i$ is opened
  as a center. For $i,j \in V$, let $x_{ij}$ be an indicator for
  whether point $j$ is assigned to center $i$. In the LP, the
  variables may be fractional, so $y_i$ represents the fraction to
  which a center is opened (we will refer to this as the ``opening''
  of $i$), and $x_{ij}$ represents the fractional assignment of $j$ to
  $i$. One can use an LP solver to get a fractional solution which
  must then be rounded. Let $(x,y)$ denote an optimal solution to the LP.  For any
  points $i$ and $j$, let $c_{ij}$ be the cost of assigning point $j$
  to center $i$. That is, for $k$-median, $c_{ij} = d(i, j)$, and for
  $k$-means $c_{ij} = d(i, j)^2$ (we discuss $k$-center in the
  appendix).  Define $C_j=\sum_i c_{ij}x_{ij}$, the
  average cost from point $j$ to its centers in the LP solution
  $(x,y)$.

It is well-known that the LP in Algorithm \ref{alg:overview} has an unbounded integrality gap
(the ratio of the optimal LP solution over the optimal
integral LP solution), even when the capacities are violated by
a factor of $2-\epsilon$ \citep{li2014}. However,
with fault tolerance, the integrality is only unbounded when the capacities are violated by
a factor of $\frac{p}{p-1}$ (see the appendix for the integrality gap).
Intuitively, this is because the $p$ centers can `share' this
violation.

\vspace{0.5em}
\setcounter{figure}{0}
\begin{figure}[htb]
  \begin{framed}
  \begin{enumerate}[noitemsep,nolistsep,leftmargin=*]
  \item Find a solution to the following linear program:
  \vspace{-0.6em}
  \begin{alignat*}{3}
  \operatorname*{min}_{x,y}\sum_{i,j\in V} c_{ij} x_{ij}
  \quad\text{s.t.} 
  &\quad\textbf{(a)}\,\forall j \in V: \sum_{i\in V} x_{ij}=p;
  \quad \quad \quad \quad \,\,\, \textbf{(b)}\,\sum_{i\in V} y_i\leq k; \\
  &\quad\textbf{(c)}\,\forall i \in V: \ell y_i \leq \sum_{j\in V} \frac{x_{ij}}{n} \leq L y_i;
  \quad \textbf{(d)}\,\forall i,j \in V: 0\leq x_{ij}\leq y_i\leq 1.
\end{alignat*}
\vspace{-1.5em}
\item Greedily place points into a set $\mathcal{M}$ from lowest $C_j$
  to highest (called ``monarchs''), adding point $j$ to $\mathcal{M}$
  if it is not within distance $4 C_j$ of any monarch.  Partiton the
  points into coarse clusters (called ``empires'') using the Voronoi
  partitioning of the monarchs.
\item For each empire $\mathcal{E}_u$ with total fractional opening
  $Y_u \triangleq \sum_{i\in\mathcal{E}_u} y_i$, give opening
  $\nicefrac{Y_u}{\lfloor Y_u \rfloor}$ to the $\lfloor Y_u \rfloor$
  closest points to $u$ and all other points opening 0.
\item Round the $x_{ij}$'s by constructing a minimum cost flow problem
  on a bipartite graph of centers and points, setting up demands and
  capacities to handle the bounds on cluster sizes.
\end{enumerate}
\end{framed}
\vspace{-1em}
\renewcommand {\figurename} {Algorithm}
\caption{Balanced clustering with fault tolerance}
\label{alg:overview}

\end{figure}
\setcounter{figure}{1}
\setcounter{algocf}{1}
\vspace{1em}

\noindent\textbf{Step 2: Monarch Procedure}
Next, partition the points into ``empires'' such
that every point is $\leq 4C_j$ from the center of its empire (the ``monarch'')
by using a greedy procedure from \citet{charikar1999} (for an informal description, see step 2 of
Algorithm \ref{alg:overview}).
By Markov's inequality, every empire has total opening $\geq p/2$, which is crucially $\geq 1$ for
$p\geq 2$ under our model of fault tolerance.
We obtain the following guarantees, and we show a proof sketch for
the $k$-median objective, with the full proof in Appendix \ref{app:bicriteria}.

\begin{lemma} \label{lem:monarch_bicriteria}
The output of the monarch procedure satisfies the following properties:
\begin{enumerate}[label=(1\alph*)]
  \item \label{lemsub:partition} The clusters partition the point set;
  \item \label{lemsub:close1} Each point is close to its monarch: $\forall j\in
  \mathcal{E}_u, u\in \mathcal{M}, c_{uj}\leq 4 C_j$;
  \item \label{lemsub:n'far} Any two monarchs are far apart: $\forall u,u' \in
  \mathcal{M}\text{ s.t. } u\neq u', c_{uu'}>4\max\{C_u,C_{u'}\}$;
  \item \label{lemsub:min} Each empire has a minimum total opening: $\forall
  u\in \mathcal{M}, \sum_{j\in \mathcal{E}_u} y_j\geq \frac{p}{2}$.
\end{enumerate}

\end{lemma}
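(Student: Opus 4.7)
The plan is to prove the four properties separately, noting that \ref{lemsub:partition}--\ref{lemsub:n'far} follow almost directly from the greedy monarch construction and the Voronoi partition, while \ref{lemsub:min} is the substantive step that combines a Markov-type LP argument with the monarch separation.

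For \ref{lemsub:partition}, I would simply observe that the empires are by definition the Voronoi cells of $\mathcal{M}$ (with any fixed tie-breaking rule), so they partition $V$. For \ref{lemsub:close1}, I would recall that the greedy procedure considers points $j$ in nondecreasing order of $C_j$ and inserts $j$ into $\mathcal{M}$ only when there is no existing monarch within distance $4C_j$. Consequently, for any non-monarch $j$ there already exists some $u' \in \mathcal{M}$ with $c_{u'j} \leq 4C_j$, and the Voronoi monarch $u$ of $j$ is at most as far, giving $c_{uj} \leq 4C_j$. For \ref{lemsub:n'far}, suppose $u'$ is added after $u$, so by the processing order $C_{u'} \geq C_u$; the greedy rule forbids $c_{uu'} \leq 4C_{u'}$, yielding $c_{uu'} > 4C_{u'} = 4\max\{C_u,C_{u'}\}$.

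The main obstacle is \ref{lemsub:min}. Here I would apply Markov's inequality to the fractional assignments $\{x_{iu}\}_{i \in V}$ of the monarch $u$. LP constraint (a) gives $\sum_i x_{iu} = p$ and by definition $C_u = \sum_i c_{iu} x_{iu}$, so the $x_{iu}$-weighted average of $c_{iu}$ is $C_u/p$. By Markov, the set $S_u = \{i : c_{iu} \leq 2 C_u / p\}$ carries at least $p/2$ of the total $x_{iu}$-mass. It then remains to show $S_u \subseteq \mathcal{E}_u$: for any $i \in S_u$ and any other monarch $u' \neq u$, property \ref{lemsub:n'far} and the triangle inequality give
\[
c_{iu'} \;\geq\; c_{uu'} - c_{iu} \;>\; 4C_u - 2C_u/p \;\geq\; 2C_u \;\geq\; c_{iu},
\]
where the last two inequalities use $p \geq 2$. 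Thus $u$ is the closest monarch to $i$, and so $i \in \mathcal{E}_u$. Using the LP constraint $x_{iu} \leq y_i$ then yields
\[
\sum_{i \in \mathcal{E}_u} y_i \;\geq\; \sum_{i \in S_u} x_{iu} \;\geq\; p/2,
\]
as desired.

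The only delicate point is lining up constants so that the Markov radius $2C_u/p$ fits strictly inside the monarch-separation slack $4C_u$ provided by \ref{lemsub:n'far}; this is precisely where the assumption $p \geq 2$ (i.e.\ $p > 1$ integral) is used, and it is the reason the ``$p$-replication helps'' intuition mentioned before the lemma statement pays off. The other parts are essentially modular consequences of the greedy rule, and the proof for $k$-means is identical after reinterpreting $c_{ij} = d(i,j)^2$, since no metric properties of the $c_{ij}$'s beyond nonnegativity and the triangle inequality (applied through $d$) are used in the argument.
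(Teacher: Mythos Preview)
Your argument is correct and follows the paper's approach essentially verbatim: (1a)--(1c) come directly from the greedy construction and Voronoi assignment, and (1d) is exactly the paper's Markov-plus-separation argument (the paper writes the Markov threshold as $2C_u$ rather than your $2C_u/p$ only because its proof sketch silently normalizes $C_j$ by $1/p$). Two minor corrections to your commentary: the displayed chain in (1d) actually requires only $p\ge 1$, not $p\ge 2$---the hypothesis $p\ge 2$ is used downstream in the aggregation step to ensure $\lfloor Y_u\rfloor\ge 1$, not in this lemma; and the $k$-means case is \emph{not} identical, since $c_{iu'}\ge c_{uu'}-c_{iu}$ fails when $c_{ij}=d(i,j)^2$, so the paper uses a larger constant there (the threshold becomes $8C_j$) together with a separate containment argument based on $d$ rather than $c$.
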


\begin{proof}[Proof sketch.]
The first three properties follow easily from construction (for
property~\ref{lemsub:n'far}, recall we greedily picked monarchs by the value of $C_j$).
For the final property, note that for some $u\in \mathcal{M}$,
if $d(i,u)\leq 2C_u$, then $i\in \mathcal{E}_u$ (from the triangle
inequality and property~\ref{lemsub:n'far}).
Now, note that $C_u$ is a weighted average of costs $c_{iu}$ with weights $x_{iu}/p$,
i.e., $C_u=\sum_i c_{iu}\nicefrac{x_{iu}}{p}$. By Markov's inequality, in any weighted average,
values greater than twice the average have to get less than half the total weight. That is,
\begin{equation*}
    \sum_{j:c_{ju}>2C_u}\frac{x_{ju}}{p} < \sum_{j:c_{ju}>2C_u} \frac{x_{ju}}{p}\cdot\frac{c_{ju}}{2C_u} <
    \frac{C_u} { 2 C_u} = \frac 1 2
\end{equation*}
Combining these two facts, for each $u \in \mathcal{M}$:
\begin{equation*}
    \sum_{j \in \mathcal{E}_u} y_j \geq \sum_{j: c_{ju} \leq 2 C_u} y_j \geq \sum_{j: c_{ju} \leq 2 C_u} x_{ju} \geq \frac p 2 .\quad\qedsymbol
\end{equation*}
\renewcommand{\qedsymbol}{}
\end{proof}

\noindent\textbf{Step 3: Aggregation}
The point of this step is to end up with $\leq k$ centers total.
Since each empire has total opening at least 1, we can aggregate openings within
each empire.
For each empire $\mathcal{E}_u$, we move the openings to the $\lfloor Y_u\rfloor$
innermost points of $\mathcal{E}_u$, where $Y_u=\sum_{i\in \mathcal{E}_u} y_i$.
This shuffling is accomplished by greedily calling a suboperation called \emph{Move},
which is the standard way to transfer openings between points to maintain all LP constraints
\citep{li2014}.
To perform a \emph{Move} from $i$ to $j$ of size $\delta$, set $y_i'=y_i-\delta$ and $y_j'=y_j+\delta$,
and change all $x$'s so that the fractional demand switches from $i$ to $j$: $\forall u\in V$, $x_{iu}'=x_{iu}(1-\nicefrac{\delta}{y_i})$
and similarly increase the demand to all $x_{ju}$.
The \emph{Move} operation preserves all LP constraints, except
we may violate the capacity constraints if we give a center an opening greater than one.

In each empire $\mathcal{E}_u$, start with the point $i$ with nonzero $y_i$
that is farthest away from the monarch $u$. Move its opening to the monarch $u$,
and then iteratively continue with the next-farthest point in $\mathcal{E}_u$ with nonzero opening.
Continue this process until $u$ has opening exactly $\frac{Y_u}{\lfloor Y_u\rfloor}$, and then start moving the 
farthest openings to the
point $j$ closest to the monarch $u$. Continue this until the $\lfloor Y_u\rfloor$ closest points to $u$
all have opening $\frac{Y_u}{\lfloor Y_u\rfloor}$.
Call the new variables $(x',y')$. They have the following properties.

\begin{lemma} \label{lem:step1}
The aggregated solution $(x', y')$ satisfies the following constraints:
\begin{enumerate}[label=(2\alph*)]
  \item \label{eq:lem:step1:a} The opening of each point is either zero or in
  $[1, \frac{p+2}{2}]$: $\forall i\in V,~1\leq y'_i<\frac{p+2}{p} \text{ or
  }y'_i=0$;
  \item \label{eq:lem:step1:b} Each cluster satisfies the capacity constraints:
  $i\in V,~ \ell y'_i\leq \sum_{j\in V}\frac{x'_{ij}}{n}\leq L y'_i$;
  \item \label{eq:lem:step1:c} The total fractional opening is $k$:
  $\sum_{i\in V} y'_i=k$;
  \item \label{eq:lem:step1:d} Points are only assigned to open centers:
  $\forall i,j\in V,~x'_{ij}\leq y'_i$;
  \item \label{eq:lem:step1:e} Each point is assigned to $p$ centers: $\forall
  i\in V,~\sum_j x'_{ji}=p$;
  \item \label{eq:lem:step1:f} The number of points with non-zero opening is at
  most $k$: $|\{i\mid y'_i>0\}|\leq k$. \end{enumerate}
\end{lemma}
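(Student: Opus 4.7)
The plan is to prove each of the six properties by first pinning down the invariants preserved by a single \emph{Move} operation, then tracking them through the aggregation procedure that sweeps through each empire. Specifically, I would verify that Move$(i \to j, \delta)$ preserves LP constraints (a), (b), (c), and (d). Constraint (a), $\sum_i x_{iu} = p$, holds because Move transfers fractional demand between rows $i$ and $j$ while leaving each column sum fixed. Constraint (c), the capacity bound $\ell y_i \leq \sum_u x_{iu}/n \leq L y_i$, is preserved at $i$ because $y_i$ and $\sum_u x_{iu}$ both rescale by the common factor $(1-\delta/y_i)$; at $j$, the new ratio becomes a convex combination of the previous ratios at $i$ and $j$ and still lies in $[\ell, L]$. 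Constraint (d) is preserved at $i$ because the ratio $x_{iu}/y_i$ is preserved by the common rescaling, and at $j$ because $x'_{ju} = x_{ju} + x_{iu}\delta/y_i \leq y_j + \delta = y'_j$. Constraint (b) is trivially unchanged. Only the upper bound $y_i \leq 1$ can fail, which the aggregation willingly violates.

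Given these invariants, properties \ref{eq:lem:step1:b}, \ref{eq:lem:step1:d}, and \ref{eq:lem:step1:e} follow immediately since aggregation is a sequence of Moves. Property \ref{eq:lem:step1:c}, $\sum_i y'_i = k$, is invariant under each Move and holds initially because the LP optimum can be assumed to use its full budget: any extra capacity can be placed at some $y_i = 0$ without increasing cost, so the LP optimum is attained at $\sum_i y_i = k$. Property \ref{eq:lem:step1:f} is immediate from construction, since each empire contributes exactly $\lfloor Y_u \rfloor$ opened points, giving $\sum_u \lfloor Y_u \rfloor \leq \sum_u Y_u = \sum_i y_i \leq k$. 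The only substantive property is \ref{eq:lem:step1:a}: every non-zero $y'_i$ equals $Y_u/\lfloor Y_u \rfloor$ for its empire $\mathcal{E}_u$; the lower bound $y'_i \geq 1$ holds since $Y_u \geq \lfloor Y_u \rfloor$, and the upper bound follows by writing $Y_u/\lfloor Y_u \rfloor < 1 + 1/\lfloor Y_u \rfloor$ and invoking the monarch lower bound $Y_u \geq p/2$ from part \ref{lemsub:min} of Lemma~\ref{lem:monarch_bicriteria}, giving $\lfloor Y_u \rfloor \geq p/2$ and hence $y'_i < 1 + 2/p = (p+2)/p$.

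The main obstacle I expect is the careful verification of constraint (c) at the destination of a Move---since both numerator and denominator of the capacity ratio shift simultaneously, a convex-combination argument is needed---together with justifying that the LP optimum attains $\sum_i y_i = k$ with equality in the presence of simultaneous upper and lower size constraints.
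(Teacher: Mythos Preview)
Your approach is essentially the same as the paper's: it too derives \ref{eq:lem:step1:b}--\ref{eq:lem:step1:e} from the fact that \emph{Move} preserves the LP constraints (proved separately as Lemma~\ref{lem:move_properties}), bounds \ref{eq:lem:step1:a} via $Y_u/\lfloor Y_u\rfloor \leq (\lfloor Y_u\rfloor+1)/\lfloor Y_u\rfloor \leq (p+2)/p$ using $Y_u \geq p/2$, and obtains \ref{eq:lem:step1:f} as a consequence of \ref{eq:lem:step1:a} and \ref{eq:lem:step1:c}. Your more detailed verification of the Move invariants and your flagging of the $\sum_i y_i = k$ equality are both finer than what the paper writes, but the underlying argument is identical.
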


\begin{proof}
For the first property, recall that each cluster $\mathcal{E}_u$ has total opening $\geq \frac{p}{2}$,
so by construction, all $i$ with nonzero $y'_i$ has $y'_i\geq 1$.
We also have $\frac{Y_u}{\lfloor Y_u\rfloor}\leq
\frac{\lfloor Y_u\rfloor+1}{\lfloor Y_u\rfloor}\leq
\frac{p+2}{p}$,
which gives the desired bound.

The next four properties are checking that the
LP constraints are still satisfied (except for $y'_i\leq 1$).
These follow from the fact that
\emph{Move} does not violate the constraints.
The last property is a direct result of Properties~\ref{eq:lem:step1:a} and \ref{eq:lem:step1:c}.
\end{proof}

We obtain the following guarantee on the moving costs.

\begin{lemma}\label{lem:costs}
$\forall j\in V \text{ whose opening moved from }i'\text{ to }i$,
\begin{itemize}[itemsep=3pt,topsep=2pt,parsep=0pt,partopsep=0pt]
\item $k$-median: $d(i,j)\leq 3d(i',j)+8 C_j$,
\item $k$-means:    $d(i,j)^2\leq 15d(i',j)^2+80C_j$.
\end{itemize}
\end{lemma}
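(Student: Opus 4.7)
The plan is to establish both bounds via the triangle inequality, leveraging the monarch/empire structure from Lemma \ref{lem:monarch_bicriteria}. Since the aggregation step transfers opening only within a single empire, let $u$ denote the monarch of the empire $\mathcal{E}_u$ containing both $i$ and $i'$. The key structural observation driving both cases is that aggregation moves opening from the farthest point of $\mathcal{E}_u$ inward (first to the monarch $u$ itself, then to the $\lfloor Y_u\rfloor$ innermost points), so $d(i,u) \leq d(i',u)$ whenever opening moves from $i'$ to $i$.

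For the $k$-median case, I would combine two triangle inequalities, $d(i,j) \leq d(i,u) + d(u,j)$ and $d(u,j) \leq d(u,i') + d(i',j)$, together with $d(i,u) \leq d(i',u)$, to obtain
\[
d(i,j) \;\leq\; 2\, d(i',u) + d(i',j).
\]
Property~\ref{lemsub:close1} gives $d(i',u) \leq 4\, C_{i'}$, but the target has $C_j$ on the right-hand side. To bridge the gap, I route $d(i',u)$ back through $j$ using $d(i',u) \leq d(i',j) + d(j,u)$, yielding $d(i,j) \leq 3\,d(i',j) + 2\,d(j,u)$. If $j \in \mathcal{E}_u$, Property~\ref{lemsub:close1} directly gives $d(j,u) \leq 4\,C_j$, producing the stated bound. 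If $j$ lies in a different empire, the bound is recovered via a Markov-style argument on $C_j = \sum_i c_{ij} x_{ij}$ analogous to the one used to prove Property~\ref{lemsub:min}, combined with the greedy monarch selection (monarchs are picked in order of increasing $C_\cdot$) and one further triangle step, absorbing the discrepancy into the constant $8$.

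For the $k$-means case, the cost is a squared distance, so the linear triangle inequalities cannot be squared naively. I would apply the relaxed two-term inequality $(a+b)^2 \leq (1+\lambda)\, a^2 + (1+1/\lambda)\, b^2$ with a tuned $\lambda$, or equivalently the three-term version $(a+b+c)^2 \leq 3(a^2+b^2+c^2)$, to the decomposition $d(i,j) \leq 3\,d(i',j) + 2\,d(j,u)$. Combining this with the $k$-means version of Property~\ref{lemsub:close1} (in which $c_{ui'} = d(u,i')^2 \leq 4\,C_{i'}$, noting that $C_{i'}$ now aggregates squared distances), and choosing $\lambda$ to balance the two resulting terms, produces the stated constants $15$ and $80$.

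The main obstacle is the case $j \notin \mathcal{E}_u$, where a naive triangle inequality only produces a bound involving $C_{i'}$, which can be much larger than $C_j$; bridging this gap requires invoking both the LP assignment structure ($x_{i'j} > 0$) and the greedy monarch ordering. A secondary difficulty is tuning the relaxation parameter $\lambda$ in the squared-distance inequalities so that the final $k$-means constants match the claimed $15$ and $80$ exactly rather than some looser pair.
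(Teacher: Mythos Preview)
Your approach is correct up to the bound $d(i,j) \leq 2\,d(u,i') + d(i',j)$, and the next step you take, $d(u,i') \leq d(u,j) + d(j,i')$, yields the claimed constants when $j \in \mathcal{E}_u$. The genuine gap is the case $j \notin \mathcal{E}_u$: the Markov argument you cite only bounds how much fractional assignment weight of a \emph{monarch} comes from nearby points; it does not bound $d(j,u)$ for an arbitrary client $j$ relative to a monarch $u$ that is not $j$'s own. Neither the greedy ordering (which separates pairs of monarchs, not a monarch from a non-monarch) nor the LP constraint $x_{i'j}>0$ (which carries no distance information for $k$-median/$k$-means) closes this gap. So the step ``absorbing the discrepancy into the constant $8$'' does not go through.

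The paper's fix avoids bounding $d(j,u)$ altogether. Let $j'$ be the monarch of $j$'s own empire. Since empires are Voronoi cells and $i' \in \mathcal{E}_u$, the monarch $u$ is the closest monarch to $i'$, hence $d(u,i') \leq d(j',i')$. Now route through $j$: $d(j',i') \leq d(j',j) + d(j,i')$, and Property~\ref{lemsub:close1} applied to $j$ and \emph{its} monarch $j'$ gives $d(j',j) \leq 4C_j$ (respectively $d(j',j)^2 \leq 8C_j$ for $k$-means). Substituting back yields $d(i,j) \leq 2\bigl(4C_j + d(j,i')\bigr) + d(i',j) = 3\,d(i',j) + 8C_j$ uniformly, with no case split. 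For $k$-means the same chain is squared using two ``complete-the-square'' steps, $(2a+b)^2 \leq 5a^2 + 5b^2$ and $(a+b)^2 \leq 2a^2 + 3b^2 + \text{(cross term absorbed)}$, to arrive at $15\,d(i',j)^2 + 80C_j$. The one idea you are missing is this Voronoi comparison $d(u,i') \leq d(j',i')$; once you have it, both cases collapse into a single clean argument.
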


\begin{proof}
By construction, if the demand of point $j$ moved from $i'$ to $i$,
then $\exists u\in \mathcal{M}$ s.t. $i,i'\in \mathcal{E}_u$ and $d(u,i)\leq d(u,i')$.
Denote $j'$ as the closest point in $\mathcal{M}$ to $j$. Then $d(u,i')\leq d(j',i')$ because $i'\in \mathcal{E}_u$.
Then,
\begin{align*}
d(i,j)&\leq d(i,u)+d(u,i')+d(i',j) \\
&\leq 2d(u,i')+d(i',j) \\
&\leq 2d(j',i')+d(i',j) \\
&\leq 2(d(j',j)+d(j,i'))+d(i',j) \\
&\leq 8 C_j + 3d(i',j).
\end{align*}
\end{proof}

We include the proof for $k$-means in Appendix \ref{app:bicriteria}.

\noindent\textbf{Step 4: Min cost flow}
Now we must round the $x$'s.
We set up a min cost flow problem,
where an integral solution corresponds to an assignment of points
to centers.
We create a bipartite graph with $V$ on the left (each with supply $p$)
and the $k$ centers on the right (each with demand $n\ell$),
and a sink vertex with demand $np-kn\ell$.
We carefully set the edge weights so that the minimum cost
flow that satisfies the capacities corresponds to an optimal
clustering assignment. 
See Figure \ref{fig:flow1x}.

\begin{figure*}
\centering
\includegraphics[scale=0.8]{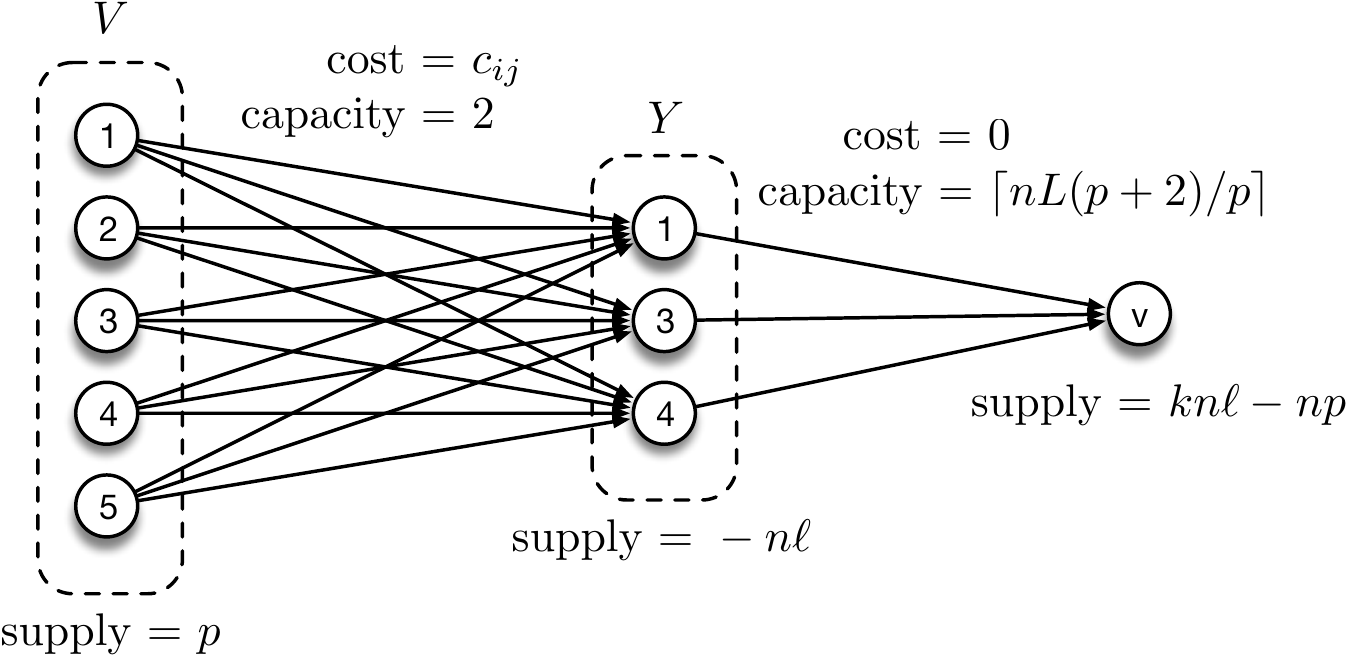}
\caption{{\bf Flow network for rounding the $x$'s}: The nodes in each group all
have the same supply, which is indicated below each group. The edge costs and
capacities are shown above each group. The $y$-rounded solution gives a feasible
flow in this network. By the Integral Flow Theorem, there exists a minimum cost
flow which is integral and we can find it in polynomial time.}
\label{fig:flow1x}
\end{figure*}

Then using the Integral Flow Theorem,
we are guaranteed there is an \emph{integral} assignment
that achieves the same optimal cost
(and finding the min cost flow is a well-studied polynomial time
problem \citep{papadimitriou1998combinatorial}). Thus, we can round the
$x$'s without incurring any additional cost to the approximation factor.
This is the first time this technique has been used in the setting of clustering.

In Section \ref{sec:kcenter} of the appendix, we show a more involved algorithm specifically
for $k$-center which achieves a 6-approximation with \emph{no violation} to the capacity or replication
constraints.

\section{Balanced Clustering under Stability}\label{sec:stability}

In the previous section, we showed an LP-based algorithm which provides theoretical guarantees even
on adversarially chosen data. Often real-world data has inherent structure that allows
us to use more scalable algorithms and achieve even better clusters \citep{as,ostrovsky2006effectiveness}.
In our distributed ML framework, this translates to being able to use a 
larger initial sample for the same computational power 
(Section~\ref{sec:sampleComplexity} analyzes the effect of sample size). 
In this section, we prove the popular $k$-means++ algorithm as well as a greedy thresholding algorithm
output clusters very close to the optimal solution, provided the data satisfies a natural notion of stability
called \emph{approximation stability} \citep{agarwal2015k,balcan2010approximate,as,kcenter-pr,Tim-social-net}.

Specifically,
we show that given a balanced clustering instance in which clusterings close in \emph{value} to $\mathcal{OPT}$
are also close in terms of the clusters themselves, assuming $L\in O(\ell)$,
then $k$-means++ with a simple pruning step \citep{ostrovsky2006effectiveness} outputs a solution close to optimal.
For the $k$-median objective, we show that under the same stability condition,
Balcan et al.'s greedy thresholding algorithm outputs a solution that is close to optimal \citep{as}.
For the $k$-center objective, we show a simple thresholding algorithm is sufficient to
optimally cluster the data, and this result is tight up to the level
of stability. That is,
assuming any strictly weaker version of stability makes the problem NP-hard.
We overcome key challenges that arise when we add upper and lower bounds to the cluster sizes.
We present the intuition here, and give the full 
details in Appendix \ref{app:stability}.

\paragraph{Approximation Stability}
Given a clustering instance $(S,d)$ and inputs $\ell$ and $L$,
let $\mathcal{OPT}$ denote the cost of the optimal balanced clustering.
Two clusterings $\mathcal{C}$ and $\mathcal{C}'$ are \emph{$\epsilon$-close},  if only an
$\epsilon$-fraction of the input points are clustered differently in the two clusterings,
i.e.,\ $\min_\sigma \sum_{i=1}^k |C_i \setminus C'_{\sigma(i)} | \leq \epsilon n$, where
$\sigma$ is a permutation of $[k]$.

\begin{definition}[\citet{as}]
A clustering instance $(S,d)$ satisfies \emph{$(1+\alpha,\epsilon)$-approximation stability}
with respect to balanced clustering if all clusterings $\mathcal{C}$ with
$\text{cost}(\mathcal{C})\leq (1+\alpha)\cdot \mathcal{OPT}$ are $\epsilon$-close to $\mathcal{C}$.
\end{definition}

\paragraph{$k$-means}
We show that sampling $k\log k$ centers using $k$-means++, followed by a greedy center-pruning step,
(introduced by \citet{ostrovsky2006effectiveness})
is sufficient to cluster well with high probability, assuming
$(1+\alpha,\epsilon)$-approximation stability for balanced clustering.
Our results improve over \citet{agarwal2015k}, who showed this algorithm outputs a good clustering with
probability $\Omega(\frac{1}{k})$ for standard (unbalanced) clustering under approximation stability.
Formally, our result is the following.

\begin{theorem} \label{thm:k-means++}
For any $\frac{\epsilon\cdot k}{\alpha}<\rho<1$,
$k$-means++ seeding with a greedy pruning step outputs a solution that is $\frac{1}{1-\rho}$ close to the optimal solution
with probability $>1-O(\rho)$, for clustering instances satisfying $(1+\alpha,\epsilon)$-approximation stability
for the balanced $k$-means objective, with $\frac{L}{\ell}\in O(1)$.
\end{theorem}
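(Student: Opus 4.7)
The plan is to leverage the structure imposed by $(1+\alpha,\epsilon)$-approximation stability together with the balance ratio $L/\ell \in O(1)$, and to analyze the $k$-means++ seeding plus pruning via a coupon-collector style argument that improves on the $\Omega(1/k)$ bound of Agarwal et al. I would proceed in four steps.

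First, I would extract structural consequences of stability for the balanced setting. Adapting the ``good/bad point'' partition of Balcan, Blum, and Gupta, I would define a \emph{core} of each optimal cluster $C_i^*$ consisting of points whose squared distance to the center of $C_i^*$ is much smaller than to the centers of the other optimal clusters. Approximation stability guarantees that outside the cores there are at most $O(\epsilon n)$ points; combined with the balance assumption that each cluster has size $\Theta(n/k)$, each core contains $\Omega(n/k)$ points and distinct cores are well-separated in the $k$-means sense. This is where the condition $\rho > \epsilon k /\alpha$ enters: it ensures the fraction of non-core points in any cluster is small compared to $\rho$.

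Second, I would show that $k$-means++ $D^2$-sampling of $\Theta(k \log k)$ centers hits every core with probability $1 - O(\rho)$. The key lemma is that conditioned on any set of previously chosen centers, an additional $D^2$-sample lands in a fixed uncovered core with probability $\Omega(1/k)$. This uses the separation from Step~1: for an uncovered core, the $D^2$-mass contributed by its points is within a constant factor of their total squared distance to their own center, which by balance is at least a $1/k$-fraction of the overall $D^2$-mass. A coupon-collector union bound over $\Theta(k \log k)$ samples then yields the high-probability guarantee, replacing Agarwal et al.'s $\Omega(1/k)$ success probability by oversampling by a $\log k$ factor.

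Third, I would analyze the greedy pruning step of Ostrovsky et al. Using the large inter-core separation and the small intra-core diameter from Step~1, pruning by distance discards redundant samples within the same core and keeps exactly one representative per core, leaving $k$ centers aligned with the optimal clustering. Finally, assigning each point to the nearest pruned center agrees with OPT on every core point, which is a $(1-\rho)$-fraction of the data, giving the claimed $\frac{1}{1-\rho}$ closeness while automatically preserving the balance constraints up to the tolerated slack.

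The main obstacle will be the conditional sampling bound in Step~2: one must argue that already-placed centers cannot shrink the $D^2$-mass of an uncovered core below $\Omega(1/k)$ of the total. Standard analyses handle this for unconstrained $k$-means, but here the argument must be carried out while tracking the balance-driven lower bound $\ell$ on cluster sizes and the approximation-stability-driven separation between cores, making the constants delicate. A secondary challenge is ensuring the pruning step returns a clustering that respects the balance constraints, not merely well-placed centers; this follows once core recovery is established, because each recovered center inherits a $\Theta(n/k)$-sized cluster from OPT up to the $\rho n$ misclassified points.
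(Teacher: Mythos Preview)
Your overall architecture---extract cores from stability, oversample with $D^2$-seeding, prune greedily---matches the paper's. The gap is in Step~2, and it is the crux of the whole argument.

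You aim to show that a $D^2$-sample lands in a \emph{fixed} uncovered core with probability $\Omega(1/k)$, then run a coupon-collector argument over $\Theta(k\log k)$ draws. There are two problems. First, the $\Omega(1/k)$ bound for a fixed core is not justified and is likely false in general: the $D^2$-mass of an uncovered core $X_j$ is roughly $n_j\,d(c_j,\hat C)^2$, and nothing in the stability or balance hypotheses controls the ratios $d(c_j,\hat C)^2/d(c_{j'},\hat C)^2$ across different uncovered cores. One uncovered core can sit much closer to the already-chosen centers than another, so its share of the $D^2$-mass can be $o(1/k)$. (Your sentence ``within a constant factor of their total squared distance to their own center'' is also backwards: for an uncovered core the relevant distance is to the nearest \emph{chosen} center, not to its own center.) Second, even if the $\Omega(1/k)$ bound held, coupon collector with $\Theta(k\log k)$ samples yields failure probability $O(1/\mathrm{poly}(k))$, not $O(\rho)$; you would need $\Theta(k\log(k/\rho))$ samples to get the stated guarantee.

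The paper sidesteps both issues by proving a much stronger per-step bound: conditioned on having covered cores $X_1,\dots,X_m$, the next $D^2$-sample lands in \emph{some} uncovered core with probability $1-O(\rho)$. The reason is that under approximation stability the radii satisfy $r_i^2 \le O(\epsilon/\alpha)\cdot d(c_i,c_j)^2$, so once a core is covered its contribution to the total $D^2$-mass collapses to an $O(\rho)$ fraction; essentially all remaining mass sits in uncovered cores. This is exactly where the hypothesis $\rho>\epsilon k/\alpha$ is spent. With this $1-O(\rho)$ per-step bound, each successful draw covers a \emph{new} core, so $O(k)$ draws suffice to cover all cores with probability $1-O(\rho)$, and then the Ostrovsky--Rabani--Schulman--Swamy pruning finishes as you describe. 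Your Steps~1,~3,~4 are fine; replace the fixed-core coupon-collector argument in Step~2 with the ``covered cores have negligible $D^2$-mass'' calculation.
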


Intuitively, $(1 + \alpha,\epsilon)$-approximation stability forces the clusters
to become ``spread out'', i.e., the radius of any cluster is much smaller than
the inter-cluster distances.
This allows us to show for 2-means clustering, the $k$-means++ seeding procedure
will pick one point from each cluster with high probability.
We defer some proofs to Appendix \ref{app:stability}.
Given a point set $S$, let $\Delta_k(S)$ denote the optimal $k$-means cost of $S$.

\begin{lemma}[\citep{ostrovsky2006effectiveness}] \label{lem:mass}
Given a set $S\subseteq\mathbb{R}^d$ and any partition $S_1\cup S_2$ of $S$ with $S_1\neq\emptyset$. Let $s$,$s_1$,$s_2$ denote
the centers of mass of $S$, $S_1$, and $S_2$, respectively. Then
\begin{enumerate}
\item $\Delta_1^2(S)=\Delta_1^2(S_1)+\Delta_1^2(S_2)+\frac{|S_1||S_2|}{|S|}\cdot d(s_1,s_2)^2$
\item $d(s_1,s)^2\leq\frac{\Delta_1^2(S)}{|S|}\cdot\frac{|S_2|}{|S_1|}.$
\end{enumerate}
\end{lemma}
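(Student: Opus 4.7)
The plan is to prove both identities by invoking the standard ``center-of-mass decomposition'' trick, which rewrites squared distances to a global mean in terms of squared distances to a local mean plus a correction. For part~(1), I would start from $\Delta_1^2(S)=\sum_{x\in S} d(x,s)^2$, split the sum across the partition $S=S_1\cup S_2$, and inside each sub-sum write $d(x,s)^2=d(x,s_i)^2+d(s_i,s)^2+2\langle x-s_i,\, s_i-s\rangle$. The cross term $2\sum_{x\in S_i}\langle x-s_i,\,s_i-s\rangle$ vanishes because $s_i$ is the center of mass of $S_i$, so $\sum_{x\in S_i}(x-s_i)=0$. This immediately yields
\[
\Delta_1^2(S)=\Delta_1^2(S_1)+\Delta_1^2(S_2)+|S_1|\,d(s_1,s)^2+|S_2|\,d(s_2,s)^2.
\]

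The next step is to collapse the last two terms into the advertised form. Because $s=(|S_1|s_1+|S_2|s_2)/|S|$, one gets $s-s_1=(|S_2|/|S|)(s_2-s_1)$ and $s-s_2=(|S_1|/|S|)(s_1-s_2)$. Substituting,
\[
|S_1|\,d(s_1,s)^2+|S_2|\,d(s_2,s)^2=\frac{|S_1||S_2|^2+|S_2||S_1|^2}{|S|^2}\,d(s_1,s_2)^2=\frac{|S_1||S_2|}{|S|}\,d(s_1,s_2)^2,
\]
which completes~(1).

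For part~(2), I would combine~(1) with the nonnegativity of $\Delta_1^2(S_1)$ and $\Delta_1^2(S_2)$ to conclude
\[
\Delta_1^2(S)\ \geq\ \frac{|S_1||S_2|}{|S|}\,d(s_1,s_2)^2.
\]
Using the identity $d(s_1,s)^2=(|S_2|/|S|)^2\,d(s_1,s_2)^2$ derived above, rearranging gives
\[
d(s_1,s)^2\ \leq\ \frac{|S_2|^2}{|S|^2}\cdot\frac{|S|}{|S_1||S_2|}\,\Delta_1^2(S)\ =\ \frac{\Delta_1^2(S)}{|S|}\cdot\frac{|S_2|}{|S_1|},
\]
as required.

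I do not anticipate any real obstacle here, since both statements reduce to clean algebraic manipulations once the cross term is killed by the definition of the centroid. The only subtlety worth flagging is the degenerate case $S_2=\emptyset$: then $s=s_1$ and both sides of~(1) reduce to $\Delta_1^2(S_1)$ under the convention $\Delta_1^2(\emptyset)=0$, while~(2) becomes vacuous (or holds trivially with the right-hand side interpreted as $0$), so the hypothesis $S_1\neq\emptyset$ is all that is needed.
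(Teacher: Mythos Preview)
Your proof is correct and is the standard argument for this classical identity. Note, however, that the paper does not actually provide its own proof of this lemma: it is stated with a citation to Ostrovsky et al.\ (2006) and used as a black box, so there is nothing in the paper to compare your argument against beyond confirming that your derivation matches the well-known centroid decomposition that underlies the cited result.
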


We define $r_i$ as the average radius of cluster $C_i$, i.e.\ $r_i=\frac{1}{|C_i|}\sum_{x\in C_i}d(x,c_i)$.
Given a clustering instance $S$ satisfying $(1 + \alpha,\epsilon)$-approximation stability,
with upper and lower bounds $L$ and $\ell$ on the cluster sizes. We assume that $L\in O(\ell)$.
For convenience, let $|C_i|=n_i$ for all $i$.

\begin{lemma} \label{lem:r_i}
$\max(r_1^2,r_2^2)\leq O(\frac{\epsilon}{\alpha}\cdot\frac{L}{\ell})d(c_1,c_2)^2$.
\end{lemma}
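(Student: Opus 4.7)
The plan is to establish the bound by contradiction, leveraging the approximation stability assumption. Assume WLOG that $r_1 \geq r_2$, and suppose for contradiction that $r_1^2 > K \cdot \frac{\epsilon}{\alpha} \cdot \frac{L}{\ell} \, d(c_1,c_2)^2$ for some sufficiently large constant $K$. The goal is to exhibit an alternative \emph{balanced} clustering $\mathcal{C}'$ that has cost at most $(1+\alpha)\mathcal{OPT}$, yet differs from the optimal clustering $\mathcal{C}$ on more than $\epsilon n$ points, thereby contradicting the definition of $(1+\alpha,\epsilon)$-approximation stability.

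The construction swaps points between $C_1$ and $C_2$ so as to preserve cluster sizes (and hence the balance constraints $\ell n \le |C_i| \le L n$ automatically). Concretely, by Markov's inequality applied to $r_i = \frac{1}{n_i}\sum_{x\in C_i} d(x,c_i)$, at least $n_i/2$ points of $C_i$ lie within distance $2r_i$ of $c_i$; call these the \emph{inner} points. Pick $t := \lceil \epsilon n/2 \rceil + 1$ inner points from each of $C_1$ and $C_2$ and swap them, producing $\mathcal{C}'$. Since $|C_i| \geq \ell n$ and we assume $\epsilon/\alpha$ is small compared to $\ell$, this selection is feasible, and by construction $\mathcal{C}'$ disagrees with $\mathcal{C}$ on $2t > \epsilon n$ points (verifying the far side of $\epsilon$-closeness over both possible label permutations). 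To bound the cost of $\mathcal{C}'$, upper bound it by keeping the original centers $c_1,c_2$: for a point $x\in C_1$ moved to $C_2$, triangle inequality gives $d(x,c_2)^2 - d(x,c_1)^2 \leq 2 d(x,c_1)\,d(c_1,c_2) + d(c_1,c_2)^2$, and the inner-point property $d(x,c_1) \leq 2r_1$ yields cost increase at most $4 r_1\,d(c_1,c_2) + d(c_1,c_2)^2$ per swap. Summed over $2t = O(\epsilon n)$ swaps in both directions, the total cost increase is at most $O(\epsilon n)\cdot\bigl(r_1\,d(c_1,c_2) + d(c_1,c_2)^2\bigr)$.

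On the other side, $\mathcal{OPT} \geq \sum_{x\in C_1} d(x,c_1)^2 \geq n_1 r_1^2 \geq \ell n\, r_1^2$ by Cauchy--Schwarz. Demanding the cost increase be at most $\alpha \mathcal{OPT}$ leads (after dividing through by $\ell n\, r_1^2$ and discarding the lower-order $d(c_1,c_2)^2$ term, which is dominated once $r_1 \gg d(c_1,c_2)$) to $r_1^2 = O\!\bigl(\tfrac{\epsilon}{\alpha\ell}\bigr)\, d(c_1,c_2)^2$, contradicting our standing assumption once $K$ is large enough. The symmetric argument, with the roles of $C_1,C_2$ swapped, gives the analogous bound on $r_2^2$.

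The main obstacle is the interplay between the balance constraints and the cost accounting: one must ensure that the swap remains feasible (i.e., enough inner points exist in the possibly-smaller cluster of size $\ell n$) while simultaneously getting a strong enough lower bound on $\mathcal{OPT}$. This is exactly where the factor $\frac{L}{\ell}$ surfaces — the ratio between the largest permitted cluster and the smallest permitted one controls both how many inner swaps the small cluster can supply and how much $\mathcal{OPT}$ we can attribute to $r_1^2$ versus $r_2^2$. The secondary technical nuisance is the cross term $r_1 d(c_1,c_2)$ in the cost increase, which must be absorbed into $r_1^2$ using an AM--GM or case split on whether $r_1$ or $d(c_1,c_2)$ dominates; the case $r_1 \le d(c_1,c_2)$ is trivial since the conclusion of the lemma already holds, so one focuses on the regime where $r_1$ is large.
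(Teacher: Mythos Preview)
Your swap-based contradiction is a genuinely different route from the paper's proof. The paper does not argue via stability at all here; it invokes the center-of-mass identity of Lemma~\ref{lem:mass}, namely $\Delta_1^2(X)=\Delta_2^2(X)+\tfrac{n_1 n_2}{n}\,d(c_1,c_2)^2$, and manipulates it algebraically to bound $r_i^2/d(c_1,c_2)^2$ by (essentially) $\tfrac{n}{\min_i n_i}\cdot\tfrac{\Delta_2^2}{\Delta_1^2-\Delta_2^2}$, with the $L/\ell$ factor coming from $n/\min_i n_i$ and the $\epsilon/\alpha$ factor coming from the cost ratio. No alternative clustering is ever constructed.

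Your approach, however, does not recover the stated bound, and the slip is in the final algebra. You upper-bound the total cost increase by $O(\epsilon n)\bigl(r_1\,d(c_1,c_2)+d(c_1,c_2)^2\bigr)$ and lower-bound $\alpha\,\mathcal{OPT}$ by $\alpha\ell n\,r_1^2$. Requiring the former to be at most the latter (to trigger the contradiction) forces $d(c_1,c_2)\lesssim \tfrac{\alpha\ell}{\epsilon}\,r_1$; the contrapositive is therefore $r_1^2 = O\!\bigl((\epsilon/(\alpha\ell))^2\bigr)\,d(c_1,c_2)^2$, \emph{quadratic} in $\epsilon/(\alpha\ell)$, not the linear $O(\tfrac{\epsilon}{\alpha}\cdot\tfrac{L}{\ell})$ asserted in the lemma. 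The mismatch is structural: your cost-increase estimate is linear in $r_1$ while your $\mathcal{OPT}$ lower bound is quadratic, so dividing leaves an unabsorbed factor of $d(c_1,c_2)/r_1$. Separately, the factor $L$ you say ``surfaces'' never actually enters your arithmetic --- both your feasibility check (enough inner points) and your $\mathcal{OPT}\ge \ell n\,r_1^2$ bound involve only $\ell$. So the swap argument, while structurally sound and yielding \emph{some} bound of the right qualitative form, does not prove the inequality as stated; to match the paper's exponent you would need the center-of-mass decomposition or an equivalent device that compares $\Delta_2^2$ directly to $\Delta_1^2-\Delta_2^2$.
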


\begin{proof}
From part 2 of Lemma \ref{lem:mass}, we have $\Delta_1^2(X)=\Delta_2^2(X)+\frac{n_1 n_2}{n}\cdot d(c_1,c_2)^2$, which implies that
\begin{equation*}
\frac{n}{n_1 n_2}\cdot\Delta_2^2(X)=d(c_1,c_2)^2\frac{\Delta_2^2(X)}{\Delta_1^2(X)-\Delta_2^2(X)}.
\end{equation*}
Let $c$ denote the center of mass of $X$.
Then 
\begin{align*}
\Delta_1^2(X)&=\sum_{x\in X} d(c,x)^2=n_1 d(c,c_1)^2+n_2 d(c,c_2)^2+\Delta_2^2(X)\\
&>\text{min}(n_1,n_2)(d(c,c_1)^2+d(c,c_2)^2)+\Delta_2^2(X)\geq\text{min}(n_1,n_2)d(c_1,c_2)^2+\Delta_2^2(X).
\end{align*}
Therefore, 
\begin{equation*}
\frac{n}{n_1}r_1^2+\frac{n}{n_2}r_2^2\leq\frac{\Delta_2^2(X)}{\text{min}_i n_i\cdot d(c_1,c_2)}\leq \frac{n}{\text{min}_i n_i}\cdot\frac{w_{avg}^2}{d(c_1,c_2)^2}=\frac{n}{\text{min}_i n_i},
\end{equation*} 
and it follows that $\text{max}(r_1^2,r_2^2)\leq O(\frac{\epsilon}{\alpha}\cdot\frac{L}{\ell})d(c_1,c_2)^2$.
\end{proof}

Let $\rho=\frac{100\epsilon}{\alpha}<1$. Now we define the core of a cluster $C_i$ as $X_i=\{x\in C_i\mid d(x,c_i)^2\leq\frac{r_i^2}{\rho}\}$.
Then by a Markov inequality, $|X_i|\geq (1-\rho)n_i$ for all $i$.
This concentration inequality, along with Lemma \ref{lem:r_i} are the key structures needed to show $k$-means++ produces a good clustering.
Recall in $k$-means++, we pick seeds $\{\hat c_1,\dots,\hat c_k\}$ so that we pick point $x$ for $\hat c_{i+1}$ with probability 
proportional to $d(x,\{\hat c_1,\dots,\hat c_i\})^2$.
We defer the proof of the following lemma to the appendix.

\begin{lemma} \label{lem:k=2}
Assume $k=2$.
For sufficiently small $\epsilon$, 
$Pr[(\hat c_1\in X_1~\wedge~ \hat c_2\in X_2)~ \vee~ (\hat c_2\in X_1~ \wedge~ \hat c_1\in X_2)]=
1-O(\rho)$.
\end{lemma}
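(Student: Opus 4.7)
The plan is a two-stage argument that handles the two seeds of $k$-means++ separately. For the first seed I would show that $\hat c_1$ falls in the union $X_1 \cup X_2$ of the two cores with probability at least $1 - \rho$. For the second seed, conditioning on $\hat c_1 \in X_1$ (the other case is symmetric), I would bound the $D^2$-sampling probability that $\hat c_2$ lies in $X_2$ from below by $1 - O(\rho)$. Composing the two bounds, and adding the symmetric contribution from $\hat c_1 \in X_2$, yields the claimed probability.

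The first step is immediate: since $\hat c_1$ is drawn uniformly and $|X_i| \ge (1 - \rho)\, n_i$, a union bound on the cores gives $\Pr[\hat c_1 \in X_1 \cup X_2] \ge 1 - \rho$. The real work is in the second step, where I would write
\[
\Pr[\hat c_2 \in X_2 \mid \hat c_1 \in X_1]
\;=\; \frac{\sum_{x \in X_2} d(x, \hat c_1)^2}{\sum_{x \in S} d(x, \hat c_1)^2}
\]
and estimate numerator and denominator separately. Because $\hat c_1 \in X_1$, the core definition gives $d(\hat c_1, c_1) \le r_1/\sqrt{\rho}$, and Lemma~\ref{lem:r_i} combined with $\rho = 100\epsilon/\alpha$ and $L/\ell \in O(1)$ gives $r_i/\sqrt{\rho} \le \gamma\, d(c_1, c_2)$ for a small constant $\gamma$. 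The triangle inequality then gives $d(x, \hat c_1) \ge (1 - 2\gamma)\, d(c_1, c_2)$ for every $x \in X_2$, so the numerator is at least $(1 - 2\gamma)^2 (1 - \rho)\, n_2\, d(c_1, c_2)^2$.

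For the denominator I would split $S$ into $X_1$, $X_2$, and the outlier set $O := S \setminus (X_1 \cup X_2)$. The $X_1$-contribution is controlled by the bound $d(x, \hat c_1) \le 2\gamma\, d(c_1, c_2)$ available for $x \in X_1$, and the $X_2$-contribution is at most $(1 + \gamma)^2 n_2\, d(c_1, c_2)^2$. I expect the main technical obstacle to be the outlier term, since individual outliers can be arbitrarily far from $\hat c_1$ and, unlike the core points, admit no uniform distance bound. To handle it I would use the expansion $d(x, \hat c_1)^2 \le 2\, d(x, c_{\pi(x)})^2 + 2\, d(c_{\pi(x)}, \hat c_1)^2$, where $\pi(x) \in \{1, 2\}$ is the optimal cluster index of $x$, and aggregate using the total 2-means optimal cost, which by the estimates underlying Lemma~\ref{lem:r_i} is $O(\rho)\, n\, d(c_1, c_2)^2$.

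Putting the three pieces of the denominator together shows it is at most $(1 + O(\rho))$ times the $X_2$-contribution, so $\Pr[\hat c_2 \in X_2 \mid \hat c_1 \in X_1] \ge 1 - O(\rho)$ for $\epsilon$ sufficiently small. Summing over the symmetric case $\hat c_1 \in X_2$ and union-bounding with the $\rho$ failure probability from the first stage delivers the desired $1 - O(\rho)$ bound.
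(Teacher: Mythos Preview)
Your two-stage decomposition is natural, but there is a quantitative gap in the second stage that prevents the $1-O(\rho)$ conclusion. The constant $\gamma$ you introduce satisfies $\gamma^2 \le r_i^2/(\rho\, d(c_1,c_2)^2) \le O(\tfrac{\epsilon}{\alpha}\cdot\tfrac{L}{\ell})/\rho = O(L/\ell)$ by Lemma~\ref{lem:r_i} and the definition $\rho=100\epsilon/\alpha$; this is a \emph{fixed} constant that does not tend to zero with $\rho$. Consequently your bound on the $X_1$-contribution to the denominator, $4\gamma^2 n_1\, d(c_1,c_2)^2$, is of the same order as the $X_2$-contribution $\Theta(n_2\, d(c_1,c_2)^2)$ rather than an $O(\rho)$ fraction of it, and the ratio you actually obtain is only $1-O(\gamma^2)$. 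The obstruction is concrete: when $\hat c_1$ sits at the boundary of the core $X_1$ we have $d(\hat c_1,c_1)^2 = r_1^2/\rho = \Theta(d(c_1,c_2)^2)$, so for that particular $\hat c_1$ the conditional probability $\Pr[\hat c_2\in X_2\mid \hat c_1]$ is bounded away from $1$ by a constant independent of $\rho$.

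What is missing is an averaging over $\hat c_1$. Since the first seed is uniform, $\mathbb{E}\bigl[d(\hat c_1,c_1)^2 \mid \hat c_1\in X_1\bigr]\le r_1^2/(1-\rho)=O(\rho)\,d(c_1,c_2)^2$, which makes the \emph{expected} $X_1$-contribution $O(\rho n)\,d(c_1,c_2)^2$ --- small enough. The paper (following Ostrovsky \etal) builds this averaging in from the start: rather than condition on a specific $\hat c_1$, it expresses the target probability through the ratio of the aggregate sums $A=\sum_{x\in X_1,\,y\in X_2}d(x,y)^2$ and $B=\sum_{x,y}d(x,y)^2$ and evaluates both via the center-of-mass identity of Lemma~\ref{lem:mass}. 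In that formulation the within-core contribution automatically appears in its averaged form $n_1 r_1^2 = O(\rho)\, n_1\, d(c_1,c_2)^2$, and the $1-O(\rho)$ bound follows.
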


This lemma, combined with Lemma 3.3 from \citep{ostrovsky2006effectiveness}, immediately gives us the following theorem.

\begin{theorem}
$k$-means++ seeding with a Lloyd step outputs a solution that is $\frac{1}{1-\rho}$ close to the optimal solution
with probability $>1-O(\rho)$, for clustering instances satisfying $(1+\alpha,\epsilon)$-approximation stability
for the balanced $2$-means objective, with $\frac{L}{\ell}\in O(1)$.
\end{theorem}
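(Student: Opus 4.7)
The plan is to obtain the theorem as an essentially direct combination of Lemma \ref{lem:k=2} with Lemma 3.3 of \citep{ostrovsky2006effectiveness}; almost all of the hard work is already packaged inside Lemma \ref{lem:k=2}.

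First I would invoke Lemma \ref{lem:k=2}: with probability at least $1 - O(\rho)$, one of the two $k$-means++ seeds lies in the core $X_1$ and the other in $X_2$. Condition on this \emph{good seeding} event. Under this event, I would show that a single Lloyd refinement correctly assigns every core point, i.e., each $x \in X_i$ lands in the cluster whose seed sits in $X_i$. This uses two ingredients: by definition of the core, $d(x,c_i)^2 \leq r_i^2/\rho$ for $x \in X_i$; and Lemma \ref{lem:r_i} together with $L/\ell \in O(1)$ gives $r_i^2 \leq O(\epsilon/\alpha)\, d(c_1,c_2)^2$, which for $\rho = 100\epsilon/\alpha$ sufficiently small is a vanishingly small fraction of $d(c_1,c_2)^2$. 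Two triangle-inequality applications then show that for each $x \in X_i$ and the seed $\hat c_j$ lying in the other core with $j \neq i$, we have $d(x,\hat c_i) < d(x,\hat c_j)$, so the Lloyd step sends $x$ to the correct side.

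Since $|X_1| + |X_2| \geq (1-\rho)n$ by the Markov bound used to define the cores, the clustering produced after one Lloyd step agrees with the optimal balanced $2$-means clustering on at least $(1-\rho)n$ points, hence is $\rho$-close to it. Feeding this pointwise-agreement bound into Lemma 3.3 of \citep{ostrovsky2006effectiveness} upgrades it into the $\frac{1}{1-\rho}$-closeness statement claimed in the theorem, and removing the conditioning absorbs another $O(\rho)$ into the failure probability.

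The main obstacle is not in this final deduction but in Lemma \ref{lem:k=2}, whose proof must control the $k$-means++ sampling distribution under approximation stability with balance constraints. Once that is in hand, the argument above is bookkeeping with the triangle inequality, the core concentration $|X_i| \geq (1-\rho)n_i$, and a black-box call to Ostrovsky et al.'s post-seeding analysis.
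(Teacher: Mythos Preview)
Your proposal is correct and follows essentially the same route as the paper: the paper's proof is a one-line invocation of Lemma~\ref{lem:k=2} combined with Lemma~3.3 of \citet{ostrovsky2006effectiveness}. You spell out the intermediate triangle-inequality argument that a Lloyd step sends every core point to the seed in its own core, but this is precisely the content that the citation to Ostrovsky et al.\ is meant to package, so the two arguments coincide.
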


Now we have a proof for $k=2$, however,
if we induct on the number of clusters, the probability of success
becomes exponentially small in $k$. We circumvent this issue in a manner similar
to \citet{ostrovsky2006effectiveness}, by sampling $k\log k$ centers, and
carefully deleting centers greedily, until we are left with one center per cluster
with high probability.
The rest of the details in proving Theorem \ref{thm:k-means++} are presented in Appendix \ref{app:stability}.

\paragraph{$k$-median}
We show that the greedy thresholding algorithm of \citet{as} is sufficient to give a good
clustering even for the balanced $k$-median or $k$-means objective, under approximation stability.
At a high level, their algorithm works by first creating a threshold graph for a specific distance, and then
iteratively picking the
node with the highest degree in the threshold graph and removing its neighborhood.
 In Appendix \ref{app:stability}, we show balanced clustering instances where the analysis in \citet{as}
is not sufficient to guarantee good
clusterings are outputted. 
We provide a new technique which overcomes the difficulties in adding
upper and lower balance constraints,
and we obtain the following theorem.

\begin{theorem} \label{thm:as}
There is an efficient algorithm which returns a valid clustering that is
$O(\frac{\epsilon}{\alpha})$-close to the optimal, for balanced $k$-median or $k$-means clustering
under $(1+\alpha,\epsilon)$-approximation stability, provided all clusters are size
$\geq 3\epsilon n(1+\frac{3}{\alpha})$.
\end{theorem}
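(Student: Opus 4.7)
The plan is to build on the threshold-graph machinery of \citet{as} and adapt it so that the recovered clusters respect the capacity bounds $n\ell \leq |C_i| \leq nL$. The strategy has two conceptual parts: a \emph{good/bad} decomposition of the points driven by approximation stability, and a balance-aware greedy extraction from a threshold graph. Let $w^{*} = \mathcal{OPT}/n$ be the average assignment cost of the optimal balanced clustering, and set a critical distance $d_{crit} = c \alpha w^{*}/\epsilon$ for a small absolute constant $c$ (for $k$-means, work with squared distances and a rescaled threshold). Call $x \in C_i$ \emph{good} if $d(x,c_i) < d_{crit}/2$ and $d(x,c_j) > d(x,c_i) + d_{crit}$ for every $j \neq i$, and \emph{bad} otherwise. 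A standard swapping argument --- reassigning any bad point to its second-nearest center would yield a feasible clustering of cost $\leq (1+\alpha)\mathcal{OPT}$ that disagrees with the optimum on $B$, contradicting $(1+\alpha,\epsilon)$-approximation stability once $|B|$ is too large --- bounds $|B| \leq \epsilon n(1 + 3/\alpha)$.

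Next I would build the threshold graph $H$ on $V$ with an edge between $u,v$ whenever $d(u,v) \leq 2 d_{crit}$. By the triangle inequality, the good points of a single optimal cluster form a clique in $H$, while good points of distinct optimal clusters are non-adjacent; hence each $C_i$ contains a ``core clique'' $G \cap C_i$ that is linked to the rest of the graph only through bad vertices. I would then run a balance-aware greedy thresholding procedure: at each iteration, select the vertex $v$ of largest degree in $H$, extract a candidate cluster consisting of $v$ together with a carefully chosen neighborhood whose size lies in $[n\ell, nL]$, remove it, and repeat. The size assumption $|C_i| \geq 3\epsilon n(1 + 3/\alpha) \geq 3|B|$ implies $|G \cap C_i| \geq 2|B|$, so each optimal core strictly dominates the total number of bad points; this prevents any core from being ``swallowed'' by a competing candidate and forces each greedy iteration to be anchored at a point of a distinct $G \cap C_i$.

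To finish, I would show that after $k$ iterations the extracted clusters $T_1, \dots, T_k$ agree with some relabeling $C_{\sigma(1)}, \dots, C_{\sigma(k)}$ of the optimal clusters up to a symmetric difference of $O(|B|)$ points. Summing yields $\sum_i |T_i \triangle C_{\sigma(i)}| = O(\epsilon n / \alpha)$, which is the claimed $O(\epsilon/\alpha)$ closeness, while feasibility of $(T_i)$ holds by construction of the per-iteration size quota. The $k$-means version follows from the same scheme with the threshold chosen for squared distances and the analogous $k$-means swapping bound on $|B|$.

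The main obstacle will be the balance step inside the greedy loop. In the original \citet{as} analysis one can take the entire $H$-neighborhood of the max-degree vertex because there is no capacity constraint, and the bad-point noise is amortized over all $k$ extractions; in our setting a neighborhood may be either much larger than $nL$ or smaller than $n\ell$, and naive truncation can split an optimal core across two candidates or merge a core with bad fragments of another. My plan is to constrain the truncation to modify only bad-point membership --- removing the farthest bad vertices from oversized candidates and absorbing the nearest bad vertices into undersized ones --- and to prove by induction that after every iteration the remaining threshold graph still contains exactly one intact good core for each remaining optimal cluster. Establishing this invariant is the delicate part, and the hypothesis $|C_i| \geq 3\epsilon n(1 + 3/\alpha)$ is precisely the slack between core size and bad-point noise that makes the induction go through.
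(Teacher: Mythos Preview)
Your proposal has a genuine gap in the structural step, and it also differs from the paper in how balance is enforced.

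\textbf{The gap.} You claim that ``reassigning any bad point to its second-nearest center would yield a feasible clustering of cost $\leq (1+\alpha)\mathcal{OPT}$''. In the balanced setting this is false: if the receiving cluster is already at size $nL$, or the donating cluster is already at size $n\ell$, the reassigned clustering violates the capacity constraints and is therefore not a witness against approximation stability. The paper flags exactly this failure of the standard Balcan--Blum--Gupta argument and repairs it with a different structural lemma: instead of bounding the number of points whose first and second center are close, it bounds the number of \emph{pairs} $(x,y)$ from different clusters with $d(x,y)$ small, and then argues that swapping the cluster memberships of $x$ and $y$ preserves all cluster sizes while increasing cost by at most $2\alpha w_{avg}/\epsilon$ per swap. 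Via a matching (Hall-type) argument this yields $|B| = O(\epsilon n(1+1/\alpha))$. Your good/bad decomposition needs this pair-swap idea (or something equivalent) before the threshold-graph analysis can proceed.

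\textbf{The approach to balance.} You propose to enforce the capacity window $[n\ell,nL]$ \emph{inside} the greedy loop by truncating or padding each extracted neighborhood, and to maintain an invariant that only bad points are moved. This is harder than necessary and the invariant is fragile: a truncation that touches only bad points still changes the residual graph's degree profile, so the next max-degree vertex need not lie in an intact core. The paper takes a much simpler route: run the unmodified BBG threshold algorithm to obtain a clustering that is $O(\epsilon/\alpha)$-close to $\mathcal{OPT}$ but possibly infeasible, and then observe that any clustering $\epsilon'$-close to a feasible $\mathcal{OPT}$ has total capacity violation at most $\epsilon' n$; repeatedly moving one point from the largest to the smallest cluster fixes all violations in at most $\epsilon' n$ moves, yielding a feasible clustering that is $2\epsilon'$-close. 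This post-processing step replaces your entire balance-aware loop and avoids the delicate induction you anticipate.
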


\begin{comment} % 3 part theorem
\begin{theorem} \label{thm:as}
\textit{(1)} There is an efficient algorithm which returns a valid clustering that is
$O(\frac{\epsilon}{\alpha})$-close to the optimal, for balanced $k$-median or $k$-means clustering
under $(1+\alpha,\epsilon)$-approximation stability, provided all clusters are size
$\geq 3\epsilon n(1+\frac{3}{\alpha})$.
\textit{(2)} There is an efficient algorithm which returns the optimal clustering for
balanced $k$-center under $(2,0)$-approximation stability.
\textit{(3)} For $\epsilon>0$, there does not exist an efficient algorithm which returns the optimal clustering
for balanced $k$-center under $(2-\epsilon,0)$-approximation stability, unless $NP=RP$.
\end{theorem}
\end{comment}

A key point in the argument of Balcan, Blum, and Gupta is that for 
$(1+\alpha,\epsilon)$-approximation stable
instances, there are less than $\epsilon n$ points $v$ for which the distance between
their closest and second-closest center is small. Otherwise, these points could switch
clusters and cause a contradiction.
With balance constraints, this logic breaks down, since a point may not be able to
switch clusters if the new cluster is at capacity.
We provide a new structural result to handle this problem:
there are only $\frac{\epsilon n}{2}$ distinct
\emph{pairs} of points from different clusters that are close to each other.
This allows us to show the algorithm returns a clustering that is near-optimal.
We provide the details of this argument in Appendix \ref{app:stability}.

\paragraph{$k$-center}
For $k$-center, we obtain a tight result with respect to the level of approximation
stability needed. In particular, we show the exact level of stability for which the
problem of finding the optimal balanced $k$-center solution switches from NP-hard
to efficiently computable. This is shown in the following theorems.
Theorem \ref{thm:kcenter-lb} is a generalization of the lower bound in \citep{kcenter-pr}.

\begin{theorem} \label{thm:kcenter-lb}
There is an efficient algorithm such that if a clustering instance satisfies
$(2,0)$-approximation stability for the balanced $k$-center objective, then the algorithm
will return the optimal clustering. Furthermore, for $\epsilon>0$,
there does not exist an efficient algorithm that
returns optimal clusterings for balanced $k$-center instances under
$(2-\epsilon,0)$-approximation stability, unless $NP=RP$.
\end{theorem}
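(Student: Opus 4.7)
The plan is to adapt to the balanced setting the tight upper and lower bounds of \citep{kcenter-pr} on efficient optimal $k$-center clustering under approximation stability. The unifying observation is that $(2,0)$-approximation stability forces the optimal clustering to be the \emph{unique} balanced $k$-clustering of cost at most $2r^*$, where $r^*$ is the optimal $k$-center radius; I will leverage this uniqueness in the algorithmic direction and destroy it in the hardness reduction.

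For part (1), the algorithm would first guess $r^*$ by iterating over the $O(n^2)$ pairwise distances. For each candidate $r$ I would build the threshold graph $G_r$ with an edge $\{u,v\}$ whenever $d(u,v)\leq 2r$ and look for a partition of $V$ into $k$ cliques of sizes in $[n\ell, nL]$. At $r=r^*$, each optimal cluster is a clique in $G_r$ by the triangle inequality, and any balanced $k$-clique-partition of $G_r$ has $k$-center cost at most $2r^*$, so by $(2,0)$-stability it must coincide with $\mathcal{OPT}$. To find this partition efficiently, I would adapt the structural lemmas of \citep{kcenter-pr} to identify, for each optimal cluster, a ``core'' of points that are provably grouped together (for instance, points lying in a common clique of $G_r$ whose removal would force a size-constraint violation), and then complete the assignment of the remaining points via a min-cost flow between cores and unassigned points, in the spirit of Step 4 of Algorithm \ref{alg:overview}.

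For part (2), I would reduce from the NP-hard decision problem used in \citep{kcenter-pr} (a variant of Perfect Dominating Set), augmenting the construction to respect the balance constraints. Given a source instance, the existing reduction yields a metric where distinguishing optimal radius $1$ from optimal radius $\geq 2-\epsilon$ is NP-hard, and the ``yes'' instances are $(2-\epsilon,0)$-stable. I would attach the same number of private ``dummy'' points to each gadget, placed at distance $1$ from every point in its own gadget and distance $\geq 2$ from points in other gadgets, so that every optimal cluster has size exactly $n/k$ and the balance constraints $\ell \leq 1/k \leq L$ are met automatically. Because a dummy can only join its own gadget's cluster without incurring cost $\geq 2$, it introduces no new near-optimal clustering, so $(2-\epsilon,0)$-stability is preserved; an efficient algorithm for the balanced problem would then decide the source problem in RP.

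The main obstacle will be the algorithmic direction: balanced clique-partition is NP-hard in general, so the proof has to extract real mileage from $(2,0)$-stability beyond the mere existence of a valid partition. The hard work lies in showing that the ``core'' identification plus a single min-cost flow step actually recovers the unique balanced clique-partition of $G_{r^*}$; without stability, there could be many such partitions and no efficient local criterion for distinguishing cluster membership. A secondary concern in the hardness direction is verifying that the dummy padding does not accidentally create alternative clusterings of cost $<2-\epsilon$ that would invalidate the stability of the constructed instance.
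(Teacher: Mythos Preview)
Your algorithmic direction is overcomplicated and misses the decisive structural fact. The paper's argument for part (1) is a one-line swap lemma: if $p\in C_i$ and $q\in C_j$ with $i\neq j$ and $d(p,q)\leq r^*$, then swapping $p$ and $q$ gives a different balanced clustering of cost $\leq 2r^*$ (since $d(c_i,q)\leq d(c_i,p)+d(p,q)\leq 2r^*$ and symmetrically for $p$), contradicting $(2,0)$-stability. Hence every inter-cluster distance strictly exceeds $r^*$. This means the threshold graph at $r^*$ (not $2r^*$) already has the optimal clusters as its connected components: each cluster is connected through its center, and there are no inter-cluster edges. So the algorithm is just ``try each pairwise distance as $r^*$ and output the connected components''; no clique partitioning, no cores, no flow.

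Your choice of threshold $2r$ is exactly what creates the obstacle you then worry about. At $2r^*$ each optimal cluster becomes a clique, but the swap argument only yields inter-cluster distance $>r^*$, not $>2r^*$, so $G_{2r^*}$ can have inter-cluster edges and you are left with a genuine clique-partition problem for which your core-plus-flow sketch has no proof of correctness. Drop the factor of two and the whole difficulty evaporates.

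For part (2), your dummy-padding idea is in the right spirit but is not what the paper does. The paper shows the lower bound even for the extreme case $\ell=L$ by taking the reduction from 3-Dimensional Matching to Perfect Dominating Set (as in \citep{kcenter-pr}), making four copies of the resulting graph, and linking each non-matching vertex in the first copy to its three clones; this forces every dominator in a yes-instance to hit exactly four vertices, so every cluster has size exactly four. Your concern about padding accidentally creating alternative near-optimal clusterings is legitimate and is precisely why the paper uses this structured four-copy gadget rather than loose dummies; if you pursue padding, you would need a careful argument that no dummy can migrate to another gadget's cluster without pushing the radius to at least $2$.
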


\section{Structure of Balanced Clustering} \label{sec:structure}
In this section, 
we show that adding lower bounds to clustering makes the problem highly nontrivial. Specifically,
our main result is that the $k$-means, $k$-median, and $k$-center objective values may oscillate arbitrarily
with respect to $k$ (Theorem \ref{thm:localmax}).
In light of this structure,
our results from Sections \ref{sec:bicriteria} and \ref{sec:stability} are more surprising, since
it is not obvious that algorithms with constant-factor guarantees exist.

We give a variety of clustering instances which do not have monotone cost functions with respect to $k$.
For readability and intuition, these examples start out simple, and grow in complexity until we eventually
prove Theorem \ref{thm:localmax}.

First, consider a star graph with $n$ points and lower bound $\ell$, such that $n\ell\geq 3$
(see Figure \ref{fig:stargraph}).
	\begin{figure}
    \centering
		\includegraphics{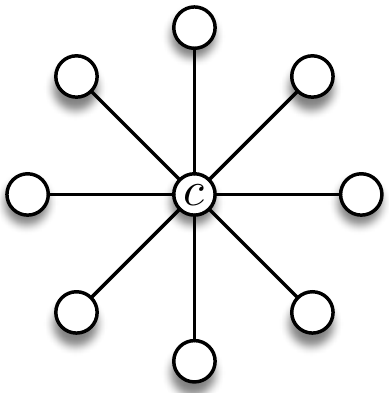}
		\caption{A graph in which the objective function
                  strictly increases with $k$. Each edge signifies
                  distance 1, and all other distances are 2.}
		\label{fig:stargraph}
	\end{figure}
The center $c$ is at distance 1 to the $10n\ell$ leaves,
and the leaves are at distance 2 from each other.
When $k=1$, each point is distance 1 to the center $c$.
However as we increase $k$, the new centers must be leaves, distance 2
from all the other points, so $n\ell-1$ points must pay 2 instead of 1
for each extra center.
It is also easy to achieve an objective that strictly decreases
up to a local minimum $k'$, and then strictly increases onward,
by adding $k'$ copies of the center of the star.

	\begin{lemma} \label{lem:stargraph}
	Given a star graph with parameters $n$ and $\ell$ such that 
	$n\ell\geq 3$, then the cost of the $k$-means and $k$-median objectives
	strictly increase in $k$.
\end{lemma}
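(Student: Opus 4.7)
The plan is to characterize the optimal balanced clustering for each feasible $k$ explicitly, then plug in and observe monotonicity. I expect the main work to be a structural claim: in any optimum, the hub $c$ is itself opened as a center, and moreover each leaf-centered cluster is driven down to its minimum allowed size $n\ell$, so the $c$-cluster absorbs all remaining points.

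First I would prove the structural claim by a local exchange argument. Suppose for contradiction some optimum does not open $c$; then $c$ lies in a cluster $C$ whose center is a leaf $v$. Swap the roles of $v$ and $c$ inside this cluster. Before the swap, the contribution of $C$ to the $k$-median cost is $d(v,c) + 2(|C|-2) = 2|C|-3$, since the other $|C|-2$ leaves are at distance $2$ from $v$. After the swap, it is $d(c,v) + (|C|-2) = |C|-1$. The improvement is $|C|-2 \ge n\ell-2 \ge 1$, a strict decrease, contradicting optimality. The identical swap gives a strict improvement for $k$-means as well, where the $2$'s become $4$'s, so $|C|-2$ becomes $3(|C|-2) \ge 3$. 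Hence we may assume $c$ is one of the $k$ centers, and the remaining $k-1$ centers are leaves. Since every point outside the $c$-cluster pays $2$ (resp.\ $4$) while every non-center point inside pays $1$, the minimization pushes as much mass into the $c$-cluster as the lower-bound constraints allow, so each leaf-cluster has exactly $n\ell$ points and the $c$-cluster has $n-(k-1)n\ell$ points (this is feasible precisely in the range $k\le 1/\ell$, which is exactly the regime in which $k$ clusters exist at all).

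Second, I would substitute this structure in and compute the optimum cost as a closed form in $k$. For $k$-median,
\begin{equation*}
\text{OPT}_{\mathrm{med}}(k) \;=\; \bigl(n-(k-1)n\ell-1\bigr)\cdot 1 \;+\; (k-1)(n\ell-1)\cdot 2 \;=\; n-1 + (k-1)(n\ell-2).
\end{equation*}
For $k$-means, replacing each distance $2$ by $4$ and each distance $1$ by $1$,
\begin{equation*}
\text{OPT}_{\mathrm{mean}}(k) \;=\; \bigl(n-(k-1)n\ell-1\bigr) \;+\; (k-1)(n\ell-1)\cdot 4 \;=\; n-1 + (k-1)(3n\ell-4).
\end{equation*}
Both coefficients $(n\ell-2)$ and $(3n\ell-4)$ are strictly positive under the hypothesis $n\ell \geq 3$, so the optimum is a strictly increasing affine function of $k$ over the feasible range, proving the lemma.

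The main obstacle is the swap argument in the structural claim, because one must verify that the swap preserves feasibility (it does: cluster sizes are unchanged, only the choice of center moves) and gives a \emph{strict} improvement, which is exactly where the assumption $n\ell \geq 3$ is used. Once the structure is pinned down, everything else is a routine arithmetic verification.
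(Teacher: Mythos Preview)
Your proof is correct and follows essentially the same approach as the paper: characterize the optimal balanced clustering (hub $c$ is a center, each leaf-centered cluster at the minimum size $n\ell$), compute the cost as an affine function of $k$, and observe the slope $n\ell-2$ (resp.\ $3n\ell-4$) is positive when $n\ell\ge 3$. Your swap argument that $c$ must be a center is a welcome addition of rigor that the paper simply asserts, and your closed forms are in fact arithmetically cleaner than the paper's (which contains a minor off-by-one in its value of $\mathcal{OPT}_2$).
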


	\begin{proof}
Let the size of the star graph be $n$.
Clearly, the optimal center for $k=1$ is $c$. Then $\mathcal{OPT}_1=n-1$.
Then for $k=2$, we must choose another center $p$ that is not $c$. $p$
is distance 2 to all points other than $c$, so the optimal clustering
is for $p$'s cluster to have the minimum of $n\ell$ points, and $c$'s cluster
has the rest.
Therefore, $\mathcal{OPT}_2=n+n\ell-2$.

This process continues; every time a new center is added, the new center
pays 0 instead of 1, but $n\ell-1$ new points must pay 2 instead of 1.
This increases the objective by $n\ell-2$. As long as $n\ell\geq 3$,
this ensures the objective function is strictly increasing in $k$.
\end{proof}

Note for this example, the problem goes away if we are allowed to
place multiple centers on a single point (in the literature,
this is called ``soft capacities'', as opposed to enforcing one
center per point, called ``hard capacities'').
The next lemma shows there can be a local minimum for hard capacities.

\begin{lemma} \label{lem:groups}
For all $k'$, there exists a balanced clustering instance in which the $k$-means or $k$-median objective
as a function of $k$ has a local minimum at $k'$.
\end{lemma}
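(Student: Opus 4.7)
I would construct the instance by taking $k'$ disjoint copies of the star gadget from Lemma~\ref{lem:stargraph} and amalgamating them into a single metric in which every pair of points not joined by a star edge has distance exactly $2$. Concretely, fix a star size $m$ large enough that $m \geq 3$ and $(k'+1)\lceil n\ell\rceil \leq n$ (for instance $m = \max(6, 3(k'+1)/k')$); create $k'$ stars, each consisting of a hub with $m-1$ leaves at distance $1$, and set every other pairwise distance (within a star or across stars) to $2$. I would pick $\ell$ with $n\ell = 3$ (where $n = k'm$) and choose $L$ loose enough that clusterings at $k \in \{k'-1, k', k'+1\}$ are all feasible. By construction the natural clustering at $k = k'$ places one center at each hub and forms one cluster per star, so every leaf pays $1$ and every hub pays $0$, giving $\text{cost}(k') = n - k'$ for both objectives.

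\textbf{Bounding the endpoints.} For $k = k'+1$ I would split into two cases. In Case~A, every star contains at least one chosen center; then some star hosts two centers, and (by the same calculation as in Lemma~\ref{lem:stargraph} applied inside that star) the second cluster's $n\ell - 1$ non-center points must sit at distance $2$ from their leaf center rather than distance $1$ from a hub, which increases cost by at least $n\ell - 2$ for $k$-median and at least $3n\ell - 4$ for $k$-means. In Case~B, some star contains no center; then all $m$ of its points are at distance $2$ from any available center, inflating the cost by at least $m + 1$ over the baseline. Both increments are strictly positive under the chosen parameters, so $\text{cost}(k'+1) > \text{cost}(k')$. The case $k = k'-1$ is simpler: any assignment of $k'-1$ centers leaves at least one star without a hub or leaf center close enough to save cost, so the same ``orphan" argument as Case~B forces $\text{cost}(k'-1) \geq n - k' + (m+1) - $ (possible savings from placing multiple centers in one star, but these savings are nonpositive by Lemma~\ref{lem:stargraph}), giving $\text{cost}(k'-1) > \text{cost}(k')$.

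\textbf{Main obstacle.} The technical subtlety is verifying the $k=k'+1$ bound against \emph{all} clusterings, not just the obvious candidates: in principle the optimizer could simultaneously orphan one star and place two centers in another, or use only leaves as centers. I would handle this uniformly by counting, for any clustering, the total number of (point, center) pairs at distance $1$ and arguing that the balance lower bound forces at least $n\ell - 2$ extra distance-$2$ assignments whenever two centers live in the same star and at least $m+1$ extra ones whenever a star is orphaned; since at least one of these two configurations must occur at $k = k'+1$, the cost strictly exceeds $n - k'$. Combining the two directions gives a strict local minimum at $k'$ for both $k$-means and $k$-median, completing the argument.
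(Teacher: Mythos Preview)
Your construction works, but it takes a noticeably harder road than the paper's. The paper does not reuse the star gadget at all: it instead takes $k'$ groups of size $2n\ell-1$ with intra-group distance $0$ and inter-group distance $1$. This makes $\text{cost}(k')=0$ exactly, so the local-minimum check becomes trivial in both directions---for $k<k'$ some group has no center and contributes $2n\ell-1>0$, and for $k>k'$ some group holds two centers but has only $2n\ell-1$ points, forcing at least one cross-group (cost-$1$) assignment. No case analysis over center placements is needed.

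Your star-based instance has $\text{cost}(k')=n-k'>0$, so you must argue a strict increase against \emph{every} competing clustering, which is exactly the obstacle you flag. The sketch you give is along the right lines, but the bookkeeping is more delicate than the proposal lets on: you need to handle, for example, configurations at $k'+1$ where none of the hubs are chosen as centers, or where a leaf center's cluster contains the (non-center) hub, or where a center is assigned to another center's cluster to satisfy balance. The specific increments ``$n\ell-2$'' and ``$m+1$'' also need to be checked carefully once you allow the hub of a two-center star to sit in the leaf center's cluster. All of this can be pushed through (your small-example intuition is sound, and a clean way is to write $\text{cost}=n-a+c$ with $a$ the number of self-assigned centers and $c$ the number of distance-$2$ assignments, then show $c\ge 2$ whenever $a=k'+1$), but it is real work that the paper's zero-distance construction sidesteps entirely. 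The upside of your route is that it reuses Lemma~\ref{lem:stargraph} verbatim; the upside of the paper's is a three-line proof.
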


\begin{proof}
Given $l\geq 3$,
we create a clustering instance as follows. Define $k'$ sets of points
$G_1,\dots,G_{k'}$, each of size $2n\ell-1$.
For any two points in some $G_i$, set their distance to 0. For any two
points
in different sets, set their distance to 1.
Then for $1\leq k\leq k'$, the objective value is equal to
$(k'-k)(2n\ell-1)$, since we can put $k$ centers into $k$ distinct groups,
but $(k'-k)$ groups will not have centers, incurring cost $2n\ell-1$.
When $k>k'$, we cannot put each center in a distinct group, so there is
some group $G_i$ with two centers. Since $|G_i|=2n\ell-1$, the two centers cannot
satisfy the capacity constraint with points only from $G_i$, so the
objective value increases.
\end{proof}

\paragraph{Local maxima}
So far, we have seen examples in which the objective decreases with
$k$, until it hits a minimum (where capacities start to become
violated), and then the objective strictly increases.  The next
natural question to ask, is whether the objective can also have a
local maximum. We show the answer is yes in the following lemma.

\begin{lemma} \label{lem:localmax234}
There exists a balanced clustering instance in which the 
$k$-center, $k$-median, and $k$-means objectives
contain a local maximum with respect to $k$.
\end{lemma}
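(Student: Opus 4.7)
The plan is to exhibit an explicit metric clustering instance, in the spirit of the constructions behind Lemmas~\ref{lem:stargraph} and~\ref{lem:groups}, whose balanced cost function has a target value $k_0$ with $\mathcal{OPT}(k_0-1) < \mathcal{OPT}(k_0)$ and $\mathcal{OPT}(k_0+1) < \mathcal{OPT}(k_0)$ for all three objectives simultaneously. The natural template combines two ingredients: (i)~a small collection of ``hub'' points, each replicated many times at mutual distance $2$, so that a cluster anchored at one hub replica has cost linear in its size; and (ii)~a small ``bridge'' subgraph on vertices such as $x_1,\ldots,x_4,y_1,y_2$ forming a unit-length clique, with a few additional unit-length edges from specific hubs to specific bridge vertices. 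All unspecified pairwise distances are set to $2$, which yields a valid metric since every indirect path already has length at least $2$.

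With such an instance, $\mathcal{OPT}(k)$ can be computed by restricting the choice of centers to a small canonical set of candidates (one representative per hub plus each bridge vertex), and then solving a trivial transportation problem with lower bound $n\ell$ and upper bound $nL$. The parameters would be calibrated so that at $k_0-1$ and at $k_0+1$ some optimal placement covers every point via a unit-length edge while respecting $\ell$; but at $k_0$ no feasible placement of $k_0$ centers achieves this. Any $k_0$-center choice either doubles up on a hub (effectively wasting a center, since replicas are at distance $2$ from each other) or opens a center inside the bridge clique that is then forced to absorb a distant hub replica to satisfy the lower bound $n\ell$. Scaling the hub multiplicity up makes the resulting additive penalty at $k_0$ arbitrarily large for $k$-median and $k$-means, so monotone transitions through $k_0$ cannot exist.

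The main obstacle is ensuring that a single construction produces a local maximum for \emph{all three} objectives at once. For $k$-median and $k$-means the penalty at $k_0$ scales with the number of points forced onto a length-$2$ edge, so raising the hub multiplicities amplifies the gap relative to $k_0 \pm 1$. For $k$-center, by contrast, distances take only the two values $1$ and $2$, so the witness must establish that at $k_0$ some point is \emph{unavoidably} at distance $2$ from its center, whereas at $k_0 \pm 1$ a placement exists in which every point lies on a unit edge to its center. I would handle this by a finite case analysis: enumerate the $O(1)$ essentially distinct placements of $k_0$ centers on the canonical candidate locations, and for each placement exhibit either a capacity violation or a point whose nearest open center is a hub replica across a length-$2$ edge. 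Once this enumeration is discharged, the chosen metric simultaneously exhibits a local maximum for the $k$-center, $k$-median, and $k$-means objectives, giving the lemma.
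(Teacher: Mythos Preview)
Your proposal is essentially the paper's own approach: a $\{1,2\}$-valued metric built from a unit clique on $x_1,\ldots,x_4,y_1,y_2$ together with replicated ``middle'' points adjacent to particular $x_i$'s and $y_j$'s, with $k_0=3$, and the proof reduced to a finite case analysis showing that $\{y_1,y_2\}$ and $\{x_1,x_2,x_3,x_4\}$ each admit all-distance-$1$ feasible assignments while no triple of centers does. The paper carries this out with concrete parameters ($n=86$, $n\ell=21$) and a four-case enumeration, but the skeleton is exactly what you describe.

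One remark: your separate treatment of $k$-median/$k$-means (via scaling multiplicities) versus $k$-center (via case analysis) is more than you need. Because the metric takes only the values $1$ and $2$, the moment the case analysis shows that every feasible $k_0$-clustering forces at least one point onto a length-$2$ edge while $k_0\pm 1$ admit all-length-$1$ assignments, the strict inequality $\mathcal{OPT}(k_0)>\max\{\mathcal{OPT}(k_0-1),\mathcal{OPT}(k_0+1)\}$ follows immediately for \emph{all three} objectives at once. No amplification of the hub multiplicities is required for $k$-median or $k$-means; the single case analysis suffices across the board.
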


\begin{figure}
      \centering
        \includegraphics[width=.4\textwidth]{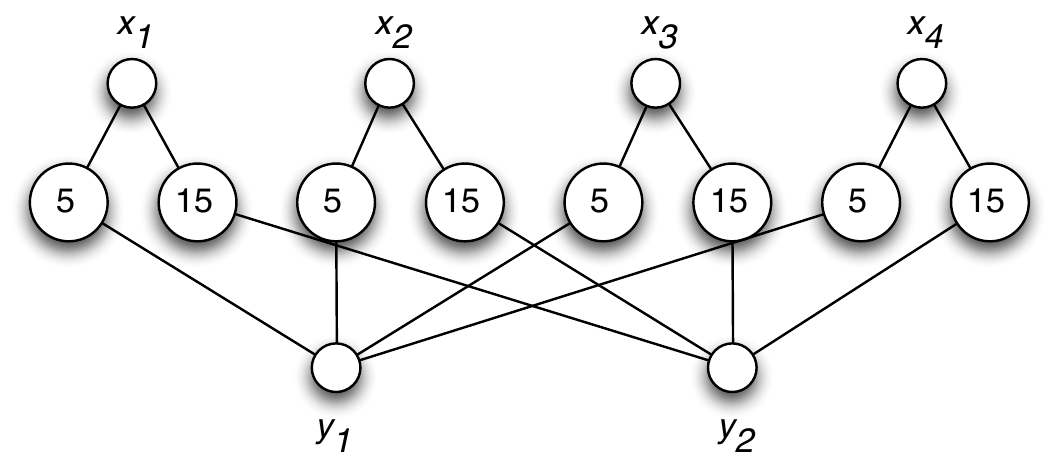}
        \caption{Each edge signifies distance 1, and all other
          distances are 2. The middle points are replicated as many
          times as their label suggests (but each pair of replicated
          points are still distance 2 away).  Finally, add length 1
          edges between all pairs in
          $\{x_1,x_2,x_3,x_4\},\{y_1,y_2\}$.}
        \label{fig:localmax}
        \vspace{-0.5cm}
\end{figure}

\begin{proof}[Proof sketch.]
Consider Figure \ref{fig:localmax}, where $n=86$, and set
$n \ell = 21$.
Since the distances are all 1 or 2,
this construction is trivially a valid
  distance metric. 
	From Figure \ref{fig:localmax}, we see that $k=2$ and $k=4$ have
valid clusterings using only length 1 edges, using centers
$\{y_1,y_2\}$ and $\{x_1,x_2,x_3,x_4\}$, respectively.  But now
consider $k=3$. The crucial property is that by construction, $y_1$
and any $x_i$ cannot simultaneously be centers and each satisfy the
capacity to distance 1 points, because the union of their distance 1
neighborhoods is less than $2n\ell$. 
In the appendix, we carefully check all other sets of 3 centers
do not achieve a clustering with distance 1 edges, which completes the proof.
\end{proof}

The previous example does not work for the case of soft capacities,
since the set of centers $\{x_1,y_2,y_2\}$ allows every point to
have an edge to its center.

Now we prove our main theorem. Note, this theorem holds even for soft capacities.

\begin{theorem} \label{thm:localmax}
For all $m\in \mathbb{N}$, there exists a balanced clustering instance 
in which the $k$-center, $k$-median, and $k$-means objectives contain
$m$ local maxima, even for soft capacities.
\end{theorem}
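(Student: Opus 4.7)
The plan is to take $m$ disjoint copies of a suitably designed ``bump'' gadget $G$, placed at pairwise distance $D$ much larger than the diameter of any gadget, and show that the resulting cost function has $m$ local maxima. First, I would build a finite clustering instance $G$, parameterized by an integer $K$, such that: (i) $G$ is feasible only when given $k_G \in \{K-1, K, K+1\}$ centers; (ii) the optimal gadget cost satisfies $f_G(K-1) = f_G(K+1) = 0$ and $f_G(K) = H > 0$; and (iii) property (ii) is robust to soft-capacity center duplication. The feasibility window in (i) is enforced by sizing $n_G$ tightly between $(K-1)n\ell$ and $(K+1)nL$. The two zero-cost configurations in (ii) follow the template of Lemma~\ref{lem:localmax234}: two layouts of ``good'' center positions whose distance-$1$ neighborhoods partition the gadget points.

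Given such a gadget, I would take $m$ disjoint copies $G_1, \ldots, G_m$ placed at pairwise distance $D$ much larger than any intra-gadget distance, so that any feasible clustering puts each cluster inside a single gadget. The total cost is $\sum_j f_G(k_j)$ with each $k_j \in \{K-1, K, K+1\}$ and $\sum_j k_j = k$. Writing $k_j = (K-1) + x_j$, setting $t = k - m(K-1)$, and letting $a = |\{j : x_j = 1\}|$ and $b = |\{j : x_j = 2\}|$, the total cost is $Ha$ subject to $a + 2b = t$ and $a + b \leq m$. Minimizing $a$ yields $a = t \bmod 2$ and $b = \lfloor t/2 \rfloor$, which is achievable for every $t \in \{0, 1, \ldots, 2m\}$. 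Hence the overall optimum as a function of $k$ alternates between $0$ (at even $t$) and $H$ (at odd $t$), producing strict local maxima at $t = 1, 3, \ldots, 2m-1$, i.e., at exactly $m$ values of $k$.

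The main obstacle is property (iii). The hard-capacity gadget of Lemma~\ref{lem:localmax234} fails for soft capacities because a $(K-1)$-configuration with some cluster of size at least $2n\ell$ extends to a zero-cost $K$-configuration by duplicating the corresponding center and splitting the cluster in two. To block this, I would design $G$ so that every candidate ``good'' center position has distance-$1$ neighborhood of size strictly less than $2n\ell$. A counting check on neighborhood sizes against $n_G$ shows this is compatible with feasibility precisely when $K \geq 4$: for $K = 3$ one gets $n_G < 4n\ell$ from the neighborhood constraint but $n_G \geq 4n\ell$ from $(K+1)$-feasibility, a contradiction; for $K \geq 4$ there is slack to design neighborhoods of size in $[n\ell, 2n\ell)$ that still cover $G$. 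One also needs to rule out ``mixed'' $K$-configurations using positions drawn from both layouts, which I would arrange by making the two layouts combinatorially misaligned so that any $K$ positions from their union either fail to cover $G$ or violate the lower capacity bound on some cluster. Because the gadget uses only two distance values, the same construction works simultaneously for $k$-median, $k$-means, and $k$-center.
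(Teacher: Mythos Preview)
Your modular strategy---build one ``bump'' gadget with a single local maximum and take $m$ far-apart copies---is genuinely different from the paper's route. The paper constructs a \emph{single} instance with $m+1$ interleaved layers of designated center sets $X_k$ (one per even $k$ in $[10m,12m]$) and argues directly that odd $k$ in that window are local maxima; there is no replication of gadgets and no decomposition of the total cost as $\sum_j f_G(k_j)$. Your reduction to a two-layer gadget plus an integer-programming argument over $(a,b)$ is clean and, if the gadget exists, the alternation argument is correct. The benefit of your approach is modularity: the hard combinatorics is confined to a gadget with two center layouts rather than $m+1$. The paper's benefit is that it never has to worry about cross-gadget assignments or the interaction between the capacity parameters and the number of copies.

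The real gap is the gadget itself. Your counting check (neighborhoods of size $<2n\ell$, hence $K\ge 4$) only rules out \emph{duplicating a single center}. It does not, by itself, rule out a $K$-configuration that mixes distinct positions from the $(K-1)$-layout and the $(K+1)$-layout. Concretely, take all $K-1$ centers from layout $A$ and one center $b$ from layout $B$. Every middle point is covered (by its $A$-center), so ``misalignment in the sense of uncovered points'' does not help here; you must force a capacity violation. That requires an \emph{overlap} condition: for every $b\in B$ there is some $a\in A$ with $|N(a)\cap N(b)|$ large enough that $|N(a)\cup N(b)|<2n\ell$, so $a$ and $b$ cannot both reach the lower bound. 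This is exactly the machinery the paper builds (its Properties~2--4: bounded neighborhoods, backbone edges to force pairwise overlap, and combination points to force coverage failure when neither layer is fully present). You allude to ``combinatorial misalignment'' but do not supply it, and the threshold $K\ge 4$ derived from the $<2n\ell$ bound alone is optimistic: once you also enforce the overlap condition the neighborhood sizes are pushed well below $2n\ell$, which tightens the feasibility window and may force a larger $K$ (the paper effectively uses $K=11$ even for a single bump). None of this is fatal---a two-layer version of the paper's backbone/combination-point construction does yield your gadget---but as written the proposal is a correct outer shell around an unsupplied core.
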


\begin{proof}%[proof of Theorem \ref{thm:localmax}]

As in the previous lemma, we will construct a set of points in which each pair of points are
either distance 1 or 2.
It is convenient to define a graph on the set of points,
in which an edge signifies a distance of 1, and all
non-edges denote distance 2.
We will construct a clustering instance where the objective value for all even values of $k$ between $10m$ and $12m$
 is low and the objective value for all odd values of $k$ between $10m$ and $12m$ is high.
The $m$ odd values will be the local maxima. We will set the lower bound $n\ell$ to be the product of all
the even integers between $10m$ and $12m$.

We start by creating a distinct set of ``good'' centers, $X_k$, for each even value of $k$ between $10m$ and $12m$.
Let $X$ be the union of these sets. The set $X_k$ contains $k$ points which will be the optimal centers for a $k$-clustering
in our instance.
Then we will add an additional set of points, $Y$, and add edges from $Y$ to the centers in $X$ with the following properties.

\begin{enumerate}

\item For each even value of $k$ between $10m$ and $12m$, there is an assignment of the points in $Y$ to the centers in $X_k$ so that points in $Y$ are only assigned to adjacent centers and the capacity constraints are satisfied.

\item Each of the good centers in $X$ is adjacent to no more than
$\frac{6}{5}\cdot n\ell$ points in $Y$.

\item For each good center $x$ in $X_k$, there is at least one point $x'$ in every other set $X_k'$ (for $k' \neq k$) so that the number of points in $Y$ adjacent to both $x$ and $x'$ is at least $\frac{2}{5} \cdot n\ell$.

\item Any subset of the centers in $X$ that does not contain any complete set of good centers $X_k$ for some even $k$ is non-adjacent to at least one point in $Y$.
\end{enumerate}

Whenever we add a point to $Y$, we give it an edge to exactly one point from each $X_k$. This ensures that each $X_k$ partitions $Y$.
We first create connected components as in Figure \ref{fig:general_backbone} that each share
$\frac{2}{5} \cdot n\ell$ points from $Y$, to satisfy Property 3.

	\begin{figure*}
    \begin{center}
		\includegraphics[width=\textwidth]{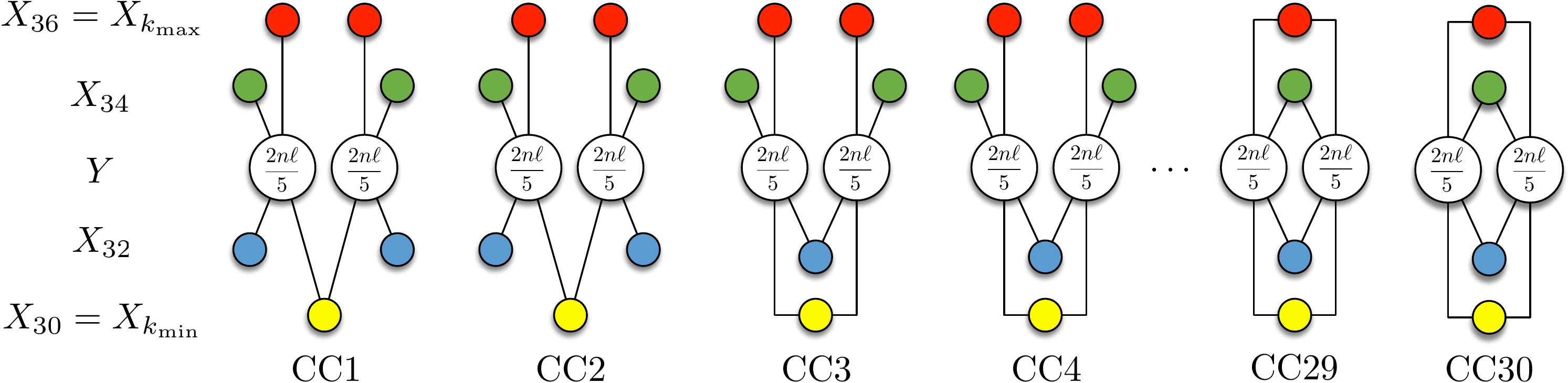}
    \end{center}
		\caption{An example when $m=3$. Each $X_k$ is a different color. 
		Each edge signifies distance 1, and all other
          distances are 2. The middle points are replicated as many
          times as their label suggests (but each pair of replicated
          points are still distance 2 away). }
		\label{fig:general_backbone}
	\end{figure*}

For property 4,  we add one additional point to $Y$ for every combination of picking one point from each $X_k$. This ensures that any set which does not contain at least
one point from each $X_k$ will not be a valid partition for $Y$.
Note that in the previous two steps, we did not give a single center more than
$\frac{6}{5}\cdot n\ell$ edges, satisfying property 2.
Then we add ``filler'' points to bring every center's capacity up to at least $n\ell$, which satisfies property 1.

Now we explain why properties 1-4 are sufficient to finish off the proof.
Property 1 guarantees that the for each even value of $k$ there is a clustering where the cost of each point in
$Y$ is one, which results in a good clustering objective.

Properties 2 and 3 guarantee that any set including a full $X_k$ and a point from a different $X_{k'}$ cannot achieve cost 1 for each point
without violating the capacities.
Property 4 guarantees that any set without a full $X_k$ cannot achieve cost 1 for each point.
This completes the proof sketch.

\end{proof}

\section{Efficient Clustering by Subsampling}
\label{sec:sampleComplexity}
\newcommand \kchp {{k \choose p}}
\newcommand \inv {^{-1}}
\newcommand \setcbigg [2]{\biggl\{#1 \,:\, #2\biggr\}}
\newcommand \dist [1]{d(#1)}

For datasets large enough to require a distributed learning system, it
is expensive to apply a clustering algorithm to the entire dataset. In
this section, we show that we can first cluster a small subsample of
data and then efficiently extend this clustering to the remaining
data. In our technique, each point in the dataset is assigned to the
same $p$ clusters as its nearest neighbor in the clustered
subsample. This dispatch rule extends the clustering to the entire
space $\X$ containing the data (not just to the unused portion of the
training set), so at prediction time it can be used to send query
points to the appropriate machines. We show that the clustering
induced over $\X$ approximately inherits all of the desirable
properties of the clustered subsample: good objective value, balanced
clusters, and replication. Note that the simpler approach of assigning
a new example $x$ to the $p$ clusters with the nearest centers
achieves the lowest cost for new examples, but it may severely violate
the capacity constraints if the data distribution is not evenly
distributed among the centers.

\newcommand \Qmed {Q^{(1)}}
\newcommand \hQmed {\hat{Q}^{(1)}}
\newcommand \Qmea {Q^{(2)}}
\newcommand \hQmea {\hat{Q}^{(2)}}
Each clustering of $\X$ can be represented as a pair $(f, C)$ where
$C = \{c_1, \dots, c_k\}$ is a set of centers and
$f : \X \to {C \choose p}$ is a function that assigns each point in
$\X$ to $p$ of the $k$ centers. We measure the quality of a clustering
of $\X$ as follows: given a data distribution $\pdist$ over $\X$, our
goal is to find a clustering with centers $C$ and an assignment
function $f : \X \to {C \choose p}$ that minimizes either the
$k$-median objective $\Qmed$ or the $k$-means objective $\Qmea$ given
by
\[
  \Qmed(f,C)
  = \expect_{x \sim \pdist} \biggl[\, \sum_{i \in f(x)} \dist{x, c_i} \biggr]
  \qquad \hbox{and} \qquad
  \Qmea(f,C)
  = \expect_{x \sim \pdist} \biggl[\, \sum_{i \in f(x)} \dist{x, c_i}^2 \biggr],
\]
subject to the constraint that each cluster has probability mass
between $\ell$ and $L$. Specifically, we require for each cluster
index $i$ that $\prob_{x \sim \pdist}(i \in f(x)) \in [\ell,
L]$. Throughout this section, we use the notation $Q$, $Q_n$, and
$\hat Q_n$ as a placeholder for either the $k$-median or $k$-means
objective.

In our algorithm, each point $x$ in the subsample $S$ acts as a
representative for those points in $\X$ that are closer to it than any
other sample point (i.e., its cell in the Voronoi partition induced by
$S$). Since each sample point might represent more or less of the
probability mass of $\pdist$, we consider the following weighted
clustering problem of a dataset $S$. A clustering of the data set $S$
is a pair $(g,C)$ for some set of centers $C = \{c_1, \dots, c_k\}$
and an assignment function $g : S \to {C \choose p}$ that assigns each
point of $S$ to $p$ of the centers. The weight for point $x_i$ is
$w_i = \prob_{x \sim \pdist}(\NN_S(x) = x_i)$, where $\NN_S(x)$
denotes the nearest neighbor in $S$ to the point $x$. The weighted
$k$-median and $k$-means objectives on $S$ are given by
\[
\Qmed_n(g,c)
= \sum_{j = 1}^n w_j \sum_{i \in g(x_j)} \dist{x_j, c(i)}
\qquad\hbox{and}\qquad
\Qmea_n(g,c)
= \sum_{j = 1}^n w_j \sum_{i \in g(x_j)} \dist{x_j, c(i)}^2,
\]
where the subscript $n$ refers to the size of the subsample $S$. The
weighted capacity constraints require that the total weight of each
cluster $i$, given by $\sum_{j : i \in g(x_j)} w_j$, is between $\ell$
and $L$. Since the distribution $\pdist$ is unknown, our algorithm
uses a second sample drawn from $\pdist$ to estimate the
weights. Given estimates $\hat w_1$, \dots, $\hat w_n$ of the true
weights, the estimated $k$-median and $k$-means objective functions
are
\[
\hQmed_n(g,c)
=
\sum_{j = 1}^n \hat w_j \sum_{i \in g(x_j)} \dist{x_j, c(i)}
\qquad\hbox{and}\qquad
\hQmea_n(g,c)
=
\sum_{j = 1}^n \hat w_j \sum_{i \in g(x_j)} \dist{x_j, c(i)}^2,
\]
and the estimated weight of a cluster is
$\sum_{j:i \in g(x_j)} \hat w_j$.  Finally, for any clustering $(g,c)$
of $S$, define the nearest neighbor extension to be $(\bar g, c)$
where $\bar g(x) = g(\NN_S(x))$. The assignment function $\bar g$
assigns each point in $\X$ to the same $p$ clusters as its nearest
neighbor in $S$.

Our algorithm first runs a clustering algorithm to approximately
minimize the weighted clustering objective on the sample $S$ (where
the weights are estimated using a second sample drawn from the same
distribution). Then, we extend this clustering to the entire space
$\X$ by assigning a new example to the same $p$ clusters as its
nearest neighbor in the subsample $S$. Psuedocode is given in
Algorithm~\ref{alg:nnextension}. Our main result in this section shows
that the resulting clustering of $\X$ approximately satisfies the
capacity constraints and has a nearly optimal objective value.

\begin{figure}
\begin{framed}
{\bf Input:} Dataset $S = \{x_1, \dots, x_n\}$, cluster parameters $(k,p,\ell, L)$, second sample size $n'$.
\begin{enumerate}[noitemsep,nolistsep,leftmargin=*]
\item Draw second sample $S'$ of size $n'$ iid from $\mu$.
\item For each point $x_i$, set $\hat w_i = |S'_i|/n'$, where
  $S'_i = \setc{x' \in S'}{NN_S(x') = x_i}$
\item Let $C_S = (c_1, \dots, c_k)$ and $g_S : S \to {C \choose p}$ be
  a clustering of $S$ obtained by minimizing $\hat Q_n(g,C)$ subject
  to the approximate weighted capacity constraints.
\item Return $\bar g_S(x) = g_S(\NN_S(x))$ and centers $C_S$.
\end{enumerate}
\end{framed}
\vspace{-1em}
\renewcommand {\figurename} {Algorithm}
\caption{Nearest neighbor clustering extension.}
\label{alg:nnextension}
\vspace{-0.57cm}
\end{figure}
\setcounter{figure}{2}
\setcounter{algocf}{2}

Before stating our main result, we first show that if we take the
second sample size $n'$ to be $\tilde O(n / \epsilon^2)$, then with
high probability the error in any sum of the estimated weights
$\hat w_j$ is at most $\epsilon$.

\begin{restatable}{lemma}{weightApproxLem}
  \label{lem:weightapprox}
  For any $\epsilon > 0$ and $\delta > 0$, if we set
  $n' = O\bigl(\frac{1}{\epsilon^2}(n + \log \frac{1}{\delta})\bigr)$
  in Algorithm~\ref{alg:nnextension}, then with probability at least
  $1-\delta$ we have
  $\bigl| \sum_{i \in I} (w_i - \hat w_i) \bigr| \leq \epsilon$
  uniformly for all index sets $I \subset [n]$.
\end{restatable}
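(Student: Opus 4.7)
The plan is to reduce the claim to a uniform concentration inequality over a finite collection of events and then apply Hoeffding plus a union bound. The key observation is that the weights are probabilities of Voronoi cells: by definition $w_i = \prob_{x \sim \pdist}(\NN_S(x) = x_i)$, so for any index set $I \subseteq [n]$,
\[
  \sum_{i \in I} w_i
  = \prob_{x \sim \pdist}(\NN_S(x) \in \{x_i : i \in I\}),
\]
while the estimator $\sum_{i \in I} \hat w_i = \frac{1}{n'} |\{x' \in S' : \NN_S(x') \in \{x_i : i \in I\}\}|$ is exactly the empirical frequency of the same event under the $n'$ iid draws in $S'$. Thus the quantity we must control is, uniformly in $I$, the deviation between an empirical probability and its expectation.

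For a fixed $I$, each of the $n'$ samples contributes an independent Bernoulli indicator of the event $\{\NN_S(x') \in \{x_i : i \in I\}\}$, so Hoeffding's inequality gives
\[
  \prob\!\left(\left|\sum_{i \in I}(w_i - \hat w_i)\right| > \epsilon\right)
  \leq 2 \exp(-2 n' \epsilon^2).
\]
Since $S$ is fixed (of size $n$), there are only $2^n$ index sets $I \subseteq [n]$, so a union bound yields
\[
  \prob\!\left(\exists I \subseteq [n] : \left|\sum_{i \in I}(w_i - \hat w_i)\right| > \epsilon\right)
  \leq 2^{n+1} \exp(-2 n' \epsilon^2).
\]
Setting the right-hand side to $\delta$ and solving for $n'$ gives $n' \geq \frac{1}{2 \epsilon^2}\bigl((n+1)\log 2 + \log(2/\delta)\bigr) = O\!\left(\frac{1}{\epsilon^2}\bigl(n + \log \frac{1}{\delta}\bigr)\right)$, which matches the statement.

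I do not expect a real obstacle here: the only modeling point is noticing that the class of events indexed by $I$ has cardinality at most $2^n$ (one could alternatively bound its VC dimension by $n$ and invoke a uniform convergence theorem, but since $S$ has size $n$ the crude counting bound already gives the claimed rate). The proof proposal is therefore: (i) rewrite $\sum_{i \in I} w_i$ and $\sum_{i \in I} \hat w_i$ as the true and empirical probabilities of the Voronoi-union event determined by $I$; (ii) apply Hoeffding for each fixed $I$; (iii) take a union bound over the $2^n$ choices of $I$; (iv) solve the resulting inequality for $n'$.
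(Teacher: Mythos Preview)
Your proposal is correct and essentially identical to the paper's proof: both rewrite $\sum_{i\in I} w_i$ and $\sum_{i\in I}\hat w_i$ as the true and empirical measures of the union of Voronoi cells indexed by $I$, apply Hoeffding for each fixed $I$, take a union bound over the $2^n$ index sets, and solve for $n'$. The only cosmetic difference is that the paper writes the bound as $2^n e^{-2n'\epsilon^2}$ rather than your $2^{n+1}e^{-2n'\epsilon^2}$, which of course does not affect the $O\!\bigl(\tfrac{1}{\epsilon^2}(n+\log\tfrac{1}{\delta})\bigr)$ conclusion.
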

\begin{proof}
  Let $V_i$ be the cell of point $x_i$ in the Voronoi partition
  induced by $S$. For any index set $I \subset [n]$, let $V_I$ denote
  the union $\bigcup_{i \in I} V_i$. Since the sets $V_1$, \dots,
  $V_n$ are disjoint, for any index set $I$ we have that
  $\pdist(V_I) = \sum_{i \in I} w_i$ and
  $\hat \pdist(V_I) = \sum_{i \in I} \hat w_i$, where $\hat \pdist$ is
  the empirical measure induced by the second sample $S'$. Therefore
  it suffices to show uniform convergence of $\hat \mu(V_I)$ to
  $\mu(V_I)$ for the $2^n$ index sets $I$. Applying Hoeffding's
  inequality to each index set and the union bound over all $2^n$
  index sets, we have that
  \[
    \prob \biggl(
    \sup_{I \subset [n]} \biggl| \sum_{i \in I} w_i - \hat w_i \biggr| > \epsilon
    \biggr)
    \leq 2^n e^{-2n' \epsilon^2}.
  \]
  Setting
  $n' = O\bigl(\frac{1}{\epsilon^2}(n + \log \frac{1}{\delta}) \bigr)$
  results in the right hand side being equal to $\delta$.
\end{proof}

Next we relate the weighted capacity constraints and objective over
the set $S$ to the constraints and objective over the entire space
$\X$.

\begin{lemma}
  \label{lem:extensionprops}
  Let $(g,c)$ be any clustering of $S$ that satisfies the weighted
  capacity constraints with parameters $\ell$ and $L$. Then the
  nearest neighbor extension $(\bar g, c)$ satisfies the capacity
  constraints with respect to $\mu$ with the same parameters. For the
  $k$-median objective we have
  \[
    |\Qmed_n(g,c) - \Qmed(\bar g, c)| \leq p \expect_{x \sim \pdist}\bigl[\dist{x ,\NN_S(x)}\bigr],
  \]
  and for the $k$-means objective we have
  \[
    \Qmea_n(g,c) \leq 2\Qmea(\bar g, c) + 2p \expect_{x \sim \pdist}\bigl[\dist{x ,\NN_S(x)}^2\bigr]
    \quad\hbox{and}\quad
    \Qmea(\bar g,c) \leq 2\Qmea_n(g, c) + 2p \expect_{x \sim \pdist}\bigl[\dist{x ,\NN_S(x)}^2\bigr].
  \]
\end{lemma}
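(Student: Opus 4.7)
The plan is to prove all three parts by first rewriting the weighted sample objective $Q_n(g,c)$ as an expectation over $\mu$, so that $Q_n(g,c)$ and $Q(\bar g,c)$ have a common form and can be compared pointwise in $x$. The key identity is that for any function $h:S \to \mathbb{R}$,
\[
  \sum_{j=1}^n w_j \, h(x_j) = \expect_{x \sim \pdist}\bigl[ h(\NN_S(x)) \bigr],
\]
since $w_j = \prob_{x \sim \pdist}(\NN_S(x) = x_j)$ by definition. Applied to $h(x_j) = \sum_{i \in g(x_j)} \dist{x_j, c_i}$ (or its square), this turns $\Qmed_n(g,c)$ into $\expect_x[\sum_{i \in \bar g(x)} \dist{\NN_S(x), c_i}]$, which differs from $\Qmed(\bar g,c) = \expect_x[\sum_{i \in \bar g(x)} \dist{x, c_i}]$ only in whether the distance is measured from $x$ or from its nearest neighbor.

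First I will handle the capacity constraint. For any cluster index $i$,
\[
  \prob_{x \sim \pdist}\bigl(i \in \bar g(x)\bigr)
  = \prob_{x \sim \pdist}\bigl(i \in g(\NN_S(x))\bigr)
  = \sum_{j : i \in g(x_j)} \prob_{x \sim \pdist}\bigl(\NN_S(x) = x_j\bigr)
  = \sum_{j : i \in g(x_j)} w_j,
\]
which lies in $[\ell, L]$ by assumption on $(g,c)$. So $(\bar g,c)$ satisfies the $\pdist$-capacity constraints with the same parameters.

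For the $k$-median bound, I would subtract the two expectation representations and pull the absolute value inside:
\[
  \bigl|\Qmed(\bar g,c) - \Qmed_n(g,c)\bigr|
  \leq \expect_{x \sim \pdist}\Bigl[\, \sum_{i \in \bar g(x)} \bigl| \dist{x, c_i} - \dist{\NN_S(x), c_i} \bigr| \Bigr].
\]
Each inner term is at most $\dist{x, \NN_S(x)}$ by the triangle inequality (the reverse-triangle-inequality form $|d(a,c) - d(b,c)| \leq d(a,b)$), and there are exactly $p$ terms in the sum over $i \in \bar g(x)$, which yields the claimed bound $p\expect_x[\dist{x, \NN_S(x)}]$.

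For $k$-means I use the same rewriting, but now I cannot take absolute values directly; instead I apply the inequality $(a+b)^2 \leq 2a^2 + 2b^2$ to $\dist{x, c_i} \leq \dist{x, \NN_S(x)} + \dist{\NN_S(x), c_i}$ and symmetrically to $\dist{\NN_S(x), c_i} \leq \dist{\NN_S(x), x} + \dist{x, c_i}$, obtaining
\[
  \dist{x, c_i}^2 \leq 2\dist{\NN_S(x), c_i}^2 + 2\dist{x, \NN_S(x)}^2,
\]
and the reverse. Summing over the $p$ centers in $\bar g(x) = g(\NN_S(x))$ and taking expectations over $\pdist$ yields both directions of the $k$-means bound.

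None of the steps requires heavy machinery; the main subtlety, and the step I would be most careful with, is the initial rewriting of $Q_n(g,c)$ as an expectation over $\pdist$. This relies crucially on the nearest-neighbor assignment in the extension $\bar g(x) = g(\NN_S(x))$ being exactly the map that realizes the weights $w_j = \prob(\NN_S(x) = x_j)$; once that is set up, the $k$-median claim is immediate from the triangle inequality and the $k$-means claim is immediate from $(a+b)^2 \leq 2a^2 + 2b^2$.
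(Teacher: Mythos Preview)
Your proposal is correct and follows essentially the same approach as the paper: both rely on the identity $\Qmed_n(g,c) = \expect_{x\sim\pdist}\bigl[\sum_{i\in\bar g(x)} d(\NN_S(x),c_i)\bigr]$ (the paper uses it implicitly), then apply the triangle inequality for $k$-median and the squared-triangle inequality $d(x,z)^2 \leq 2(d(x,y)^2 + d(y,z)^2)$ for $k$-means. The only cosmetic difference is that you bound $|\Qmed_n - \Qmed|$ in one shot via the reverse triangle inequality, whereas the paper proves each direction separately; your capacity-constraint argument is also just a more explicit version of the paper's one-line appeal to the definition of the weights.
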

\begin{proof}
  The fact that $\bar g$ satisfies the population-level capacity constraints
  follows immediately from the definition of the weights $w_1$, \dots, $w_n$.

  By the triangle inequality, $k$-median objective over $\X$ with
  respect to $\mu$ can be bounded as follows
  \[
    \Qmed(\bar g, c)
    \leq \expect_{x \sim \pdist} \biggl[\, \sum_{i \in \bar g(x)} \dist{x , \NN_S(x)}\biggr]
    + \expect_{x \sim \pdist} \biggl[\, \sum_{i \in \bar g(x)} \dist{\NN_S(x) , c(i)} \biggr]
    = p \expect_{x \sim \pdist} [\dist{x , \NN_S(x)}] + \Qmed_n(g, c).
  \]
  The reverse inequality follows from an identical argument applying
  the triangle inequality to $\Qmed_n$.

  For the $k$-means objective, the result follows similarly by using
  the following approximate triangle inequality for squared distances:
  $\dist{x,z}^2 \leq 2(\dist{x,y}^2 + \dist{y,z}^2)$,
\end{proof}

\paragraph{Main Result} We bound the sub-optimality of the clustering
$(\bar g_n, c_n)$ returned by Algorithm~\ref{alg:nnextension} with
respect to any clustering $(f^*, c^*)$ of the entire space $\X$. The
bound will depend on
\begin{enumerate} \item the quality of the finite-data
  algorithm, \item the ``average radius'' of the Voronoi cells
  $\alpha_1(S) = \expect_{x \sim \pdist} [\dist{x , \NN_S(x)}]$ for
  $k$-median and the ``average squared radius'' for $k$-means
  $\alpha_2(S) = \expect_{x \sim \pdist}[\dist{x,\NN_S(x)}^2]$ ,
  and \item the bias from returning clusterings that are constant over
  the cells in the Voronoi partition induced by $S$. The following
  definition measures the bias for $k$-median
  \[
    \beta_1(S, \ell,L)
    = \min_{h,c}\big\{
    \Qmed(\bar h, c) - \Qmed(f^*, c^*)
    \,\big|\, h\text{ satisfies balance constraints } (\ell ,L)\big\},
  \]
  where the minimum is taken over all clusterings $(h,c)$ of the
  sample $S$ and $(\bar h, c)$ denotes the nearest neighbor
  extension. The bias $\beta_2(S, \ell, L)$ for $k$-means is defined
  analogously.
\end{enumerate}

\begin{theorem}
  \label{th:sampleComplexity}
  For any $\epsilon > 0,\delta > 0$, let $(\bar g_n, c_n)$ be the
  output of Algorithm~\ref{alg:nnextension} with parameters
  $k, p, \ell, L$ and second sample size
  $n' = O\bigl((n + \log 1/\delta)/\epsilon^2\bigr)$. Let $(f^*, c^*)$
  be any clustering of $\X$ and $(g_n^*, c_n^*)$ be an optimal
  clustering of $S$ under $\hat Q_n$ satisfying the estimated weighted
  balance constraints $(\ell, L)$. Suppose the algorithm used to
  cluster $S$ satisfies
  $\hat Q(g_n, c_n) \leq r\cdot \hat Q(g_n^*, c_n^*) + s$. Then
  w.p. $\geq 1-\delta$ over the second sample the output
  $(\bar g_n, c_n)$ will satisfy the balance constraints with
  $\ell' = \ell - \epsilon$ and $L'=L + \epsilon$. For $k$-median we
  have
  \begin{align*}
  \Qmed(\bar g_n, c_n) \leq r\cdot \Qmed(f^*, c^*) + s
                                  + 2(r+1)pD\epsilon
                                  + p(r+1) \alpha_1(S)
                                  + r\beta_1(S, \ell + \epsilon, L - \epsilon),
  \end{align*}
  and for $k$-means we have
  \begin{align*}
      \Qmea(\bar g_n, c_n) \leq 4r\cdot \Qmea(f^*, c^*) + 2s
                                  + 4(r+1)pD^2\epsilon
                                  + 2p(2r+1) \alpha_2(S)
                                  + 4r\beta_2(S, \ell + \epsilon, L - \epsilon).
  \end{align*}
\end{theorem}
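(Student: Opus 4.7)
The plan is to chain together three sources of approximation error: (i) the $\epsilon$-accuracy of the weight estimates $\hat w_i$ from Lemma~\ref{lem:weightapprox}, (ii) the sampling bias between the weighted objective on $S$ and the objective over $\X$ from Lemma~\ref{lem:extensionprops}, and (iii) the $(r,s)$-guarantee of the subroutine used to cluster $S$. The algorithmic output $(\bar g_n, c_n)$ will be compared to the reference clustering $(f^*, c^*)$ by routing through an intermediate clustering of $S$ whose nearest-neighbor extension is nearly as good as $(f^*,c^*)$ but that is feasible for the $\hat Q_n$-optimization problem.

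First, I would condition on the event of Lemma~\ref{lem:weightapprox}, so that $|\sum_{i \in I}(w_i - \hat w_i)| \leq \epsilon$ uniformly over $I \subset [n]$. Applying this to $I = \{j : w_j \geq \hat w_j\}$ and its complement gives $\sum_j |w_j - \hat w_j| \leq 2\epsilon$, which immediately yields two facts: (a) any clustering feasible for the $(\ell,L)$ estimated weighted balance constraints has true population balance in $[\ell - \epsilon, L + \epsilon]$ (giving the claimed $\ell', L'$); and (b) for any $(g,c)$, $|Q_n(g,c) - \hat Q_n(g,c)| \leq 2pD\epsilon$ for $k$-median (and $2pD^2\epsilon$ for $k$-means), since each inner sum $\sum_{i \in g(x_j)} d(x_j, c_i)^\bullet$ is bounded by $pD^\bullet$.

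Next I would construct the bridge clustering. Let $(h^*, c_h^*)$ be the minimizer in the definition of $\beta_1(S, \ell + \epsilon, L - \epsilon)$, so that $\bar h^*$ satisfies population balance $(\ell + \epsilon, L - \epsilon)$ and $Q^{(1)}(\bar h^*, c_h^*) = Q^{(1)}(f^*, c^*) + \beta_1(S, \ell + \epsilon, L - \epsilon)$. By Lemma~\ref{lem:extensionprops}, $h^*$ then satisfies the true weighted balance $(\ell + \epsilon, L - \epsilon)$, and by (a) it is feasible for the $\hat Q_n$-problem with parameters $(\ell, L)$. Hence $\hat Q_n(g_n^*, c_n^*) \leq \hat Q_n(h^*, c_h^*)$. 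Now I would chain: $Q^{(1)}(\bar g_n, c_n) \leq Q^{(1)}_n(g_n, c_n) + p\alpha_1(S) \leq \hat Q^{(1)}_n(g_n, c_n) + 2pD\epsilon + p\alpha_1(S) \leq r \hat Q^{(1)}_n(g_n^*, c_n^*) + s + 2pD\epsilon + p\alpha_1(S) \leq r \hat Q^{(1)}_n(h^*, c_h^*) + s + 2pD\epsilon + p\alpha_1(S) \leq r Q^{(1)}_n(h^*, c_h^*) + 2rpD\epsilon + s + 2pD\epsilon + p\alpha_1(S) \leq r Q^{(1)}(\bar h^*, c_h^*) + rp\alpha_1(S) + 2(r+1)pD\epsilon + s + p\alpha_1(S)$, and substituting the definition of $\beta_1$ recovers exactly the stated bound. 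The $k$-means case runs identically, except that each application of Lemma~\ref{lem:extensionprops} incurs a factor of $2$ via the approximate triangle inequality $d(x,z)^2 \leq 2(d(x,y)^2 + d(y,z)^2)$, producing the $4r$ in front of $Q^{(2)}(f^*,c^*)$ and the $2p(2r+1)\alpha_2(S)$ term.

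The main obstacle is the bookkeeping around the three distinct balance regimes: the population constraints $(\ell,L)$ that $(\bar g_n, c_n)$ will satisfy only up to $\epsilon$ slack, the true weighted constraints that Lemma~\ref{lem:extensionprops} translates between, and the estimated weighted constraints $(\ell,L)$ that the algorithm actually enforces. The critical observation is that tightening the reference constraints to $(\ell + \epsilon, L - \epsilon)$ when defining $\beta_1,\beta_2$ exactly compensates for the $\epsilon$-slack introduced by swapping $w$ for $\hat w$, so that the bridge clustering $h^*$ is simultaneously feasible for $\hat Q_n$ and has an extension nearly matching $(f^*, c^*)$. The remaining work is routine algebra absorbing the $\alpha_\cdot(S)$ and $pD^\bullet\epsilon$ error terms.
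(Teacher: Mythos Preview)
Your proposal is correct and follows essentially the same approach as the paper: condition on the weight-approximation event of Lemma~\ref{lem:weightapprox}, use it both to translate between estimated and true balance constraints and to bound $|Q_n - \hat Q_n|$ by $2pD^\bullet\epsilon$, introduce the bridge clustering $(h^*, c_h^*)$ attaining the minimum in $\beta_\bullet(S,\ell+\epsilon,L-\epsilon)$ so that it is feasible for the $\hat Q_n$-problem, and then chain inequalities exactly as you wrote. The only quibble is that the sentence ``By Lemma~\ref{lem:extensionprops}, $h^*$ then satisfies the true weighted balance $(\ell+\epsilon,L-\epsilon)$'' should instead cite the definition of $\beta_1$ directly; Lemma~\ref{lem:extensionprops} is only needed the other way, to pass from weighted balance of $h^*$ to population balance of $\bar h^*$ (and later for the objective bounds).
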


\begin{proof}
  Lemma~\ref{lem:weightapprox} guarantees that when the second sample
  is of size $O(\frac{1}{\epsilon^2}(n + \log \frac{1}{\delta}))$ then
  with probability at least $1 - \delta$, for any index set
  $I \subset [n]$, we have
  $\bigl| \sum_{i \in I} w_i - \hat w_i \bigr| \leq \epsilon$. For the
  remainder of the proof, assume that this high probability event
  holds.

  First we argue that the clustering $(g_n, c_n)$ satisfies the true
  weighted capacity constraints with the slightly looser parameters
  $\ell' = \ell - \epsilon$ and $L' = L + \epsilon$. Since the
  clustering $(g_n, c_n)$ satisfies the estimated weighted capacity
  constraints, the high probability event guarantees that it will also
  satisfy the true weighted capacity constraints with the looser
  parameters $\ell' = \ell - \epsilon$ and $L' = L + \epsilon$.
  Lemma~\ref{lem:extensionprops} then guarantees that the extension
  $(\bar g_n, c_n)$ satisfies the population-level capacity
  constraints with parameters $\ell'$ and $L'$.

  Next we bound the difference between the estimated and true weighted
  objectives for any clustering $(g,c)$ of $S$. For each point $x_j$
  in the set $S$, let $C_j = \sum_{i \in g(x_j)} d(x_j, c(i))$ be the
  total distance from point $x_j$ to its $p$ assigned centers under
  clustering $(g,c)$, and let $J$ be the set of indices $j$ for which
  $\hat w_j > w_j$. Then by the triangle inequality, we have the
  following bound for the $k$-median objective:
  \begin{align}
  |\hQmed_n(g, c) - \Qmed_n(g, c)|
  &\leq \biggl|\sum_{j \in J} (\hat w_j - w_j) C_j \biggr|
      + \biggl|\sum_{j \not\in J} (w_j - \hat w_j) C_j \biggr| \nonumber\\
  &\leq \biggl(\biggl|\sum_{j \in J} (\hat w_j - w_j) \biggr|
      + \biggl|\sum_{j \not\in J} (w_j - \hat w_j) \biggr|\biggr) pD \nonumber\\
  &\leq 2pD\epsilon, \label{eq:Qestimate}
  \end{align}
  where the second inequality follows from the fact that $C_j \leq pD$
  and the sum has been split so that $(\hat w_j - w_j)$ is always
  positive in the first sum and negative in the second. For the
  $k$-means objective, the only difference is that our upper bound on
  $C_j$ is $D^2$ instead of $D$, which gives
  $|\hQmea_n(g, c) - \Qmea_n(g, c)| \leq 2pD^2\epsilon$.

  Finally, let $(h_n, c_n')$ be the clustering of $S$ that attains the
  minimum in the definition of
  $\beta(S, \ell + \epsilon, L - \epsilon)$. That is, the clustering
  of $S$ satisfying the capacity constraints with parameters
  $\ell + \epsilon$ and $L - \epsilon$ whose nearest neighbor
  extension has the best objective over $\X$ with respect to $\mu$
  (note that this might not be the extension of $(g_n^*, c_n^*)$).

  Now we turn to bounding the $k$-median objective value of
  $(\bar g_n, c_n)$ over the entire space $\X$. Combining
  Lemma~\ref{lem:extensionprops}, equation \eqref{eq:Qestimate}, the
  approximation guarantees for $(g_n, c_n)$ with respect to
  $\hat Q_n$, and the optimality of $(g_n^*, c_n^*)$, we have the
  following:
  \begin{align*}
    \Qmed(\bar g_n, c_n)
    &\leq \Qmed_n(g_n,c_n) + p \alpha_1(S) \\
    &\leq \hQmed_n(g_n, c_n) + 2pD\epsilon + p\alpha_1(S) \\
    &= \Qmed_n(g_n, c_n) - r\cdot \hQmed_n(g_n^*, c_n^*) + r\cdot \hQmed_n(g_n^*, c_n^*)  + 2pD\epsilon + p\alpha_1(S) \\
    &\leq s + 2pD\epsilon + p\alpha_1(S) + r\cdot \hQmed_n(h_n, c_n') \\
    &\leq s + 2(r+1)pD\epsilon + p\alpha_1(S) + r\cdot \Qmed_n(h_n, c_n') \\
    &\leq s + 2(r+1)pD\epsilon + p(r+1)\alpha_1(S) + r\cdot \Qmed(\bar h_n, c_n') \\
    &\leq s + 2(r+1)pD\epsilon + p(r+1)\alpha_1(S) + r\cdot\beta_1(S, \ell + \epsilon, L - \epsilon) + r \cdot \Qmed(f^*, c^*).
  \end{align*}
  The proof for the case of $k$-means is identical, except we use the
  corresponding result from Lemma~\ref{lem:extensionprops} and the
  alternative version of equation \eqref{eq:Qestimate}.
\end{proof}

The above theorem applies for any set $S$, but the quality of the
bound depends on $\alpha(S)$ and $\beta(S)$, which measure how well
the set $S$ represents the distribution $\pdist$. We now bound
$\alpha(S)$ and $\beta(S)$ when $S$ is a large enough iid sample drawn
from $\mu$ under various conditions on $\mu$ and the optimal
clustering.

\paragraph{Bounding $\alpha(S)$} We bound the sample size required to
make $\alpha(S)$ small when $\X\subseteq\reals^q$ and $S$ is drawn
randomly from an arbitrary $\pdist$. Additionally, when the
distribution has a lower intrinsic dimension, we can do better. The
doubling condition is one such a condition. Let $B(x, r)$ be a ball of
radius $r$ around $x$ with respect to the metric $d$. A measure
$\pdist$ with support $\X$ is said to be a doubling measure of
dimension $d_0$ if for all points $x \in \X$ and all radii $r > 0$ we
have $\pdist(B(x,2r)) \leq 2^{d_0} \pdist(B(x,r))$.

\begin{restatable}{lemma}{lemmaAlpha}
  \label{lemma:alpha}
  For any $\epsilon, \delta > 0$, and $\X \subseteq \reals^q$, if a
  randomly drawn $S$ from $\pdist$ is of size
  $O( {q}^{q/2}{\epsilon}^{-(q+1)}(q \log \frac{\sqrt{q}}{\epsilon} +
  \log \frac{1}{\delta}))$ in the general case, or
  $O(\epsilon^{-d_0}(d_0 \log\frac {1} \epsilon + \log\frac 1
  \delta))$ if $\pdist$ is doubling with dimension $d_0$, then w.p
  $\geq 1-\delta$, $\alpha_1(S) \leq \epsilon D$ and
  $\alpha_2(S) \leq (\epsilon D)^2$.
\end{restatable}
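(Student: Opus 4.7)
The natural approach is a standard covering plus coupon-collector argument. First I construct a cover $B_1,\dots,B_N$ of (the support of) $\pdist$ by balls of radius $r=\Theta(\epsilon D)$. In the general case $\X\subseteq\reals^q$ with diameter $D$, a volume packing argument gives $N=O((C\sqrt{q}/\epsilon)^q)$. In the doubling case I iterate the doubling inequality $\pdist(B(x,2r))\leq 2^{d_0}\pdist(B(x,r))$, starting from a radius-$D$ ball that contains the support, to obtain simultaneously (i) a cover of size $N=O(\epsilon^{-d_0})$ and (ii) a pointwise lower bound $\pdist(B(x,\epsilon D))\gtrsim \epsilon^{d_0}$ for every $x$ in the support.

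Next I classify each cover ball as \emph{heavy} if $\pdist(B_i)\geq \tau$ and \emph{light} otherwise, for a threshold $\tau$ to be chosen. Conditional on every heavy ball containing at least one sample of $S$, any $x$ in a heavy ball satisfies $\dist{x,\NN_S(x)}\leq 2r=O(\epsilon D)$, contributing at most $O(\epsilon D)$ to $\alpha_1(S)$ and at most $O((\epsilon D)^2)$ to $\alpha_2(S)$. For light balls I use the trivial bounds $\dist{x,\NN_S(x)}\leq D$ and $\dist{x,\NN_S(x)}^2\leq D^2$; since the total mass of light balls is at most $N\tau$, their contribution is $O(DN\tau)$ and $O(D^2N\tau)$, which can be absorbed into the targets by taking $\tau$ sufficiently small (say $\tau$ of order $\epsilon^2/N$ so that a single choice controls both $\alpha_1$ and $\alpha_2$). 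In the doubling case the pointwise mass lower bound means \emph{no} cover ball is light, so the slack term disappears entirely, which is what gives the sharper rate.

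To ensure the conditioning event, a union bound yields
\[
\Pr(\text{some heavy ball is empty}) \leq N(1-\tau)^n \leq N e^{-n\tau},
\]
so the event fails with probability at most $\delta$ provided $n=\Omega\bigl((1/\tau)(\log N+\log(1/\delta))\bigr)$. Substituting $N=O((\sqrt{q}/\epsilon)^q)$ and the chosen $\tau$ gives the general-case sample size $O\bigl(q^{q/2}\epsilon^{-(q+1)}(q\log(\sqrt{q}/\epsilon)+\log(1/\delta))\bigr)$, while plugging $N=O(\epsilon^{-d_0})$ and $\tau=\Theta(\epsilon^{d_0})$ (from the doubling lower bound) gives $O\bigl(\epsilon^{-d_0}(d_0\log(1/\epsilon)+\log(1/\delta))\bigr)$.

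\textbf{Main obstacle.} The delicate step is choosing $r$ and $\tau$ so that a \emph{single} high-probability event simultaneously controls both $\alpha_1$ and $\alpha_2$ at the claimed rates, since the quadratic objective is more sensitive to the light-ball tail than the linear one; absent the doubling assumption, this requires a slightly more aggressive choice of $\tau$ than what would suffice for $\alpha_1$ alone. Under the doubling assumption the work shifts to establishing the iterated-doubling lower bound on ball masses, after which the light-ball slack term vanishes and only the coupon-collector step remains.
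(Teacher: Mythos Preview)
Your approach is essentially the paper's: cover the support by small cells (the paper uses axis-aligned cubes of side $r/\sqrt{q}$ in the general case and balls in the doubling case), split into heavy and light cells, run a coupon-collector union bound so that every heavy cell receives a sample, and in the doubling case iterate the doubling inequality to obtain both a cover of size $O((D/r)^{d_0})$ and a uniform mass lower bound $\pdist(B(x,r/2))\geq (r/2D)^{d_0}$, which eliminates light cells entirely. The paper packages this as three short lemmas, exactly matching the structure of your plan.

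One arithmetic inconsistency to note: with your choice $\tau=\Theta(\epsilon^2/N)$ and $N=O((\sqrt{q}/\epsilon)^q)$, the sample bound $n=\Omega\bigl((1/\tau)(\log N+\log(1/\delta))\bigr)$ evaluates to $\epsilon^{-(q+2)}$, not the stated $\epsilon^{-(q+1)}$, so your substitution step does not check out. The paper instead takes the heavy/light threshold to be $\epsilon/m$ (your $\tau=\epsilon/N$), which does give the stated $\epsilon^{-(q+1)}$ sample size and $\alpha_1(S)=O(\epsilon D)$; but with that threshold the light-region contribution to $\alpha_2$ is only $\epsilon D^2$, not $(\epsilon D)^2$, so the paper's own argument in the general case likewise falls short of the sharper quadratic bound it states. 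Your diagnosis that the quadratic tail is more sensitive is exactly right; getting $\alpha_2(S)\leq(\epsilon D)^2$ in the general case genuinely costs the extra factor of $\epsilon^{-1}$ you would incur with $\tau=\epsilon^2/N$. In the doubling case there is no discrepancy, since the light region is empty and both bounds follow from the uniform $\sup$ bound on $\dist{x,\NN_S(x)}$.
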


\paragraph{Bounding $\beta(S)$} Bubeck et al.\
\cite{bubeck2009nearest} provide a worst case lower bound when $f^*$
is continuous almost everywhere. Again, one can do better for
well-behaved input. The Probabilistic Lipschitzness (PL) condition
\cite{urner2011access,urner2013plal} says that $f$ is $\phi$-PL if the
probability mass of points that have non-zero mass of differently
labeled points in a $\lambda D$-ball around them is at most
$\phi(\lambda)$. If a clustering function $f$ is PL, it means the
clusters are, in some sense, ``round''- that the probability mass
``close to'' the boundaries of the clusters is small. Under this
condition, we have the following sample complexity result for
$\beta$. We can compare to a clustering with slightly tighter size
constraints:

\begin{restatable}{lemma}{lemmaBeta}
  \label{lem:beta}
  Let $\pdist$ be a measure on $\reals^q$ with support $\X$ of
  diameter $D$.  Let $f^*$ be the optimal clustering of $\pdist$ that
  satisfies capacities $(\ell+\epsilon, L-\epsilon)$ and suppose $f^*$
  is $\phi$-PL. If we see a sample S drawn iid from $\pdist$ of size
  $O\Big(\frac{1}{\epsilon} \big( \frac 1
  {\phi\inv(\epsilon/2)}\big)^q \big(q \log
  \frac{\sqrt{q}}{\phi\inv(\epsilon/2)} + \log \frac{1}{\delta}\big)
  \Big)$ in the general case or
  $O\Big( \big(\frac{1}{\phi\inv(\epsilon)}\big)^{d_0} \big(d_0
  \log\frac {4} {\phi\inv(\epsilon)} + \log \frac{1}{\delta}\big)
  \Big)$ when $\pdist$ is a doubling measure of dimension $d_0$ then,
  w.p. at least $1-\delta$ over the draw of $S$, we have that
  $\beta_1(S, \ell, L) \leq pD\epsilon$ and
  $\beta_2(S, \ell, L) \leq pD^2\epsilon$.
\end{restatable}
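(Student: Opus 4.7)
The plan is to exhibit a specific clustering $(h, c)$ of $S$ whose nearest-neighbor extension witnesses the bound in the definition of $\beta$, and then to control both the capacity violation and the objective gap through a single event: the probability that a point $x$ and its nearest sample neighbor are assigned to different clusters under $f^*$.

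First I would take $c = c^*$ and $h(x_j) = f^*(x_j)$ for each $x_j \in S$, so that the extension satisfies $\bar h(x) = f^*(\NN_S(x))$. Let $E = \{x \in \X : f^*(x) \neq f^*(\NN_S(x))\}$ denote the ``mislabeled'' set. For any cluster index $i$, $|\prob_{x \sim \pdist}[i \in \bar h(x)] - \prob_{x \sim \pdist}[i \in f^*(x)]| \leq \pdist(E)$, so once $\pdist(E) \leq \epsilon$ the hypothesis that $f^*$ satisfies $(\ell+\epsilon, L-\epsilon)$ automatically promotes $\bar h$ to $(\ell, L)$-balanced. For the objective, off $E$ the extension charges exactly the same $p$ distances as $f^*$, while on $E$ a per-point cost is at most $pD$ for $k$-median (respectively $pD^2$ for $k$-means) by the diameter bound. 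Hence $\Qmed(\bar h, c^*) - \Qmed(f^*, c^*) \leq pD \cdot \pdist(E)$, and analogously $\Qmea(\bar h, c^*) - \Qmea(f^*, c^*) \leq pD^2 \cdot \pdist(E)$, giving the claimed $pD\epsilon$ and $pD^2\epsilon$ bounds provided $\pdist(E) \leq \epsilon$.

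The technical heart of the argument, and the main obstacle, is to show that a sample of the stated size guarantees $\pdist(E) \leq \epsilon$ with probability at least $1-\delta$. To invoke $\phi$-PL, pick $\lambda = \phi^{-1}(\epsilon/2)$ and decompose $E$ into (i) the ``PL-bad'' set of $x$ whose $\lambda D$-neighborhood carries positive $\pdist$-mass of a different $f^*$-label, which has $\pdist$-mass at most $\phi(\lambda) = \epsilon/2$ by hypothesis; and (ii) the ``coverage-bad'' set where $\dist{x, \NN_S(x)} > \lambda D$. It then suffices to bound the $\pdist$-mass of (ii) by $\epsilon/2$. For this I would use a covering argument: in the general $\reals^q$ case, cover $\X$ by $N = O((\sqrt{q}/\lambda)^q)$ balls of radius $\lambda D/2$; whenever every one of these balls with $\pdist$-mass at least $\epsilon/(2N)$ contains a sample point, the mass of (ii) is at most $\epsilon/2$. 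A Chernoff plus union-bound calculation over the $N$ balls yields the stated general sample complexity, with the extra $1/\epsilon$ factor arising from needing to hit every non-negligible cell.

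In the doubling case, the doubling dimension reduces the covering number of $\X$ at scale $\lambda D$ to $O((1/\lambda)^{d_0})$, and the uniform-coverage argument underlying Lemma \ref{lemma:alpha} can be adapted to remove the $1/\epsilon$ factor, yielding the sharper doubling bound. The delicate part is the simultaneous choice of $\lambda$ and the two-sided accounting $\pdist(E) \leq \phi(\lambda) + \pdist(\text{coverage-bad}) \leq \epsilon/2 + \epsilon/2$: $\lambda$ must be small enough for PL to supply the first $\epsilon/2$, yet large enough that a tractable sample forms a $\lambda D$-net up to $\epsilon/2$ mass slack.
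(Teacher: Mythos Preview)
Your proposal is correct and follows essentially the same approach as the paper: take the witness $h=f^*|_S$, $c=c^*$, reduce both the capacity check and the objective gap to bounding $\pdist(E)$ where $E=\{x:f^*(x)\neq f^*(\NN_S(x))\}$, and then control $\pdist(E)$ via the PL condition plus a covering/net argument (which the paper isolates as a separate nearest-neighbor classification lemma). Your explicit decomposition $E\subseteq\{\text{PL-bad}\}\cup\{\text{coverage-bad}\}$ with $\lambda=\phi^{-1}(\epsilon/2)$ is exactly the content of that lemma, and your observation that in the doubling case the coverage-bad set has zero mass (so one may take $\lambda=\phi^{-1}(\epsilon)$ and drop the $1/\epsilon$ factor) matches the paper's sharper doubling bound.
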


\section{Experiments}
\label{sec:expt}
In this section, we present an empirical study of the accuracy and
scalability of our technique using both the LP rounding algorithms and
$k$-means++ together with the nearest neighbor extension. We compare
against three baselines: random partitioning, balanced partition
trees, and locality sensitive hashing (LSH) on both synthetic and real
world image and advertising datasets. Our findings are summarized
below:

\begin{itemize}
\item Using small samples of the given datasets, we compare the
  clusterings produced by our LP rounding algorithms\footnote{ We can
    run the LP rounding algorithm for small $n$, even though there are
    $O(n^2)$ variables.  } and $k$-means++ (with balancing heuristics
  described shortly).  We find that clusterings produced by
  $k$-means++ and the LP rounding algorithms have similar objective
  values and correlate well with the underlying class labels. These
  results complement the results of Section~\ref{sec:stability},
  showing that $k$-means++ produces high quality balanced clusterings
  for `typical' data. This comparison is detailed in
  Sections~\ref{app:experiments} and \ref{app:comparison} of the
  appendix. Based on this observation, our further empirical studies
  use $k$-means++.
  
\item We compare the accuracy of our technique (using $k$-means++ and
  the nearest neighbor extension) to the three baselines for a wide
  range of values of $k$ in large-scale learning tasks where each
  machine learns a local SVM classifier. For all values of $k$ and all
  datasets, our algorithm achieves higher accuracy than all our
  baselines.

\item We show that our framework exhibits strong scaling, meaning that
  if we double the available computing power, the total running time
  reduces by a constant fraction.
\end{itemize}

\paragraph{Experimental Setup} In each run of our experiment, one of
the partitioning algorithms produces a dispatch rule from $10,000$
randomly sampled training points. This dispatch rule is then used to
distribute the training data among the available worker machines. If
the parameter $k$ exceeds the number of machines, we allow each
machine to process multiple partitions independently. Next we train a
one-vs-all linear separator for each partition in parallel by
minimizing the L2-regularized L2-loss SVM objective. This objective is
minimized using Liblinear~\citep{fan2008liblinear} when the data is
small enough to fit in the each worker's memory, and L-BFGS otherwise
(note that both solvers will converge to the same model). The
regularization parameter is chosen via 5-fold cross validation. To
predict the label of a new example, we use the dispatch rule to send
it to the machine with the most appropriate model. All experimental
results are averaged over 10 independent runs.

\paragraph{Details for our technique} Our method builds a dispatch
rule by clustering a small sample of data using $k$-means++ and uses
the nearest neighbor rule to dispatch both the training and testing
data. To ensure a balanced partitioning, we apply the following simple
balancing heuristics: while there is any cluster smaller than $\ell n$
points, pick any such cluster and merge it with the cluster whose
center is nearest. Then each cluster that is larger than $Ln$ points
is randomly partitioned into evenly sized clusters that satisfy the
upper capacity constraint. This guarantees every cluster satisfies the
capacity constraints, but the number of output clusters may differ
from $k$. For the nearest neighbor dispatch, we use the random
partition tree algorithm of \citet{dasgupta2013randomized} for
efficient approximate nearest neighbor search. We set $\ell = 1/(2k)$
and $L = 2/k$ and $p=1$, since our baselines do not support
replication.

\begin{figure*}[h]
  \centering
  \subfigure[][{\scriptsize Accuracy on Synthetic Dataset}]{\includegraphics[width=0.3\textwidth]{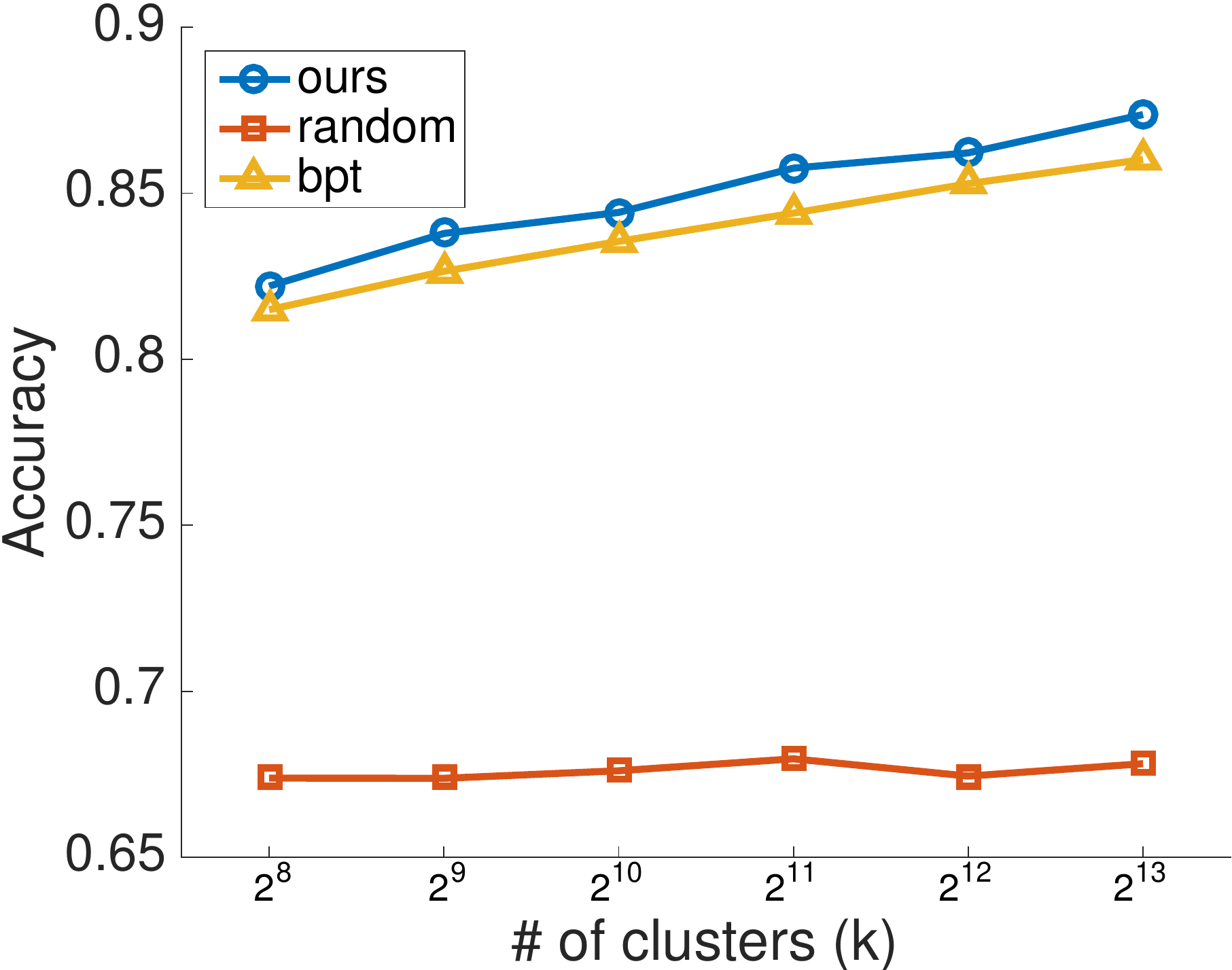}}%
  \subfigure[][{\scriptsize Accuracy on MNIST-8M}]{\includegraphics[width=0.3\textwidth]{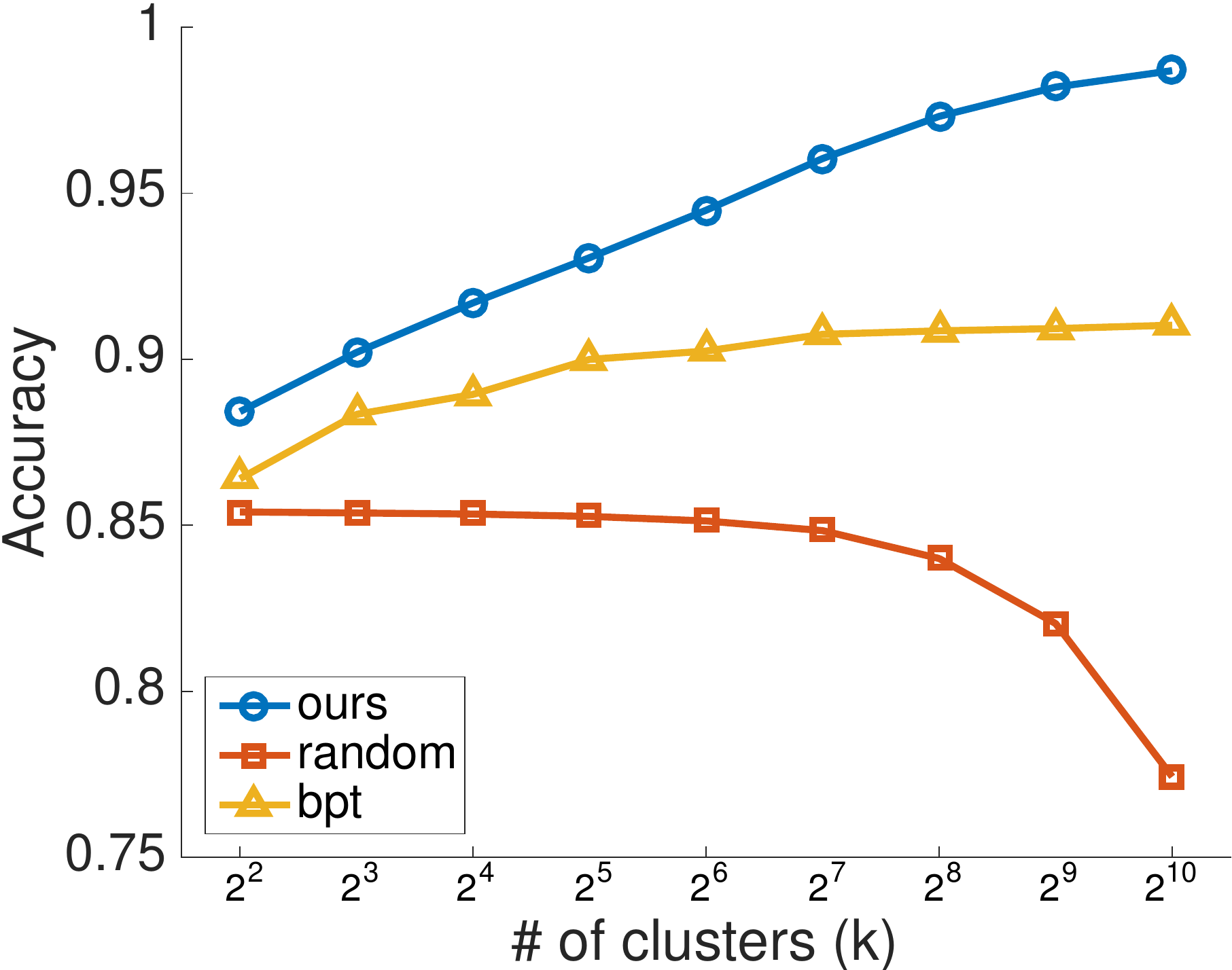}}%
  \subfigure[{\scriptsize Accuracy on CTR
    Dataset}]{\includegraphics[width=0.3\textwidth]{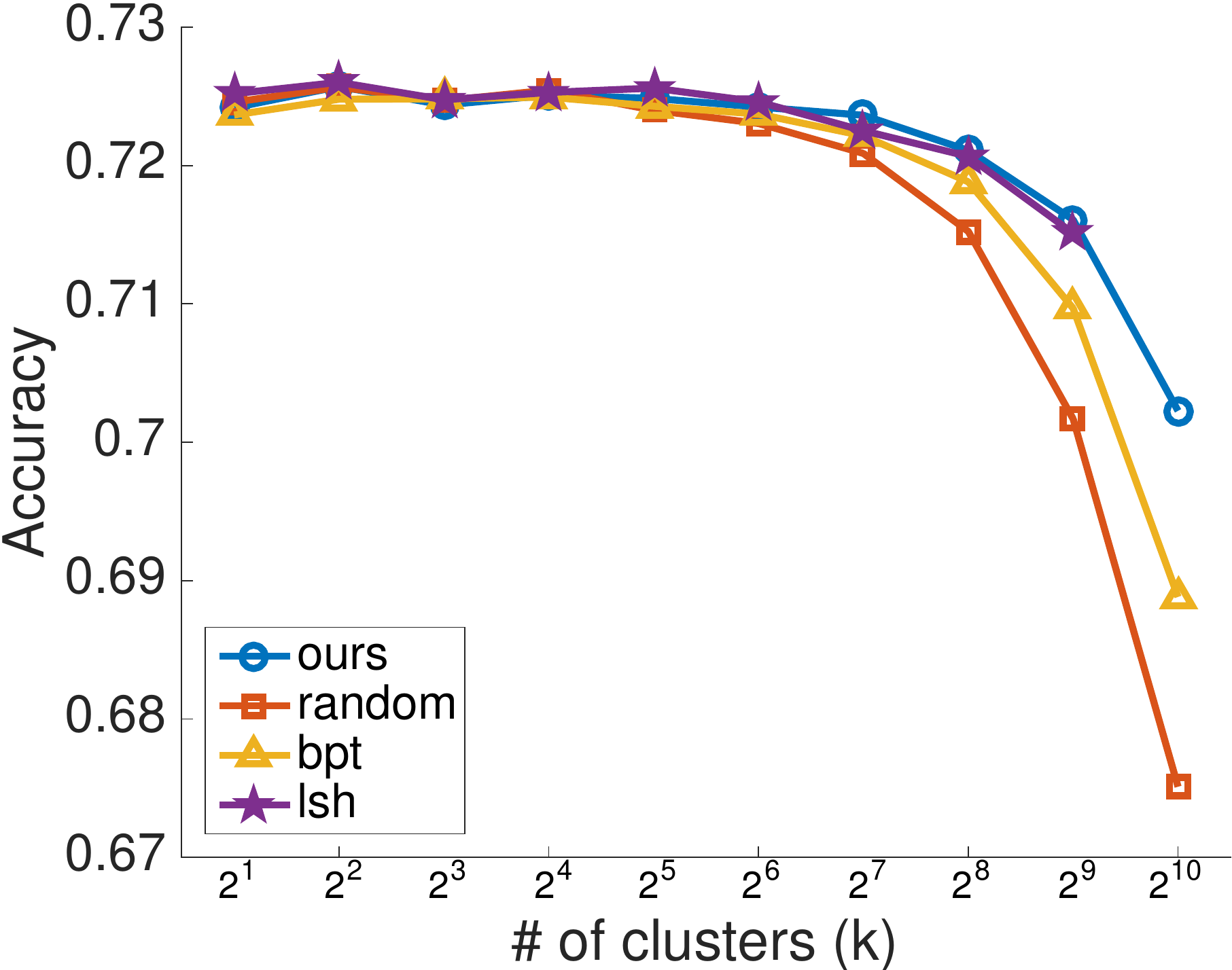}}\\%
  \subfigure[{\scriptsize Accuracy on CIFAR-10 (in3c)}]{\includegraphics[width=0.3\textwidth]{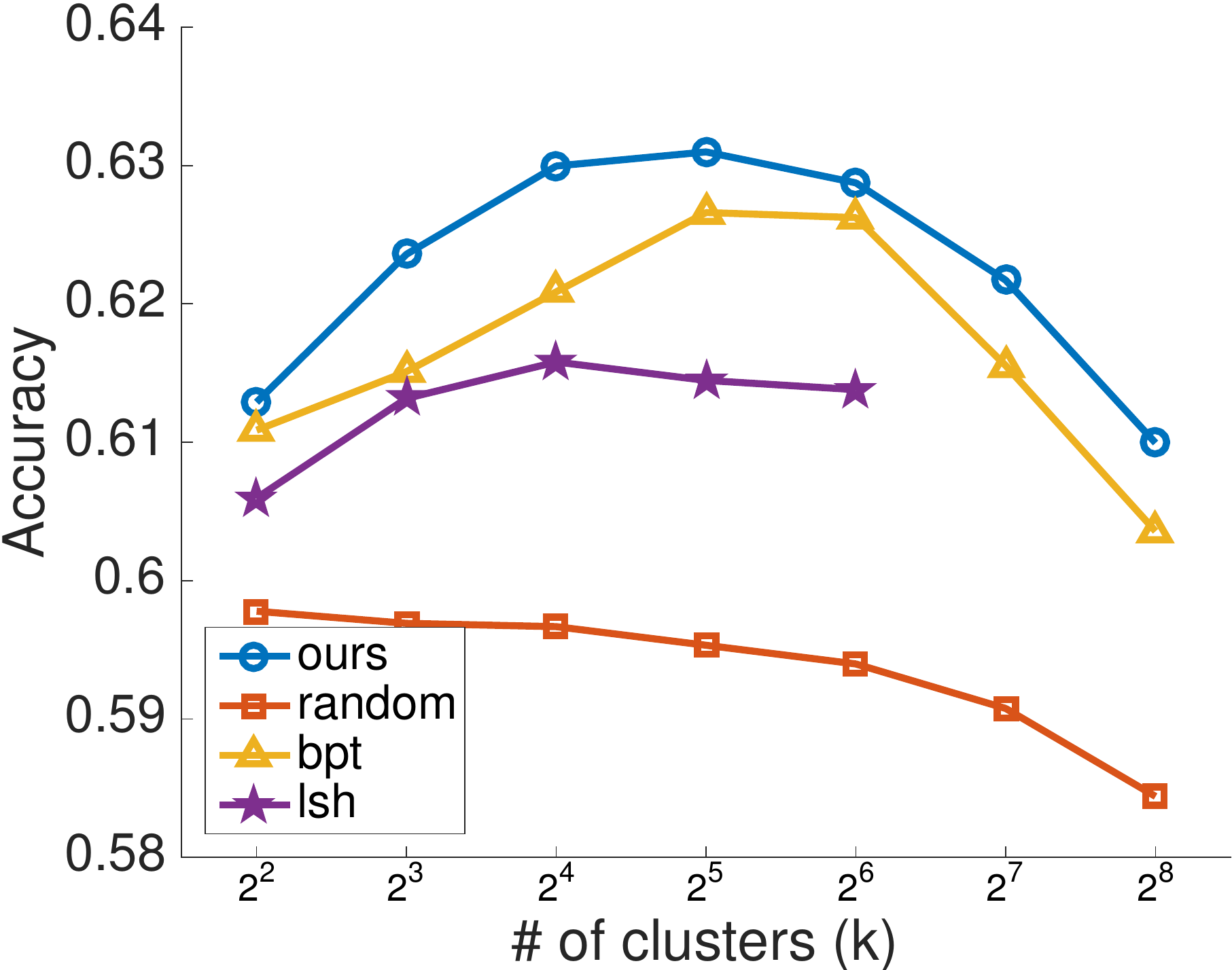}}%
  \subfigure[{\scriptsize Accuracy on CIFAR-10 (in4d)}]{\includegraphics[width=0.3\textwidth]{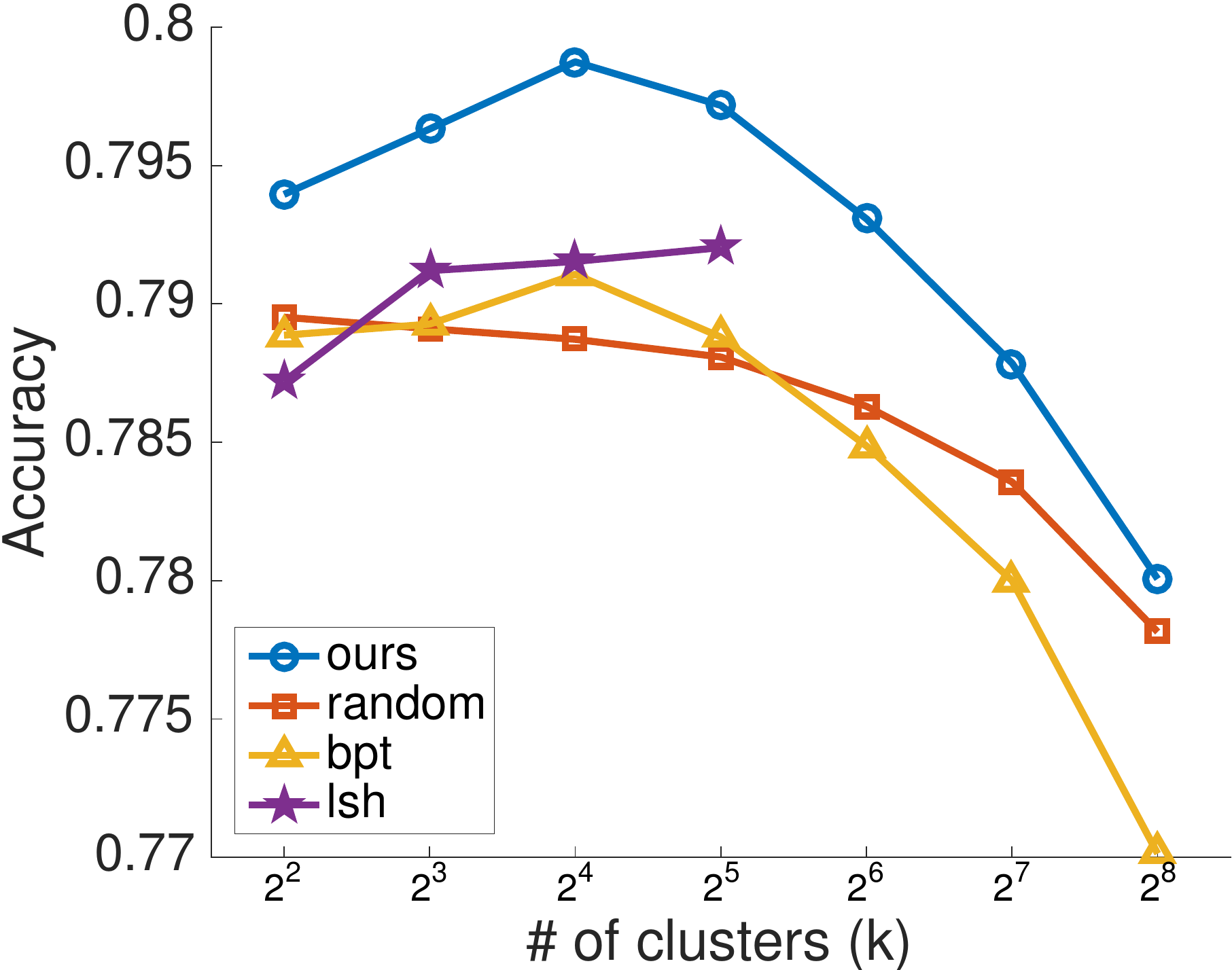}}%
  \subfigure[{\scriptsize \bf Strong Scaling}]{ \includegraphics[width=0.3\textwidth]{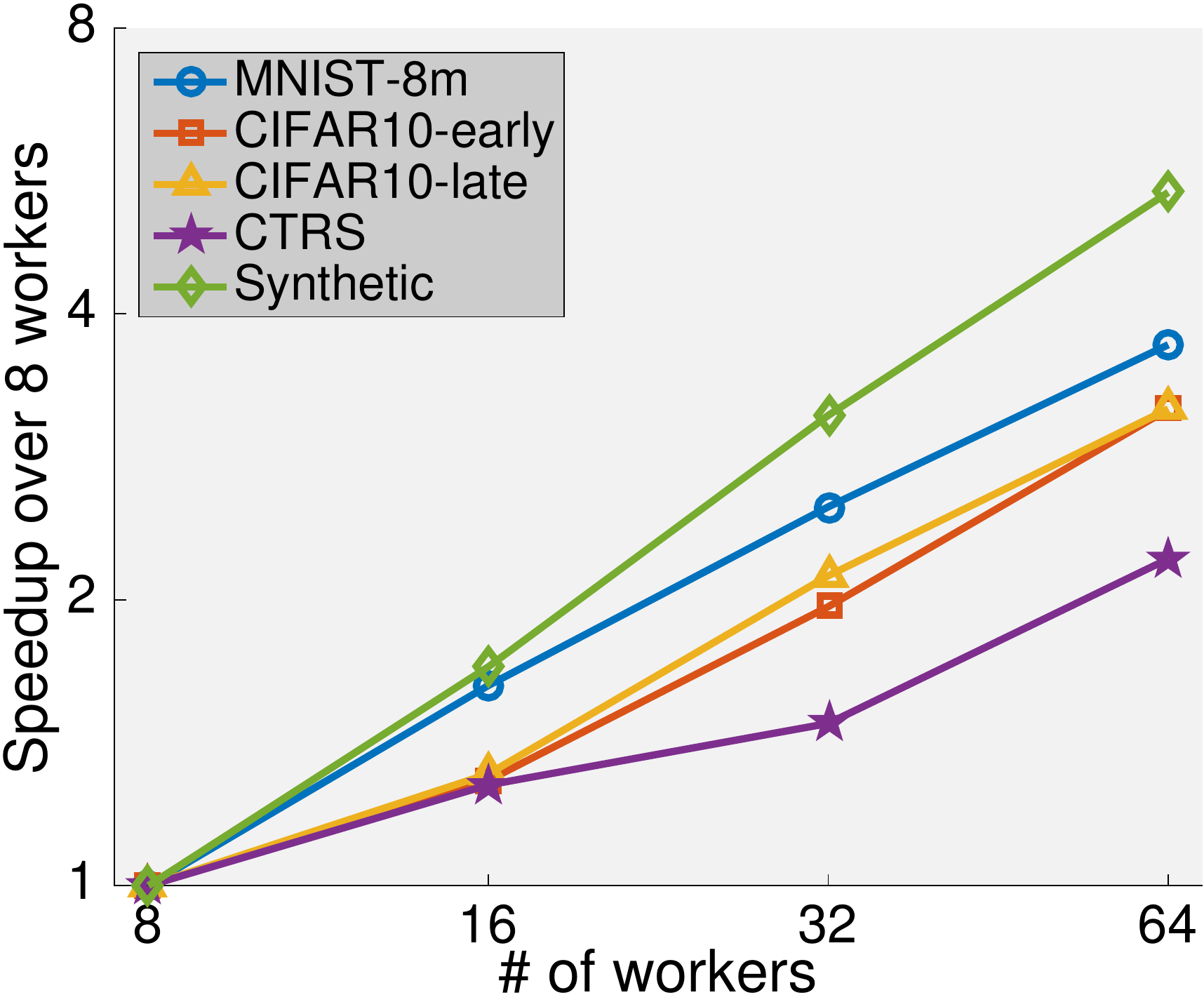}%
    \label{fig:expt:scaling}}%
  \caption{\small Figures (a) through (e) show the effect of $k$ on the classification
    accuracy.
    Figure (f) shows the speedup factor as we increase the number of
    workers from 8 to 64 for each dataset.}
  \label{fig:experiments}
\end{figure*}

\paragraph{Baselines} We compare against the following
baselines:\footnote{Since our framework does not communicate during
  training, we do not compare against algorithms that do,
  e.g. distributed boosting \citep{bbfm12}.}

\begin{description}

\item[Random Partitioning] Points are dispatched uniformly at
  random. This baseline produces balanced partitions but does not send
  similar examples to the same machine.

\item[Balanced Partition Trees] Similarly to a $kd$-tree, this
  partitioning rule recursively divides the dataset by splitting it at
  the median point along a randomly chosen dimension. This is repeated
  until the tree has $k$ leaves (where we assume $k$ is a power of
  2). This baseline produces balanced partitions and improves over
  random partitioning because each machine learns a local model for a
  different subset of the space. The drawback is that the partitioning
  may result in subsets that do not contain similar data points.

\item[LSH Partitioning] This baseline uses locality sensitive hash
  functions \citep{andoni2006near} to dispatch similar points to the
  same machine. Given an LSH family $H$, we pick a random hash
  $h : \reals^d \to \mathbb{Z}$. Then a point $x$ is assigned to
  cluster $h(x) \mod k$. In our experiments, we use the concatenation
  of 10 random projections followed by binning \citep{datar2004}. See
  Section~\ref{app:experiments} in the appendix for details of the
  construction. This baseline sends similar examples to the same
  machine, but does not balance the cluster sizes (which is essential
  for practical data distribution).

\end{description}

\paragraph{Datasets}
We use the following datasets:

\begin{description}
\item[Synthetic:] We use a 128 GB synthetic dataset with 30 classes
  and 20 features. The data distribution is a mixture of 200 Gaussians
  with uniformly random centers in $[0,1]^{20}$ with covariance
  $0.09 I$. Labels are assigned so that nearby Gaussians have the same
  label.

\item[MNIST-8M:] We use the raw pixels of the MNIST-8M dataset
  \citep{loosli2007training}. It has $8M$ examples and $784$ features.

\item[CIFAR-10:] The CIFAR-10 dataset \citep{CIFAR10} is an image
  classification task with 10 classes.  Following
  \citet{krizhevsky2012} we include 50 randomly rotated and cropped
  copies of each training example to get a training set of $2.5$
  million examples. We extract the features from the Google Inception
  network \citep{Szegedy_2015_CVPR} by using the output of an early
  layer (in3c) and a later layer (in4d).

\item[CTR:] The CTR dataset contains ad impressions from a commercial
  search engine where the label indicates whether the ad was
  clicked. It has 860K examples with 232 features.
\end{description}

\paragraph{Results} Our empirical results are shown in Figure
\ref{fig:experiments}. We do not report accuracies when the
partitioning is imbalanced, specifically when the largest $k/2$
clusters contain more than 98\% of the data. In practice, such a
partitioning is essentially using only half of the available computing
power. For all values of $k$ and all datasets, our method has higher
accuracy than all three baselines. The balanced partition tree is the
most competitive baseline, but in Section~\ref{app:experiments} we
present an additional synthetic distribution for which our algorithm
drastically outperforms the balanced partition tree. For all datasets
except CTR, the accuracy of our method increases as a function of $k$,
until $k$ is so large that each cluster becomes data starved.

Our method combines the good aspects of both the balanced partition
tree and LSH baselines by simultaneously sending similar examples to
the same machines and ensuring that every machine gets roughly the
same amount of data. In contrast, the balanced partition tree baseline
both produce balanced clusters, but do not send similar examples to
the same machines, and the LSH baseline sends similar examples to the
same machine, but makes no attempt at balancing the partitions. The
fact that we get higher accuracy than the LSH baseline demonstrates
that it is not enough to send similar examples to the same machines
without balancing, and that we get higher accuracy than balanced
partition trees shows that simply balancing the cluster sizes is not
sufficient.

Figure~\ref{fig:expt:scaling} shows the speedup obtained when running
our system using $16$, $32$, or $64$ workers compared to using $8$.
We clock the time taken for the entire experiment: the time for
clustering a subsample, dispatch, training and testing.  In all cases,
doubling the number of workers reduces the total time by a constant
factor, showing that our framework strongly scales and can be applied
to very large datasets.

\section{Conclusion}
In this work, we propose and analyze a new framework for distributed learning.
Given that similar points tend to have similar classes, we partition the data so that similar examples go to the same machine.
We cast the dispatching step as a clustering problem combined with novel fault tolerance and balance constraints necessary for distributed systems.
We show the added constraints make the objective highly nontrivial, yet we provide LP rounding algorithms with provable guarantees.
This is complemented by our results showing that the $k$-means++ algorithm is competitive on `typical' datasets.
These are the first algorithms with provable guarantees under both upper and lower capacity constraints, and may be of interest beyond distributed learning.
We show that it is sufficient to cluster a small subsample of data and use a nearest neighbor extension technique to efficiently dispatch the remaining data.
Finally, we conduct experiments for all our algorithms that support our theoretical claims, show that our framework outperforms several baselines and strongly scales.

\bibliographystyle{plainnat}
\bibliography{bib/bib,bib/proposal-ana}

\newpage
\section*{Appendix}

\section{Related Work Continued} \label{app:intro}

\subsubsection*{Capacitated $k$-center}
The (uniform) capacitated $k$-center problem is to minimize the maximum distance
between a cluster center and any point in its cluster subject to the constraint
that the maximum size of a cluster is $L$. It is NP-Hard, so research has
focused on finding approximation algorithms. Bar-Ilan et al.\
\citep{Barilan1993385} introduced the problem and presented the first  constant
factor polynomial time algorithm achieving a factor of 10, 
using a combinatorial algorithm which moves around clients until the capacities
are satisfied, and the objective is approximately satisfied.
The approximation factor was improved by
Khuller et al.\ \citep{Khuller96thecapacitated}.
Cygan et al.\ \citep{Cygan_lprounding} give the first algorithm for 
capacitated $k$-center
with non-uniform capacities by using an LP rounding algorithm.
The appoximation factor is not explicitly computed, although it is mentioned to be
in the order of hundreds.
\citep{ckc3} follows a similar procedure but with a dynamic rounding procedure,
and they improve to an approximation factor of 8. 
Further, for the special case of uniform capacities,
they show a 6-approximation.

\subsubsection*{Capacitated $k$-median}
$k$-median with capacities is a notoriously difficult problem in clustering. It
is much less understood than $k$-center with capacities, and uncapacitated
$k$-median, both of which have constant factor approximations. Despite numerous
attempts by various researchers, still there is no known constant factor
approximation for capacitated $k$-median (even though there is no better lower
bound for the problem than the one for uncapacitated $k$-median). As stated
earlier, there is a well-known unbounded integrality gap for the standard LP
even when violating the capacity or center constraints by a factor of
$2-\epsilon$ \citep{aardal2015approximation}.

Charikar et al.\ gave a 16-approximation when constraints are violated by a
factor of 3 \citep{charikar1999}. Byrka et al.\ improved this violation to
$2+\epsilon$, while maintaining an $O(\frac{1}{\epsilon^2})$ approximation
\citep{byrka2015}. Recently, Li improved the latter to $O(\frac{1}{\epsilon})$,
specifically, when constraints are violated by $2+\frac{2}{\alpha}$ for
$\alpha\geq 4$, they give a $6+10\alpha$ approximation \citep{li2014}. These
results are all for the \emph{hard} capacitated $k$-median problem. In the
\emph{soft} capacities variant, we can open a point more than once to achieve
more capacity, although each extra opening counts toward the budget of $k$
centers. In hard capacities, each center can only be opened once. The hard
capacitated version is more general, as each center can be replicated enough
times so that the soft capacitated case reduces to the hard capacitated case.
Therefore, we will only discuss the hard capacitated case.

All of the algorithms for capacitated $k$-median mentioned above share the same
high-level LP rounding and aggregation idea but with different refinements 
in the algorithm and analysis.

\subsubsection*{Universal and load balanced facility location}

In the facility location problem, we are given a set of demands and a set of
possible locations for facilities. We should open facilities at some of these
locations, and connect each demand point to an open facility so as to minimize
the total cost of opening facilities and connecting demands to facilities.
Capacitated facility location is a variant where each facility can supply only a
limited amount of the commodity. This and other special cases are captured by
the Universal Facility Location problem where the facility costs are general
concave functions. Local search techniques \citep{Mahdian03universalfacility}
have been proposed and applied successfully. Also, LP rounding techniques suffer
from unbounded integrality gap for capacitated facility location
\citep{Mahdian03universalfacility}.

Load-balanced facility location \citep{conf/focs/KargerM00},
\citep{conf/focs/GuhaMM00}, is yet another variant where every open facility must
cater to a minimum amount of demand. An unconstrained facility location problem
with modified costs is constructed and solved. Every open facility that does not
satisfy the capacity constraint is closed and the demand is rerouted to nearby
centers. The modified problem is constructed so as to keep this increase in cost
bounded.

\section{Details from Section \ref{sec:bicriteria}} \label{app:bicriteria}

In this section, we provide the formal details for the bicriteria algorithm presented
in Section \ref{sec:bicriteria}.

For convenience, we restate Theorem~\ref{thm:bicriteria} here.

\medskip
\noindent \textbf{Theorem~\ref{thm:bicriteria} (restated).}
\emph{
Algorithm \ref{alg:overview} returns a constant factor approximate solution for the
balanced $k$-clustering with $p$-replication problem for $p> 1$, where the upper capacity constraints are violated
by at most a factor of $\frac{p+2}{p}$,
and each point can be assigned to each cluster at most twice.
}
\medskip

\paragraph{Step 1 details}

We restate the LP for $k$-means and $k$-median for completeness, labeling each constraint
for the proofs later.

  \begin{subequations}
  \label{LP kmed}
  \begin{align}
  \text{min } &\sum_{i,j\in V} c_{ij} x_{ij}  \tag{LP.1}\label{lp:constraint1} \\
  \text{subject to: } & \sum_{i\in V} x_{ij}=p, \,\, \forall j\in V  \tag{LP.2}\label{lp:constraint2}\\
   &\ell y_i \leq \sum_{j\in V} \frac{x_{ij}}{n} \leq L y_i, \,\, \forall i\in V \tag{LP.3}\label{lp:constraint3} \\
  &\sum_{i\in V} y_i\leq k; \tag{LP.4}\label{lp:constraint4} \\
  &0\leq x_{ij}\leq y_i\leq 1, \,\, \forall i,j\in V. \tag{LP.5}\label{lp:constraint5}
  \end{align}
  \end{subequations}

As mentioned in Section \ref{sec:bicriteria}, it is well-known that the standard capacitated $k$-median LP 
(this LP, without the lower bound constraint and with $p=1$)
has an unbounded integrality gap, 
even when the capacities are violated by
a factor of $2-\epsilon$ \citep{aardal2015approximation}.
The integrality gap is as follows.
$k=2nL-1$, and there are $nL$ groups of size $2nL-1$.
Points in the same group are distance 0, and points in different groups are distance 1.
Fractionally, we can open $2-\frac{1}{nL}$ facilities in each group to achieve cost 0.
But integrally, some group contains at most 1 facility, and thus the capacity violation must
be $2-\frac{1}{nL}$.

However,
with $p$ replication, there must be $p$ centers per group, so the balance violation can
be split among the $p$ centers.
Therefore, the integrality is only unbounded when the capacities are violated by
a factor of $\frac{p}{p-1}$

The $k$-center LP is a little different from the $k$-median/means LP.
As in prior work
\citep{ckc3,Cygan_lprounding,Khuller96thecapacitated},
we guess the optimal radius, $t$.
Since there are a polynomial number of choices for $t$, we can try all of them
to find the minimum possible $t$ for which the following program
is feasible.
Here is the LP for $k$-center.

\begin{subequations}
\label{LP kcenter}
\footnotesize{
\begin{align}
&  \sum_{i\in V} x_{ij}=p, & \forall j\in V \\
& n\ell y_i \leq \sum_{j\in V} x_{ij} \leq nL y_i, &\forall i\in V \\
&\sum_{i\in V} y_i\leq k; \\
&0\leq x_{ij}\leq y_i, &\forall i,j\in V \\
& x_{ij} = 0 & \text{if } d(i,j) > t \label{eqn:lp2_threshold}.
\end{align}
}
\end{subequations}

For $k$-median and $k$-means,
let $C_{LP}$ denote the objective value.
For $k$-center, $C_{LP}$ would be the smallest threshold $t$ at which
the LP is feasible, however
we scale it as $C_{LP} = tnp$ for consistency with the other objectives.
For all $j\in V$, define the connection cost $C_j$ as the
average contribution of a point to the objective.
For $k$-median and $k$-means,
it is $C_j=\frac{1}{p}\sum_{i\in V} c_{ij} x_{ij}$.
That is, for $k$-median, it is the average distance
of a point to its fractional centers while for $k$-means,
it is the average squared distance of a point to
its fractional centers.
For $k$-center,
$C_j$ is simply the threshold $C_j=t$.
Therefore, $C_{LP}=\sum_{j \in V} p C_j$ in all cases.

The notation is summarized in table \ref{tab:notation}.

\begin{table*}%[t]
\caption{ Notation table}
\begin{center}
\begin{tabular}{|c||c|c|c|c|}
	\hline
	Symbol & Description & $k$-median & $k$-means & $k$-center \\
	\hline
	$y_i$ & Fractional opening at center $i$ & \multicolumn{3}{c|}{-}\\
	\hline
	$x_{ij}$ & Fractional assignment of point $j$ to center $i$ &\multicolumn{3}{c|}{-}\\
	\hline
	$c_{ij}$ & Cost of assigning $j$ to center $i$ & $d(i,j)$ & $d(i,j)^2$ & $t$ \\
	\hline
	$C_{j}$ & Avg cost of assignment of point $j$ to all its centers & \multicolumn{2}{c|}{$ \sum_i c_{ij} x_{ij} / {p} $} & $t$\\
	\hline
	$C_{LP}$ & Cost of LP & \multicolumn{3}{c|}{$\sum_j pC_{j} $} \\
	\hline
	$\rho$ & parameter for monarch procedure & 2 & 4 & 1 \\
	\hline
\end{tabular}
\end{center}
\label{tab:notation}
\end{table*}

\paragraph{Step 2 details}
Let $\mathcal{M}$ be the set of monarchs, and
for each $u \in \mathcal{M}$, denote $\mathcal{E}_u$ as the empire
of monarch $u$.
Recall that the contribution of an assignment to the objective
$c_{ij}$ is $d(i,j)$ for $k$-median, $d(i,j)^2$ for $k$-means,
and $t$ for $k$-center.
We also define a parameter $\rho=1$ for $k$-center, $\rho=2$ for $k$-median,
and $\rho=4$ for $k$-means, for convenience.

Initially set $\mathcal{M}=\emptyset$.
Order all points in nondecreasing order of $C_i$.
For each point $i$, if $\exists j\in \mathcal{M}$ such that $c_{ij}\leq 2t C_i$,
continue. Else, set $\mathcal{M}=\mathcal{M}\cup\{i\}$.
At the end of the for loop,
assign each point $i$ to cluster $\mathcal{E}_u$ such that $u$ is the closest point in $\mathcal{M}$ to $i$.
See Algorithm \ref{algo:monarchs_bicriteria}.

\begin{algorithm}
\DontPrintSemicolon
\KwIn{$V$ and fractional $(x,y)$}
\KwOut{Set of monarchs, $\mathcal{M}$, and empire $\mathcal{E}_j$ for each monarch $j \in \mathcal{M}$}
\nl $\mathcal{M} \gets \emptyset$\;
\nl Order all points in non-decreasing order of $C_i$\;
\nl // Identify Monarchs \;
\nl \ForEach {$i \in V$} {
\nl	\If {\upshape$\nexists j\in \mathcal{M} \text{ such that } c_{ij}\leq 2\rho C_i$}{
\nl	$\mathcal{M}\gets \mathcal{M}\cup\{i\}$\;
}
}
\nl // Assign Empires as Voronoi partitions around monarchs \;
\nl \ForEach {$j \in V$} {
\nl	Let $u \in \mathcal{M}$ be the closest monarch to $j$\;
\nl $\mathcal{E}_u \gets \mathcal{E}_u \cup \{j\}$\;
}
\caption{{\bf Monarch procedure for coarse clustering}: Greedy algorithm to create monarchs and assign empires}
\label{algo:monarchs_bicriteria}
\end{algorithm}

We restate Lemma \ref{lem:monarch_bicriteria} and give a full proof.

\medskip
\noindent \textbf{Theorem~\ref{lem:monarch_bicriteria} (restated).}
\emph{
Let $\rho$ be a parameter such that $\rho=1$ for $k$-center, $\rho=2$ for $k$-median,
and $\rho=4$ for $k$-means.
The output of the monarch procedure satisfies the following properties:
\begin{enumerate}[label=(1\alph*)]
  \item \label{lemsub:partition} The clusters partition the point set;
  \item \label{lemsub:close1} Each point is close to its monarch: $\forall j\in
  \mathcal{E}_u, u\in \mathcal{M}, c_{uj}\leq 2\rho C_j$;
  \item \label{lemsub:n'far} Any two monarchs are far apart: $\forall u,u' \in
  \mathcal{M}\text{ s.t. } u\neq u', c_{uu'}>4\max\{C_u,C_{u'}\}$;
  \item \label{lemsub:min} Each empire has a minimum total opening: $\forall
  u\in \mathcal{M}, \sum_{j\in \mathcal{E}_u} y_j\geq \frac{p}{2}$
	(or for $k$-center, $\sum_{j\in \mathcal{E}_u} y_j\geq p$).
\end{enumerate}
}
\medskip

\begin{proof}%[Proof of Lemma \ref{lem:monarch_bicriteria}]
The first three properties follow easily from construction (for the third property, recall we ordered the points at the start of the monarch procedure). Here is the proof of the final property,
depending on the objective function.

For $k$-center and $k$-median,
it is clear that for some $u \in \mathcal{M}$, if $d(i, u) \leq \rho C_u$,
then $i \in \mathcal{E}_u$ (from the triangle inequality and Property \ref{lemsub:n'far}).
For $k$-means, however: if $d(i,u)^2\leq 2C_u$, then $i\in \mathcal{E}_u$.
Note that the factor is $\rho/2$ for $k$-means. This is because of the triangle
inequality is a little different for squared distances.

To see why this is true for $k$-means, assume towards contradiction that
$\exists i\in V$, $u,u'\in \mathcal{M}$, $u\neq u'$ such that $u\in \mathcal{E}_{u'}$ and
$d(i,u)^2\leq 2 C_u$.
Then $d(i,u')\leq d(i,u)$ by construction.
Therefore, $d(u,u')^2\leq (d(u,i)+d(i,u'))^2\leq 4d(i,u)^2\leq 8 C_u$, and we have reached a contradiction
by Property \ref{lemsub:n'far}.

Now, to prove property \ref{lemsub:min}:

\paragraph{$k$-center}
From the LP constraints, for every $u$, $\sum_{j \in V} x_{ju} = p$. But $x_{ju}$ is non-zero only they are separated by at most $t$, the threshold.
Combining this with the fact that if $d(j, u) \leq C_u = t$, then $j \in \mathcal{E}_u$, we get, for each $u \in \mathcal{M}$:
\begin{equation*}
	\sum_{j \in \mathcal{E}_u} y_j \geq \sum_{j \in \mathcal{E}_u} x_{ju} = p
\end{equation*}

\paragraph{$k$-median and $k$-means}
Note that $C_u$ is a weighted average of costs $c_{iu}$ with weights $x_{iu}/p$,
i.e., $C_u=\sum_i c_{iu}\nicefrac{x_{iu}}{p}$.
By Markov's inequality,
\begin{equation*}
	\sum_{j:c_{ju} > 2 C_u} \frac{x_{ju}}{p} <
	\frac{C_u} { 2 C_u} = \frac 1 2
\end{equation*}
Combining this with the fact that if $c_{ju} \leq 2 C_u$, then $j \in \mathcal{E}_u$ for both $k$-median and $k$-means
, we get, for each $u \in \mathcal{M}$:
\begin{equation*}
	\sum_{j \in \mathcal{E}_u} y_j \geq \sum_{j: c_{ju} \leq 2 C_u} y_j \geq \sum_{j: c_{ju} \leq 2 C_u} x_{ju} \geq \frac p 2.
\end{equation*}

\end{proof}

\paragraph{Step 3 Details}
First we define the suboperation Move \citep{li2014}:
\begin{definition} [Operation ``Move'']
The operation ``Move'' moves a certain opening $\delta$ from $a$ to $b$.
Let $(x', y')$ be the updated $(x, y)$ after a movement of $\delta \leq y_a$ from $a$ to $b$.
Define
	\begin{align*}
		y_a' &= y_a - \delta \\
		y_b' &= y_b + \delta \\
		{\forall u\in V,~}x_{au}' &= x_{au} (1- \nicefrac{\delta}{y_a}) \\
		{\forall u\in V,~}x_{bu}' &= x_{bu} + x_{au} \cdot \nicefrac{\delta}{y_a}
	\end{align*}
\end{definition}

It has been proven in previous work that the move operation does not violate
any of the LP constraints except the constraint that $y_i \leq 1$ \citep{li2014}. 
We provide a proof below for completeness.
Should we require $\delta \leq \min(y_a, 1-y_b)$, the constraint $y_i \leq 1$ would not be violated. But to get a bicriteria approximation, we allow this violation.
The amount by which the objective gets worse can then be bounded by the triangle inequality.

\begin{lemma}
\label{lem:move_properties}
The operation Move does not violate any of the LP constraints except possibly the constraint $y_i \leq 1$ and the threshold constraint \ref{eqn:lp2_threshold} of $k$-center.
\end{lemma}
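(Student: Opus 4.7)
The plan is to check each LP constraint directly, verifying it is preserved under the updates $y_a' = y_a - \delta$, $y_b' = y_b + \delta$, $x_{au}' = x_{au}(1-\delta/y_a)$, $x_{bu}' = x_{bu} + x_{au}\delta/y_a$. Only $a$ and $b$ see their variables change, so for each constraint I only need to examine the rows/columns involving $a$ or $b$.

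First I would handle non-negativity: $y_a' \geq 0$ follows from $\delta \leq y_a$, and the same bound gives $x_{au}' \geq 0$, while $y_b', x_{bu}' \geq 0$ are immediate. Next, for the demand constraint \ref{lp:constraint2}, I would observe that for any $u \in V$, the sum $\sum_i x_{iu}$ only changes in the two terms at $a$ and $b$, and $x_{au}' + x_{bu}' = x_{au}(1-\delta/y_a) + x_{bu} + x_{au}\delta/y_a = x_{au} + x_{bu}$, so the total is preserved. The center-budget constraint \ref{lp:constraint4} is equally easy: $\sum_i y_i' = \sum_i y_i$ since the shifts $-\delta$ and $+\delta$ cancel.

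The substantive checks are the capacity constraint \ref{lp:constraint3} and the coupling constraint $x_{ij} \leq y_i$. For \ref{lp:constraint3} at $i=a$, I would factor out $(1-\delta/y_a)$ from $\sum_u x_{au}' = (1-\delta/y_a)\sum_u x_{au}$; multiplying the original inequalities $\ell y_a \leq \sum_u x_{au}/n \leq L y_a$ by this non-negative factor gives exactly $\ell y_a' \leq \sum_u x_{au}'/n \leq L y_a'$. For $i=b$, I would write $\sum_u x_{bu}'/n = \sum_u x_{bu}/n + (\delta/y_a)\sum_u x_{au}/n$ and bound the second term in $[\delta \ell, \delta L]$ using the original constraint on row $a$, yielding the desired $[\ell y_b', L y_b']$ bound. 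For the coupling constraint at $i=a$, I would use $x_{au}' = x_{au}(1-\delta/y_a) \leq y_a(1-\delta/y_a) = y_a'$, and at $i=b$ combine $x_{bu} \leq y_b$ with $x_{au}\delta/y_a \leq \delta$ to get $x_{bu}' \leq y_b + \delta = y_b'$.

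Finally, I would point out the two constraints that can genuinely fail. The bound $y_b \leq 1$ may be violated because $y_b' = y_b + \delta$ can exceed $1$ when $\delta$ is large, and this is exactly the slack we are willing to spend to get a bicriteria guarantee. For the $k$-center threshold constraint \ref{eqn:lp2_threshold}, since $x_{bu}'$ inherits mass from $x_{au}$ whenever $x_{au} > 0$, we may introduce positive $x_{bu}'$ even when $d(b,u) > t$, so this constraint can also be violated. The main obstacle is essentially bookkeeping: making sure the factor $(1-\delta/y_a)$ propagates cleanly through both sides of the sandwich inequality in \ref{lp:constraint3} and that the arithmetic for $i=b$ in the same constraint uses the right-hand inequality at $i=a$ in the correct direction.
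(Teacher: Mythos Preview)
Your proposal is correct and follows essentially the same approach as the paper: a direct constraint-by-constraint verification, using the factorization $x_{au}' = (1-\delta/y_a)x_{au}$ for row $a$ and the additive decomposition at row $b$, with the same observations about why $y_b \leq 1$ and the $k$-center threshold constraint can fail. The arguments match in both structure and detail.
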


\begin{proof}

To show that the \emph{Move} operation satisfies all the LP constraints,
first note that the only quantities that change are $y_a, y_b, x_{au}, x_{bu},~\forall u\in V$.
Further, $x, y$ satisfy all the constraints of the LP. Using this,
\begin{itemize}
	\item Constraint \ref{lp:constraint1}:
		For every $u$, $\sum_i x_{iu}' = \sum_i x_{iu} = p$.
	\item Constraint \ref{lp:constraint2} (1):
		\begin{align*}
		\sum_u x_{au}' &= \sum_u x_{au} (1- \nicefrac{\delta}{y_a}) \leq nLy_a (1- \nicefrac{\delta}{y_a}) = nLy_a' \\
		\sum_u x_{bu}' &= \sum_u x_{bu} + \sum_u x_{au} \cdot \nicefrac{\delta}{y_a}\\  &\leq nLy_{b} + nLy_{a} \cdot \nicefrac{\delta}{y_a} = nLy_b'
		\end{align*}
	\item Constraint \ref{lp:constraint2} (2):
		\begin{align*}
		\sum_u x_{au}' &= \sum_u x_{au} (1- \nicefrac{\delta}{y_a}) \geq n\ell y_a (1- \nicefrac{\delta}{y_a}) = n\ell y_a' \\
		\sum_u x_{bu}' &= \sum_u x_{bu} + \sum_u x_{au} \cdot \nicefrac{\delta}{y_a}\\  &\geq n\ell y_{b} + n\ell y_{a} \cdot \nicefrac{\delta}{y_a} = n\ell y_b'
		\end{align*}
	\item Constraint \ref{lp:constraint3}: $\sum_i y_i' = \sum_i y_i \leq k$
	\item Constraint \ref{lp:constraint4} (1):
		\begin{align*}
			x_{au}' &= x_{au} (1- \nicefrac{\delta}{y_a}) \leq y_a  (1- \nicefrac{\delta}{y_a}) = y_a' \\
						 x_{bu}' &= x_{bu} + x_{au} \cdot \nicefrac{\delta}{y_a} \leq
						y_{b} + y_{a} \cdot \nicefrac{\delta}{y_a} = y_b'.
		\end{align*}
	\item Non-negative constraint: this is true since $\delta \leq y_a$.
\end{itemize}
\end{proof}

See Algorithm \ref{algo:aggregation} for the aggregation procedure.

\begin{algorithm}
\DontPrintSemicolon
\KwIn{$V$, fractional $(x,y)$, empires $\{\mathcal{E}_j\}$}
\KwOut{updated $(x,y)$}
\nl \ForEach {$\mathcal{E}_u$} {
\nl Define $Y_u=\sum_{i\in \mathcal{E}_u} y_i$, $z_u=\frac{Y_u}{\lfloor Y_u\rfloor}$.\\
\nl \While { $\exists v$ s.t. $y_v\neq z_u$} {
\nl Let $v$ be the point farthest from $u$ with nonzero $y_v$.\\
\nl Let $v'$ be the point closest to $j$ with $y_{v'}\neq z_u$.\\
\nl Move $\min\{y_v,z_u-y_{v'}\}$ units of opening from $y_v$ to $y_{v'}$.
}
}
\caption{{\bf Aggregation procedure}}
\label{algo:aggregation}
\end{algorithm}

We restate Lemma \ref{lem:costs} and give a full proof.

\medskip
\noindent \textbf{Theorem~\ref{lem:costs} (restated).}
\emph{
$\forall j\in V \text{ whose opening moved from }i'\text{ to }i$,
\begin{itemize}[itemsep=3pt,topsep=2pt,parsep=0pt,partopsep=0pt]
\item $k$-center: $d(i,j)\leq 5t$,
\item $k$-median: $d(i,j)\leq 3d(i',j)+8 C_j$,
\item $k$-means:    $d(i,j)^2\leq 15d(i',j)^2+80C_j$.
\end{itemize}
}
\medskip

\begin{proof}%[Proof of Lemma \ref{lem:costs}]

{\bf $k$-center.} Use the fact that
all $C_j = t$, and $x_{ij}>0\implies d(i,j)\leq t$
with property \ref{lemsub:close1} to get:
\begin{align*}
	d(i,j)&\leq d(i,u)+d(u,i')+d(i',j) \\
&\leq 2C_i+2C_{i'}+d(i',j)\leq 5t.
\end{align*}

{\bf $k$-median.}
By construction, if the demand of point $j$ moved from $i'$ to $i$,
then $\exists u\in \mathcal{M}$ s.t. $i,i'\in \mathcal{E}_u$ and $d(u,i)\leq d(u,i')$.
Denote $j'$ as the closest point in $\mathcal{M}$ to $j$. Then $d(u,i')\leq d(j',i')$ because $i'\in \mathcal{E}_u$.
Then,
\begin{align*}
d(i,j)&\leq d(i,u)+d(u,i')+d(i',j) \\
&\leq 2d(u,i')+d(i',j) \\
&\leq 2d(j',i')+d(i',j) \\
&\leq 2(d(j',j)+d(j,i'))+d(i',j) \\
&\leq 8 C_j + 3d(i',j).
\end{align*}

\paragraph{$k$-means}
The argument is similar to $k$-median, but with a bigger constant factor
because of the squared triangle inequality.
\begin{align*}
d(i,j)^2&\leq (d(i,u)+d(u,i')+d(i',j))^2 \\
&\leq (2d(u,i')+d(i',j))^2 \\
&\leq 4d(u,i')^2+d(i',j)^2+4d(u,i')d(i',j) \\
&\leq 4d(u,i')^2+d(i',j)^2+4d(u,i')d(i',j)\\
&\quad+(2d(i',j)-d(u,i))^2 \\
&\leq 5d(u,i')^2+5d(i',j)^2 \\
&\leq 5d(j',i')^2+5d(i',j)^2 \\
&\leq 5(d(j',j)+d(j,i'))^2+5d(i',j)^2 \\
&\leq 5d(j',j)^2+10d(i',j)^2+10d(j',j)d(i',j)\\
&\leq 5d(j',j)^2+10d(i',j)^2+10d(j',j)d(i',j)\\
&\quad+5(d(j',j)-d(i',j))^2\\
&\leq 10d(j',j)^2+15d(i',j)^2 \\
&\leq 80 C_j+15d(i',j)^2.
\end{align*}

\end{proof}

\definecolor{myblue}{RGB}{80,80,160}
\definecolor{mygreen}{RGB}{80,160,80}
\definecolor{myblack}{RGB}{10,10,10}

\paragraph{Step 4 details}
Set $\{i\mid y_i\neq 0\}=Y$. We show details of the min cost flow network in Algorithm \ref{algo:flow}.

\begin{algorithm}
\DontPrintSemicolon
\KwIn{$V$, $(x,y)$, $y$ are integral }
\KwOut{updated $(x,y)$ with integral $x$'s and $y$'s}
\nl Create a flow graph $G=(V',E)$ as follows.\\
\nl Add each $i\in V$ to $V'$, and give $i$ supply $p$.\\
\nl Add each $i\in Y$ to $V'$, and give $i$ demand $n\ell$.\\
\nl Add a directed edge $(i,j)$ for each $i\in V$, $j\in Y$, with capacity
2 and cost $c_{ij}$ (for $k$-center, make the edge weight $5t$ if $d(i,j)\leq 5t$
and $+\infty$ otherwise.\\
\nl Add a sink vertex $v$ to $V'$, with demand $np-kn\ell$.\\
\nl Add a directed edge $(i,v)$ for each $i\in Y$, with capacity
$\lceil\frac{p+2}{p}nL\rceil-n\ell$ and cost 0.\\
\nl Run an min cost integral flow solver on $G$.\\
\nl Update $x$ by setting $x_{ij}$ to 0, 1, or 2 based on the amount of
flow going from $i$ to $j$.
\caption{{\bf Min cost flow procedure}: Set up flow problem to round $x$'s}
\label{algo:flow}
\end{algorithm}

\begin{lemma} \label{lem:x's}
There exists an integral assignment of the $x'_{ij}$'s such that $\forall i,j\in V$,
$ x'_{ij}\leq 2$ and it can be found in polynomial time.
\end{lemma}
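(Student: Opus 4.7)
The plan is to verify that the fractional solution $(x', y')$ from Step~3 induces a feasible fractional flow in the network constructed in Algorithm~\ref{algo:flow}, and then invoke the Integral Flow Theorem to obtain an integral min-cost flow that we can read off as integer $x$-assignments.

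First I would check feasibility of the fractional flow. Each point $j \in V$ sends out total supply $p$, which matches the constraint $\sum_i x'_{ij} = p$ (property \ref{eq:lem:step1:e}). On the center side, for each $i \in Y = \{i : y'_i > 0\}$ we have $y'_i \ge 1$ by property \ref{eq:lem:step1:a}, so constraint \ref{eq:lem:step1:b} gives $\sum_j x'_{ij} \ge n\ell\, y'_i \ge n\ell$, supplying the demand $n\ell$ at center $i$; the remaining $\sum_j x'_{ij} - n\ell \le nL y'_i - n\ell \le \lceil\tfrac{p+2}{p} nL\rceil - n\ell$ routes to the sink $v$, respecting the edge capacity since $y'_i < \tfrac{p+2}{p}$. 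The global conservation is automatic: total supply is $np$, total demand at centers is $|Y|\cdot n\ell = k n\ell$ (using property \ref{eq:lem:step1:f} together with the fact that Step~3 produces exactly $k$ open centers), and the sink absorbs $np - kn\ell$. Finally, each edge $(j,i)$ has capacity $2$, and $x'_{ij} \le y'_i < \tfrac{p+2}{p} \le 2$ for $p \ge 2$ (property \ref{eq:lem:step1:d} combined with \ref{eq:lem:step1:a}), so the per-edge capacities are also satisfied.

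Next I would invoke the Integral Flow Theorem: since every supply, demand, and edge capacity in the constructed network is an integer, any minimum-cost feasible flow problem on this network admits an optimal solution that is integral, and such a solution can be found in polynomial time by standard min-cost flow algorithms (e.g.\ \citep{papadimitriou1998combinatorial}). The minimum-cost integral flow has cost no greater than the cost of the fractional flow induced by $(x', y')$, which is $\sum_{i,j} c_{ij} x'_{ij}$. Reading the flow on edge $(j,i)$ as the new assignment $x'_{ij} \in \{0,1,2\}$ then yields an integral assignment in which each point is assigned to any given center at most twice, each cluster has size between $n\ell$ and $\lceil\tfrac{p+2}{p} nL\rceil$, and each point is still assigned to $p$ centers in total.

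The main obstacle is the bookkeeping in the feasibility check, in particular making sure that the sink edge capacity $\lceil\tfrac{p+2}{p} nL\rceil - n\ell$ is compatible with the $\tfrac{p+2}{p}$ capacity relaxation claimed in Theorem~\ref{thm:bicriteria}, and that replacing the weighted average of at most two centers (edge capacity $2$) does not accidentally violate the $x \le y$ constraint once $y$ has been integralized. These follow from property \ref{eq:lem:step1:a}, which sandwiches each nonzero $y'_i$ in $[1, \tfrac{p+2}{p})$; the rest is standard combinatorial flow machinery. Combining this integral rounding with the cost bounds from Lemma~\ref{lem:costs} then gives the constant-factor approximation claimed in Theorem~\ref{thm:bicriteria}.
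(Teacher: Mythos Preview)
Your approach is essentially the same as the paper's: exhibit the fractional $(x',y')$ as a feasible flow in the network of Algorithm~\ref{algo:flow}, then invoke the Integral Flow Theorem. You supply considerably more detail than the paper's two-line proof, and the verification of the per-edge capacity $x'_{ij} \le y'_i < \tfrac{p+2}{p} \le 2$ and of the center and sink-edge constraints is correct.

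There is one small bookkeeping slip. You assert that ``Step~3 produces exactly $k$ open centers,'' so that $|Y|\cdot n\ell = kn\ell$ and global conservation holds. Property~\ref{eq:lem:step1:f} only gives $|Y|\le k$, and in fact $|Y|=\sum_u \lfloor Y_u\rfloor$ can be strictly smaller than $\sum_u Y_u = k$ whenever some empire has non-integral total opening. The paper's Algorithm~\ref{algo:flow} writes the sink demand as $np - kn\ell$ and glosses over this point as well; the clean fix is simply to set the sink demand to $np - |Y|\,n\ell$, after which your conservation check goes through verbatim. This does not affect the rest of the argument or the statement of the lemma.
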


\begin{proof}

See Algorithm \ref{algo:flow} and Figure \ref{fig:flow1x} for the details of the flow construction.

In this graph, there exists a feasible flow: $\forall i,j\in V$,
send $x'_{ij}$ units of flow along the
edge from $i$ to $j$, and send $\sum_{j\in V} x_{ij}$ units of flow along the edge from $i$ to $v$.
Therefore, by the integral flow theorem, there exists
a maximal integral flow which we can find in polynomial time.
Also, by construction, this flow corresponds to an integral assignment of the $x'_{ij}$'s
such that $x'_{ij}\leq 2$.
\end{proof}

Now we are ready to prove Theorem \ref{thm:bicriteria}.
The approximation ratios are 5, 11, and 95 for $k$-center, $k$-median, and $k$-means, respectively.

\begin{proof}[Proof of Theorem \ref{thm:bicriteria}]
$\,$\\
\textbf{$k$-center:}
Recall that we defined $C_{LP} = tnp$, where $t$ is the threshold for the k-center LP.
From Lemma \ref{lem:costs}, when we reassign the demand of point $j$ from $i'$ to $i$,
$d(i,j)\leq 5t$. In other words, the y-rounded solution is feasible at threshold $5t$.
Then the $k$-center cost of the new $y$'s is $np(5t)=5C_{LP}$.
From Lemma \ref{lem:x's}, we can also round the $x$'s at no additional cost.

\textbf{$k$-median:}
From Property \ref{lem:costs}, when we reassign the demand of point $j$ from $i'$ to $i$,
$d(i,j)\leq 3d(i',j)+8 C_j$.
Then we can bound the cost of the new assignments with respect to the original LP solution
as follows.
\begin{align*}
\sum_{i\in V}\sum_{j\in V} d(i,j) x'_{ij}&\leq \sum_{i\in V}\sum_{j\in V} (8C_j+3d(i,j))x_{ij} \\
&\leq \sum_{i\in V}\sum_{j\in V} 8 C_j x_{ij}\\
&\quad+\sum_{i\in V}\sum_{j\in V} 3d(i,j) x_{ij} \\
&\leq \sum_{j\in V}8 C_j \sum_{i\in V} x_{ij} + 3 C_{LP} \\
&\leq \sum_{j\in V}8p C_j+3 C_{LP}\leq 11 C_{LP}.
\end{align*}

Then from Lemma \ref{lem:x's}, we get a solution of cost at most $11 C_{LP}$, which also has
integral $x$'s.

\textbf{$k$-means:}
The proof is similar to the $k$-median proof.
From lemma \ref{lem:costs}, when we reassign the demand of point $j$ from $i'$ to $i$,
$d(i,j)^2\leq 15d(i',j)^2+80C_j$.
Then we can bound the cost of the new assignments with respect to the original LP solution
as follows.
\begin{align*}
\sum_{i\in V}\sum_{j\in V} d(i,j)^2 x'_{ij}&\leq \sum_{i\in V}\sum_{j\in V} (80C_j+15d(i',j)^2)x_{ij} \\
&\leq \sum_{i\in V}\sum_{j\in V} 80 C_j x_{ij}+\sum_{i\in V}\sum_{j\in V} 15d(i,j)^2 x_{ij} \\
&\leq \sum_{j\in V}80 C_j \sum_{i\in V} x_{ij} + 15 C_{LP} \\
&\leq \sum_{j\in V}80p C_j+15 C_{LP}\leq 95 C_{LP}.
\end{align*}

Then from Lemma \ref{lem:x's}, we get a solution of cost at most
$95 C_{LP}$, which also has
integral $x$'s.
\end{proof}

See Algorithm \ref{alg:bicriteria} for the final algorithm.

\begin{algorithm}
\DontPrintSemicolon
\KwIn{$V$ }
\KwOut{Integral $(x,y)$ corresponding to bicriteria clustering solution}
\nl Run a solver for the LP relaxation for $k$-median, $k$-means, or $k$-center, output $(x,y)$.\\
\nl Run Algorithm \ref{algo:monarchs_bicriteria} with $V$, $(x,y)$, output set of empires $\{\mathcal{E}_j\}$.\\
\nl Run Algorithm \ref{algo:aggregation} with $V$, $\{\mathcal{E}_j\}$, $(x,y)$, output updated $(x,y)$.\\
\nl Run Algorithm \ref{algo:flow} with $V$, $(x,y)$, output updated $(x,y)$.
\caption{{\bf Bicriteria approximation Algorithm for $k$-median, $k$-means, and $k$-center}}
\label{alg:bicriteria}
\end{algorithm}

\section{$k$-center} \label{sec:kcenter}
In this section, we present a more complicated algorithm that is specific
to $k$-center, which achieves a true approximation algorithm -
the capacity and replication constraints are no longer violated.

\subsection* {Approach}

As in the previous section and in prior work
\citep{ckc3,Cygan_lprounding,Khuller96thecapacitated},
 we start off by guessing the optimal distance $t$. Since
there are a polynomial number of possibilities, it is still only polynomially expensive.
We then construct the threshold graph $G_t = (V, E_t)$,
with $j$ being the set of all points, and $(x, y) \in E_t$ iff $d(x, y) \leq t$.

A high-level overview of the rounding algorithm that follows is given in
Algorithm \ref{algo:overview_balanced_clustering}.

\paragraph{Connection to the previous section} The algorithm here is similar to the
bicriteria algorithm presented previously.
There are, however, two differences. Firstly, we work only
with connected components of the threshold graph.
This is necessary to circumvent the unbounded integrality gap of the LP \citep{Cygan_lprounding}.
Secondly, the rounding procedure of the $y$'s can now move opening
across different empires. Since the threshold graph is connected,
the distance between any two adjacent monarchs is bounded and turns out to exactly be
thrice the threshold. This enables us to get a constant factor approximation
without violating any constraints.

\begin{algorithm}
\DontPrintSemicolon
\KwIn{$V$: the set of points, $k$: the number of clusters, $(\ell,L)$: min and max allowed cluster size }
\KwOut{A $k$-clustering of $V$ respecting cluster size constraints, $p$: replication factor}
\SetKwFunction{procGlobal}{balanced-k-center}
\SetKwFunction{proc}{LPRound}\SetKwFunction{proclocal}{yRound}  %procedures we use
\SetKwProg{myproc}{Procedure}{}{} %define procedure

\myproc{\procGlobal{$V, k, p, \ell, L$}}{

	\ForEach {\upshape threshold $t$}{
		Construct the threshold graph $G_t$\;
		\ForEach {\upshape connected component $G^{(c)}$ of $G_t$}{
			\ForEach{\upshape $k'$ in $1,...k$}{
				// {\it Solve balanced $k'$-clustering on $G^{(c)}$}\;
				Solve \proc{$G^{(c)}, k', p, \ell, L$}\;
			}
	  	 }
		Find a solution for each $G^{(c)}$ with $k_c$ centers such that $\sum_c k_c = k$ by linear search; call is $s$\;
		\lIf{\upshape no such a solution exists}{
			\Return ``No Solution Found''
		}
		\lElse {
			\Return  solution $s$
		}\;
	}
}
\myproc{\proc{$G, k, p, \ell, L$}}{ %LP rounding
	$(x, y) \gets $ relaxed solution of LP in equation \ref{LP}\;
	$(x', y') \gets \proclocal(G, x, y)$\;
	Round $x'$ to get $x''$ from theorem \ref{thm:xround}\;
	\Return $(x'', y')$
}
\myproc{\proclocal{$G, x, y$}}{ %y Round
	Construct coarse clustering to get a tree of clusters from algorithm \ref{algo:monarchs} \;
	Round clusters in a bottom up manner in the tree, moving mass around to nodes within a distance of 5 away (algorithm \ref{algo:round})\;
	\Return rounded solution with integral $y$ \;
}
\caption{Algorithm overview}
\label{algo:overview_balanced_clustering}
\end{algorithm}

\subsection* {The Algorithm}

\subsubsection*{Intuition}
	The approach is to guess the optimal threshold, construct the threshold graph
	at this threshold, write and round several LPs for each connected component
	of this graph for different values of $k$.
	The intuition behind why this works is that at the optimal threshold,
	each cluster is fully contained within a connected component
	(by definition of the threshold graph).

	We the round the opening variables, but this time, open exactly $k$ centers.
	Most of the work goes into rounding the openings, and showing that it is correct.
	Then, we simply round the assignments using a minimum cost flow again.

\subsubsection*{Linear Program}

As earlier, let $y_i$ be an indicator variable to denote whether vertex $i$ is a center, and $x_{ij}$
be indicators for whether $j$ belongs to the cluster centered at $i$.
By convention, $i$ is called a facility and $j$ is called a client.

Consider the following LP relaxation for the IP for each connected component of $G$.
Note that it is exactly the same as the one from the previous section,
except it is described in terms of the threshold graph $G$.
Let us call it {\tt LP-$k$-center}$(G)$:

\begin{subequations}
\label{LP}
\begin{align}
\sum_{i \in V} y_i &= k & \label{eq:lp_1} \\
x_{ij} &\leq y_i & \forall i,j \in V \label{eq:lp_2} \\
\sum_{j: ij \in E} x_{ij} & \leq nL y_i  &\forall i \in V \label{eq:lp_3} \\
\sum_{j: ij \in E} x_{ij} & \geq n\ell y_i  &\forall i \in V \label{eq:lp_3.5} \\
\sum_{i: ij \in E} x_{ij} &= p    &\forall j \in V \label{eq:lp_4} \\
%x_{ij} &= 0 & \forall ij \notin E_t   \label{eq:lp_5} \\
x_{ij} &=0 &\forall ij \notin E \label{eq:lp_5}\\
0 \leq x, y & \leq 1  \label{eq:lp_6}
\end{align}
\end{subequations}

Once we have the threshold graph, for the purpose of $k$-center,
all distances can now be measured in terms of the length
of the shortest path in the threshold graph.
Let $d_G(i,j)$ represent the distance between $i$ and $j$
measured by the length of the shortest path between $i$ and $j$ in $G$.

\subsubsection*{Connected Components}
It is well known \citep{Cygan_lprounding} that even without lower bounds and replication,
the LP has unbounded integrality gap for general graphs. However, for connected components
of the threshold graph, this is not the case.

To begin with, we show that it suffices to be able to do the LP rounding procedure for only connected
threshold graphs, even in our generalization.

\begin {theorem} \label{thm:conn}
If there exists an algorithm that takes as input a connected graph $G$, capacities $\ell,L$,
replication $p$, and $k$ for which {\tt LP-$k$-center}$(G_t)$ is feasible,
and computes a set of $k$ centers to open and an assignment of
 every vertex $j$ to $p$ centers $i$ such that $d_G(i,j) \leq r$ satisfying the
 capacity constraints, then we can obtain a $r$-approximation algorithm to the balanced $k$-centers problem with $p$-replication.
\end {theorem}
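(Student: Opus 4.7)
The plan is to enumerate over all polynomially many candidate thresholds $t$ (at most $\binom{n}{2}$ pairwise distances in the input) and, for the correct guess $t = \text{OPT}$, reduce the global problem to independent subproblems on the connected components of the threshold graph $G_t$. The key observation is that when $t = \text{OPT}$, every optimal cluster lies entirely within a single connected component of $G_t$: a center and each of its assigned points are at distance at most $t$ in the original metric, so they are adjacent in $G_t$ and hence in the same component. Consequently, the optimal solution induces, for each connected component $G^{(c)}$, a count $k_c^*$ of opened centers with $\sum_c k_c^* = k$, together with a balanced $k_c^*$-clustering of $G^{(c)}$ with $p$-replication witnessing feasibility of {\tt LP-$k$-center}$(G^{(c)})$.

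Given this structural fact, the algorithm I propose is natural. For each candidate threshold $t$, build $G_t$; then for every component $G^{(c)}$ and every $k' \in \{1,\ldots,k\}$, check whether {\tt LP-$k$-center}$(G^{(c)})$ is feasible at $k'$ and, if so, invoke the assumed connected-graph subroutine to obtain a clustering in which every point is within graph-distance $r$ of its assigned center. I then solve the combinatorial question of whether one can pick one feasible $k_c$ per component summing to exactly $k$: this is a standard one-dimensional feasibility knapsack, solved by a left-to-right dynamic program whose state is the index of the current component together with the number of centers used so far, running in time $O(mk)$ where $m$ is the number of components. If a valid selection exists, output the union of the recorded per-component clusterings; otherwise advance to the next threshold.

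For the approximation guarantee, note that when $t = \text{OPT}$ the decomposition $(k_1^*,\ldots,k_m^*)$ realized by the optimal clustering certifies LP feasibility for each component at $k_c^*$. Hence the subroutine returns, for every such component, a valid clustering in which every point is within graph-distance $r$ of its assigned center, and the DP succeeds. Since each edge of $G_t$ corresponds to metric distance at most $t$, a path of length at most $r$ in $G^{(c)}$ corresponds to metric distance at most $rt = r \cdot \text{OPT}$, which gives the $r$-approximation. The capacity, replication, and total-centers constraints hold globally because they hold within each component and because the components partition the vertex set.

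The main obstacle is largely conceptual: one must check that the reduction interacts cleanly with the replication and balance constraints at the component level. In particular, a component of size less than $p$ admits no valid $p$-replicated clustering, and small components may violate the lower bound $n\ell$; however these pathological cases correspond exactly to LP infeasibility and are filtered out automatically by the DP over per-component feasibility. Beyond that, the only remaining subtlety is ensuring that the correct threshold is eventually tried, which is handled by the fact that $\text{OPT}$ equals some pairwise input distance, so a polynomial enumeration suffices for polynomial total running time.
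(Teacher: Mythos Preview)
Your proposal is correct and follows the same high-level structure as the paper: guess the threshold, decompose into connected components of $G_t$, solve each component for every $k'\in\{1,\dots,k\}$, and then combine. The one substantive difference is in how you allocate the total budget $k$ across components. You propose a subset-sum style dynamic program over the feasible $k'$ sets. The paper instead observes that for each component the set of $k'$ for which {\tt LP-$k$-center} is feasible is an \emph{interval} $[m_i,M_i]$, because a convex combination of feasible LP solutions at $k_1$ and $k_2$ is a feasible LP solution at $(k_1+k_2)/2$. With this interval structure, allocation reduces to checking $\sum_i m_i \le k \le \sum_i M_i$ and then greedily bumping each $k_i$ up from $m_i$. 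Your DP is more general and does not need this structural fact; the paper's argument is slightly slicker and exposes a nice convexity property of the LP. Either route yields a polynomial-time reduction.
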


\begin{proof}
	Let connected component $i$ have $k_i$ clusters.
	For each connected component, do a linear search on the range $[1,\dots,k]$
	to find values of $k_i$ for which the problem is feasible.
	These feasible values will form a range, if size constraints are to be satisfied.
	To see why this is the case, note that if $(x_1, y_1)$ and $(x_2, y_2)$ are
	fractional solutions for $k = k_1$ and $k = k_2$ respectively, then
	$((x_1+x_2)/2, (y_1 + y_2)/2)$ is a valid fractional soluion for
	$k = (k_1 + k_2)/2$.

	Suppose the feasible values of $k_i$ are $m_i \leq k_i \leq M_i$. If $\sum_{i} m_i > k$ or $\sum_i M_i < k$,
	return NO (at this threshold $t$). Otherwise, start with each $k_i$ equal to $m_i$.
	Increase them one by one up to $M_i$ until $\sum_i k_i = k$. This process takes polynomial time.
\end{proof}

From now on, the focus is entirely on { a single connected component}.

\subsubsection*{Rounding $y$}

Given an integer feasible point to the IP for each connected component, we can obtain the
desired clustering. Hence, we must find a way to obtain an integer feasible point from any feasible
point of {\tt LP-$k$-center}.

To round the $y$, we follow the approach of An et al.\citep{ckc3}. The basic idea is to create a coarse clustering
of vertices, and have the cluster centers form a tree. The radius of each cluster will be at most 2, and the
length of any edge in the tree will exactly be three, by construction.

Now, to round the $y$, we first start from the
leaves of the tree, moving opening around in each coarse cluster such that at most one node (which we pick to be the center, also called the monarch). In subsequent steps, this fractional opening is passed to the parent cluster, where the
same process happens. The key to getting a constant factor approximation is to ensure that fractional openings that
transferred from a child cluster to a parent cluster are not propagated further. Note that the bicriteria algorithm did
not move opening from one coarse cluster (empire) to another because we didn't have an upper bound
of the cost incurred by making this shift.

\paragraph{Preliminaries.} We start with some definitions.

\begin{definition}[$\delta$-feasible solution \citep{Cygan_lprounding}]
	A solution $(x, y)$ feasible on $G^{\delta}$, the
	graph obtained by connecting all nodes within $\delta$ hops away from each other.
\end{definition}

Next, we introduce the notion of a distance-$r$ shift.
Intuitively, a distance-$r$ shift is a series of movements of openings,
none of which traverses a distance more than $r$ in the threshold graph.
Note that the definition is similar to what is used in An et al.\citep{ckc3}.

\begin{definition}[Distance-$r$ shift ]
	Given a graph $G = (V, E)$ and $y, y' \in \mathbb{R}^{|V|}_{\geq 0}$,
	$y'$ is a distance-$r$ shift of $y$ if
	$y'$ can be obtained from $y$ via a series of disjoint  movements
	of the form {  ``Move $\delta$ from $i$ to $i'$'' where $\delta \leq \min(y_i, 1-y_{i'})$}
	and every $i$ and $i'$ are at most a distance $r$ apart in the threshold graph $G$.
	Further, if all $y'$ are zero or one, it is called an { integral distance-$r$ shift}.
\end{definition}

Note that, by the definition of a distance-$r$ shift,
 each unit of $y$ moves only once and if it moves more than once,
 all the movements are put together as a single, big movement,
 and this distance still does not exceed $r$.

 \begin{lemma}[Realizing distance-$r$ shift]
 	\label{lemma:transfer_validity}
 	For every distance-$r$ shift $y'$ of $y$ such that $0\leq y_i' \leq 1~ \forall i\in V$,
	we can find $x'$ in polynomial time such that
	$(x', y')$ is $(r+1)$-feasible.
 \end{lemma}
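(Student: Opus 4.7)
The plan is to realize the $y \to y'$ shift at the level of the full LP solution by lifting each elementary movement in the distance-$r$ shift to an invocation of the \emph{Move} suboperation defined earlier. Specifically, I process the movements of the shift one at a time: for each elementary movement ``Move $\delta$ from $i$ to $i'$'' (with $d_G(i,i') \leq r$), I perform the corresponding Move on $(x,y)$, which simultaneously updates $y_i \to y_i - \delta$, $y_{i'} \to y_{i'} + \delta$, and for every client $u$ transfers the proportional fraction of the $x_{iu}$ mass to $x_{i'u}$. After processing every movement, the $y$-coordinates agree with $y'$; I define $x'$ to be the resulting assignment.

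Once this mapping is set up, feasibility of $(x', y')$ on $G^{r+1}$ reduces to checking each LP constraint. Constraints \eqref{eq:lp_1}--\eqref{eq:lp_4} and $x,y \geq 0$ are preserved by every Move step by Lemma~\ref{lem:move_properties}, and the upper bound $y'_i \leq 1$ holds by hypothesis on the shift (even though intermediate states could violate this, the final state does not). So the only nontrivial constraint is the edge-support constraint \eqref{eq:lp_5} relative to the expanded graph $G^{r+1}$.

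The main obstacle, and the step whose correctness is the crux of the lemma, is showing that $x'_{bu} > 0$ implies $d_G(b, u) \leq r+1$. This is where the ``disjoint'' condition in the definition of a distance-$r$ shift is essential: each infinitesimal unit of $y$-mass (and the attached $x$-mass) moves at most once, from some source $a$ directly to some destination $b$ with $d_G(a, b) \leq r$. Any positive $x_{au}$ in the original solution satisfies $d_G(a, u) \leq 1$ by the feasibility of $(x,y)$ on $G$. After the move, the corresponding mass contributes to $x'_{bu}$, and the triangle inequality gives $d_G(b, u) \leq d_G(b, a) + d_G(a, u) \leq r + 1$, so $(b,u)$ is an edge of $G^{r+1}$ as required.

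Finally, the running time is polynomial: a distance-$r$ shift is specified by polynomially many elementary movements, and each Move updates at most $n$ entries of $x$, giving $O(n \cdot \text{poly}(n))$ total work. This yields $x'$ such that $(x', y')$ is $(r+1)$-feasible, completing the argument.
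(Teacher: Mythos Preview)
Your proposal is correct and follows essentially the same approach as the paper: apply the Move operation for each elementary movement of the shift, invoke Lemma~\ref{lem:move_properties} to preserve the LP constraints, and use the disjointness of movements together with the triangle inequality to bound the support of $x'$ by edges of $G^{r+1}$. Your write-up in fact spells out the triangle-inequality step for the edge-support constraint more explicitly than the paper's terse version; one small remark is that by the definition of a distance-$r$ shift the condition $\delta \leq 1 - y_{i'}$ already keeps $y \leq 1$ at every intermediate step, so your parenthetical about intermediate violations is unnecessary (though harmless).
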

\begin{proof}

We can use the Move operation described earlier and in Cygan et al.\ \citep{Cygan_lprounding}
to change the corresponding $x$ for each such a movement to
ensure that the resulting $(x', y')$ are $(r+1)$-feasible.
The additional restriction $\delta \leq 1 - y_b$ ensures that $y\leq 1$.
Since each unit of $y$ moves only once, all the movements put together will also lead
a solution feasible in $G^{r+1}$, i.e. we get a $(r+1)$-feasible solution.

\end{proof}

From here on, we { assume that $x_{ij}, x_{i'j}$ are adjusted as described
above for every movement between $i$ and $i'$}.

%%%%%%%%%%%%%

The algorithm to round $y$ \citep{ckc3} proceeds in two phases. In the first phase,
we cluster points into a tree of coarse clusters (monarchs) such that nearby clusters are connected
using the monarch procedure of Khuller et al \citep{Khuller96thecapacitated}. In the second phase,
fractional opening are aggregated to get an integral distance-5 shift.

\paragraph{Monarch Procedure.}
The monarch procedure is presented a little differently but is very similar to
the monarch procedure presented earlier. Since the threshold graph is
connected, we can get guarantees on how big the distance between two
monarchs is.

Algorithm \ref{algo:monarchs} describes the first phase where we
construct a tree of monarchs and assign empires to each monarch.
Let $\mathcal{M}$ be the set of all monarchs. For some monarch, $u\in \mathcal{M}$,
let $\mathcal{E}_u$ denote its empire.
For each vertex $i$, let $m(i)$ denote the the monarch $u$ to whose empire $\mathcal{E}_u$,
$i$ belongs.

\begin{algorithm}
\DontPrintSemicolon
\KwIn{$G = (V, E)$ }
\KwOut{Tree of monarchs, $T=(\mathcal{M}, E')$, and empires for each monarch}
\nl $\texttt{Marked} \gets \emptyset$\;
\nl \ForEach {$j \in V$} {\nl initialize $\texttt{ChildMonarchs}(j)$ and $\texttt{Dependents}(j)$ to $\emptyset$}
\nl Pick any vertex $u$ and make it a monarch\;
\nl $\mathcal{E}_u \gets N^+(u)$; Initialize $T$ to be a singleton node $u$\;
\nl $\texttt{Marked} \gets \texttt{Marked} \cup \mathcal{E}_u$\;
\nl \While{$\exists w \in (V \setminus \texttt{\upshape Marked}) \text{ \upshape   such that } d_G(w, \texttt{\upshape Marked}) \geq 2$} {
\nl	Let $u \in (V \setminus \texttt{Marked})$ and $v \in \texttt{Marked}$ such that $d_G(u, v) = 2$\;
\nl	Make $u$ a monarch and assign its empire to be $\mathcal{E}_u \gets N^+(u)$\;
\nl	$\texttt{Marked} \gets \texttt{Marked} \cup \mathcal{E}_u$\;
\nl	Make $u$ a child of $m(v)$ in $T$\;
\nl  $\texttt{ChildMonarchs}(v) \gets \texttt{ChildMonarchs}(v) \cup \{u\}$\;
}
\nl \ForEach {$v \in (V \setminus \texttt{\upshape Marked})$} {
\nl	Let $u \in \texttt{Marked}$ be such that $d_G(u, v)$ = 1\;
\nl	$\texttt{Dependents}(u) \gets \texttt{Dependents}(u) \cup \{v\}$\;
\nl	$\mathcal{E}_{m(u)} \gets \mathcal{E}_{m(u)} \cup \{v\}$\;
}
\caption{Monarch Procedure: Algorithm to construct tree of monarchs and assign empires}
\label{algo:monarchs}
\end{algorithm}

The guarantees now translate to the following (Lemma \ref{lemma:monarchs}):
\begin{itemize}
\item Empires partition the point set.
\item The empire includes \emph{all} immediate neighbors of a monarch
	and additionally, some other nodes of distance two (called dependents).
\item Adjacent monarchs are exactly distance 3 from each other.
\end{itemize}

\begin{lemma}
\label{lemma:monarchs}
Algorithm \ref{algo:monarchs}, the monarch procedure is well-defined and its output satisfies the following:
\begin{itemize}
	\item $\mathcal{E}_u \cap \mathcal{E}_{u'} = \emptyset$.
	\item $\forall u \in \mathcal{M}: \text{ } \mathcal{E}_u = N^+(u) \cup (\bigcup_{j \in N^+(u)} \texttt{\upshape Dependents}(j) )$.
	\item The distance between a monarch and any node in its empire is at most 2.
	\item Distance between any two monarchs adjacent in $T$ is exactly 3.
	\item If $\texttt{\upshape ChildMonarchs}(j) \neq \emptyset$ or $\texttt{\upshape Dependents}(j) \neq \emptyset$,
			then $j$ is at distance one from some monarch.
\end{itemize}
\end{lemma}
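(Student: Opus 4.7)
My plan is to verify the five claims essentially by induction on the execution of Algorithm~\ref{algo:monarchs}, exploiting (i) the invariant that at every point during the while loop each empire $\mathcal{E}_u$ equals $N^+(u)$, and (ii) the choice rule that guarantees every newly chosen monarch is at distance exactly $2$ from the current \texttt{Marked} set. I will first handle well-definedness: in the while loop, if there exists $w \in V \setminus \texttt{Marked}$ with $d_G(w, \texttt{Marked}) \geq 2$, then since $G$ is connected, some shortest path from $w$ to \texttt{Marked} has a vertex at distance exactly $2$, so a valid pair $(u,v)$ always exists. After the loop, every $v \notin \texttt{Marked}$ has $d_G(v, \texttt{Marked}) = 1$, so the witness monarch-neighbor in the dependents loop always exists.

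For the disjointness claim (first bullet), I will argue by induction on the order in which empires are created: when $u$ is declared a monarch, any $w \in N^+(u) \setminus \{u\}$ satisfies $d_G(u, w) \leq 1$; if $w$ were already in \texttt{Marked}, then $d_G(u, \texttt{Marked}) \leq 1$, contradicting the loop condition that forced $d_G(u, \texttt{Marked}) \geq 2$. The subsequent dependents pass assigns each unmarked $v$ to exactly one empire, so empires form a partition. The second bullet is then a direct transcription of the algorithm: the empire $\mathcal{E}_u$ is initialized as $N^+(u)$ and is only augmented by dependents attached through vertices in $N^+(u)$. The third bullet follows immediately: points in $N^+(u)$ are at distance $\leq 1$ from $u$, and a dependent $v$ is connected by an edge to some $j \in N^+(u)$, giving $d_G(u, v) \leq 2$.

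The most substantive step is the fourth bullet, that adjacent monarchs in $T$ lie at distance exactly $3$. For the upper bound, when monarch $u$ is created with witness $v \in \texttt{Marked}$, we have $d_G(u, v) = 2$, and at that moment $v \in N^+(m(v))$ (dependents are only assigned after the while loop), so $d_G(v, m(v)) \leq 1$ and thus $d_G(u, m(v)) \leq 3$. For the lower bound, suppose $d_G(u, m(v)) \leq 2$. If $d_G(u, m(v)) \leq 1$, then $u \in N^+(m(v)) \subseteq \texttt{Marked}$, contradicting $u \notin \texttt{Marked}$. If $d_G(u, m(v)) = 2$, some intermediate vertex $w$ on a shortest $u$-to-$m(v)$ path lies in $N^+(m(v)) \subseteq \texttt{Marked}$ with $d_G(u, w) = 1$, again contradicting $d_G(u, \texttt{Marked}) \geq 2$. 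So $d_G(u, m(v)) = 3$ exactly. I expect this equality argument to be the main subtlety, since it crucially uses both the way $v$ entered \texttt{Marked} (forcing $d_G(v, m(v)) \leq 1$ at that time) and the precise choice rule on $u$.

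Finally, for the fifth bullet, if $\texttt{ChildMonarchs}(j) \neq \emptyset$, then $j$ served as a witness $v$ in the while loop, so at that moment $j \in \texttt{Marked}$ and the dependents loop had not yet run, which means $j \in N^+(m(j))$, i.e., $d_G(j, m(j)) \leq 1$. Similarly, if $\texttt{Dependents}(j) \neq \emptyset$, then $j$ played the role of the marked witness for some unmarked vertex in the dependents loop, so $j \in \texttt{Marked}$ before dependents were added, hence $j \in N^+(m(j))$ and $d_G(j, m(j)) \leq 1$. This completes the sketch of all five properties.
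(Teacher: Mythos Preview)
Your proposal is correct and follows essentially the same approach as the paper's proof: both argue directly from the execution of Algorithm~\ref{algo:monarchs}, using connectedness for well-definedness, the fact that during the while loop every marked vertex lies in some $N^+(\text{monarch})$, and the choice rule $d_G(u,\texttt{Marked})=2$ for the distance claims. Your treatment is in fact more careful than the paper's on the fourth bullet: the paper simply writes $d_G(u,m(v)) = d_G(u,v)+d_G(v,m(v)) = 3$ without separately justifying the lower bound, whereas you explicitly rule out $d_G(u,m(v))\le 2$ via the intermediate-vertex argument; and the paper does not spell out the fifth bullet at all, which you handle correctly.
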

\begin{proof}
	Note that the whole graph is connected and $V \neq \emptyset$.
	For the while loop, if there exists $w$ such that $d_G(w, \texttt{Marked}) \geq 2$,
	there exists $u$ such that $d_G(u, \texttt{Marked}) = 2$ because the graph is connected.
	By the end of the while loop, there are no vertices at a distance 2 or more from $\texttt{Marked}$.
	Hence, vertices not in $\texttt{Marked}$, if any, should be at a distance 1 from $\texttt{Marked}$.
	Thus, the algorithm is well defined.

	Each time a new monarch $u$ is created, $N^+(u)$ is added to its empire.
	This shows the first statement.
	The only other vertices added to any empire are the dependents in the foreach loop.
	Each dependent $j$ is directly connected to $i$, a marked vertex. Hence, $i$ has to
	be a neighbor of a monarch. If $i$ were a monarch, $j$ would have been marked in the
	while loop. Thus, $d_G(j, m(i))=2$.

	If the first statement of the while loop, $v$ is a marked vertex,
	and has to be a neighbor of some monarch $m(v)$.
	New monarch $u$ is chosen such that $d_G(u, v) = 2$.
	The parent monarch of $u$ is $m(v)$ and $d_G(u, m(v)) = d_G(u, v)+d_G(v, m(v))=3$.

\end{proof}

\paragraph{Initial Aggregation.}
Now, we shall turn to the rounding algorithm of An et al \citep{ckc3}.
The algorithm begins with changing $y_u$ of every monarch $u\in \mathcal{M}$ to 1.
Call this the { initial aggregation}. It requires transfer of at most distance
one because the neighbors of the monarchs has enough opening.

\begin{lemma}
	\label{lemma:initial_aggregation}
	The initial aggregation can be implemented by a distance-1 shift.
\end{lemma}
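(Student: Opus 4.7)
The plan is to use the LP assignment constraint at each monarch, combined with the disjointness of monarchs' closed neighborhoods, to construct the required distance-1 shift directly.

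First, I would fix a monarch $u \in \mathcal{M}$ and observe from LP constraints \eqref{eq:lp_4} and \eqref{eq:lp_5} that $\sum_{i \in N^+(u)} x_{iu} = p$, since any $i$ with $x_{iu} > 0$ must satisfy $iu \in E$. Combining this with \eqref{eq:lp_2}, we obtain $\sum_{i \in N^+(u)} y_i \geq p \geq 1$. So there are at least $1 - y_u$ units of fractional opening sitting on vertices in $N(u) = N^+(u) \setminus \{u\}$, each of which is adjacent to $u$ in the threshold graph. Greedily transferring these openings to $u$ until $y_u = 1$ (e.g., go through neighbors of $u$ in arbitrary order, moving $\min(y_i, 1 - y_u)$ from each $i$ to $u$) is a sequence of distance-1 moves that raises $y_u$ to exactly $1$ without ever making the opening at $u$ exceed $1$.

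Next, I would argue that performing this procedure simultaneously at all monarchs gives a valid distance-1 shift, i.e., the moves for different monarchs are disjoint (no unit of $y$ moves twice). The key observation is that the closed neighborhoods $\{N^+(u)\}_{u \in \mathcal{M}}$ are pairwise disjoint. Indeed, if $w \in N^+(u) \cap N^+(u')$ for distinct monarchs $u, u'$, then since $w$ is added to $\texttt{Marked}$ as soon as the earlier monarch, say $u$, is created, the later monarch $u'$ would satisfy $d_G(u', w) \leq 1$, contradicting the selection rule $d_G(u', \texttt{Marked}) \geq 2$ in the while-loop of Algorithm~\ref{algo:monarchs}. Hence each vertex $i$ lies in at most one $N^+(u)$, so the opening $y_i$ is only transferred as part of the shift at that unique monarch (if at all), satisfying the disjointness requirement in the definition of a distance-$r$ shift.

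Finally, I would note that the only potentially subtle point is showing that after the local shift around each monarch, the resulting $y'$ satisfies $0 \leq y'_i \leq 1$, which is immediate: we only decrease $y'_i$ for $i \in N(u)$ and we halt when $y'_u = 1$, so no variable leaves $[0,1]$. Putting these together, concatenating the disjoint shifts over all monarchs yields a single distance-1 shift that achieves $y'_u = 1$ for every $u \in \mathcal{M}$, as required. The main obstacle is really the bookkeeping to verify disjointness of the $N^+(u)$'s, which as noted falls out of the selection condition in the monarch procedure; once that is established the rest is a direct consequence of the LP constraints.
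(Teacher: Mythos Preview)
Your proof is correct and follows essentially the same approach as the paper: both use the LP constraints $\sum_{i:iu\in E} x_{iu}=p$ and $x_{iu}\le y_i$ to conclude that $\sum_{i\in N^+(u)} y_i\ge p\ge 1$, and then greedily move opening from neighbors of $u$ into $u$ via moves of size $\min(y_j,1-y_u)$. Your write-up is in fact more careful than the paper's: the paper's proof omits any discussion of why the moves for different monarchs are disjoint (as required by the definition of a distance-$r$ shift), whereas you derive this from the pairwise disjointness of the closed neighborhoods $N^+(u)$, which in turn follows from the selection rule $d_G(u,\texttt{Marked})=2$ in Algorithm~\ref{algo:monarchs} (equivalently, from the first bullet of Lemma~\ref{lemma:monarchs} together with $N^+(u)\subseteq\mathcal{E}_u$).
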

\begin{proof}
	For every vertex $u \in V$, we have
	$\sum_{j \in N(u)} y_j \geq \sum_{j \in N(u)} x_{uj} = p \geq 1$.
	Hence, there is enough $y$-mass within a distance of one from $u$.
	The actual transfer can happen by letting $\delta = \min(1-y_u, y_j)$
	for some neighbor $j$ of $u$ and then transferring $\delta$ from
	$j$ to $u$. That is, $y_j = y_j - \delta$ and $y_u = y_u + \delta$.

\end{proof}

\paragraph{Rounding.}
The rounding procedure now proceeds in a bottom-up manner on the tree
of monarchs, rounding all $y$ using movements of distance 5 or smaller.
After rounding the leaf empires, all fractional opening, if any is at the monarch.
For internal empires, the centers of child monarch (remnants of previous rounding
steps) and dependents are first rounded. Then the neighbors of the
monarch are rounded to leave the entire cluster integral except the monarch.
The two step procedure is adopted so that the opening propagated from this monarch
to its parent originates entirely from the 1-neighborhood of the monarch.

Formally, at the end of each run of round on $u \in \mathcal{M}$,
all the vertices of the set $I_u$ are integral,
where
$I_u := (\mathcal{E}_u \setminus u) \cup  (\bigcup_{j \in N(u)} \texttt{\upshape ChildMonarchs}(j) )$.

\begin{algorithm}
\DontPrintSemicolon % Some LaTeX compilers require you to use \dontprintsemicolon instead
\KwIn{Tree of monarchs, $T$, and empires for each monarch after the initial aggregation }
\KwOut{$y'$, an integral distance-5 shift of $y$}
\SetKwFunction{proc}{Round}\SetKwFunction{proclocal}{LocalRound}
\SetKwProg{myproc}{Procedure}{}{} %define procedure
\nl \myproc{\proc{\upshape Monarch $u$}}{
\nl //Recursive call \;
\nl	\lForEach{\upshape child $w$ of $u$ in $T$ } {{\tt Round}$(w)$}
\nl //Phase 1 \;
\nl	\ForEach{$j \in N(u)$} {
\nl		$X_j \gets \{j\} \cup \texttt{ChildMonarchs}(j) \cup \texttt{Dependents}(j)$ \;
\nl		$W_j \gets \{ \lfloor y(X_j) \rfloor$ nodes from $X_j \}$; (Avoid picking $j$ if possible)\;
\nl		\proclocal{$W_j, X_j, \emptyset$}\;					\label{algo_line:xu1}
\nl		\proclocal{$\{j\}, X_j \setminus W_j, \emptyset$}\; 		\label{algo_line:xu2}
	}
\nl //Phase 2 \;
\nl	$F = \{ j | j \in N(u) \text{ and } 0 < y_j < 1 \}$ \;
\nl	$W_F \gets \{$ any $\lfloor y(F) \rfloor$ nodes from $F \}$\;
\nl	\proclocal{$W_F, F , \emptyset$}\;			\label{algo_line:f1}
\nl //Residual \;
\nl	\If {$y(F \setminus W_F) > 0$}{
\nl		Choose $w^* \in F \setminus W_F$\;
\nl		\proclocal{$\{w^*\}, F \setminus W_F, u$}\; 	\label{algo_line:wstar}
	}
}

\nl \myproc{\proclocal{$V_1, V_2, V_3$}}{
\nl	\While {$\exists i \in V_1 \text{ \upshape such that }  y_i < 1$} {
\nl		Choose a vertex $w$ with non-zero opening from $V_2 \setminus V_1$ \;
\nl		if there exists none, choose $j$ from $V_3 \setminus V_1$ \;
\nl		$\delta \gets \min(1-y_i, y_j)$\;
\nl 		{\tt Move} $\delta$ from $j$ to $i$ \;
	}
}
\caption{Algorithm to round $y$}
\label{algo:round}
\end{algorithm}

The rounding procedure is described in detail in Algorithm \ref{algo:round}.
The following lemma states and proves that algorithm \ref{algo:round}
 rounds all points and doesn't move opening very far.

\begin{lemma}[Adaptation of Lemma 19 of An et al \citep{ckc3}]
	Let $I_u := (\mathcal{E}_u \setminus u) \cup  (\bigcup_{j \in N(u)} \texttt{\upshape ChildMonarchs}(j) )$.
	\begin{itemize}
	\item	$\texttt{\upshape Round}(u)$ makes the vertices of $I_u$ integral
	with a set of opening movements within $I_u \cup \{u\}$.
	\item This happens with no incoming movements to the monarch $u$ after the initial aggregation.
	\item The maximum distance of these
	movements is five, taking the initial aggregation into account.
	\end{itemize}

\end{lemma}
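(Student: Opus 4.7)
My plan is to induct over the monarch tree from the leaves upward and verify the three bullet points for each invocation of $\texttt{Round}(u)$. The first two claims are structural and follow by inspection. Each Phase 1 call to LocalRound operates inside some $X_j = \{j\} \cup \texttt{ChildMonarchs}(j) \cup \texttt{Dependents}(j)$ with $j \in N(u) \subseteq \mathcal{E}_u \setminus \{u\}$, and by definition $\texttt{ChildMonarchs}(j), \texttt{Dependents}(j) \subseteq I_u$, so $X_j \subseteq I_u$. Phase 2 operates inside $F \subseteq N(u) \subseteq I_u$, and the residual moves between $F \setminus W_F \subseteq I_u$ and $u$, so every movement lies in $I_u \cup \{u\}$. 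Furthermore $u$ never appears as the sink $V_1$ of any LocalRound inside $\texttt{Round}(u)$: it is not in any $X_j$, not in $F$, and enters the residual only as the auxiliary source $V_3$. Hence no opening flows into $u$ after the initial aggregation.

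For integrality of $I_u$, I would argue Phase 1 sweeps each $X_j$ as follows: line~8 fills the chosen $\lfloor y(X_j) \rfloor$ points $W_j$ up to 1 using opening from $X_j \setminus W_j$, and line~9 pushes the remaining $y(X_j) - \lfloor y(X_j) \rfloor < 1$ fractional mass onto $j$. The ``avoid $j$'' rule for $W_j$ is feasible in the only nontrivial case $y(X_j) < |X_j|$, since then $|X_j| - 1 \geq \lfloor y(X_j) \rfloor$. Consequently, after Phase 1 the only fractional vertices of $I_u$ are the $j$'s, and these constitute $F$. Phase 2 fills $W_F$ up to 1, and if a single leftover $w^* \in F \setminus W_F$ is still fractional, the residual step tops it up from $u$'s opening, which is still 1 after the initial aggregation. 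Thus all of $I_u$ becomes integral.

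The main obstacle is the distance-5 bound, which must track the full composed journey of each unit of opening. I would consolidate every unit's intermediate moves into a single source-to-destination movement. The crucial observation is that once Phase 1 or Phase 2 of $\texttt{Round}(u)$ deposits opening at a point $a$, the opening at $a$ is never touched again by any later round. If $a$ is a non-monarch in $\mathcal{E}_u$, then by the empire partition $a \notin \mathcal{E}_{u'}$ for any ancestor monarch $u'$, so $a$ is neither $j''$, a dependent, nor a child monarch inside $\texttt{Round}(u')$; and $a \notin F_{u'}$ since its opening is already integer. If instead $a \in \texttt{ChildMonarchs}(j)$ is itself a child monarch of $u$, then $a$ is a grandchild (or farther) in any ancestor's tree, so $a$ does not appear in any ancestor's $X_{j''}$, and the monarch procedure forces $d(a, u') \geq 2$, ruling out $a \in F_{u'}$. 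Hence the only opening that a strictly ancestor round can still see is the leftover opening remaining at some monarch $u_c$ after $\texttt{Round}(u_c)$, and such opening has moved exactly once, during the initial aggregation, at distance at most 1.

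Composing the three possible stages then closes the bound. In the worst case a unit undergoes initial aggregation $v' \to u_c$ (distance $\leq 1$), then Phase 1 of $\texttt{Round}(u)$ where $u_c \in \texttt{ChildMonarchs}(j)$ moves it to some $a \in X_j$ (distance $\leq \mathrm{diam}(X_j) \leq 4$), and then optionally Phase 2 moves it to some $b \in F$ (distance $\leq 2$). The Phase 1-to-Phase 2 hand-off forces the intermediate landing to be $a = j$, because $j$ is the unique vertex of $X_j$ lying in $N(u) \supseteq F$. By the triangle inequality the net source-to-destination distance is at most $d(v', u_c) + d(u_c, j) + d(j, b) \leq 1 + 2 + 2 = 5$ for this full chain; at most $1 + 4 = 5$ for a chain halting at a second child monarch inside Phase 1; and at most $1 + 1 = 2$ for the residual chain $v \to u \to w^*$. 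The exact numerical values $d(u_c, j) = 2$ and $\mathrm{diam}(X_j) \leq 4$ come from Lemma \ref{lemma:monarchs}, which gives empire radius 2 and adjacent-monarch distance exactly 3.
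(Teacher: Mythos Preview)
Your proof is correct and follows essentially the same approach as the paper: a phase-by-phase distance analysis that composes the initial aggregation with the Phase~1 and Phase~2 moves, using the structural facts from Lemma~\ref{lemma:monarchs} (dependents at distance~1 from $j$, child monarchs at distance~2, neighbors of $u$ pairwise at distance~$\leq 2$). Your framing via the invariant ``once deposited, never touched again'' is a slightly cleaner way to organize the same case analysis the paper carries out line by line; the only minor imprecision is the sentence ``the only opening that a strictly ancestor round can still see is the leftover opening remaining at some monarch $u_c$''---ancestor rounds also see original, never-moved opening at $j'\in N(u')$ and at dependents of $j'$, but since those have zero prior displacement this does not affect the bound.
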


\begin{proof}
	{\bf Integrality.}
	From lemma \ref{lemma:monarchs}, it can be seen that $X_j, j \in N(u)$ above form a partition
	of $I_u$. Hence, it suffices to verify that each node of every $X_j$ is integral.

	At the end of line \ref{algo_line:xu1}, the total non-integral opening in $X_j$
	is $y(X_j) - \lfloor y(X_j) \rfloor$, and is hence smaller than one. Line \ref{algo_line:xu2}
	moves all these fractional openings to $j$. By now, all openings of $X_j \setminus \{j \}$ are integral.

	Now, $F$ is the set of all non-integral $j \in N(u)$. So, by the end of line \ref{algo_line:f1},
	the total non-integral opening in $N(u)$ (and hence in all of $I_u$) is
	$y(F\setminus W_F) = y(F) - \lfloor y(F) \rfloor$,
	and is again smaller than one. If this is zero, we are done.

	Otherwise, we choose a node $w^*$, shift this amount to $w^*$ in line \ref{algo_line:wstar}.
	To make this integral, this operations also transfers the \emph{remaining amount, i.e. $1-y(F\setminus W_F)$
	from the monarch $u$}. If this happens, the monarch $u$'s opening is no longer integral, but $I_u$'s is.

	This shows the first bullet.
	For the second one, notice that after the initial aggregation,
	this last operation is the only one involving the monarch $u$ and
	hence, there are no other incoming movements into $u$.

	{\bf Distance.}
	In the first set of transfers in line \ref{algo_line:xu1}
	the distance of the transfer is at most 4. This is because
	dependents are a distance one away from $j$ and child monarchs
	are at a distance two away. The maximum distance is when the transfer happens
	from one child monarch to another, and this distance is 4
	(recall that  there are no incoming movements into monarchs).

	The transfers in line \ref{algo_line:xu2} moves openings from a child monarch
	or a dependent to $j$. The distances are 2 and 1 respectively.
	Accounting for the initial
	aggregation, this is at most 3.

	The rounding on line \ref{algo_line:f1} moves openings between neighbors of the monarch,
	i.e. from some $j$ to $j'$ where
	$j, j' \in N(u)$. So, the distance between $j$ and $j'$ is at most 2. From the
	preceding transfers, the openings at $j$ moved a distance of
	at most three to get there, and thus, we conclude
	that openings have moved at most a distance of 5 so far.

	The first step of rounding on line \ref{algo_line:wstar} moves openings from some $j$ to $w^*$, where
	$j, w^* \in N(u)$. As above, the maximum distance in this case is 5. The second step
	of rounding on  line \ref{algo_line:wstar} moves opening from the
	monarch $u$ to its neighbor $w^*$. This distance
	is one, and after accounting for the initial aggregation, is 2.

	From this, we see that the maximum distance any opening has to move is 5.
\end{proof}

The algorithms, their properties in conjunction with lemma \ref{lemma:transfer_validity}
leads to the following theorem, which also summarizes this subsection.
\begin{theorem}
	\label{thm:yround}
	There exists a polynomial time algorithm to find a 6-feasible solution with
	all $y$ integral.
\end{theorem}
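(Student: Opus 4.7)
}
My plan is to chain together the three ingredients already set up in this subsection: the Monarch Procedure (Algorithm \ref{algo:monarchs}), the initial aggregation step (Lemma \ref{lemma:initial_aggregation}), and the recursive rounding procedure \texttt{Round} (Algorithm \ref{algo:round}) called on the root of the monarch tree, and then invoke Lemma \ref{lemma:transfer_validity} to convert the resulting bounded-distance shift of $y$ into a feasible $(x,y)$ pair in the blown-up graph. By Theorem \ref{thm:conn}, it suffices to argue for a single connected component with some integer target $k_c$ centers.

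First I would apply the Monarch Procedure to obtain a tree $T$ of monarchs $\mathcal{M}$ together with their empires $\{\mathcal{E}_u\}_{u \in \mathcal{M}}$; by Lemma \ref{lemma:monarchs}, every vertex lies in a unique empire, each empire has radius at most $2$, and adjacent monarchs in $T$ are at distance exactly $3$. Next, I would run the initial aggregation, which by Lemma \ref{lemma:initial_aggregation} is a distance-$1$ shift that sets $y_u = 1$ for every $u \in \mathcal{M}$. Then I would call \texttt{Round} on the root of $T$; since \texttt{Round} recurses on children before touching the monarch's own empire, this processes the whole tree bottom-up. By the lemma preceding the theorem statement, each call \texttt{Round}$(u)$ makes every vertex of $I_u = (\mathcal{E}_u \setminus \{u\}) \cup \bigcup_{j \in N(u)} \texttt{ChildMonarchs}(j)$ integral, with movements of total length at most $5$ in the threshold graph (accounting for the initial aggregation) and with no incoming movement into $u$ itself after the initial aggregation.

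The main subtlety, and the place where I expect to have to be careful, is integrality at the root of $T$. After processing the root $r$, every non-root vertex of the component lies in some $I_u$ and is therefore integral, but the final movement in \texttt{Round}$(r)$ (the step populating $w^\ast$) may pull an amount from $y_r$ that leaves $y_r$ fractional. Here I would use conservation: every \texttt{Move} preserves $\sum_i y_i$, and the LP constraint \eqref{eq:lp_1} gives $\sum_i y_i = k_c \in \mathbb{Z}$ at the start, so the single remaining potentially fractional value $y_r$ must in fact be an integer, hence either $0$ or $1$. Combined with the fact that every \texttt{Move} uses $\delta = \min(1-y_i, y_j)$ (which preserves $0 \leq y \leq 1$, a requirement of Lemma \ref{lemma:transfer_validity}), this yields a fully integral $y'$ that is an (integral) distance-$5$ shift of the original $y$.

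Finally, I would invoke Lemma \ref{lemma:transfer_validity} with $r = 5$ to produce $x'$ in polynomial time such that $(x', y')$ is $6$-feasible, i.e.\ feasible in $G^{6}$. Polynomial running time follows from: polynomial-size $T$ and polynomially many empires; a polynomial number of \texttt{Move} operations per \texttt{Round} call (each either zeros out some $y_i$ or completes some $y_i = 1$); and the polynomial-time realization of the shift guaranteed by Lemma \ref{lemma:transfer_validity}. The only real work is the integrality argument at the root together with verifying that the per-movement cap $\delta \leq \min(y_a, 1-y_b)$ is preserved throughout \texttt{Round}; everything else is bookkeeping on top of the lemmas already proved.
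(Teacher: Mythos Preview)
Your proposal is correct and follows the same route as the paper, which simply states that the theorem summarizes the subsection by combining the Monarch Procedure, the initial aggregation, Algorithm~\ref{algo:round}, and Lemma~\ref{lemma:transfer_validity}. Your explicit handling of root integrality via the conservation $\sum_i y_i = k_c \in \mathbb{Z}$ is a detail the paper leaves implicit but which is indeed needed, and your justification is exactly right.
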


%%%%%%%%%%%%%
\subsubsection*{Rounding $x$}

Once we have integral $y$, rounding the $x$ is fairly straight-forward,
without making the approximation factor any worse.
Exactly the same procedure used in bicriteria algorithms works here too.
But, we can have an easier construction since for $k$-center since
we can use distances in the threshold graph instead.

\begin{theorem} \label{thm:xround}
There exists a polynomial time algorithm that given a $\delta$-feasible solution
$(x, y)$ with all $y$ integral, finds a $\delta$-feasible solution $(x', y)$
with all $x'$ integral.
\end{theorem}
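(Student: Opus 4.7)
The plan is to set up a feasibility (or min-cost) flow problem on a bipartite graph, mirroring the construction used in Step~4 of Algorithm~\ref{alg:overview}, then invoke the Integral Flow Theorem to extract an integral $x'$ while keeping $y$ fixed. Let $Y = \{i \in V : y_i = 1\}$, which by hypothesis has $|Y| \leq k$ and satisfies all the size constraints fractionally. Build a directed graph as follows: create a source node with supply $np$ and an arc of capacity $p$ to each point $j \in V$; for every pair $(j, i)$ with $j \in V$, $i \in Y$, and $d_G(i, j) \leq \delta$, add an arc from $j$ to $i$ of \emph{capacity one} (this is the key difference from the bicriteria $x$-rounding, where capacity was $2$: here we want no replication violation, so each point may be assigned to each center at most once); from each $i \in Y$, add an arc to a sink with lower bound $n\ell$ and upper bound $nL$.

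Next I would verify that the given $\delta$-feasible fractional solution $(x, y)$ induces a feasible fractional flow in this network: route $x_{ij}$ units of flow along the arc $(j, i)$, and $\sum_j x_{ij}$ units along the arc from $i$ to the sink. LP constraints~\ref{lp:constraint2}--\ref{lp:constraint4} (together with $\delta$-feasibility, which ensures $x_{ij} > 0$ implies $d_G(i,j) \leq \delta$) translate directly into conservation and capacity satisfaction, so this is a valid fractional flow. Since all supplies, demands, and arc capacities are integers, the Integral Flow Theorem guarantees the existence of an integral feasible flow, which can be computed in polynomial time by any standard min-cost flow algorithm (see, e.g., \citet{papadimitriou1998combinatorial}); if we assign arc costs $c_{ij} = d_G(i,j)$, we can even obtain an assignment minimizing the total cost.

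Finally, read off $x'_{ij}$ as the integral flow on arc $(j, i) \in V \times Y$, and set $x'_{ij} = 0$ for $i \notin Y$. By construction, $x'_{ij} \in \{0, 1\}$, $\sum_i x'_{ij} = p$ for every $j$ (flow conservation at $j$), and $n\ell \leq \sum_j x'_{ij} \leq nL$ for every open $i$ (lower/upper bounds on the sink arc); arcs exist only where $d_G(i, j) \leq \delta$, so the resulting solution is $\delta$-feasible with the same $y$.

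The only nontrivial step is choosing the arc capacities: capacity one between points and centers is what rules out $p$-replication violations in this no-slack setting, and the combined lower/upper bound on the sink arc is what lets the min-cost flow machinery handle both cluster size constraints simultaneously. Once the network is set up correctly, the Integral Flow Theorem does all the work, and the resulting algorithm introduces no additional approximation factor on top of the $y$-rounding guarantee of Theorem~\ref{thm:yround}.
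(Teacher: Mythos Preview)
Your proposal is correct and follows essentially the same approach as the paper: both construct a bipartite flow network between points and the integral open centers, verify that the given fractional $x$ is a feasible flow, and invoke the Integral Flow Theorem to extract an integral assignment without losing $\delta$-feasibility. The only cosmetic differences are that the paper puts demand $n\ell$ at each center and routes the excess to a separate sink of demand $np - kn\ell$ via arcs of capacity $nL - n\ell$ (an equivalent encoding of your lower/upper bounds on the center-to-sink arcs), and that the paper adds a bipartite edge only when $x_{ij} > 0$ rather than whenever $d_G(i,j) \le \delta$; your more permissive edge set is harmless since it still only permits $\delta$-feasible assignments.
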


\begin{proof}
  \begin{figure*}
    \centering
    \includegraphics[width=0.7\textwidth]{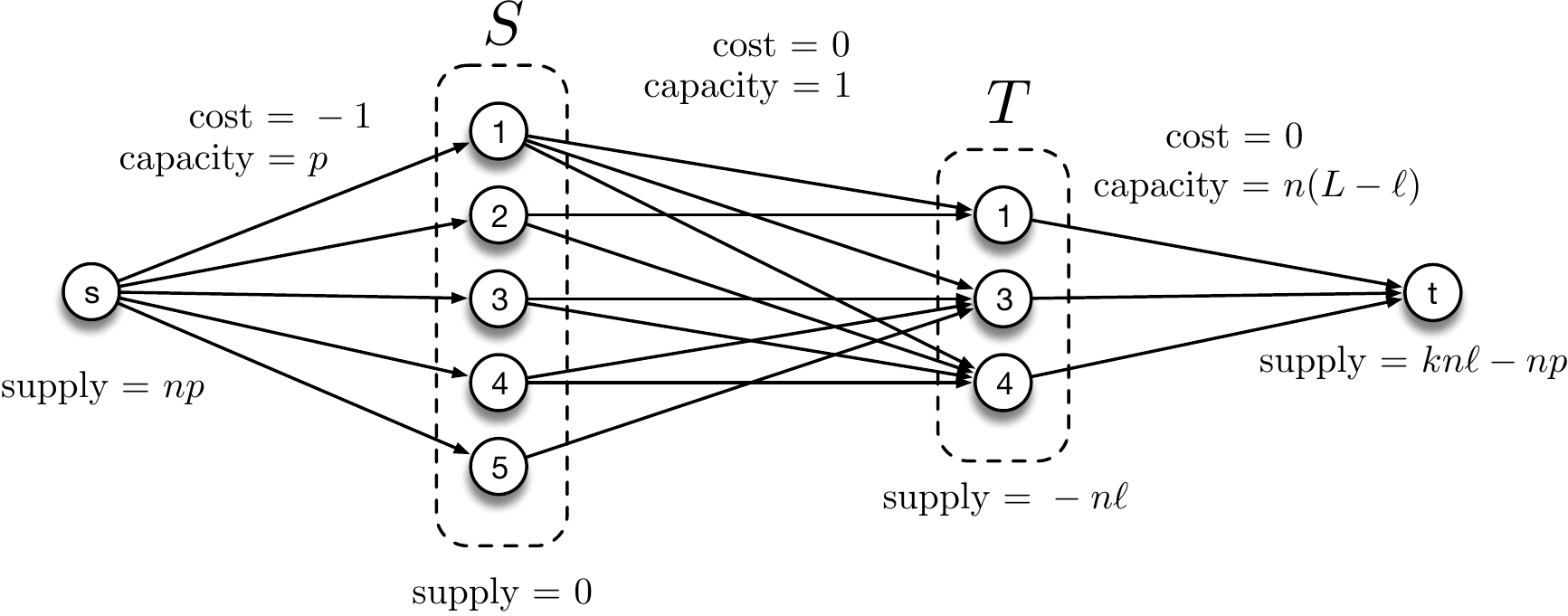}
    \caption{Minimum cost flow network to round $x$'s. Each node in a group has
      the same supply, which is indicated below. The cost and capacity of each
      edge is indicated above.}
    \label{fig_xround}
  \end{figure*}
  We shall use a minimum cost flow network to this.  Consider a
  directed bipartite graph $(S, T, E')$, where $S = V$ and
  $T = \{i:y_i=1\}$ and $j \rightarrow i \in E'$ iff $x_{ij} > 0$.
  Add a dummy vertex $s$, with edges to every vertex in $S$, and $t$
  with edges from every vertex in $T$.  In this network, let every
  edge of the bipartite graph have capacity 1.  Further, all the
  $s\rightarrow S$ edges have capacity $p$.  $s$ supplies a flow of
  $np$ units, while each $u \in T$ has a demand of $l$ units. To
  ensure no excess demand or supply, $t$ has a demand of $np-kl$. All
  the $t \rightarrow T$ edges have a capacity of $(L-\ell)$.

	 All the $s\rightarrow S$ edges have a cost of $-1$ and every other
	 edge has a cost of zero.
	 See figure \ref{fig_xround}.

	 Clearly, a feasible assignment $(x, y)$ to {\tt LP-$k$-center}$(G^\delta)$ with integral $y$ is a
	 feasible flow in this network. In fact, it is a minimum-cost flow in this network.
	 This can be verified by the absence of negative cost cycles in the residual graph (because
	 all negative cost edges are at full capacities).

	 Since, the edge capacities are all integers, there exists a minimum cost integral flow by
	  the Integral Flow Theorem. This flow can be used to
	 fix the cluster assignments.

\end{proof}

Piecing together theorems \ref{thm:yround} and \ref{thm:xround}, we have the following theorem:

\begin{theorem}

Given an instance of the $k$-centers problem with $p$-replication and  for a connected graph $G$,
and a fractional feasible solution to {\tt LP-$k$-center}$(G)$, there exists a polynomial time algorithm to
obtain a 6-feasible integral solution. That is,
for every $i,j$ such that $x_{ij} \neq 0$, we have $d_G(i, j) \leq 6$.

\end{theorem}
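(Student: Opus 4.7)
The plan is to prove this theorem by a direct composition of the two preceding results, Theorem~\ref{thm:yround} and Theorem~\ref{thm:xround}, which were explicitly set up to be chained together. Starting from any fractional feasible solution $(x,y)$ to {\tt LP-$k$-center}$(G)$, the strategy is a two-stage rounding: first round the $y$-variables to integers while tolerating a bounded blow-up in the graph distance, then round the $x$-variables without incurring any further blow-up.

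In the first stage I would invoke Theorem~\ref{thm:yround}, which produces in polynomial time a $6$-feasible solution $(x',y')$ with all $y'_i \in \{0,1\}$. Recall that $6$-feasibility means the resulting assignment is feasible in the power graph $G^{6}$, i.e.\ every assignment edge $x'_{ij}>0$ satisfies $d_G(i,j)\leq 6$; this is the aggregate effect of the initial distance-$1$ aggregation followed by the distance-$5$ bottom-up rounding on the monarch tree, combined with Lemma~\ref{lemma:transfer_validity} which relates a distance-$r$ shift of $y$ to $(r{+}1)$-feasibility of the associated $(x',y')$. Crucially, this stage preserves integrality of $y$ and respects the lower and upper capacity constraints as well as the $p$-replication constraint, since every Move operation is designed to preserve those LP constraints (Lemma~\ref{lem:move_properties}).

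In the second stage I would apply Theorem~\ref{thm:xround} with $\delta=6$ to the intermediate solution $(x',y')$. This theorem guarantees that from any $\delta$-feasible solution with integral $y$ we can obtain, in polynomial time, a $\delta$-feasible solution $(x'',y')$ with integral $x''$; the argument uses the min-cost flow network of Figure~\ref{fig_xround}, whose integer edge capacities and the Integral Flow Theorem yield an integral optimum of the same cost, corresponding to a valid cluster assignment that satisfies all capacity and replication constraints while keeping each nonzero $x''_{ij}$ along an edge of $G^{6}$. Composing the two polynomial-time algorithms yields an overall polynomial-time procedure, and the final $(x'',y')$ is integral and $6$-feasible, i.e.\ $d_G(i,j)\leq 6$ whenever $x''_{ij}\neq 0$, which is exactly the claim.

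The proof is essentially a one-line composition, so there is no hard technical step here: all the real work was already done inside Theorems~\ref{thm:yround} and \ref{thm:xround}. The only thing worth being careful about is making sure that the parameter $\delta$ is consistent across the two stages (the first stage outputs $6$-feasibility, and the second stage preserves whatever $\delta$ it is given), and that integrality of $y$ is not disturbed when rounding $x$ in the second stage, which is immediate from the fact that the flow network of Theorem~\ref{thm:xround} only reroutes demand between already-opened centers.
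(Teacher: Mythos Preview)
Your proposal is correct and matches the paper's approach exactly: the paper simply states that the result follows by ``piecing together'' Theorems~\ref{thm:yround} and~\ref{thm:xround}, which is precisely the two-stage composition you describe. If anything, you have written out the composition more carefully than the paper does.
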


\section{Proofs from Section \ref{sec:stability}} \label{app:stability}
\paragraph{$k$-means++}

We provide the full details for Theorem \ref{thm:k-means++}.

\medskip
\noindent \textbf{Lemma~\ref{lem:k=2} (restated).}
\emph{
Assume $k=2$.
For sufficiently small $\epsilon$, 
$Pr[(\hat c_1\in X_1~\&\&~ \hat c_2\in X_2)~ ||~ (\hat c_2\in X_1~ \&\&~ \hat c_1\in X_2)]=
1-O(\rho)$. %frac{L^2}{\ell^2}\cdot\frac{\epsilon}{\alpha}
}

\begin{proof}
Wlog, we set $d(c_1,c_2)^2=1$ for ease of computation (scaling all distances does not affect the optimal clustering).
Let $A=\sum_{x\in X_1,y\in X_2} d(x,y)^2$ and $B=\sum_{x,y\in X} d(x,y)^2$. Then the probability is $A/B$.
Let $c_1'$ and $c_2'$ denote the centers of mass of $X_1$ and $X_2$, respectively. 
By Lemma \ref{lem:mass}, $d(c_1,c_1')\leq\frac{\Delta_1(C_i)}{|C_i|}\cdot\frac{10\epsilon/\alpha|C_i|}{(1-10\epsilon/\alpha)|C_i|}\leq
r_i^2 O(\frac{\rho}{1-\rho})\leq \frac{L}{\ell}\cdot O(\frac{\epsilon}{\alpha})(1-O(\frac{\epsilon}{\alpha}))^{-2}d(c_1,c_2)$,
so $d(c_1',c_2')\geq d(c_1,c_2)-d(c_1,c_1')-d(c_2,c_2')\geq 1-O(\frac{L}{\ell}\cdot\frac{\epsilon}{\alpha})$.

Therefore,
$A=\sum_{x\in X_1,y\in X_2} d(x,y)=|X_1|\Delta_1(X_2)+|X_2|\Delta_1(X_1)+|X_1||X_2|d(c_1',c_2')^2
\geq |X_1||X_2|d(c_1',c_2')\geq n_1 n_2 (1-O(\frac{\epsilon}{\alpha}))^2 (1-O(\frac{L}{\ell}\cdot\frac{\epsilon}{\alpha}))^2\geq n_1 n_2
(1-O(\frac{L}{\ell}\cdot\frac{\epsilon}{\alpha}))$.

$B=\sum_{x,y\in X} d(x,y)^2=n^2 w_{avg}^2+n_1 n_2\leq n_1 n_2 (\frac{5}{4}\cdot\frac{\epsilon}{\alpha}\cdot\frac{n^2}{n_1 n_2}+1)\leq
n_1 n_2 (1+\rho \frac{L^2}{\ell^2})$.
Therefore, $\frac{A}{B}\geq \frac{1-O(\frac{L}{\ell}\rho)}{1+O(\frac{L}{\ell}\cdot\rho)}$.
\end{proof}

%%%%%%%%%%%%%%%%%%

Now we prove Theorem \ref{thm:k-means++}.
We assume $k\leq\frac{\epsilon}{\alpha}$, which is reasonable for
real-world data which does not have too many clusters and is sufficiently stable. 
We need this assumption so that we can set $\rho>\frac{\epsilon\cdot k}{\alpha}$.
We still assume that $L\in O(\ell)$.
Assume that we sample $k$ points $\hat c_1,\dots\hat c_k$. 
We start with a lemma similar to Lemma \ref{lem:k=2}.

\begin{lemma}
The probability that $\hat c_1$ and $\hat c_2$ lie in the cores of different clusters is $1-O(\rho)$.
\end{lemma}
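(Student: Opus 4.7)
The plan is to mirror the argument of Lemma \ref{lem:k=2}, but replace its unconditional pair-sum calculation with a two-step conditional analysis. First, since the cores $X_1,\dots,X_k$ are disjoint and each satisfies $|X_i|\geq(1-\rho)n_i$, their union covers at least a $(1-\rho)$ fraction of the data; thus $\hat c_1$, sampled uniformly, lands in some $X_i$ with probability at least $1-\rho$.

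Conditioning on $\hat c_1\in X_i$, I would analyze the $D^2$ distribution for $\hat c_2$. The probability of landing in a different cluster's core equals
\[
\frac{\sum_{j\neq i}\sum_{x\in X_j}d(x,\hat c_1)^2}{\sum_{x\in V}d(x,\hat c_1)^2},
\]
whose denominator splits naturally as contributions from (a) points in $C_i$, (b) non-core points in other clusters $C_j\setminus X_j$, and (c) core points in other clusters (the desired numerator). I would lower bound the numerator by combining $d(\hat c_1,c_i)\le r_i/\sqrt{\rho}$ with the triangle inequality: for $x\in X_j$, $d(x,\hat c_1)\ge d(c_i,c_j)-r_i/\sqrt{\rho}-r_j/\sqrt{\rho}=(1-o(1))d(c_i,c_j)$ whenever the radii are small relative to inter-cluster distances. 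This gives (c) of order $\sum_{j\neq i}n_j\,d(c_i,c_j)^2$.

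The main obstacle is obtaining a multi-cluster analogue of Lemma \ref{lem:r_i}, which in the $k=2$ case yielded $\max(r_1^2,r_2^2)\le O(\frac{\epsilon}{\alpha}\cdot\frac{L}{\ell})d(c_1,c_2)^2$. For general $k$, I would argue that if $r_i^2$ were comparable to $\min_{j\neq i}d(c_i,c_j)^2$, then reassigning an $\Omega(1)$ fraction of $C_i$'s points to the nearest other cluster (using the slack between $\ell$ and $L$) would produce a feasible clustering of cost at most $(1+\alpha)\mathcal{OPT}$ that disagrees with $\mathcal{OPT}$ on more than $\epsilon n$ points, contradicting approximation stability. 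This yields $R_i^2:=\Delta_1^2(C_i)/n_i\le O(\frac{\epsilon}{\alpha}\cdot\frac{L}{\ell})\min_{j\ne i}d(c_i,c_j)^2$, the needed generalization.

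With this in hand, contribution (a) is at most $2\Delta_1^2(C_i)+2n_i r_i^2/\rho=O(n_i R_i^2/\rho)$, which is an $O(\frac{\epsilon}{\alpha}\cdot\frac{L}{\ell}\cdot\frac{1}{\rho})=O(\rho)$ fraction of (c), using the hypothesis $\rho>\epsilon k/\alpha$ together with $L/\ell=O(1)$ and $k\geq 2$. Contribution (b) involves at most $\rho n$ points, each contributing $O(R_{\max}^2/\rho+\max_{i,j}d(c_i,c_j)^2)$ by the triangle inequality, which is again $O(\rho)$ times (c) via the generalized radius bound. Combining these estimates, $\Pr[\hat c_2\in\bigcup_{j\neq i}X_j\mid \hat c_1\in X_i]\ge 1-O(\rho)$, and a union bound with the first step completes the argument.
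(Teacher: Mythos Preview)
Your proposal is correct in spirit but follows a genuinely different route from the paper. The paper extends the $k=2$ argument of Lemma~\ref{lem:k=2} directly: it treats the pair $(\hat c_1,\hat c_2)$ symmetrically and bounds the probability by a ratio $A/B$, where $A=\sum_{i\neq j}\sum_{x\in X_i,\,y\in X_j}d(x,y)^2$ and $B$ is the analogous total over all pairs (decomposed as $\sum_{i,j}\sum_{x\in C_i,y\in C_j}d(x,y)^2$). Each cross-cluster block is handled via Lemma~\ref{lem:mass}, and the denominator is simplified using $L/\ell=O(1)$ together with $\rho>\epsilon/\alpha$; no explicit multi-cluster radius bound is invoked at this stage. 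Your approach instead conditions on $\hat c_1\in X_i$ and analyzes the $D^2$ distribution for $\hat c_2$ separately, splitting the denominator into within-$C_i$, other-non-core, and other-core contributions. This is more transparent about the actual sampling mechanism and avoids the implicit identification of the probability with $A/B$, at the cost of needing the multi-cluster analogue of Lemma~\ref{lem:r_i} up front (the paper defers such a bound to the \emph{next} lemma, where it appears only informally as ``from Lemma~\ref{lem:r_i}''). One point to tighten: in your contribution~(b), non-core points of $C_j$ have no per-point distance cap, so bound their aggregate via $\sum_{x\in C_j\setminus X_j}d(x,c_j)^2\le n_jr_j^2=\Delta_1^2(C_j)$ rather than a per-point estimate; the rest of your accounting then goes through.
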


\begin{proof}
\begin{equation*}
A=\sum_{x\in X_i,~y\in X_j} d(x,y)^2=|X_i|\Delta_1(X_2)+|X_2|\Delta_1(X_1)+|X_1||X_2|d(c_i',c_j')^2\geq n_1 n_2 (1-O(\rho))^2.
\end{equation*}
\begin{align*}
B&=\sum_{x\in C_i,y\in C_j} d(x,y)^2=(n_i+n_j)\Delta_1(C_i\cup C_j)=
(n_i+n_j)^2 \left(\frac{w_i^2 n_i+w_j^2 n_j}{n_i+n_j}\right)+n_i n_j d(c_i,c_j)^2\\
&=n_i n_j \left(\frac{(n_i+n_j)^2}{n_i n_j}\left(\frac{w_i^2 n_i+w_j^2 n_j}{n_i+n_j}\right)+d(c_i,c_j)^2\right).
\end{align*}
Summing over all $i,j$, we have the following.
$$A/B=\frac{\sum_{i,j}n_i n_j}{\sum_{i,j} n_i n_j \left(1+\frac{(w_i n_i +w_j n_j)(n_i+n_j)}{n_i n_j}\right)}.$$

When $L\in O(\ell)$, we simplify the denominator:
\begin{align*}
\sum_{i,j} n_i n_j \left(1+\frac{(w_i n_i +w_j n_j)(n_i+n_j)}{n_i n_j}\right)
&=\sum_{i,j} n_i n_j +\sum_{i,j} (n_i+n_j)(w_i n_i +w_j n_j)\\
&=\sum_{i,j} n_i n_j +O(\frac{L}{\ell})\cdot\frac{n}{k}\sum_{i,j}(w_i n_i +w_j n_j)\\
&=\sum_{i,j} n_i n_j +O(\frac{L}{\ell})\cdot\frac{n}{k}\sum_{i,j}2n\frac{\epsilon}{\alpha}\\
&=O(1+\rho)\sum_{i,j} n_i n_j
\end{align*}
when $\rho>\frac{\epsilon}{\alpha}$.

Therefore, $A/B=1-O(\rho)$.
\end{proof}

\begin{lemma}
\begin{align*}
Pr\left[\hat c_{i+1}\in \bigcup_{j\notin 1,\dots, i} X_j\mid \hat c_1,\dots,\hat c_i
 \text{ lie in the cores of }%\\
 X_1,\dots X_i\right] 
=1-O(\rho).
\end{align*}
\end{lemma}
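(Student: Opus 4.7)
The plan is to mimic the proof of the preceding $k=2$ lemma, but now conditioning on the event that the seeds $\hat c_1,\dots,\hat c_i$ have landed one per core among $X_1,\dots,X_i$. For each point $x$, let $D_x = \min_{\ell\leq i} d(x,\hat c_\ell)^2$, so that $k$-means++ picks $\hat c_{i+1}=x$ with probability $D_x/B$ where $B = \sum_{x\in V} D_x$. Setting $A = \sum_{j>i}\sum_{x\in X_j} D_x$, the desired probability is exactly $A/B$, and my goal is to show $A/B \geq 1-O(\rho)$.

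First I would lower bound $A$. For any $x \in X_j$ with $j>i$ and any previously-chosen seed $\hat c_\ell$, the core condition gives $d(x,c_j)\leq r_j/\sqrt{\rho}$ and $d(\hat c_\ell,c_\ell)\leq r_\ell/\sqrt{\rho}$, while the natural $k$-cluster extension of Lemma~\ref{lem:r_i} gives $\max(r_j,r_\ell) \leq O(\sqrt{\rho\cdot L/\ell})\cdot d(c_j,c_\ell)$. Using $L/\ell \in O(1)$ and the triangle inequality, this yields $d(x,\hat c_\ell)\geq (1-O(1/\sqrt{100}))\,d(c_j,c_\ell)$, so after squaring and taking the minimum over $\ell\leq i$ we get
\[
A \;\geq\; (1-o(1))\sum_{j>i}|X_j|\cdot \min_{\ell\leq i} d(c_j,c_\ell)^2.
\]
Since $|X_j|\geq(1-\rho)n_j$, this is a genuine contribution on the scale of the cluster sizes times the squared minimum separation.

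Next I would upper bound $B-A = \sum_{\ell\leq i}\sum_{x\in C_\ell} D_x$, splitting each already-covered cluster into its core $X_\ell$ and its periphery $C_\ell\setminus X_\ell$. For core points $D_x\leq d(x,\hat c_\ell)^2 \leq 4r_\ell^2/\rho$, and since $r_\ell^2\leq O(\rho/100)\,d(c_j,c_\ell)^2$ for any $j\neq\ell$, summing over $x\in X_\ell$ and over $\ell$ gives a contribution of order $O(\rho)$ times the same quantity that lower bounds $A$. For peripheral points I use $|C_\ell\setminus X_\ell|\leq\rho n_\ell$ together with the squared triangle inequality $D_x\leq 2d(x,c_\ell)^2+2r_\ell^2/\rho$: the first term sums to at most $2\,\mathcal{OPT}$ over all clusters, and $\mathcal{OPT}=\sum_\ell\sum_{x\in C_\ell}d(x,c_\ell)^2$ is itself $O(\rho)$ times the numerator by Lemma~\ref{lem:r_i}; the second term is again subsumed.

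Dividing, every unfavorable contribution is $O(\rho)$ times the favorable one, so $A/B \geq 1-O(\rho)$, which is the stated conditional probability. The main obstacle will be the peripheral points $C_\ell\setminus X_\ell$: they can sit far from $\hat c_\ell$ and a priori inflate $B$, so the key calculation is to combine the core-size bound $|C_\ell\setminus X_\ell|\leq \rho n_\ell$ with the $\mathcal{OPT}$ identity and the separation consequence of Lemma~\ref{lem:r_i} to show that their total contribution to $B$ is still only an $O(\rho)$ fraction of $A$. Once this bookkeeping is handled, the argument proceeds in parallel to the $k=2$ proof.
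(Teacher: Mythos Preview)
Your overall plan---lower bound $A$ by the uncovered-core contribution and upper bound $B$ (or $B-A$) in terms of $\Delta_k(V)$---is exactly what the paper does. The paper's decomposition is slightly different: instead of splitting each covered cluster into core and periphery, it uses the center-of-mass identity $\sum_{x\in C_j} d(x,y)^2 = \Delta_1(C_j) + n_j\,d(c_j,y)^2$ to bound $\sum_{x\in C_j} D_x \leq \Delta_1(C_j) + n_j\,d(c_j,\hat c_{p_j})^2$ directly, arriving at $B \leq \tfrac{1}{\rho}\Delta_k(V) + \sum_{j>i} n_j\,d(c_j,\hat C)^2$. Your core/periphery split lands at essentially the same place, so either route is fine.

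However, there is a genuine arithmetic slip in your core estimate. You write that $D_x \leq 4r_\ell^2/\rho$ and $r_\ell^2 \leq O(\rho)\,d(c_j,c_\ell)^2$ combine to give an $O(\rho)$ contribution; but the two $\rho$'s cancel, and summing over $x\in X_\ell$ and $\ell\leq i$ actually yields $O(1)\cdot\sum_{\ell\leq i} n_\ell\,\min_j d(c_j,c_\ell)^2$, which is $O(\Delta_k(V)/\rho)$---the same term the paper gets. This is \emph{not} automatically $O(\rho)\cdot A$: when many clusters are already covered (say $i=k-1$), the left side has $k-1$ terms while $A$ has only one. The paper closes this gap by invoking the standing hypothesis $\rho \geq \Omega(\tfrac{L}{\ell}\cdot\tfrac{\epsilon k}{\alpha})$ from Theorem~\ref{thm:k-means++}, which sharpens the radius bound to $r_\ell^2 \leq O(\rho/k)\,d(c_j,c_\ell)^2$ and absorbs the factor of $k$. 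You need to invoke that same hypothesis; without it the final ratio is only $1-O(1)$, not $1-O(\rho)$.

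A minor bookkeeping point: your expression $B-A = \sum_{\ell\leq i}\sum_{x\in C_\ell} D_x$ omits the peripheral points $C_j\setminus X_j$ of the \emph{uncovered} clusters $j>i$. These are handled by the same $\mathcal{OPT}$ bound you already use, so this is easy to fix, but it should be included.
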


\begin{proof}
$A=\sum_{j=m+1}^k\sum_{x\in X_j} d(x,\hat C)^2$, and $B=\sum_{j=1}^k\sum_{x\in C_j} d(x,\hat C)$.
Let $$\phi=\max_{j\geq m+1}[\max_{x\in X_j}d(x,C_j)/d(c_j,\hat C)],$$
From Lemma \ref{lem:r_i}, we have that
$$\phi\leq \max_{i,j}\frac{\O(\sqrt{\frac{\epsilon}{\alpha}})d(c_i,c_j)}{d(c_i,c_j)-O(\sqrt{\frac{\epsilon}{\alpha}})d(c_i,c_j)}
\leq \frac{O(\sqrt{\frac{\epsilon}{\alpha}})}{1-O(\sqrt{\frac{\epsilon}{\alpha}})}\leq 1$$.
Then for all points in the core of a cluster, $d(x,\hat C)\geq (1-\phi) d(c_j,\hat C)$.
Then
$$A\geq \sum_{j=m+1}^k (1-O(\rho))n_j(1-\phi)^2 d(c_j,\hat C)\geq (1-\rho-2\phi)
\sum_{j=m+1}^k n_j d(c_j,\hat C)^2.$$
\begin{align*}
B&\leq \sum_{j=1}^k (\Delta_1(C_j)+n_j d(c_j,\hat c_{p_j})^2)\leq \Delta_k(V)+\sum_{j=1}^m n_j d(c_j,x_j)^2+
\sum_{j=m+1}^k n_j d(c_j,\hat C)^2\\
&\leq\Delta_k(V)+\frac{1}{\rho}\sum_{j=1}^m\Delta_1(C_j)+\sum_{j=m+1}^k n_j d(c_j,\hat C)
\leq \frac{1}{\rho}\Delta_k(V)+\sum_{j=m+1}^k n_j d(c_j,\hat C).
\end{align*}
If we set $\rho=\Omega(\frac{L}{\ell}\cdot{\epsilon k}{\alpha})$, then $A/B=1-O(\rho)$.
\end{proof}

\begin{lemma} \label{lem:unconditional}
Given we have sampled points $\{\hat x_1,\dots,\hat x_i\}$, and let $C_1,\dots C_m$ be all the clusters whose outer core
contains some point $\hat x_j$. Then $\text{Pr}[\hat x_{i+1}\in\cup_{j=m+1}^k X_j]\geq 1-5\rho$.
\end{lemma}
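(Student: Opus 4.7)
My plan is to express the $k$-means++ sampling probability as the ratio $A/B$, where
\[
A = \sum_{j > m} \sum_{x \in X_j} d(x, \hat{C})^2 \quad\text{and}\quad B = \sum_{j=1}^k \sum_{x \in C_j} d(x, \hat{C})^2,
\]
and then lower-bound $A$ while upper-bounding $B$. The proof follows the structure of the preceding conditional lemma, but now handles the weaker hypothesis: instead of conditioning on each sampled center lying in the core of its cluster, we use only that each hit cluster $C_j$ (for $j \leq m$) contains a sample in its \emph{outer core}, a mild relaxation of the core notion.

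For the numerator, the argument goes through essentially unchanged. Since every sampled center lies in some $C_1,\ldots,C_m$, for any $x \in X_j$ with $j > m$ the nearest sampled center lies in a different cluster. Lemma \ref{lem:r_i} then implies $d(x, \hat{C}) \geq (1 - O(\sqrt{\rho})) d(c_j, \hat{C})$, and combining with $|X_j| \geq (1-\rho)|C_j|$ yields $A \geq (1 - O(\rho)) \sum_{j > m} |C_j| \, d(c_j, \hat{C})^2$.

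For the denominator, I would split the sum into hit and unhit contributions. The unhit part is bounded by $\sum_{j>m}\bigl(2\Delta_1(C_j) + 2|C_j| d(c_j, \hat{C})^2\bigr)$ via the squared triangle inequality. For a hit cluster $C_j$ containing a sample $\hat{x}_l$, upper-bounding $d(x, \hat{C})$ by $d(x, \hat{x}_l)$ and applying the center-of-mass identity from Lemma \ref{lem:mass} gives $\sum_{x \in C_j} d(x, \hat{x}_l)^2 = \Delta_1(C_j) + |C_j| d(c_j, \hat{x}_l)^2$. Since $\hat{x}_l$ lies in the outer core, $d(c_j, \hat{x}_l)^2$ is bounded by a constant multiple of $r_j^2/\rho$, so each hit contribution is $O(\Delta_1(C_j)/\rho)$; summing gives total hit contribution $O(\Delta_k(V)/\rho)$.

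The main obstacle is showing the hit contribution is subdominant so the ratio is close to $1$. Under $(1+\alpha,\epsilon)$-approximation stability with $L \in O(\ell)$ and the parameter regime $\rho = \Omega(\epsilon k/\alpha)$ used in the preceding lemma, the optimal $k$-means cost $\Delta_k(V)$ is negligible compared to the pairwise inter-cluster squared distances that appear in the unhit term, so $B \leq (1 + O(\rho)) \sum_{j > m} |C_j| \, d(c_j, \hat{C})^2$. Tracking constants carefully through each triangle inequality (a factor of $2$ in the squared triangle bound, the $(1-\phi)^{-2}$ factor in $A$, and the $(1-\rho)^{-1}$ core-mass factor) is what yields the explicit constant $5\rho$ in the final bound rather than a generic $O(\rho)$.
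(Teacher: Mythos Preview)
Your proposal is correct and follows essentially the same route as the paper. The paper's own proof is simply ``This follows from the previous lemma,'' and what you have written is precisely the $A/B$ decomposition from that previous lemma, carried through with the only necessary change being that $\hat{x}_l$ lies in the outer core rather than the core (which only affects the constant in the bound $n_j\,d(c_j,\hat{x}_l)^2 \le O(1/\rho)\,\Delta_1(C_j)$).
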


\begin{proof}
This follows from the previous lemma.
\end{proof}

Now we analyze the case where we sample more than $k$ points.
Let $N=\frac{2k}{1-5\rho}$.

\begin{lemma}
Say we sample $N$ points. The probability that for all $i\in[k]$, there is some sampled point in $X_i'$, is $\geq 1-\rho$.
\end{lemma}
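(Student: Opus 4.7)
My plan is to reduce this to a coupon-collector / Chernoff argument built on Lemma~\ref{lem:unconditional}. For each step $i=1,\dots,N$, let $Z_i$ be the indicator that $\hat{x}_i$ lands in some $X_j'$ whose (outer) core has not yet been hit by any of $\hat{x}_1,\dots,\hat{x}_{i-1}$. The event of interest---that every $X_i'$ contains at least one sampled point---is implied by $\sum_{i=1}^N Z_i \geq k$, so it suffices to show this sum is at least $k$ with probability at least $1-\rho$.

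The key step is to decouple the history-dependent $Z_i$'s from each other. By Lemma~\ref{lem:unconditional}, whenever fewer than $k$ cores have been hit, the conditional probability that the next sample lands in an uncovered cluster's core is at least $1-5\rho$, \emph{uniformly over history}. I would use this uniform lower bound to couple $Z_1,\dots,Z_N$ with i.i.d.\ Bernoulli$(1-5\rho)$ variables $B_1,\dots,B_N$ (via a shared uniform-on-$[0,1]$ auxiliary variable at each step) so that $Z_i\geq B_i$ whenever at least one cluster is still uncovered. This gives $\sum_{i=1}^N Z_i \geq \min\bigl(k,\sum_{i=1}^N B_i\bigr)$ almost surely, reducing the claim to a one-sided concentration statement for a Binomial$(N,1-5\rho)$ random variable.

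Then a multiplicative Chernoff bound finishes the job. Since $\mathbb{E}[\sum_i B_i] = N(1-5\rho) = 2k$, we obtain
\[
\Pr\Bigl[\,\sum_{i=1}^N B_i < k\,\Bigr] \;=\; \Pr\Bigl[\,\sum_{i=1}^N B_i < \tfrac{1}{2}\mathbb{E}\bigl[\sum_{i=1}^N B_i\bigr]\,\Bigr] \;\leq\; \exp(-k/4),
\]
which is at most $\rho$ for $k = \Omega(\log(1/\rho))$. In the remaining small-$k$ regime one can either bump $N$ by an additive $O(\log(1/\rho))$ term or invoke Markov on the number of Bernoulli failures among the $N$ trials.

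The main obstacle is handling the dependency introduced by the $k$-means++ sampling weights: the conditional law of $\hat{x}_{i+1}$ depends on the entire set of previously chosen centers, not merely on which cluster cores have already been covered. Lemma~\ref{lem:unconditional} is precisely what finesses this---its $(1-5\rho)$ bound is uniform in the history, so the i.i.d.\ coupling goes through cleanly. Once that reduction is in place, the rest is a standard concentration computation.
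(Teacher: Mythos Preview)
Your approach is correct and is exactly the argument the paper has in mind: the paper's entire proof is the single line ``The proof follows from Lemma~\ref{lem:unconditional},'' so you are supplying the details that the authors omitted. The coupling of the history-dependent indicators $Z_i$ to i.i.d.\ Bernoulli$(1-5\rho)$ variables via the uniform conditional lower bound of Lemma~\ref{lem:unconditional} is precisely the right mechanism, and the reduction to $\Pr[\mathrm{Bin}(N,1-5\rho)<k]$ is clean.

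One small quantitative point: neither your Chernoff route nor your Markov fallback actually delivers the constant $1-\rho$ as stated. Chernoff gives $e^{-k/4}$, which is $\le \rho$ only when $k\ge 4\ln(1/\rho)$; the Markov bound on the number of failures yields
\[
\Pr\Bigl[\sum_i B_i < k\Bigr]\;\le\;\frac{N\cdot 5\rho}{N-k}\;=\;\frac{10\rho}{1+5\rho}\;<\;10\rho,
\]
so you get $1-O(\rho)$ uniformly in $k$. This matches the $1-O(\rho)$ bounds in the surrounding lemmas (e.g.\ Lemma~\ref{lem:k=2} and Lemma~\ref{lem:unconditional}), and the stated ``$\ge 1-\rho$'' should really be read as ``$\ge 1-O(\rho)$''---the paper is loose with constants throughout this section. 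Your suggestion of absorbing an additive $O(\log(1/\rho))$ into $N$ would recover the sharper constant but changes the stated value of $N$, so the Markov argument is the honest route here.
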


\begin{proof}
The proof follows from Lemma \ref{lem:unconditional}.
\end{proof}

Finally, we perform a greedy removal phase. We refer the reader to \citep{ostrovsky2006effectiveness}
since the analysis is the same. This finishes the proof of Theorem \ref{thm:k-means++}.

%%%%%%%%%%%%%%%%%%%%%%%%%%%%%%%%%%%%%%%%%%%%%%%%%%%%%%%%%%%%%%%%%%%%%%%%%%%

\paragraph{$k$-median}
Given a clustering instance $(S,d)$ which satisfies $(1+\alpha,\epsilon)$-approximation
stability with respect to balanced $k$-median, for some $(l,L,k)$.
We denote $\mathcal{C}=\{C_1,\dots,C_k\}$ as the optimal partition and $c_1,\dots,c_k$ as the optimal centers.
Denote $w_{avg}$ as the average distance from a point
to its center in the optimal solution.
Given a point $x$, define $w(x)$ as the distance to its center, and in general, for all $i$,
denote $w_i(x)$ as the distance to the center $c_i$ of cluster $C_i$ in the optimal solution.
Note, we will discuss the $p$-replication variant at the end of this section.

\begin{lemma} \label{lem:as-struct}
Assume the size of the optimal clusters are $>4\epsilon n(1+\frac{6}{\alpha})$. Then,
\begin{itemize}
\item For $<\frac{\epsilon n}{2}$ points $x$, there exists a set of 
$\geq \frac{\epsilon n}{2}$ points $y$
from different clusters such that $d(x,y)\leq \frac{\alpha w_{avg}^2}{\epsilon}$.
\item $<\frac{6\epsilon n}{\alpha}$ points $x$ have $w(x)>\frac{\alpha w_{avg}^2}
{6\epsilon}$.
\end{itemize}
\end{lemma}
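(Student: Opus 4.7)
The plan is to handle the two bullets separately. The second bullet is a direct application of Markov's inequality to $\sum_x w(x) = n\, w_{avg}$: the number of points $x$ whose individual cost exceeds the stated threshold is at most $\frac{6\epsilon n}{\alpha}$, since otherwise the total cost would exceed $\mathcal{OPT}$.

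For the first bullet, I would replace the single-point swap argument of Balcan--Blum--Gupta, which is blocked by the capacity constraints, with a many-pair simultaneous swap argument that preserves cluster sizes. Suppose for contradiction that more than $\epsilon n / 2$ points each had at least $\epsilon n / 2$ close cross-cluster neighbors at the stated distance. First I would discard the at most $\frac{6\epsilon n}{\alpha}$ expensive points identified by the second bullet; the cluster-size lower bound $> 4\epsilon n(1+6/\alpha)$ then guarantees that every cluster still contains many cheap points, and that every surviving bad point retains $\Omega(\epsilon n)$ cheap close neighbors in other clusters. I would then greedily extract a matching $(x_1,y_1),\dots,(x_m,y_m)$ of disjoint cross-cluster close pairs with $m$ strictly greater than $\epsilon n/2$.

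I would then form a new clustering by swapping cluster memberships within every matched pair. By construction this preserves every cluster size (each affected cluster loses and gains exactly the same number of points), and it differs from the optimum on $2m > \epsilon n$ points. By the triangle inequality each pairwise swap increases the total cost by at most $2 d(x_i, y_i)$, so the total cost increase is at most $2m \cdot \frac{\alpha w_{avg}^2}{\epsilon} \leq \alpha\,\mathcal{OPT}$. The resulting balanced clustering has cost $\leq (1+\alpha)\mathcal{OPT}$ but is not $\epsilon$-close to the optimum, contradicting $(1+\alpha,\epsilon)$-approximation stability.

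The main obstacle is the simultaneous tightness of two counts: the matching must be strictly larger than $\epsilon n / 2$ to violate $\epsilon$-closeness, while the cost budget $\alpha\, \mathcal{OPT}$ only just accommodates that many swaps at the maximum allowed distance. I expect to resolve this by combining two sources of strict slack. The first is built into the bad-point hypothesis (``more than $\epsilon n / 2$'' bad points, each with $\geq \epsilon n / 2$ neighbors). The second is the saving obtained by excluding the expensive points: they contribute a strictly positive share of $\mathcal{OPT}$ but participate in no swap, so the $\alpha\, \mathcal{OPT}$ budget remains strictly larger than the per-swap cost times $\epsilon n / 2$. Together these should give the strict inequalities needed to turn the near-contradiction into a real one.
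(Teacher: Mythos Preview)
Your core approach is the same as the paper's: Markov for the second bullet, and for the first bullet a many-pair simultaneous swap that preserves all cluster sizes and therefore sidesteps the capacity obstruction. The paper extracts the $\epsilon n/2$ disjoint cross-cluster pairs by invoking Hall's theorem directly on the hypothesis (each of the $\geq \epsilon n/2$ bad points has $\geq \epsilon n/2$ cross-cluster neighbors, so Hall's condition holds and a matching saturating the bad side exists); your greedy extraction is an equivalent route to the same matching.

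Two points of over-engineering, though. First, the ``discard expensive points'' step is unnecessary. The per-swap cost increase is
\[
w_j(x)-w_i(x)+w_i(y)-w_j(y)\;\le\;\bigl(w_j(y)+d(x,y)\bigr)-w_i(x)+\bigl(w_i(x)+d(x,y)\bigr)-w_j(y)\;=\;2d(x,y),
\]
which uses only the triangle inequality and the distance bound on the pair; it is completely independent of $w(x)$ and $w(y)$. The paper does exactly this computation without any cheapness assumption. Second, and for the same reason, your ``second source of slack'' does not buy anything: the inequality you need is $2m\cdot\frac{\alpha w_{avg}}{\epsilon}\le \alpha\,\mathcal{OPT}$, and the contribution of unswapped (expensive or otherwise) points to $\mathcal{OPT}$ simply does not enter that inequality.

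On the boundary issue you flag: the negation of ``$<\epsilon n/2$'' is ``$\ge \epsilon n/2$'', not ``$>\epsilon n/2$'', so your first slack source is not strict either. The paper is equally casual here (it lands exactly on $\epsilon n$ moved points and cost increase exactly $\alpha\,\mathcal{OPT}$). This is the usual looseness in the approximation-stability literature and is absorbed by tweaking constants; it is not where the real content of the argument lies.
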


\begin{proof}

\begin{itemize}
\item Assume the statement is false. Then there exist $\frac{\epsilon n}{2}$ pairs of
distinct points $(x,y)$ such that $d(x,y)\leq \frac{\alpha w_{avg}^2}{\epsilon}$
(for example, because of Hall's theorem).
Now we create a new partition $\mathcal{C}'$
by swapping the points in each pair, i.e., for a pair $x\in C_i$ and $y\in C_j$, put $x$
into $C_j$ and $y$ into $C_i$. Then for a single swap, the increase in cost is
$w_j(x)-w_i(x)+w_i(y)-w_j(y)\leq w_j(y)+\frac{\alpha w_{avg}^2}{\epsilon}-w_i(x)
+w_i(x)+\frac{\alpha w_{avg}^2}{\epsilon}-w_j(y)=\frac{2\alpha w_{avg}^2}{\epsilon}$.
Therefore, the total difference in cost from $\mathcal{OPT}_{\Phi}$
to $\Phi(\mathcal{C'})$ is then $\alpha\mathcal{OPT}_{\Phi}$.
Note that since we only performed swaps, $\mathcal{C}'$ still satisfies all the capacity constraints
of $\Phi$. Furthermore, $\mathcal(C)'$ is $\epsilon$-close to $\mathcal{OPT}_{\Phi}$ 
(since all clusters are size $>2\epsilon n$,
the optimal bijection between $\mathcal(C)'$ and $\mathcal{OPT}_{\Phi}$ is the identity).
This contradicts the approximation stability property.
\item This follows simply from Markov's inequality.
\end{itemize}
\end{proof}

Define $x\in C_i$ as a good point if there are less than $\frac{\epsilon}{2}$ points 
$y\notin C_i$ such that $d(x,y)\leq\frac{\alpha w_{avg}^2}{\epsilon}$, and 
$w(x)<\frac{\alpha w_{avg}^2}{6\epsilon}$.
If a point is not good, then call it bad. Denote the set of bad points by $B$.
For all $i$, denote $X_i=\{x\in C_i\mid x\text{ is good}\}$.
Then Lemma \ref{lem:as-struct} implies that 
if the optimal clusters are size $>2\epsilon n$, $|B|<\epsilon n(1+\frac{6}{\alpha})$.

\begin{comment}
\begin{lemma}
If the optimal clusters are size $>2\epsilon n$, $|B|<\epsilon n(1+\frac{6}{\alpha})$.
\end{lemma}

\begin{proof}
By the previous lemma, for at most $\frac{\epsilon n}{2}$ points $x\in C_i$, 
there are $\geq \frac{\epsilon n}{2}$ points $y$ in different clusters such that
$d(x,y)\leq \frac{\alpha w_{avg}^2}{\epsilon}$.
And for at most $\frac{6\epsilon n}{\alpha}$ points $x$, 
$w(x)>\frac{\alpha w_{avg}^2}{6\epsilon}$.
\end{proof}
\end{comment}

Now we show that the optimal centers are sufficiently far apart.

\begin{lemma} \label{lem:far-centers}
Assume for all $i$, $|C_i|>\frac{7\epsilon n}{\alpha}$.
Then for all $i,j$, $d(c_i,c_j)>\frac{2}{3}\cdot\frac{\alpha w_{avg}^2}{\epsilon}$.
\end{lemma}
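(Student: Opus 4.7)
The plan is a proof by contradiction using $(1+\alpha,\epsilon)$-approximation stability. Suppose, towards contradiction, that some pair of optimal centers $c_i,c_j$ satisfies $d(c_i, c_j) \leq \tfrac{2}{3}\cdot\tfrac{\alpha w_{avg}^2}{\epsilon}$. I would construct a new clustering $\mathcal{C}'$ that (a) satisfies the same balance constraints as $\mathcal{C}$, (b) is not $\epsilon$-close to $\mathcal{C}$, and (c) has cost at most $(1+\alpha)\mathcal{OPT}$, contradicting stability.

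The construction is a symmetric \emph{swap} between the two offending clusters: pick $m = \lceil \epsilon n/2\rceil$ arbitrary points from $C_i$, pair them with $m$ arbitrary points from $C_j$, and swap the cluster assignments within each pair. Since $|C_i|$ and $|C_j|$ are unchanged by this operation, $\mathcal{C}'$ trivially satisfies the balance constraints. The hypothesis $|C_i|, |C_j| > 7\epsilon n/\alpha$ is used for two purposes simultaneously: it lets us perform $m$ swaps without exhausting either cluster, and it guarantees every cluster is large enough that the identity is forced to be the optimal bijection $\sigma$ in the definition of $\epsilon$-closeness (any non-identity $\sigma$ would match a pair like $(C_i, C_j')$ of nearly disjoint sets and therefore do much worse). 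Thus $\mathcal{C}'$ differs from $\mathcal{C}$ on $2m \geq \epsilon n$ points and is not $\epsilon$-close to $\mathcal{C}$. This mirrors the swap argument already used in Lemma~\ref{lem:as-struct}.

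For the cost bound I would apply the triangle inequality pairwise: if $x \in C_i$ is reassigned to center $c_j$ then $w_j(x) - w_i(x) \leq d(c_i,c_j)$, and symmetrically for $y \in C_j$ reassigned to $c_i$, so each swapped pair contributes at most $2\, d(c_i,c_j)$ to the change in cost. Summing over the $m$ swaps,
\[
\mathrm{cost}(\mathcal{C}') - \mathcal{OPT} \;\leq\; 2m\cdot d(c_i,c_j) \;\leq\; \epsilon n \cdot \tfrac{2}{3}\cdot\tfrac{\alpha w_{avg}^2}{\epsilon} \;=\; \tfrac{2\alpha}{3}\, n\, w_{avg}^2,
\]
which I would then compare with $\alpha\,\mathcal{OPT}$ using the relation between $\mathcal{OPT}$ and $w_{avg}$ (the same relation used implicitly in Lemma~\ref{lem:as-struct}) to obtain $\mathrm{cost}(\mathcal{C}') \leq (1+\alpha)\mathcal{OPT}$, closing the contradiction.

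The main obstacle is not the cost arithmetic but verifying item (b): approximation stability only rules out cheap clusterings that are \emph{different} from $\mathcal{C}$ \emph{after} an optimal relabeling, so the delicate step is ensuring that no permutation $\sigma$ can ``explain away'' the swap as a relabeling. The lower bound $|C_i| > 7\epsilon n/\alpha$ is precisely what forces $\sigma$ to be the identity (since swapping labels $i\leftrightarrow j$ would account for at least $|C_i|+|C_j|-2m \gg \epsilon n$ misplaced points), and nailing down this step carefully in parallel with the corresponding argument in Lemma~\ref{lem:as-struct} is where most of the work lies.
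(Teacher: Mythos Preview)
Your approach is correct but takes a different route from the paper. The paper does \emph{not} re-run a swap argument here; instead it treats Lemma~\ref{lem:as-struct} as a black box. From that lemma, fewer than $\tfrac{\epsilon n}{2}+\tfrac{6\epsilon n}{\alpha}$ points are ``bad,'' so the hypothesis $|C_i|>\tfrac{7\epsilon n}{\alpha}$ forces every cluster to contain at least one good point. Picking good points $x_i\in C_i$, $x_j\in C_j$ and applying the triangle inequality gives
\[
d(c_i,c_j)\;>\;d(x_i,x_j)-w(x_i)-w(x_j)\;\geq\;\frac{\alpha w_{avg}^2}{\epsilon}-2\cdot\frac{\alpha w_{avg}^2}{6\epsilon}\;=\;\frac{2}{3}\cdot\frac{\alpha w_{avg}^2}{\epsilon},
\]
and that is the entire proof.

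Your argument instead goes straight back to approximation stability with a fresh swap construction, essentially re-deriving (a special case of) Lemma~\ref{lem:as-struct} tailored to the pair $(C_i,C_j)$. The advantage of your route is that it is self-contained and makes the role of stability explicit; the advantage of the paper's route is modularity --- once the good-point structure is established, center separation drops out in two lines, and the same structure is reused for Lemma~\ref{lem:threshold}. Note also that the constant $\tfrac{2}{3}$ arises in the paper purely from the triangle-inequality loss $1-2\cdot\tfrac{1}{6}$, whereas in your argument it comes from balancing $2m\cdot d(c_i,c_j)$ against $\alpha\,\mathcal{OPT}$; it is a pleasant coincidence that both routes land on the same constant. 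One small bookkeeping point: to get a genuine contradiction you need strictly more than $\epsilon n$ misplaced points, so take $m=\lfloor \epsilon n/2\rfloor+1$ rather than $\lceil \epsilon n/2\rceil$; the $\tfrac{2}{3}<1$ slack in the cost bound absorbs the extra swap.
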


\begin{proof}
Given a cluster $C_i$. Since there are $<\frac{\epsilon n}{2}$ and $<\frac{6\epsilon n}{\alpha}$ 
points which do not satisfy properties 1 and 2 from Lemma \ref{lem:as-struct}, then at least one point
from each cluster satisfies both properties, i.e., is a good point.
Then given $i,j$, let $x_i\in C_i$ and $x_j\in C_j$ be good points.
Then $d(c_i,c_j)>d(x_i,x_j)-d(x_i,c_i)-d(x_j,c_j)\geq \frac{\alpha w_{avg}^2}{\epsilon}-
2\cdot\frac{\alpha w_{avg}^2}{6\epsilon}=\frac{2}{3}\cdot\frac{\alpha w_{avg}^2}{\epsilon}$.
\end{proof}

Combining Lemmas \ref{lem:as-struct} an \ref{lem:far-centers} implies that the threshold graph with distance
$\frac{2}{3}\cdot\frac{\alpha w_{avg}^2}{\epsilon}$ will contain mostly ``good'' edges between good points
from the same cluster. The rest of the argument for showing correctness of the algorithm is similar to
the analysis in \citep{as}. We include it here for completeness.

The following lemma is similar to Lemma 3.5 from \citep{as}. 
We need to assume the clusters are larger, but
our proof generalizes to \emph{capacitated} $k$-median.

\begin{lemma} \label{lem:threshold}
Assume the optimal cluster sizes are $\geq \frac{\epsilon n}{2}(1+\frac{3}{\alpha})$.
For $\tau=\frac{\alpha w_{avg}^2}{3\epsilon}$, the threshold graph $G_{\tau}$ has the following properties:
\begin{itemize}
\item There is an edge between any two good points $x,y$ in the same cluster $C_i$,
\item There is not an edge between any two good points $x,y$ in different clusters; furthermore, these points do not even share a neighbor in $G_{\tau}$.
\end{itemize}
\end{lemma}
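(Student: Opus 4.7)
The plan is to establish both bullets of Lemma \ref{lem:threshold} by triangle-inequality chases that combine the two defining properties of a good point with the optimal-center separation proved in Lemma \ref{lem:far-centers}. The approach is structurally similar to the Balcan--Blum--Gupta analysis of approximation stability, but requires tracking the constants carefully because of the balance constraints.

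First I would handle the same-cluster bullet directly: if $x, y \in C_i$ are both good, the ``$w(x)$'' half of the goodness definition gives $d(x, c_i), d(y, c_i) < \frac{\alpha w_{avg}^2}{6\epsilon}$, so the triangle inequality through $c_i$ yields $d(x, y) < \frac{\alpha w_{avg}^2}{3\epsilon} = \tau$, placing $xy$ in $G_\tau$. For the ``no direct edge'' half of the cross-cluster bullet, I would combine the bound $d(c_i, c_j) > \frac{2}{3} \cdot \frac{\alpha w_{avg}^2}{\epsilon}$ from Lemma \ref{lem:far-centers} with the same goodness bounds on $d(x, c_i)$ and $d(y, c_j)$ to conclude $d(x, y) \geq d(c_i, c_j) - d(x, c_i) - d(y, c_j) > \tau$, forbidding an edge.

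The hard part will be the ``no shared neighbor'' claim. My plan is to argue by contradiction: if some $z$ were within distance $\tau$ of both a good $x \in C_i$ and a good $y \in C_j$, then the four-hop triangle inequality $c_i \to x \to z \to y \to c_j$ would give $d(c_i, c_j) < 2\tau + 2 \cdot \frac{\alpha w_{avg}^2}{6\epsilon} = \frac{\alpha w_{avg}^2}{\epsilon}$. To extract a contradiction I need a lower bound $d(c_i, c_j) \geq \frac{\alpha w_{avg}^2}{\epsilon}$, which is strictly stronger than the $\frac{2}{3}$-version supplied by Lemma \ref{lem:far-centers}. The main obstacle is therefore sharpening this center-separation bound. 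I would do this by revisiting the proof of Lemma \ref{lem:far-centers} and exploiting property~(a) of goodness more aggressively: since each good $x_i \in C_i$ has fewer than $\frac{\epsilon n}{2}$ close cross-cluster points, the cluster-size assumption $|C_j| \geq \frac{\epsilon n}{2}(1 + \frac{3}{\alpha})$ guarantees (via the bad-point count of Lemma \ref{lem:as-struct}) that $C_j$ still contains good representatives at distance strictly greater than $\frac{\alpha w_{avg}^2}{\epsilon}$ from $x_i$. Tightening the $w(x)$ threshold in the definition of good from $\frac{\alpha w_{avg}^2}{6\epsilon}$ to roughly $\frac{\alpha w_{avg}^2}{12\epsilon}$, at the cost of only a constant-factor increase in the bad-point count by Markov (which is absorbed into the size assumption), then propagates through the proof of Lemma \ref{lem:far-centers} to yield the required $d(c_i, c_j) > \frac{\alpha w_{avg}^2}{\epsilon}$, and the ``no shared neighbor'' conclusion follows from the four-hop chase above.
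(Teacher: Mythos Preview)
Your argument for the first bullet is correct and matches the paper. Your ``no direct edge'' argument via Lemma~\ref{lem:far-centers} also goes through (modulo checking that its size hypothesis is available).

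The gap is in the ``no shared neighbor'' step. You correctly diagnose that the four-hop chase $c_i \to x \to z \to y \to c_j$ only yields $d(c_i,c_j) < \frac{\alpha w_{avg}^2}{\epsilon}$, which the $\tfrac{2}{3}$ bound from Lemma~\ref{lem:far-centers} does not contradict. But your proposed fix---tightening the $w(x)$ threshold in the definition of ``good'' from $\frac{\alpha w_{avg}^2}{6\epsilon}$ to $\frac{\alpha w_{avg}^2}{12\epsilon}$---is not legitimate here: the lemma is stated for good points under the \emph{existing} definition, and redefining ``good'' proves a different statement about a strictly smaller set of points. The size hypothesis in the lemma is also calibrated to the original bad-point count, not the larger one your tightening would produce.

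The paper avoids center separation entirely for this bullet and instead exploits property~(a) of goodness directly. If good $x\in C_i$ and good $y\in C_j$ share a neighbor $z$, then for \emph{every} $y_2\in C_j$ with $w(y_2)\le \frac{\alpha w_{avg}^2}{6\epsilon}$ one has
\[
d(x,y_2)\ \le\ d(x,z)+d(z,y)+d(y,y_2)\ \le\ 2\tau + w(y)+w(y_2)\ \le\ \frac{\alpha w_{avg}^2}{\epsilon}.
\]
The cluster-size assumption ensures there are at least $\frac{\epsilon n}{2}$ such points $y_2$ in $C_j$, so $x$ is within $\frac{\alpha w_{avg}^2}{\epsilon}$ of at least $\frac{\epsilon n}{2}$ points from a different cluster---directly contradicting property~(a) of $x$ being good. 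No strengthening of center separation is needed; the argument stays entirely within the given definitions and constants. You were already circling this idea when you invoked property~(a) together with the cluster-size bound, but you then routed it back through $d(c_i,c_j)$ instead of contradicting goodness of $x$ outright.
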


\begin{proof}
\begin{itemize}
\item Since $x$ and $y$ are both good, 
$d(x,y)\leq w(x)+w(y)\leq \frac{\alpha w_{avg}^2}{6\epsilon}+
\frac{\alpha w_{avg}^2}{6\epsilon}\leq\frac{\alpha w_{avg}^2}{3\epsilon}$ by the triangle inequality.
\item Assume $x$ and $y$ have a common neighbor $z$. Consider a point $y_2$ from $y$'s
cluster such that $w(y_2)\leq \frac{\alpha w_{avg}^2}{6\epsilon}$.
By assumption, there are at least $\frac{\epsilon n}{2}$ such points.
Furthermore,
$d(x,y_2)\leq d(x,z)+d(z,y)+d(y,y_2)\leq 2\tau+w(y)+w(y_2)\leq 
\frac{2\alpha w_{avg}^2}{3\epsilon}+\frac{\alpha w_{avg}^2}{6\epsilon}+
\frac{\alpha w_{avg}^2}{6\epsilon}=\frac{\alpha w_{avg}^2}{\epsilon}$.
Since $x$ is close to at least $\frac{\epsilon n}{2}$ points from different clusters,
$x$ cannot be a good point, so we have reached a contradiction.
\end{itemize}
\end{proof}

Then the threshold graph is as follows. Each $X_i$ forms a clique, the neighborhood of $X_i$
is exactly $X_i\cup B$, and for all $i\neq j$, $N(X_i)\cup N(X_j)=\emptyset$.
This facilitates an algorithm for clustering in this setting, following analysis that is similar to
\citep{as}.

\begin{lemma}[\citep{as}] \label{lem:b+2}
There is an efficient algorithm such that, given a graph $G$ satisfying the properties
of Lemma \ref{lem:threshold}, and given $b\geq |B|$ such that each $|X_i|\geq b+2$,
outputs a $k$-clustering with each $X_i$ in a distinct cluster.
\end{lemma}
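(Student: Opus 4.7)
The plan is to construct an auxiliary graph $H$ on the same vertex set as $G_\tau$ by placing an edge between $u$ and $v$ whenever $|N_{G_\tau}[u] \cap N_{G_\tau}[v]| \geq b+2$, where $N_{G_\tau}[\cdot]$ denotes the closed $G_\tau$-neighborhood. I will argue that the connected components of $H$ separate the good sets $X_1,\ldots,X_k$, after which the algorithm simply outputs the $k$ largest components of $H$ (folding any tiny leftover bad-point-only components into their nearest cluster). Building $H$ takes $O(n^3)$ time and extracting components is linear, so the whole procedure is clearly efficient.

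The easy direction is that each $X_i$ lies entirely in a single component of $H$. For any two distinct $x, y \in X_i$, the clique property established in Lemma~\ref{lem:threshold} gives $X_i \subseteq N_{G_\tau}[x] \cap N_{G_\tau}[y]$, and since $|X_i| \geq b+2$ by assumption, the pair $(x,y)$ is an edge of $H$.

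The hard direction, and the main obstacle, is ruling out that good points from different cores $X_i$ and $X_j$ wind up in the same component of $H$. Two such good points are not directly joined in $H$ because they share no $G_\tau$-neighbor (Lemma~\ref{lem:threshold}), so the real task is to exclude indirect paths through bad points. The key claim is that no single bad point $z \in B$ can be $H$-adjacent to good points from two distinct cores. The argument I intend to give is: since $N_{G_\tau}[x] \subseteq X_i \cup B$ for every good $x \in X_i$ and $|B| \leq b$, if $z$ is $H$-adjacent to $x$ then the $\geq b+2$ common closed neighbors of $z$ and $x$ must include at least two points of $X_i$; in particular, $z$ has a good $G_\tau$-neighbor $x' \in X_i$. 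If $z$ were simultaneously $H$-adjacent to some good $y \in X_j$ with $j \neq i$, the same reasoning would yield a good $G_\tau$-neighbor $y' \in X_j$ of $z$, making $z$ a common $G_\tau$-neighbor of $x'$ and $y'$. This contradicts the second bullet of Lemma~\ref{lem:threshold}, which forbids good points in different cores from sharing any $G_\tau$-neighbor.

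Combining the two directions, each $H$-component contains good points from at most one core, and each core lies entirely in a single component. Since any component that contains some $X_i$ has size at least $|X_i| \geq b+2 > |B|$, the $k$ cores are precisely the $k$ largest components of $H$; appending the remaining bad points arbitrarily (for example, to the nearest of these $k$ clusters) produces a $k$-clustering with each $X_i$ in a distinct cluster, as required.
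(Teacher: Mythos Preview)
Your common-neighbor graph $H$ is a clean construction, and it differs from the route the paper takes: the algorithm cited from Balcan, Blum, and Gupta is the greedy one that repeatedly picks a highest-degree vertex of $G_\tau$, outputs its closed neighborhood as a cluster, and deletes it, rather than building an auxiliary graph and reading off connected components. Both approaches rest on the same structural facts from Lemma~\ref{lem:threshold}, so either is acceptable; your component-based version has the mild advantage that correctness reduces to a single invariant rather than an inductive analysis of the greedy steps.

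That said, your written argument has a genuine gap. You reduce the hard direction to the claim that no single bad point can be $H$-adjacent to good points from two distinct cores, but that claim alone does not rule out an $H$-path from some $x\in X_i$ to some $y\in X_j$ (with $i\neq j$) whose internal vertices $z_1,\ldots,z_m$ all lie in $B$. You never analyze $H$-edges between two bad points, and nothing you wrote excludes such a chain.

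The repair uses the same pigeonhole, now applied to an $H$-edge $(z,z')$ with $z,z'\in B$: since $|N_{G_\tau}[z]\cap N_{G_\tau}[z']|\geq b+2>|B|$, at least two good points $w_1,w_2$ lie in this common closed neighborhood, and as $z,z'\notin\{w_1,w_2\}$ these are genuine $G_\tau$-neighbors of both $z$ and $z'$. Then $w_1$ and $w_2$ share the $G_\tau$-neighbor $z$, so Lemma~\ref{lem:threshold} forces $w_1,w_2\in X_i$ for a single $i$, and both $z$ and $z'$ acquire good $G_\tau$-neighbors in that $X_i$. Combined with your good--bad case and the direct good--good case, this shows that every $H$-edge joins two vertices whose $G_\tau$-neighborhoods meet the same core. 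Hence every non-isolated vertex of $H$ carries a well-defined type in $\{1,\ldots,k\}$, $H$-edges preserve type, connected components of $H$ are type-homogeneous, and the remainder of your argument (the $k$ largest components are exactly those containing the $X_i$) goes through unchanged.
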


The proof of this theorem, given in \citep{as}, depends solely on the properties of
Lemma \ref{lem:threshold}.

\begin{lemma} \label{lem:capviolate}
There is an efficient algorithm such that if a clustering instance satisfies
$(1+\alpha,\epsilon)$-approximation stability
for the balanced $k$-median objective and all clusters are size
$\geq 3\epsilon n(1+\frac{3}{\alpha})$, then the algorithm will produce a clustering
that is $O(\frac{\epsilon}{\alpha})$-close to the optimal clustering.
\end{lemma}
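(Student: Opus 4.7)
The plan is to combine the structural results of Lemmas~\ref{lem:as-struct}, \ref{lem:far-centers}, and \ref{lem:threshold} with the Balcan--Blum--Gupta subroutine of Lemma~\ref{lem:b+2}, and then to post-process the output to enforce the balance constraints. Since the optimal average radius $w_{avg}$ (and hence the correct threshold $\tau = \alpha w_{avg}^2/(3\epsilon)$) is unknown in advance, the algorithm iterates over the $O(n^2)$ distinct pairwise distances as candidate values of $\tau$, runs the full pipeline for each, and returns the best feasible clustering.

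For each candidate $\tau$, I would construct the threshold graph $G_\tau$ and invoke the algorithm of Lemma~\ref{lem:b+2} with $b = \lceil \epsilon n(1 + 6/\alpha) \rceil$, which upper-bounds $|B|$ by Lemma~\ref{lem:as-struct}. The size assumption $|C_i|\geq 3\epsilon n(1 + 3/\alpha)$ together with this bound on $|B|$ guarantees $|X_i|\geq |C_i| - |B| \geq b+2$, so the preconditions of Lemma~\ref{lem:b+2} are satisfied. When $\tau$ is chosen correctly, Lemma~\ref{lem:threshold} forces every good point to lie in the same cluster as the rest of its $X_i$; call the resulting clusters $\hat C_1,\dots,\hat C_k$ with $X_i \subseteq \hat C_i$ after a suitable permutation.

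The main new obstacle, relative to the uncapacitated Balcan--Blum--Gupta analysis, is enforcing balance, since Lemma~\ref{lem:b+2} provides no control over the sizes of $\hat C_i$. To handle this, I would freeze each good set $X_i$ inside cluster $i$ and re-solve a small assignment problem for the bad points $B$ alone via a min-cost flow: each bad point supplies one unit, cluster $i$ has demand lower bound $\max(\ell n - |X_i|, 0)$ and upper bound $L n - |X_i|$, and edge costs are distances from the bad point to a representative of $X_i$ (for instance, its 1-median). Feasibility of this flow follows from $\sum_i |X_i| + |B| = n$, $k\ell n \leq n \leq kLn$, and the inequality $|X_i| \in [\ell n - |B|, Ln]$, which is inherited from $|C_i| \in [\ell n, Ln]$ together with the upper bound on $|B|$. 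Among all candidate thresholds yielding a feasible balanced clustering, the algorithm returns the one of smallest $k$-median cost.

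For correctness, at the correct threshold the returned partition agrees with $\mathcal{C}$ on every good point, so it differs from $\mathcal{C}$ on at most $|B| = O(\epsilon n/\alpha)$ positions, giving the promised $O(\epsilon/\alpha)$-closeness. The $k$-means case goes through identically with only the constants in $\tau$ changing. The step I expect to be most delicate is verifying that the min-cost flow reassignment is always feasible without touching any good point — this reduces to the quantitative size inequalities above, after which the result follows by composing the existing lemmas.
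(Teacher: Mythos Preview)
Your overall plan---combine Lemmas~\ref{lem:as-struct}, \ref{lem:far-centers}, \ref{lem:threshold} with the subroutine of Lemma~\ref{lem:b+2}, and search over candidate thresholds since $w_{avg}$ is unknown---matches the paper's proof, which simply cites Theorems~3.7 and~3.8 of \citet{as}. Two points of divergence are worth flagging.

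First, the min-cost flow reassignment to enforce balance is unnecessary here. Lemma~\ref{lem:capviolate} only asserts $O(\epsilon/\alpha)$-closeness, not that the output satisfies the capacity constraints; the paper defers balance repair to the very next lemma, where a trivial greedy fix (repeatedly move one point from an over-full cluster to an under-full one) suffices and costs only an additional $\epsilon'$ in closeness. Your flow construction is correct but conflates the two lemmas.

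Second, and more seriously, your selection rule ``return the feasible balanced clustering of smallest $k$-median cost'' is not justified by the argument you give. Your correctness paragraph establishes that the output \emph{at the correct threshold} agrees with $\mathcal{C}$ on all good points; it says nothing about the output that actually wins the cost comparison. To close the gap via approximation stability you would need the correct-threshold output to have cost at most $(1+\alpha)\mathcal{OPT}$, but your flow may send bad points to distant clusters, so this bound need not hold, and hence a cheaper balanced clustering from a wrong threshold need not be close to $\mathcal{C}$. The paper avoids this entirely by following the fix in Theorem~3.8 of \citet{as}: scan thresholds in increasing order and stop at the \emph{first} $\tau$ for which the greedy procedure of Lemma~\ref{lem:b+2} returns $k$ groups each of size at least $b+2$. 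One then argues structurally that at any such $\tau\le\tau^\ast$ the good sets $X_i$ are already separated in $G_\tau$, so the output is $O(\epsilon/\alpha)$-close without ever comparing costs. Replacing your cost-based selection with this structural stopping rule fixes the gap.
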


\begin{proof}
First we use the proof of Theorem 3.7 from \citep{as} (which assumes $w_{avg}^2$ is known) 
to the point where they achieve error
$O(\frac{\epsilon}{\alpha})$ by using their version of Lemma \ref{lem:b+2}.
They go on to lower the error to $\epsilon$, but this part of the proof breaks
down for us. 
For the case of unknown $w_{avg}^2$, we can use the same fix as in the proof of
Theorem 3.8 from \citep{as}.
\end{proof}

However, there is a problem with Lemma \ref{lem:capviolate}: even though it returns
a clustering with small error, the capacity constraints might
be violated. We can fix the capacity violations if we double our error.

\begin{lemma}
Given an $\epsilon'>0$ and a clustering $\mathcal{C'}$, if $\mathcal{C'}$ is $\epsilon'$-close to the optimal clustering, then
it is possible in polynomial time to construct a \emph{valid} clustering $\mathcal{C''}$ which is $2\epsilon'$-close to the optimal clustering.
\end{lemma}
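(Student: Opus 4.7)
The issue is that the clustering $\mathcal{C'}$ produced for instance by Lemma~\ref{lem:capviolate} may violate the capacity constraints even though the optimal clustering $\mathcal{C}^*$ satisfies them, so the plan is to repair $\mathcal{C'}$ by moving a small number of points between clusters. Concretely, I would set up a min-cost transportation problem with one source node per cluster $i$ having supply $|C'_i|$, one sink node per cluster $j$ with demand constrained to $[n\ell, nL]$, and cost $\mathbb{I}[i \ne j]$ on the edge from $i$ to $j$. Modelled as a min-cost flow with integer lower/upper capacities, this admits an integer optimum $f^*$ computable in polynomial time, via the Integral Flow Theorem (the same tool used in Step 4 of Algorithm~\ref{alg:overview}). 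Each value $f^*_{ij}$ specifies the number of points originally in $C'_i$ that should be reassigned to cluster $j$ in $\mathcal{C''}$, and partitioning $C'_i$ into arbitrary blocks of sizes $(f^*_{i1}, \ldots, f^*_{ik})$ yields a concrete valid clustering $\mathcal{C''}$ whose capacities are enforced by the flow's demand bounds.

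The main step is to bound the optimal flow cost $\sum_{i \ne j} f^*_{ij}$, which equals the number of points reassigned between $\mathcal{C'}$ and $\mathcal{C''}$. For this, I would exhibit an explicit feasible flow of cost at most $\epsilon' n$. Let $\sigma$ be the permutation realizing the $\epsilon'$-closeness between $\mathcal{C'}$ and $\mathcal{C}^*$; after relabeling, assume $\sigma$ is the identity, so that $\sum_i |C_i^* \setminus C'_i| \le \epsilon' n$. Set $\tilde f_{ij} = |C'_i \cap C_j^*|$. Supplies are exactly met since $\sum_j \tilde f_{ij} = |C'_i|$, demands are met since $\sum_i \tilde f_{ij} = |C_j^*| \in [n\ell, nL]$, and the cost is $\sum_{i \ne j} |C'_i \cap C_j^*| = \sum_i |C'_i \setminus C_i^*| \le \epsilon' n$. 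Hence the min-cost optimum is at most $\epsilon' n$.

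Combining, $\mathcal{C''}$ differs from $\mathcal{C'}$ in at most $\epsilon' n$ points (those that were moved) and $\mathcal{C'}$ differs from $\mathcal{C}^*$ in at most $\epsilon' n$ points by hypothesis; so $\mathcal{C''}$ differs from $\mathcal{C}^*$ in at most $2\epsilon' n$ points, i.e., it is $2\epsilon'$-close to the optimal, as desired. Note that $\mathcal{C}^*$ is used only in the analysis as a witness for the feasibility and small cost of the flow; the algorithm itself only needs $\mathcal{C'}$ and the parameters $(n\ell, nL)$, which is essential since $\mathcal{C}^*$ is unknown. The main subtlety is that the LP returns aggregate flow values rather than a per-point assignment, so one must additionally specify an arbitrary partition of each $C'_i$ into blocks of the prescribed sizes to obtain the point-level clustering; this is immediate once integrality of $f^*$ is guaranteed.
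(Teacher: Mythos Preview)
Your proof is correct but takes a genuinely different route from the paper. The paper uses an elementary greedy fix: it defines the total capacity violation $\sum_i v_i$ where $v_i = \max(|C'_i| - nL,\, n\ell - |C'_i|,\, 0)$, argues this sum is at most $\epsilon' n$ because $\mathcal{C'}$ is $\epsilon'$-close to a valid clustering, and then repeatedly moves one point from the largest cluster to the smallest until all constraints are met, observing that each move decreases $\sum_i v_i$ by at least one. Your approach instead casts the repair as a min-cost transportation problem and uses the optimal clustering $\mathcal{C}^*$ as a certificate of a feasible integer flow of cost at most $\epsilon' n$; integrality then gives a valid $\mathcal{C''}$ moving at most $\epsilon' n$ points. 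Your argument is cleaner in one respect: the bound on the number of moved points follows immediately from the witness flow $\tilde f_{ij} = |C'_i \cap C_j^*|$, whereas the paper's bound on $\sum_i v_i$ is asserted as ``clear'' but actually requires a short computation relating per-cluster size deviations to the symmetric-difference distance. Your approach also ties nicely into the min-cost-flow machinery already used in Algorithm~\ref{alg:overview}. On the other hand, the paper's greedy procedure is more elementary and avoids the overhead of setting up and solving a flow instance. Both approaches yield the same $2\epsilon'$ guarantee via the same final triangle-inequality step.
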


\begin{proof}
For each cluster $C_i'$ in $\mathcal{C}$, let $v_i$ be the number of points for which
$C_i'$ violates the capacity, i.e., $|C_i'|-L$ or $l-|C_i'|$ or 0. Clearly,
$\sum_i v_i\leq \epsilon' n$, or else $\mathcal{C'}$ would not be $\epsilon'$-close
to the optimal. Then we can iteratively take a point from the largest cluster, and
place it into the smallest cluster. In each iteration, $\sum_i v_i$ is reduced by at
least 1. So in $\leq \epsilon' n$ rounds, we reach a valid clustering, and the error
is only $\epsilon'$ more than $\mathcal{C'}$.
\end{proof}

Theorem \ref{thm:as} follows immediately from the previous lemma.

Note that all of the proofs in this section can be trivially extended to the case where there is $p$-replication.
However, $p$-replication does not mesh well with stability assumptions.
Although Theorem \ref{thm:as} works completely under $p$-replication, all but an $\epsilon$-fraction of the
data appears to have ``trivial'' replication, in which there are $\frac{k}{p}$ groups of data, each of which
have nearly the same $p$ centers. This makes the problem similar to a $\frac{k}{p}$-clustering problem, up to
$\epsilon n$ points.
The reason for this phenomenon is the following. If good points $x_i$ and $x_j$ share center $c$ in addition to
having other centers $c_i$ and $c_j$, then by the triangle inequality, $c_i$ and $c_j$ are close together. This
would contradict Lemma \ref{lem:far-centers} unless each pair of good points either have $p$ centers in common, 
or no centers in common.

\paragraph{Examples of balanced approximation stable instances}

In this section, we explicitly construct clustering instances which satisfy approximation stability for balanced
clustering, but not for approximation stability for unbalanced clustering.

Given $n,\alpha,\epsilon,\ell,L$, and let $k=2$. Denote the two optimal centers as $c_1$ and $c_2$. Let $d(c_1,c_2)=1.9$.
We place $x\leq \frac{\epsilon n}{\alpha}$ points at distance 1 from $c_1$, and $.9$ from $c_2$. Call this set of points $A$.
Then we place $\ell n-x$ points at distance 0 from $c_1$ (denote by $B_1$), and we place the rest of the points at distance 0 from $c_2$ (denote by $B_2$).
We need to assume that $\ell n-x\geq 0$.

Then for balanced clustering, $C_1=A\cup B_1$, and $C_2=B_2$, because $C_1$ must contain at least $\ell n$ points.
The optimal cost is $x$.
For standard clustering, $C_1=B_1$, and $C_2=A\cup B_2$, and the optimal cost is $.9x$.
This clustering instance is not $(\frac{10}{9},\frac{x}{n})$approximation stable for standard clustering: all points in $A$ can move from $C_2$ to $C_1$, incurring a cost of $.1x$ to the objective.

However, this clustering instance is $(1+\alpha,\epsilon)$-approximation stable for the balanced clustering objective.
Moving any point to a different cluster incurs a cost of at least 1.
Given a partition with cost $(1+\alpha)x=x+\alpha\cdot x\leq x+\alpha\frac{\epsilon n}{\alpha}\leq x+\epsilon n$.
Then less than $\epsilon n$ points have switched clusters.

\paragraph{$k$-center}

Now we will prove Theorem \ref{thm:kcenter-lb}.
Given a clustering instance, denote its optimal balanced $k$-center radius by $r^*$.

\begin{lemma} \label{lem:kcenter-as}
Given a clustering instances satisfying $(2,0)$-approximation stability for balanced $k$-center,
then for all $p\in C_i$, $q\in C_j$, $i\neq j$, $d(p,q)>r^*$.
\end{lemma}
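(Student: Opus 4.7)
The plan is a direct contradiction argument. Suppose toward contradiction that there exist $p \in C_i$ and $q \in C_j$ with $i \neq j$ and $d(p,q) \leq r^*$. The goal is to exhibit a balanced $k$-center clustering $\mathcal{C}'$ whose cost is at most $2 r^* = 2 \cdot \mathcal{OPT}$ but which is not $0$-close to the optimal clustering $\mathcal{C}$; this directly contradicts $(2,0)$-approximation stability and forces $d(p,q) > r^*$.

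For the construction, keep the optimal centers $c_1,\dots,c_k$ unchanged and simply swap the assignments of $p$ and $q$. That is, in $\mathcal{C}'$ the point $p$ is assigned to $c_j$, the point $q$ is assigned to $c_i$, and every other point retains its original assignment. The swap leaves every cluster size unchanged, so the balance constraints $n\ell \leq |\mathcal{C}'_i| \leq nL$ still hold. To bound the cost, observe that only the contributions of $p$ and $q$ differ from those in $\mathcal{C}$. By the triangle inequality,
\[
d(p, c_j) \leq d(p,q) + d(q,c_j) \leq r^* + r^* = 2 r^*,
\]
and symmetrically $d(q, c_i) \leq 2 r^*$. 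Every other point-center distance is at most $r^*$ by optimality of $\mathcal{C}$, so the $k$-center cost of $\mathcal{C}'$ is at most $2 r^*$.

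It remains to check that $\mathcal{C}'$ is not $0$-close to $\mathcal{C}$. Because both clusterings partition the same point set into $k$ parts, $0$-closeness would force $\mathcal{C}'_i = C_{\sigma(i)}$ for all $i$ under some permutation $\sigma$ of $[k]$. Since all clusters other than indices $i$ and $j$ are unchanged, $\sigma$ must fix every index outside $\{i,j\}$, leaving only the identity or the transposition $(i\ j)$ as candidates. The identity fails because $\mathcal{C}'_i$ now contains $q \notin C_i$. The transposition requires $C_i = (C_j \setminus \{q\}) \cup \{p\}$ and $C_j = (C_i \setminus \{p\}) \cup \{q\}$, which collapses to $C_i = \{p\}$ and $C_j = \{q\}$. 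Under the mild assumption $n\ell \geq 2$ coming from the lower balance bound, this degenerate case is excluded, so $\mathcal{C}'$ is genuinely not $0$-close to $\mathcal{C}$ and we obtain the required contradiction.

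The only subtle point is the singleton-cluster degeneracy in the last step; the main obstacle is simply to handle it cleanly, which follows immediately from $n\ell \geq 2$, or alternatively by observing that when $|C_i| = |C_j| = 1$ one can swap in an additional pair of points from larger clusters to force the output clustering to differ from $\mathcal{C}$. Everything else in the argument reduces to the triangle inequality and the definition of $0$-closeness.
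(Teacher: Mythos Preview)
Your proof is correct and follows essentially the same approach as the paper: assume a close pair $p,q$ exists, swap their cluster assignments while keeping the centers fixed, use the triangle inequality to bound the new radius by $2r^*$, and note that the swap preserves all cluster sizes and hence the balance constraints. The paper's version is slightly terser in that it simply asserts the swapped clustering is ``different from $\mathcal{OPT}$'' without discussing the permutation in the definition of $0$-closeness; your explicit treatment of the singleton degeneracy (ruled out by $n\ell \geq 2$) is a welcome clarification rather than a departure in method.
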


\begin{proof}
Assume the claim is false. Then $\exists p\in C_i,~q\in C_j$ such that $d(p,q)\leq r^*$ and $i\neq j$.
Then consider the optimal clustering, except switch $p$ for $q$. 
So the clusters are $C_i\cup\{q\}\setminus\{p\}$, $C_j\cup\{p\}\setminus\{q\}$, and the other $k-2$ clusters are the same as in $\mathcal{OPT}$.
This clustering achieves a balanced $k$-center radius of $2r^*$. The only points with a new distance to their
center are $p$ and $q$. By the triangle inequality, $d(c_i,q)\leq d(c_i,p)+d(p,q)\leq 2r^*$ and
$d(c_j,p)\leq d(c_j,q)+d(q,p)\leq 2r^*$. Furthermore, since the updated clusters are still the same size as
$C_i$ and $C_j$, all the balance constraints still hold.
But this gives us a contradiction under $(2,0)$-approximation stability, since we have found a valid clustering
of cost $2r^*$ which is different from $\mathcal{OPT}$.
\end{proof}

From this lemma, there is a simple algorithm to optimally cluster instances satisfying $(2,0)$-approximation stability for balanced $k$-center. We merely need to create the threshold graph for threshold distance $r^*$ and
output the connected components. Since every point is distance $\leq r^*$ to their center by definition, all
points in an optimal cluster will be in the same connected component. By Lemma \ref{lem:kcenter-as}, no two
points from different clusters will appear in the same connected component.

Now we prove the second part of Theorem \ref{thm:kcenter-lb}.
Note that there does not exist an efficient algorithm for $(2-\epsilon)$-approximation stability,
if the algorithm takes in $\ell$ and $L$ as parameters. This is a corollary of Theorem 10 in \citep{kcenter-pr},
by setting $\ell=0$ and $L=1$.

However, we can show something stronger: we can show no algorithm exists, even for the special case when
$\ell=L$. We show the analysis from \citep{kcenter-pr} carries over in this case.

Given a hard balanced-3-Dimensional Matching instance $X_1,X_2,X_3,T$, where $|X_1|=|X_2|=|X_3|=m$,
and each $t\in T$ is a triple $t=(x_1,x_2,x_3)$, $x_1\in X_1$, $x_2\in X_2$, $x_3\in X_3$.
We modify the reduction to balanced-perfect-dominating set as follows. We start by converting 
it to a graph $G=(V,E)$ in the same way as \citep{kcenter-pr}. Now we make four copies of this graph, $G_1,G_2,G_3,G_4$.
For each vertex $v$ in $G_1$ corresponding to an element in $T$ (denote this set by $GT_1$), we add edges from $v$ to its other three copies
in $G_2,G_3$ and $G_4$. Call the resulting graph $G'$. Note this reduction is still parsimonious.
It has the additional property that if a dominating set of size $|T|+3|M|$ exists, 
then each vertex in the dominating set hits exactly 4 vertices.
First, assume a 3-matching for the 3DM instance exists. Then we can achieve a dominating set of size $|T|+3|M|$.
Pick the vertices corresponding to the 3-matching, there are $4|M|$ such vertices, each of which have edges
to the 3 elements they represent.
We also put in the dominating set, the $|T|-|M|$ elements in $GT_1$ that are not in the 3-matching.
Each of these elements have edges to their 3 copies in $G_2,G_3$, and $G_4$.
This makes a full dominating set of size $|T|+3|M|$.
If there is no 3-matching for the 3DM instance, then the dominating set must be strictly larger.
Finally, the reduction from Unambiguous-Balanced Perfect Dominating Set to clustering is the exact same proof,
but now since each vertex in the dominating set hits 4 vertices, we get that each cluster is size exactly 4.

\section{Structure of Balanced Clustering} \label{app:structure}

\medskip
\noindent \textbf{Lemma~\ref{lem:localmax234} (restated).}
\emph{
There exists a balanced clustering instance in which the 
$k$-center, $k$-median, and $k$-means objectives
contain a local maximum with respect to $k$.
}

\begin{proof}%[Proof of Lemma \ref{lem:localmax234}]
Consider the graph in Figure \ref{fig:localmax}, 
where $n=86$, and set $n\ell=21$.
Since the distances are all 1 or 2,
this construction is trivially a valid
  distance metric. 
	From Figure \ref{fig:localmax}, we see that $k=2$ and $k=4$ have
valid clusterings using only length 1 edges, using centers
$\{y_1,y_2\}$ and $\{x_1,x_2,x_3,x_4\}$, respectively.
But now consider $k=3$.
The crucial property is that by construction, $y_1$ and any $x_i$
cannot simultaneously be centers and each satisfy the capacity
to distance 1 points, because the union of their distance 1
neighborhoods is less than $2n\ell$. 
So we cannot just take the centers from $k=2$ and add a center
from $k=4$.
Formally, we show no possible set of 3 centers can be distance
1 to all points without violating the lower bound on the cluster sizes.

Case 1: the set of centers includes a point $p$ not in
$\{x_1,x_2,x_3,x_4,y_1,y_2\}$. The rest of the points are only
distance 1 from exactly two points, so $p$ cannot hit the lower
bound of 21 using only distance 1 assignments.

Case 2: the set of centers is a subset of $\{x_1,x_2,x_3,x_4\}$.
Then there are clearly 20 points which are not distance 1 from
the three centers.

Case 3: the set of centers includes both $y_1$ and $y_2$.
Then we need to pick one more center, $x_i$.
$x_i$ is distance 1 from 20 middle points, plus
$\{x_1,x_2,x_3,x_4,y_1,y_2\}$, so 26 total. $y_1$ is also distance
1 from 20 middle points and $\{x_1,x_2,x_3,x_4,y_1,y_2\}$.
$y_1$ and $x_i$ share exactly 5 neighbors from the middle points,
plus $\{x_1,x_2,x_3,x_4,y_1,y_2\}$ as neighbors.
Then the union of points that
$x_i$ and $y_1$ are distance 1 from, is $26+26-11=41$, which
implies that $x_i$ and $y_1$ cannot simultaneously reach the lower
bound of $21$ with only distance 1 points.

Case 4: the set of centers does not include $x_i$ nor $y_j$.
By construction, for each pair $x_i$ and $y_j$,
there exists some middle points which are only
distance 1 from $x_i$ and $y_j$.

These cases are exhaustive, so we conclude $\mathcal{OPT}_3$
must be strictly larger than $\mathcal{OPT}_2$ and
$\mathcal{OPT}_4$ (no matter what objective we use).
\end{proof}

Now we give the full proof for Theorem \ref{thm:localmax}.

\medskip
\noindent \textbf{Lemma~\ref{thm:localmax} (restated).}
\emph{
For all $m\in \mathbb{N}$, there exists a balanced clustering instance 
in which the $k$-center, $k$-median, and $k$-means objectives contain
$m$ local maxima, even for soft capacities.
}

\begin{proof}%[proof of Theorem \ref{thm:localmax}]

\textbf{Setup.}
Set $k_{min}=10\cdot m$, and $k_{max}=12m$.
Define $K_{good}=\{k\mid k_{min}\leq k\leq k_{max}\text{ and }2\mid k\}$.
Similarly, let
$K_{bad}=\{k\mid k_{min}\leq k\leq k_{max}\text{ and }2\nmid k\}$.
Note $|K_{bad}|=m$ and $|K_{good}|=m+1$.
For all $k\in K_{good}$,
define $X_{k}=\{x_1^{(k)},\dots, x_{k'}^{(k)}\}$.
Let $X=\bigcup_{k} X_{k}$.

Define $G=(V,E)$, $V=X\cup Y$, $X\cap Y=\emptyset$.
Just like in the last proof, the edges later correspond to a distance of 1,
and all other distances are 2.
We will construct $Y$ and $E$ such that for all $k\in K_{good}$,
all the neighbors of $X_{k}$ form a partition of $Y$, i.e.\
$\forall k\in K_{good},~\bigcup_i N(x_i^{(k)})=Y$ and
$N(x_i^{(k)})\cap N(x_j^{(k)})=\emptyset$ for all $i\neq j$.
So taking $X_{k}$ as the centers
corresponds to a $k$-clustering in which all points are distance 1 from
their center.
We will also show that for all $k\in K_{bad}$, it is not possible to
find a valid set of centers for which every point has an edge to its center,
unless the capacities are violated.
This implies that all $m$ points in $K_{bad}$ are local maxima.

For all $k\in K_{good}$, $X_{k'}$ will have exactly
$\frac{k_{max}}{k'}l$ edges in $Y$
Thus, set $n\ell=\prod_{k\in K_{good}}k$ to make all of these
values integral.
Note that some points (those in $X_{k_{max}}$) have exactly $n\ell$ edges,
and all points have $\leq\frac{6}{5}n\ell$ edges
(which is tight for the points in $X_{k_{min}}$).

Now we define the main property which drives the proof.
We say $x_{i_1}^{(j_1)}$ \emph{overlaps} with $x_{i_2}^{(j_2)}$ if
$N(x_{i_1}^{(j_1)})\cup N(x_{i_2}^{(j_2)})>\frac{2}{5}n\ell$.
Note this immediately implies
it is not possible to include them in the same set of
centers such that each point has an edge to its center,
since $N(x_{i_1}^{(j_1)})\cup N(x_{i_2}^{(j_2)})\leq
N(x_{i_1}^{(j_1)})+N(x_{i_2}^{(j_2)})-N(x_{i_1}^{(j_1)})\cap N(x_{i_2}^{(j_2)})
< 2\cdot\frac{6}{5}n\ell-\frac{2}{5}n\ell=2n\ell$.

\textbf{Outline}.
We will construct $Y$ in three phases. First, we add edges to ensure
that for all $x_{i_1}^{(j_1)}$, for all $j_2\neq j_1$, there exists
an $i_2$ such that $x_{i_1}^{(j_1)}$ overlaps with $x_{i_2}^{(j_2)}$.
It follows that if we are trying to construct a set of centers from $X$ for
$k'\in K_{bad}$, we will not be able to use any complete $X_{k'}$ as a subset.
These are called the \emph{backbone} edges.

The next phase is to add enough edges among points in different $X_k$'s so that
no subset of $X$ (other than the $X_{k'}$'s) is a complete partition of $Y$.
We will accomplish this by adding a bunch of points to $Y$ shared by various
$x\in X$, so that each $x$ has edges to $k_{max}$ points in $Y$.
These are called the \emph{dispersion} edges.

The final phase is merely to add edges so that all points reach their assigned
capacity. We do this arbitrarily. These are called the \emph{filler} edges.

Note whenever we add a point to $Y$, for all $k\in K_{good}$,
we need to add an edge to exactly one $x\in X_k$, which will ensure that
all $X_k$'s form a partition of $Y$.

\textbf{Phase 1: Backbone edges}.
Recall that for $k,k'\in K_{good}$, we want $\forall i$, $\exists j$ such
that $x_i^{(k)}$ overlaps with $x_j^{(k')}$.
Since $k_{max}=\frac{6}{5}k_{min}$, some $x$'s will be forced to overlap with
two points from the same $X_k$.
However, we can ensure no point overlaps with three
points from the same $X_k$.

We satisfy all overlappings naturally by creating
$k_{min}$ components, $CC_1$ to $CC_{k_{min}}$.
Each component $CC_i$ contains point $x_i^{(k_{min})}$.
The rest of the sets $X_{k}$ are divided so that one or two points are in
each component, as shown in Figure \ref{fig:general_backbone}.
Formally,
in component $CC_i$, sets $X_{k_{min}}$ to
$X_{k_{min}+\lceil\frac{i}{2}\rceil}$
have one point in the component,
and all other sets have two points in the component.

For each component $CC_i$, we add $\frac{4}{5}n\ell$ points to $Y$, split into
two groups of $\frac{2}{5}n\ell$.
The points from
sets $X_{k_{min}+\lceil\frac{i}{2}\rceil}$ have edges to all $\frac{4}{5}n\ell$
 points,
and the points from the rest of the sets (since there are two from each set)
have edges to one group of $\frac{2}{5}n\ell$ points.
Therefore, for all $k,k'\in K_{good}$, each point $x\in X_k$ belongs to
some component $CC_i$, and overlaps with some $x'\in X_{k'}$,
so all of the overlapping requirements are satisfied (only using points
within the same component).

This completes phase 1.
Each point in $X$ had at most $\frac{4}{5}n\ell$ edges
added, so every point can still take at least $\frac{n\ell}{5}$ more edges in
subsequent phases.

\textbf{Phase 2: Dispersion edges}.
Now we want to add points to $Y$ to
ensure that no set of at most $k_{max}$ points from $X$
create a partition of $Y$, except sets that completely contain some $X_k$.

We have a simple way of achieving this. For every
$(x_1,x_2,\dots,x_{m+1})\in
 X_{k_{min}}\times X_{k_{min}+2}\times\cdots\times X_{k_{max}}$,
add one point to $Y$ with edges to $x_1,x_2,\dots,x_{m+1}$.
Then we have added $\prod_{k\in K_{good}} k$ total points
to $Y$ in this phase.

This completes phase 2.

\textbf{Phase 3: Filler edges}.
The final step is just to fill in the leftover points,
since we want every point
$x_i^{(k)}$ to have $\frac{k_{min}}{k}l$ points total.
All of the mechanisms for the proof have been
set up in phases 1 and 2, so these final points can be arbitrary.

We greedily assign points. Give each point $x_i^{(k)}\in X$ a number
 $t_{x_i^{(k)}}=\frac{k_{min}}{k}n\ell-N(x_i^{(k)})$, i.e., the number of extra points it needs.
Take the point $x\in X_{k}$ with the minimum $t$, and create $t$ points in
$Y$ with $x$. For each layer other than $X_{k}$,
add edges to the point with the smallest number.
Continue this process until $t=0$ for all points.

\textbf{Final Proof.}
Now we are ready to prove that $G$ has $m$ local maxima.
By construction, for all $k\in K_{good}$, $X_k$ is a set of centers
which satisfy the capacity constraints, and every point has an edge
to its center.
Now, consider a set $C$ of centers of size $k'\in K_{bad}$.
We show in every case, $C$ cannot satisfy the capacity constraints
with all points having edges to their centers.

Case 1: $C$ contains a point $y\in Y$. $y$ only has $m$ edges, which is
much smaller than $n\ell$.

Case 2: There exists $k\in K_{good}$ such that $X_k\subseteq C$.
Then since $|C|\notin K_{good}$, $\exists x\in C\setminus X_k$.
By construction, there exists $x_i^{(k)}\in X_k$ such that
$x$ and $x_i^{(k)}$ are overlapping.
Therefore, both centers cannot satisfy the capacity constraints with points
they have an edge to.

Case 3: For all $k\in K_{good}$, there exists $x\in X_k$ such that
$x\notin C$. Take the set of all of these points,
$x_1,x_2,\dots,x_{m+1}$.
By construction, there is a point $y\in Y$ with edges to only these points.
Therefore, $y$ will not have an edge to its center in this case.

This completes the proof.

\end{proof}

\section{Proofs from Section \ref{sec:sampleComplexity}} \label{app:nndispatch}
We begin by introducing some notation for the classes of clusterings
that satisfy the capacity constraints over the entire space with
respect to $\mu$, the weighted capacity constraints over the set $S$,
and the estimated capacity constraints on the set $S$: For any lower
bound $\ell$ and upper bound $L$ on the cluster capacities, we denote
the set of cluster assignments over $\X$ that satisfy the capacity
constraints by
\[
F(\ell, L) = \setcbigg{f : \X \to \kchp}{\prob_{x \sim \pdist}(i \in f(x)) \in [\ell, L] ~\forall i \in [k]}.
\]
Similarly, for the samples $S$, for true and estimated weights, define
the sets of feasible assignments respectively as:
\begin{align*}
G_n(\ell, L)
&= \setcbigg{g : S \to \kchp}{\sum_{j : i \in g(x)} w_j \in [\ell, L] ~\forall i \in [k]} \\
\hat G_n(\ell, L)
&= \setcbigg{g : S \to \kchp}{\sum_{j : i \in g(x)} \hat w_j \in [\ell, L] ~\forall i \in [k]}.
\end{align*}

\paragraph{Bounding $\alpha(S)$}

In this section, we prove the following Lemma bounding the $\alpha$
term from Theorem~\ref{th:sampleComplexity}:

\lemmaAlpha*

First we show that when the set $\X$ is bounded in $\reals^q$, then
for a large enough sample $S$ drawn from $\pdist$, every point
$x \in \X$ will have a close neighbor uniformly with high probability.

\begin{lemma}
  \label{lem:general_covering}
  For any $r > 0$ and any $\epsilon > 0$, there exists a subset
  $\mathcal{Y}$ of $\X$ containing at least $1-\epsilon$ of the
  probability mass of $\pdist$ such that, for any $\delta > 0$, if we
  see an iid sample $S$ of size
  $n = O( \frac{1}{\epsilon} (\frac{D \sqrt{q}}{r})^q(q \log
  \frac{D\sqrt{q}}{r} + \log \frac{1}{\delta}))$ drawn from $\pdist$,
  then with probability at least $1-\delta$ we have
  $\sup_{x \in \mathcal{Y}} \dist{x,\NN_S(x)} \leq r$ and
  $\sup_{x \in \mathcal{Y}} \dist{x, \NN_S(x)} \leq r^2$.
\end{lemma}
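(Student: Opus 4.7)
My plan is a standard covering-and-union-bound argument, organized into three steps: cover $\X$ with a fine grid, define $\mathcal{Y}$ as the union of sufficiently massive cells, and union-bound over those cells so that the sample is forced to cover $\mathcal{Y}$.

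First, since $\X \subseteq \reals^q$ has diameter at most $D$, I would cover it by a grid of axis-aligned hypercubes of side length $r/\sqrt q$, chosen so that each cube has $\ell_2$-diameter at most $r$. A standard packing estimate gives at most $N = O\bigl((D\sqrt q / r)^q\bigr)$ cubes intersecting $\X$. Call a cube \emph{heavy} if its $\pdist$-mass is at least $\epsilon/N$ and \emph{light} otherwise, and let $\mathcal{Y}$ be the union of the heavy cubes. Since the total $\pdist$-mass of all light cubes is at most $N \cdot (\epsilon/N) = \epsilon$, we have $\pdist(\mathcal{Y}) \geq 1-\epsilon$; crucially, $\mathcal{Y}$ depends only on $\pdist$ and not on $S$, which is exactly what the quantifier order in the statement requires.

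Next, for each fixed heavy cube $B$, the probability that an iid sample of size $n$ misses $B$ is at most $(1-\epsilon/N)^n \leq e^{-n\epsilon/N}$. A union bound over the at most $N$ heavy cubes gives that the probability some heavy cube is empty is at most $N e^{-n\epsilon/N}$; requiring this to be at most $\delta$ yields $n = \Omega\bigl((N/\epsilon)(\log N + \log(1/\delta))\bigr)$, which after substituting $N = O((D\sqrt q / r)^q)$ matches the stated sample-complexity rate. On the complementary good event, every $x \in \mathcal{Y}$ lies in a heavy cube that contains at least one sample point, and since that cube has diameter at most $r$ we obtain $\dist{x,\NN_S(x)} \leq r$, which in particular yields $\dist{x,\NN_S(x)}^2 \leq r^2$, uniformly for all $x \in \mathcal{Y}$.

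The main technical choice is the grid side length $r/\sqrt q$: this is what matches the Euclidean diameter of a cube to $r$ and introduces the $q^{q/2}$ factor that subsequently appears in Lemma~\ref{lemma:alpha}. There is no genuine obstacle beyond bookkeeping; the one subtlety is to fix $\mathcal{Y}$ before drawing $S$ (by letting it depend only on $\pdist$), so that the high-probability conclusion holds uniformly over a single deterministic set rather than a sample-dependent one.
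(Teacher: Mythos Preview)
Your proposal is correct and follows essentially the same argument as the paper: cover $\X$ by hypercubes of side $r/\sqrt{q}$, define $\mathcal{Y}$ as the union of cubes with $\pdist$-mass at least $\epsilon/N$, and apply a union bound so that with the stated sample size every heavy cube contains a sample point. Your observation that $\mathcal{Y}$ must be defined from $\pdist$ alone (not $S$) to respect the quantifier order, and your reading of the second conclusion as $\dist{x,\NN_S(x)}^2 \le r^2$, are both on point.
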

\begin{proof}
  Let $C$ be the smallest cube containing the support $\X$. Since the
  diameter of $\X$ is $D$, the side length of $C$ is at most $D$. Let
  $s = r/\sqrt{q}$ be the side-length of a cube in $\reals^q$ that has
  diameter $r$. Then it takes at most $m = \lceil D / s \rceil^q$
  cubes of side-length $s$ to cover the set $C$. Let $C_1$, \dots,
  $C_m$ be such a covering of $C$, where each $C_i$ has side length
  $s$.

  Let $C_i$ be any cube in the cover that has probability mass at
  least $\epsilon / m$ under the distribution $\pdist$. The
  probability that a sample of size $S$ drawn from $\pdist$ does not
  contain a sample in $C_i$ is at most $(1-\epsilon/m)^n$. Let $I$
  denote the index set of all those cubes with probability mass at
  least $\epsilon / m$ under $\pdist$. Applying the union bound over
  the cubes indexed by $I$, the probability that there exists a cube
  $C_i$ with $i \in I$ that does not contain any sample from $S$ is at
  most $m(1-\epsilon/m)^n \leq me^{-n\epsilon/m}$. Setting
  $n = \frac{m}{\epsilon}(\ln m + \log \frac{1}{\delta}) = O(
  \frac{1}{\epsilon} (\frac{D \sqrt{q}}{r})^q(q \log
  \frac{D\sqrt{q}}{r} + \log \frac{1}{\delta}))$ results in this upper
  bound being $\delta$. For the remainder of the proof, suppose that
  this high probability event occurs.

  Define $\mathcal{Y} = \bigcup_{i \in I} C_i$. Each cube from our
  cover not used in the construction of $\mathcal{Y}$ has probability
  mass at most $\epsilon/m$ and, since there are at most $m$ such
  cubes, their total mass is at most $\epsilon$. It follows that
  $\prob_{x \sim \pdist}(x \in \mathcal{Y}) \geq
  1-\epsilon$. Moreover, every point $x$ in $\mathcal{Y}$ belongs to
  one of the cubes, and every cube $C_i$ with $i \in I$ contains at
  least one sample point. Since the diameter of the cubes is $r$, it
  follows that the nearest sample to $x$ is at most $r$ away.

  Setting $r = D\epsilon$, we obtain the results for both $k$-median
  and $k$-means in the first half of Lemma~\ref{lem:alpha}.
\end{proof}

For the remainder of this section, suppose that $\pdist$ is a doubling
measure of dimension $d_0$ with support $\X$ and that the diameter of
$\X$ is $D > 0$.  First, we shall prove general lemmas about doubling
measures. They are quite standard, and are included here for the sake
of completion. See, for example,
\citep{krauthgamer2004navigating,kpotufe2010curse}.
\begin{lemma}
  \label{lem:problb}
  For any $x \in \X$ and any radius of the form $r = 2^{-T}D$ for some
  $T \in \mathbb{N}$, we have
  \[
  \pdist(B(x,r)) \geq (r/D)^{d_0}.
  \]
\end{lemma}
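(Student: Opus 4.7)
The plan is to iteratively apply the doubling condition, telescoping down from the full support to the ball of radius $r$. Since the support $\X$ has diameter $D$, for any $x \in \X$ the ball $B(x,D)$ contains all of $\X$, and therefore $\pdist(B(x,D)) = 1$. The doubling condition, for every $y \in \X$ and every $s > 0$, gives
\[
  \pdist(B(y,2s)) \leq 2^{d_0} \pdist(B(y,s)),
\]
which rearranges to the lower bound $\pdist(B(y,s)) \geq 2^{-d_0}\, \pdist(B(y,2s))$.

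I will apply this lower bound $T$ times along the dyadic chain of radii $D, D/2, D/4, \dots, D/2^T = r$, with $y = x$ throughout. Chaining gives
\[
  \pdist(B(x,r)) \;\geq\; 2^{-d_0}\,\pdist(B(x,2r)) \;\geq\; 2^{-2d_0}\,\pdist(B(x,4r)) \;\geq\; \cdots \;\geq\; 2^{-Td_0}\,\pdist(B(x,D)) \;=\; 2^{-Td_0}.
\]
Using $r = 2^{-T}D$, i.e.\ $2^{-T} = r/D$, we conclude $\pdist(B(x,r)) \geq (r/D)^{d_0}$, as claimed. The argument can equivalently be phrased as a straightforward induction on $T$, with base case $T=0$ handled by $\pdist(B(x,D))=1$ and inductive step supplied by the single-step doubling inequality.

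There is essentially no obstacle here: the lemma is a direct, iterated application of the defining property of a doubling measure, combined with the observation that the diameter bound forces $\pdist(B(x,D))=1$. The one mild point worth flagging is that the restriction to radii of the form $r = 2^{-T}D$ is exactly what makes the telescoping land on $(r/D)^{d_0}$ without any constant-factor slack; for arbitrary $r \in (0,D]$ one would round up to the next dyadic radius and pay an extra factor of at most $2^{d_0}$, which is immaterial for the sample-complexity bounds that use this estimate.
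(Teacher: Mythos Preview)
Your proof is correct and follows essentially the same approach as the paper: observe that $\pdist(B(x,D))=1$ because $\X$ has diameter $D$, then apply the doubling inequality $T$ times along the dyadic chain of radii to obtain $\pdist(B(x,r)) \geq 2^{-Td_0} = (r/D)^{d_0}$. Your additional remark about handling non-dyadic radii at the cost of a $2^{d_0}$ factor is a nice observation but not needed for the lemma as stated.
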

\begin{proof}
  Since $\X$ has diameter $D$, for any point $x \in \X$ we have that
  $\X \subset B(x,D)$, which implies that $\pdist(B(x,D)) =
  1$. Applying the doubling condition $T$ times gives
  $\pdist(B(x,r)) = \pdist(B(x,2^{-T}D)) \geq 2^{-Td_0} =
  (r/D)^{d_0}$.
\end{proof}

\begin{lemma}
  \label{lem:covering}
  For any radius of the form $r = 2^{-T}D$ for some
  $T \in \mathbb{N}$, there is a covering of $\X$ using balls of
  radius $r$ of size no more than $(2D/r)^{d_0}$.
\end{lemma}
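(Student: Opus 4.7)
The plan is to use the standard connection between packings and coverings together with the lower bound on ball masses from Lemma~\ref{lem:problb}. First I would take a maximal $r$-packing of $\X$, that is, a maximal collection of points $x_1, \dots, x_m \in \X$ that are pairwise at distance strictly greater than $r$. By maximality, every point $y \in \X$ must lie within distance $r$ of some $x_i$ (otherwise we could add $y$ to the packing), so the balls $B(x_1, r), \dots, B(x_m, r)$ automatically form a covering of $\X$. It therefore suffices to show $m \leq (2D/r)^{d_0}$.

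To bound $m$, I would look at the half-radius balls $B(x_i, r/2)$. Since the centers are pairwise at distance $> r$, the triangle inequality implies these balls are pairwise disjoint. Here is where Lemma~\ref{lem:problb} comes in: since $r = 2^{-T}D$, we have $r/2 = 2^{-(T+1)}D$, which is still a valid dyadic radius for the lemma. Applying it gives $\pdist(B(x_i, r/2)) \geq (r/(2D))^{d_0}$ for every $i$. Combining disjointness with the fact that $\pdist$ is a probability measure on $\X$ yields
\[
  1 \;\geq\; \pdist\!\Bigl(\,\bigcup_{i=1}^m B(x_i, r/2)\Bigr) \;=\; \sum_{i=1}^m \pdist(B(x_i, r/2)) \;\geq\; m\,(r/(2D))^{d_0},
\]
so $m \leq (2D/r)^{d_0}$, completing the proof.

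The argument is essentially routine once one recognizes the packing-covering duality, so I do not expect a serious obstacle. The only subtlety worth flagging is the dyadic constraint in Lemma~\ref{lem:problb}: the proof relies crucially on the fact that halving $r$ preserves the $2^{-T}D$ form, which is why the statement of the target lemma restricts $r$ to dyadic multiples of $D$. If one wanted to extend the lemma to arbitrary $r$, one would simply round $r$ down to the nearest dyadic value and absorb a constant factor, but for the stated form no such step is needed.
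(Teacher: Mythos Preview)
Your proposal is correct and essentially identical to the paper's proof: the paper also constructs a maximal $r$-separated set via a greedy procedure (add an uncovered point as a new center), observes that the half-radius balls around the centers are disjoint, and invokes Lemma~\ref{lem:problb} on $r/2$ to bound the number of centers by $(2D/r)^{d_0}$. Your explicit remark that $r/2 = 2^{-(T+1)}D$ is still dyadic, and hence falls within the hypotheses of Lemma~\ref{lem:problb}, is a detail the paper leaves implicit.
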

\begin{proof}
  Consider the following greedy procedure for covering $\X$ with balls
  of radius $r$: while there exists a point $x \in \X$ that is not
  covered by our current set of balls of radius $r$, add the ball
  $B(x,r)$ to the cover. Let $C$ denote the set of centers for the
  balls in our cover. When this procedure terminates, every point in
  $\X$ will be covered by some ball in the cover.

  We now show that this procedure terminates after adding at most
  $(2D/r)^{d_0}$ balls to the cover. By construction, no ball in our
  cover contains the center of any other, implying that the centers
  are at least distance $r$ from one another. Therefore, the
  collection of balls $B(x,r/2)$ for $x \in C$ are pairwise
  disjoint. Lemma~\ref{lem:problb} tells us that
  $\pdist(B(x,r/2)) \geq (r/2D)^{d_0}$, which gives that
  $1 \geq \pdist\biggl(\,\bigcup_{x \in C} B(x,r/2)\biggr) = \sum_{x
    \in C} \pdist(B(x,r/2)) \geq |C| (r/2D)^{d_0}$.  Rearranging the
  above inequality gives $|C| \leq (2D/r)^{d_0}$.
\end{proof}

The next lemma tells us that we need a sample of size
$O\bigl( (\frac{D}{r})^{d_0} (d_0 \log \frac{D}{r} +
\log\frac{1}{\delta})\bigr)$ in order to ensure that there is a
neighbor from the sample no more than $r$ away from any point in the
support with high probability. The second half of
Lemma~\ref{lemma:alpha} is an easy corollary.

\begin{lemma}
  \label{lem:hpcovering}
  For any $r > 0$ and any $\delta > 0$, if we draw an iid sample $S$
  of size $n =$
  $(\frac {2D} r)^{d_0}(d_0 \log(\frac {4D} r) + \log(\frac 1
  \delta))$, then with probability at least $1-\delta$ we have
  $\sup_{x \in \X} \dist{x , \NN_S(x)} \leq r$ and
  $\sup_{x \in \X} \dist{x, \NN_S(x)} \leq r^2$.
\end{lemma}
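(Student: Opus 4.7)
The plan is to apply the two preceding structural lemmas in the standard way: cover $\X$ with a small number of radius-$r/2$ balls (so that any two points inside the same ball are within distance $r$ of one another), use doubling to lower bound the $\pdist$-mass of each ball, and then run a coupon-collector style union bound to guarantee every ball is hit by at least one sample from $S$.

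Concretely, I would first invoke Lemma~\ref{lem:covering} to obtain a cover of $\X$ by $m \leq (4D/r)^{d_0}$ balls of radius $r/2$ (for arbitrary $r$ one rounds $r/2$ down to the nearest dyadic fraction of $D$ so that the lemma applies verbatim; this only changes the final constants by a bounded factor). Next, Lemma~\ref{lem:problb} gives that each such ball has $\pdist$-mass at least $(r/2D)^{d_0}$. The probability that a fixed ball receives no samples from an iid draw of size $n$ is at most $(1 - (r/2D)^{d_0})^n \leq \exp(-n(r/2D)^{d_0})$, and the union bound over all $m$ balls shows that the probability any ball is missed is at most $m \cdot \exp(-n(r/2D)^{d_0})$. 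Requiring this to be at most $\delta$ and solving for $n$ produces exactly the sample complexity $n = (2D/r)^{d_0}(d_0 \log(4D/r) + \log(1/\delta))$ claimed in the lemma.

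Finally, I would condition on the high-probability event that every cover ball contains at least one sample from $S$. For any $x \in \X$, let $B(c_i, r/2)$ be a cover ball containing $x$; by construction it also contains some $x' \in S$, and the triangle inequality gives $\dist{x, x'} \leq r$, so $\dist{x, \NN_S(x)} \leq \dist{x, x'} \leq r$. Squaring both sides immediately yields the $k$-means bound $\dist{x, \NN_S(x)}^2 \leq r^2$. This argument is entirely standard covering-plus-union-bound; there is no genuinely hard step, only the mild bookkeeping of the dyadic rounding needed to invoke Lemmas~\ref{lem:problb} and~\ref{lem:covering} as stated, which the preceding lemmas have already absorbed into their constants.
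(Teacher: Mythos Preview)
Your proposal is correct and matches the paper's proof essentially line for line: cover $\X$ by $(4D/r)^{d_0}$ balls of radius $r/2$ via Lemma~\ref{lem:covering}, lower bound each ball's mass by $(r/2D)^{d_0}$ via Lemma~\ref{lem:problb}, apply the union bound, and solve for $n$. The paper glosses over the dyadic-rounding issue you flag, and it leaves the invocation of Lemma~\ref{lem:problb} implicit, but otherwise the arguments are identical.
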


\begin{proof}
  By Lemma~\ref{lem:covering} there is a covering of $\X$ with balls
  of radius $r/2$ of size $(4D/r)^{d_0}$. For each ball $B$ in the
  cover, the probability that no sample point lands in $B$ is
  $(1-\pdist(B))^n \leq (1-(r/2D)^{d_0})^n \leq
  \exp(-n(r/2D)^{d_0})$. Let $E$ be the event that there exists at
  least one ball $B$ in our cover that does not contain one of the $n$
  sample points.  Applying the union bound over the balls in the
  cover, we have that
  $\prob(E) \leq (4D/r)^{d_0}\exp(-n(r/2D)^{d_0})$.  Setting
  $n = (2D/r)^{d_0}(d_0 \log(4D/r) + \log(1/\delta)) = O\bigl(
  (\frac{D}{r})^{d_0} (d_0 \log \frac{D}{r} + \log
  \frac{1}{\delta})\bigr)$, we have that $\prob(E) < \delta$. When the
  bad event $E$ does not occur, every ball in our covering contains at
  least one sample point. Since every point $x \in \X$ belongs to at
  least one ball in our covering and each ball has diameter $r$, we
  have $\sup_{x \in \X} \dist{x , \NN_S(x)} \leq r$.
\end{proof}

\paragraph{Bounding $\beta(S)$}
In this section, we bound the bias term $\beta$ under the condition
that the optimal clustering satisfies the following probabilistic
Lipschitz condition:

\begin{definition}[Probabilistic Lipschitzness] Let $(\X, d)$ be some
  metric space of diameter $D$ and let
  $\phi:[0,1]\rightarrow[0,1]$. $f:\X\rightarrow [k]$ is
  $\phi$-Lipschitz with respect to some distribution $\pdist$ over
  $\X$, if $\forall\lambda \in [0, 1]$:
  $ \prob_{x\sim\pdist} \big[ \exists y: \ind{f(x) \neq f(y)} \text{
    and } {\dist{x,y}} \leq {\lambda D} \big] \leq \phi(\lambda) $
\end{definition}

\lemmaBeta*
\begin{proof}
  Suppose
  $\prob_{x\sim\pdist}(f^*(\NN_S(x)) \neq f^*(x)) \leq \epsilon$.

  Given a clustering $f$ of the entire space $\X$, define the
  restriction $f_S:S \rightarrow {k \choose p}$ to be $f_S(x) = f(x)$
  for $x \in S$. First, we show that the cluster sizes of $f^*_S$ can
  be bounded. Recall that the weights of cluster $i$ in a clustering
  $f$ of $\X$ and the extension of a clustering $g$ of the sample $S$
  are $\prob_{x\sim \pdist}(i \in f(x))$ and
  $\prob_{x\sim \pdist}(i \in \bar g(x))$, respectively. By the
  triangle inequality,
  $| \prob_{x\sim\pdist}( i \in \bar{f^*_S}(x)) -
  \prob_{x\sim\pdist}(i \in f^*(x)) | \leq
  \prob_{x\sim\pdist}(\bar{f^*_S}(x) \neq f^*(x) ) =
  \prob_{x\sim\pdist}(f^*(\NN_S(x)) \neq f^*(x))$ and this is at most
  $\epsilon$, by our assumption.

  Consider $\beta_1(S)$ for $k$-median. Since
  $f^* \in F(\ell + 2\epsilon, L - 2\epsilon)$, we have that
  $f^*_S \in G_n(\ell - \epsilon, L + \epsilon)$ and so
  \begin{align*}
    \beta_1& (S)
           \leq \Qmed(\bar{f^*_S}, c^*) - \Qmed(f^*, c^*) \\
           &= \expect_{x\sim\pdist} \bigg[ \sum_{i \in f^*(\NN_S(x))} \norm{x-c(i)} - \sum_{i' \in f^*(x)} \norm{x - c(i')} \bigg] \\
           &= \expect_{x\sim\pdist} \bigg[ \sum_{i \in f^*(\NN_S(x))} \norm{x-c(i)} - \sum_{i' \in f^*(x)} \norm{x - c(i')} \, \bigg|\, f^*(\NN_S(x)) = f^*(x) \bigg] \prob_{x \sim \pdist}(f^*(\NN_S(x)) = f^*(x))\\
           &+ \expect_{x\sim\pdist} \bigg[ \sum_{i \in f^*(\NN_S(x))} \norm{x-c(i)} - \sum_{i' \in f^*(x)} \norm{x - c(i')} \,\bigg|\, f^*(\NN_S(x)) \neq f^*(x) \bigg] \prob_{x \sim \pdist}(f^*(\NN_S(x)) \neq f^*(x))\\
  \end{align*}
  The first conditional expectation is zero, since the assignments for
  $x$ and $x'$ are identical. The second conditional expectation is at
  most $pD$, since the cost for any given point can change by at most
  $pD$ when we assign it to new centers. It follows that
  $\beta_1(S) \leq pD\epsilon$. For $k$-means, the second expectation
  is at most $pD^2$.
  
  It remains to show that show that
  $\prob_{x\sim\pdist}(f^*(\NN_S(x)) \neq f^*(x)) \leq \epsilon$ for
  big enough $n$.  Lemma \ref{lem:NNC} lists the conditions when this
  is true.
\end{proof}

We require the following lemma for nearest neighbor classification,
similar in spirit to that of Urner et al \citep{urner2013plal}. Note
that since $f$ is a set of $p$ elements, this lemma holds for
multi-label nearest neighbor classification.

\begin{lemma}
  \label{lem:NNC}
  Let $\pdist$ be a measure on $\reals^q$ with support $\X$ of
  diameter $D$. Let the labeling function, $f$ be $\phi$-PL.  For any
  accuracy parameter $\epsilon$ and confidence parameter $\delta$, if
  we see a sample $S$ of size at least
  \begin{itemize}
  \item
    $\frac{2}{\epsilon} \Big\lceil
    \frac{\sqrt{q}}{\phi\inv(\epsilon/2))}\Big\rceil^q \Big(q \log
    \lceil \frac{\sqrt{q}}{\phi\inv(\epsilon/2)}\rceil + \log
    \frac{1}{\delta}\Big)$ in the general case
  \item
    $\Big(\frac{2}{\phi\inv(\epsilon)}\Big)^{d_0} \Big(d_0 \log\frac
    {4} {\phi\inv(\epsilon)} + \log \frac{1}{\delta}\Big)$ when
    $\pdist$ is a doubling measure of dimension $d_0$
  \end{itemize}
  then nearest neighbor classification generalizes well. That is, with
  probability at least $1-\delta$ over the draw of $S$, the error on a
  randomly drawn test point,
  $\prob_{x\sim\pdist}(f(x) \neq f(\NN_S(x))) \leq \epsilon$.
\end{lemma}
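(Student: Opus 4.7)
The plan is to bound the nearest-neighbor error by two contributions: the probability that the nearest sample in $S$ lies far from the test point, and the probability that the test point sits in a region where differently-labeled points are nearby (a ``risky'' region controlled by $\phi$-PL). I will pick a scale $r$ that makes these two contributions sum to at most $\epsilon$, and then invoke the already-established covering lemmas to turn the bound on $r$ into the stated sample complexities.

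First I would choose the scale at which to apply probabilistic Lipschitzness. In the general case, set $\lambda = \phi\inv(\epsilon/2)$ and $r = \lambda D$; by the PL condition the $\mu$-mass of test points $x$ for which there exists some $y \in \X$ with $\dist{x,y} \leq r$ and $f(y) \neq f(x)$ is at most $\phi(\lambda) = \epsilon/2$. Call this risky set $R$. In the doubling-measure case, set $\lambda = \phi\inv(\epsilon)$ instead, so that the analogous risky set has mass at most $\epsilon$.

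Next I would apply the matching covering lemma to the sample $S$ at radius $r$. In the general case, Lemma~\ref{lem:general_covering} with coverage parameter $\epsilon/2$ and radius $r = \phi\inv(\epsilon/2)\,D$ gives, with probability $\geq 1-\delta$, a set $\mathcal{Y} \subseteq \X$ of $\mu$-mass at least $1-\epsilon/2$ on which every point satisfies $\dist{x, \NN_S(x)} \leq r$; substituting $D\sqrt{q}/r = \sqrt{q}/\phi\inv(\epsilon/2)$ into the sample-size expression of that lemma reproduces the first bullet. In the doubling case, Lemma~\ref{lem:hpcovering} with $r = \phi\inv(\epsilon)\,D$ yields the stronger conclusion that $\dist{x, \NN_S(x)} \leq r$ uniformly for every $x \in \X$, and the corresponding sample size matches the second bullet after the same substitution.

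To conclude, I would combine the two pieces: a misclassification $f(x) \neq f(\NN_S(x))$ can occur only if either (i) $x \in R$, or (ii) $\dist{x, \NN_S(x)} > r$, since otherwise $\NN_S(x)$ is a point within distance $r$ of $x$ and hence shares its label by the PL guarantee. A union bound gives total error mass at most $\epsilon/2 + \epsilon/2 = \epsilon$ in the general case, while in the doubling case (ii) contributes zero mass and (i) contributes at most $\epsilon$. The main obstacle is purely bookkeeping: one must verify that plugging $r/D = \phi\inv(\epsilon/2)$ (resp.\ $\phi\inv(\epsilon)$) into the covering lemmas reproduces the stated sample complexities exactly; the substantive work is already encapsulated in those lemmas.
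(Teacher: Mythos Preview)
Your proposal is correct and follows essentially the same approach as the paper: decompose the error into a PL-controlled ``risky'' term and a covering-controlled ``nearest neighbor is far'' term, then invoke Lemmas~\ref{lem:general_covering} and~\ref{lem:hpcovering} at scale $r = \phi^{-1}(\epsilon/2)D$ (general) and $r = \phi^{-1}(\epsilon)D$ (doubling). Your version is slightly more streamlined than the paper's, which phrases the same decomposition via the explicit covering balls rather than directly through the event $\dist{x,\NN_S(x)} > r$, but the substance and the sample-size bookkeeping are identical.
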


\begin{proof}
  Let $\lambda = \phi\inv(\epsilon)$.  We know that most of $\X$ can
  be covered using hypercubes in the general case, as in
  Lemma~\ref{lem:general_covering} or entirely covered using balls in
  the case when $\pdist$ is a doubling measure, as in
  Lemma~\ref{lem:covering}, both of diameter $\lambda D$.  In case we
  have cubes in the cover, we shall use a ball of the same diameter
  instead. This does not change the sample complexity, since a cube is
  completely contained in a ball of the same diameter.

  Formally, let $\mathcal{C}$ be the covering obtained from
  Lemma~\ref{lem:general_covering} or Lemma~\ref{lem:covering},
  depending on whether or not the measure is a doubling measure.
  Define $\mathcal{B}(x)$ to be the set of all the balls from
  $\mathcal{C}$ that contain the point $x$.  A point will only be
  labeled wrongly if it falls into a ball with no point from $S$, or a
  ball that contains points of other labels.  Hence,
  \begin{align*}
    \prob_{x\sim\pdist} (f(\NN_S(x)) \neq f(x))
    \leq \prob_{x\sim\pdist} (\forall C\in \mathcal{B}(x): S\cap C = \emptyset) \\
        + \prob_{x\sim\pdist} (\exists y \in \bigcup_{C \in \mathcal{B}(x)} C: f(y) \neq f(x))
  \end{align*}
  Since each ball is of diameter $\lambda D$, the second term is at
  most
  $\prob_{x\sim\pdist} (\exists y \in B(x, \lambda D): f(y) \neq
  f(x))$.  By the PL assumption, this is at most
  $\phi(\lambda) = \epsilon$, independent of the covering used.

  For the first term, our analysis will depend on which covering we
  use:
  \begin{itemize}

  \item From Lemma~\ref{lem:general_covering}, we know that all but
    $1-\epsilon$ fraction of the space is covered by the covering
    $\mathcal{C}$. When the sample is of size
    $O(\frac 1 \epsilon(\frac{\sqrt{q}}{\lambda})^q(q \log
    (\frac{\sqrt{q}}{\lambda}) + \log \frac{1}{\delta}))$, each
    $C \in \mathcal{C}$ sees a sample point. For a sample this large,
    the first term is $\leq\epsilon$.  Substituting $\epsilon$ with
    $\epsilon/2$ completes this part of the proof.

  \item When $\pdist$ is a doubling measure, we can do better. If
    every ball of the cover sees a sample point, the first term is
    necessarily zero. From the proof of Lemma~\ref{lem:hpcovering}, we
    know that if we draw a sample of size
    $n = (2/\lambda)^{d_0}(d_0 \log(4/\lambda) + \log(1/\delta))$
    samples, then every ball of the cover sees a sample point with
    probability at least $1-\delta$ over the draw of $S$. This
    completes the proof.
  \end{itemize}
\end{proof}

\section{Details for the Experiments} \label{app:experiments}
\paragraph{Experimental System Setup} We now describe the distributed
implementation used for the experiments. We start one worker process
on each of the available processing cores. First, a single worker
subsamples the data, clusters the subsample into $k$ clusters, and
then builds a random partition tree for fast nearest neighbor
lookup. The subsample, clustering, and random partition tree describe
a dispatching rule, which is then copied to every worker. Training the
system has two steps: first, the training data is dispatched to the
appropriate workers, and then each worker learns a model for the
clusters they are responsible for. During the deployment phase, the
workers load the training data in parallel and send each example to
the appropriate workers (as dictated by the dispatch rule). To
minimize network overhead examples are only sent over the network in
batches of 10,000. During the training phase, each worker calls either
Liblinear or an L-BFGS solver to learn a one-vs-all linear classifier
for each of their clusters. For testing, the testing data is loaded in
parallel by the workers and the appropriate workers are queried for
predictions.

\paragraph{Details of LSH-baseline} The LSH family used by our LSH
baseline is the concatenation of $t$ random projections followed by
binning. Each random projection is constructed by sampling a direction
$u$ from the standard Gaussian distribution in $\reals^d$. An example
$x$ is then mapped to the index $\lfloor u^{\top}x/w \rfloor$, where
$w$ is a scale parameter. Two points $x$ and $y$ map to the same bin
if they agree under all $t$ hash functions. In our experiments, the
parameter $t$ was set to 10 and $w$ is chosen so that hashing the
training data results in approximately $2k$ non-empty bins. We tried
several other values of $t$ and $w$ but performance did not change
significantly.

\paragraph{Details for Synthetic Data Distribution} The synthetic
distribution used in Section~\ref{sec:expt} is an equal-weight mixture
of 200 Gaussians in $\reals^{20}$ with means chosen uniformly at
random from the unit cube $[0,1]^{20}$. Each Gaussian is associated
with one of 30 class labels. To decide the class labels, we construct
a hierarchical clustering of the Gaussian centers using complete
linkage and assign labels using a simple recursive procedure: each
internal node of the subtree is associated with a range of permissible
labels. The permissible labels assigned to the left and right children
of the parent node are a partition of the parent's assigned labels,
and the number of labels they receive is proportional to the number of
leaves in the subtree. If a subtree has only one permissible label,
both children are given that label. Finally, each leave chooses a
label uniformly at random from the set of permissible labels it was
assigned (in many cases, there will only be one). This labeling
strategy results in nearby Gaussians having similar labels.

\paragraph{Inception Network} The specific architecture for the neural
network we used when constructing the feature representations for the
CIFAR-10 dataset can be found here:
\url{https://github.com/dmlc/mxnet/blob/master/example\\/notebooks/cifar10-recipe.ipynb}.

\paragraph{Hardware} The experiments for MNIST-8M, CIFAR10, and the
CTR datasets were performed on a cluster of 15 machines, each with 8
Intel(R) Xeon(R) cores of clock rate 2.40 GHz and 32GB shared memory
per machine. The experiments for the large synthetic experiment were
performed on AWS. We used clusters of 8, 16, 32, and 64 m3.large EC2
instances, each with an Intel (R) Xeon E5-2670 v2 processors and 7.5
GB memory per machine.

\paragraph{Clustering algorithm selection} In
Section~\ref{sec:stability} we showed that $k$-means++ will find
high-quality balanced clusterings of the data whenever a natural
stability condition is satisfied. Since the $k$-means++ algorithm is
simple and scalable, it is a good candidate for implementation in real
systems. In this section we present an empirical study of the quality
of the clusterings produced by $k$-means++ for the datasets used in
our experiments. For each of the datasets used in our learning
experiments, we find that the clustering obtained by $k$-means++ is
very competitive with the $LP$ rounding techniques. We also include a
synthetic dataset designed specifically so that $k$-means++ with
balancing heuristics will not perform as well as the LP-rounding
algorithms.

We compare the clustering algorithms on two metrics: (1) the $k$-means
objective value of the resulting clustering and (2) the mean
per-cluster class distribution entropy. Since the LP rounding
algorithms may violate the replication by a factor of 2, we use an
averaged version of the $k$-means objective
\[
Q(f, c) = \sum_{i = 1}^n \frac{1}{|f(x_i)|} \sum_{j \in f(x_i)} \norm{x - c_j}^2,
\]
which provides a fair comparison when $|f(x_i)|$ is not exactly $p$
for all points. The second metric is the mean per-cluster class
distribution entropy which measures how well the clusters are
capturing information about the class labels. Each cluster has an
empirical distribution over the class labels and this distribution
will have low entropy when the cluster contains mostly examples from a
small number of classes. Therefore, when the average
class-distribution entropy per cluster is small, we expect the
learning problems for each cluster to be simpler than the global
learning problem, which should lead to improved accuracy. Formally,
given a dataset $(x_1, y_1), \dots, (x_n, y_n)$ with
$y_i \in \{1, \dots, M\}$ and a clustering $(f,c)$, we compute
\[
H(f,c) = -\frac{1}{k} \sum_{j=1}^k \sum_{y=1}^M p_{j,y} \log_2(p_{j,y}),
\]
where $p_{j,y}$ is the fraction of the points in cluster $j$ that
belong to class $y$.

\paragraph{Results} We run the $k$-means++ algorithm with balancing
heuristics described in Section~\ref{sec:expt}, the LP-rounding
algorithm for the $k$-means objective, and the LP-rounding algorithm
for the $k$-median objective. For each dataset, we randomly subsample
700 points and run the algorithm for values of $k$ in
$\{8, 16, 32, 64, 128\}$ with $p=2$ and compute the above
metrics. This is averaged over 5 runs. The $k$-means objective values
are shown in Figure~\ref{fig:kmeansobjective} and the mean per-cluster
class entropies are shown in Figure~\ref{fig:classentropy}.

For every dataset, the $k$-means++ algorithm finds a clustering of the
data that has better $k$-means objective value than the LP-rounding
algorithms, and for all but the large values of $k$, the per-cluster
class entropies are also smaller for $k$-means++.  It is interesting
to note that the LP-rounding algorithm for $k$-median achieves
\emph{better} $k$-means objective that the LP-rounding algorithm for
$k$-means!  This might be explained by the smaller approximation
factor for $k$-median.  The balancing heuristics for $k$-means++ never
resulted in the algorithm outputting more than $k$ clusters (though
for $k=128$, it output on average 16 too few clusters). Finally, the
LP-rounding algorithms assigned the majority of points to only 1
center (which is allowed under our bi-criteria analysis). This reduces
the cluster sizes, which explains why the per-cluster class entropy is
lower for the LP rounding algorithms when $k$ is large. These results
justify the use of $k$-means++ in our distributed learning
experiments.

\begin{figure*}
  \centering
  \subfigure[][{\scriptsize MNIST-8M}]{\includegraphics[width=0.40\textwidth]{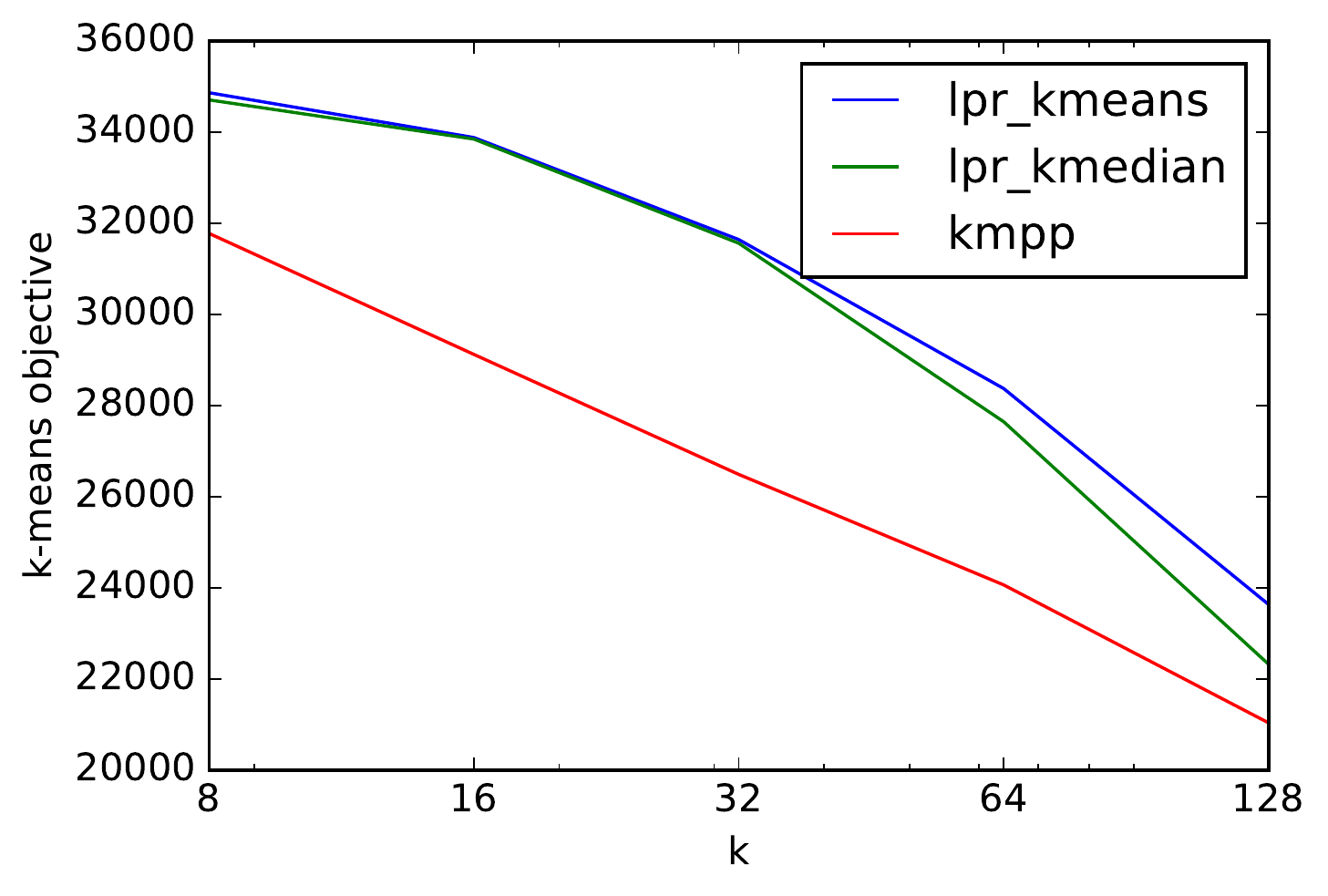}}%
  \subfigure[{\scriptsize CIFAR-10 Early Features}]{\includegraphics[width=0.40\textwidth]{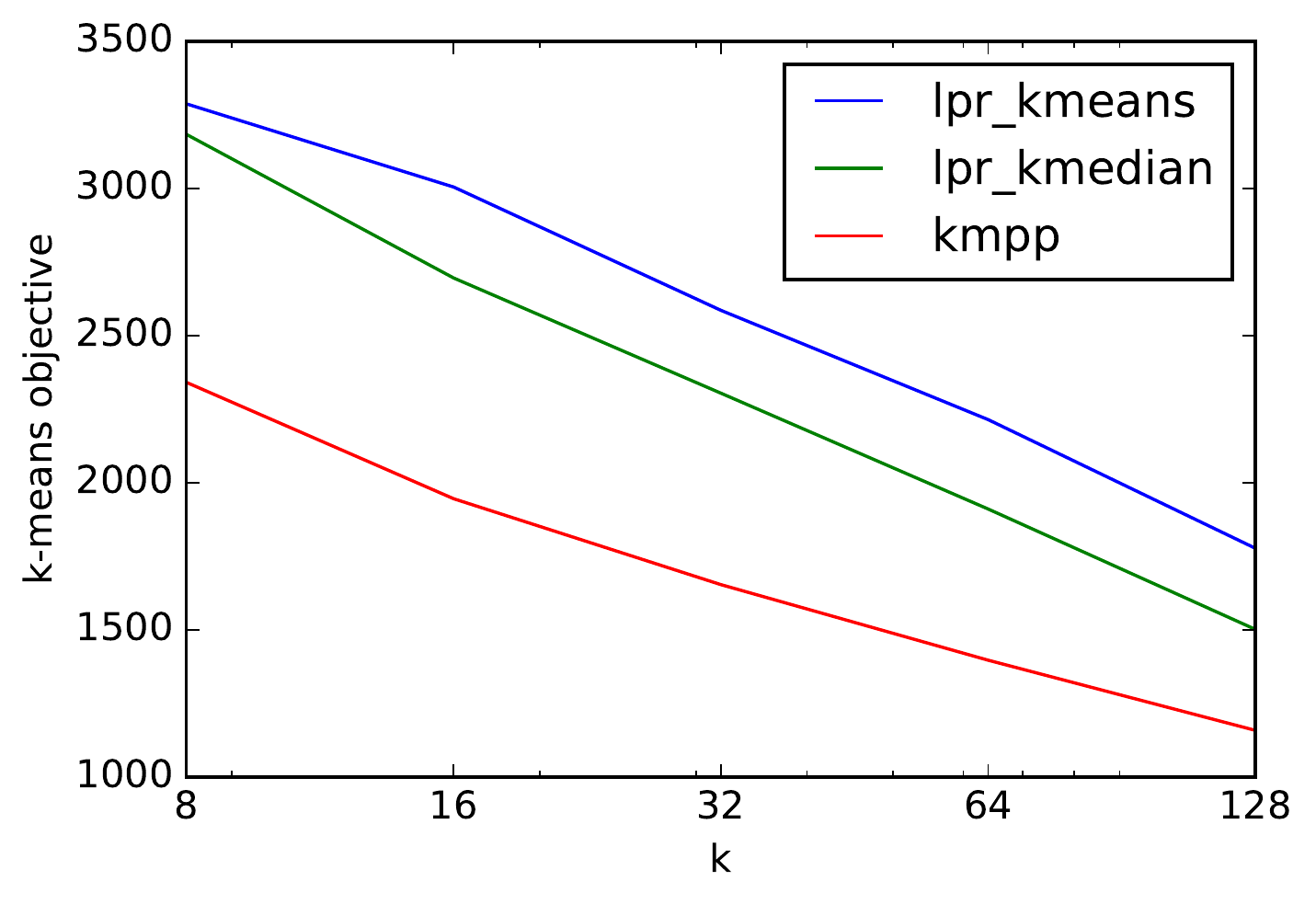}}\\%
  \subfigure[{\scriptsize CIFAR-10 Late Features}]{\includegraphics[width=0.40\textwidth]{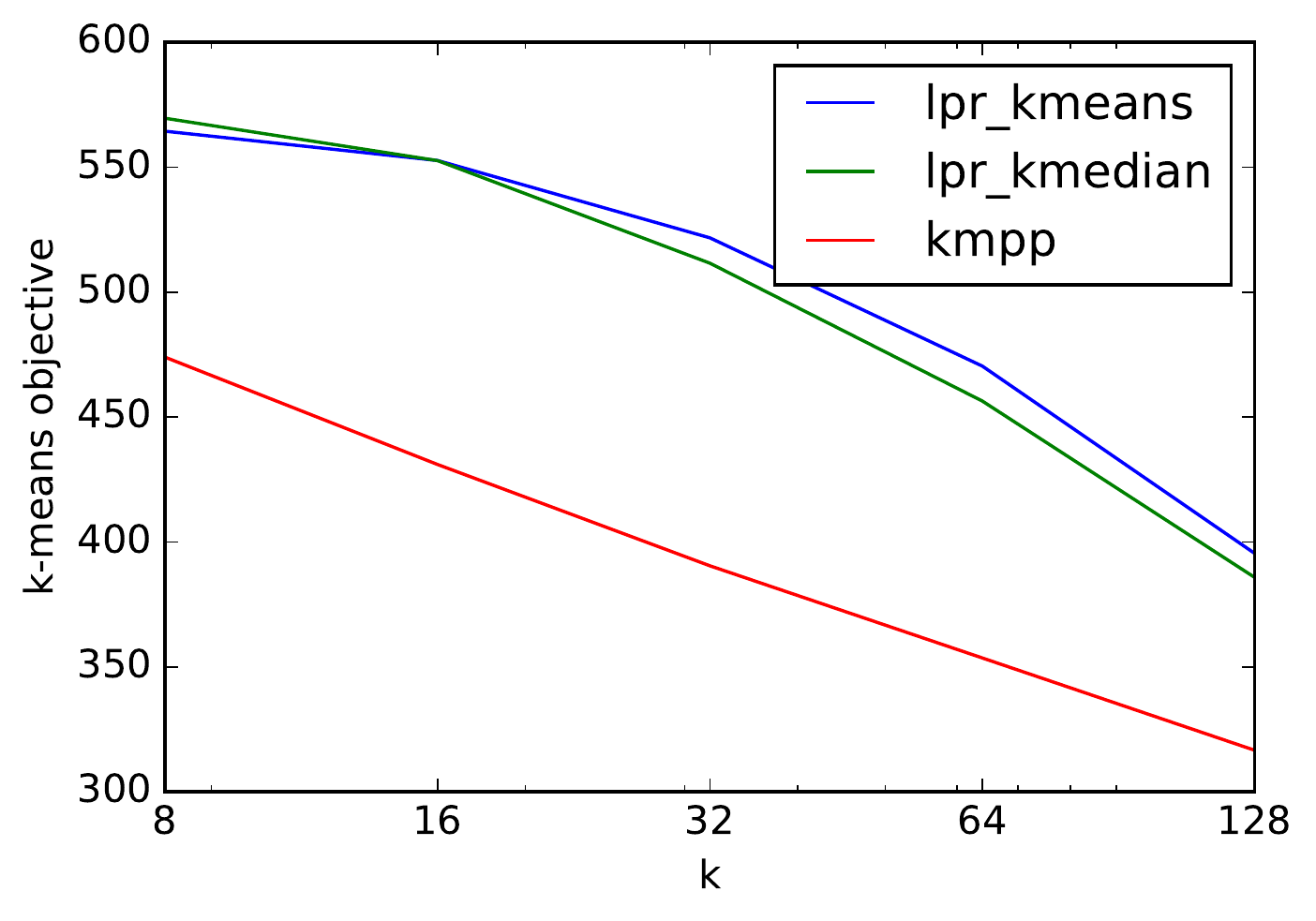}}%
  \subfigure[][{\scriptsize Synthetic Dataset}]{\includegraphics[width=0.40\textwidth]{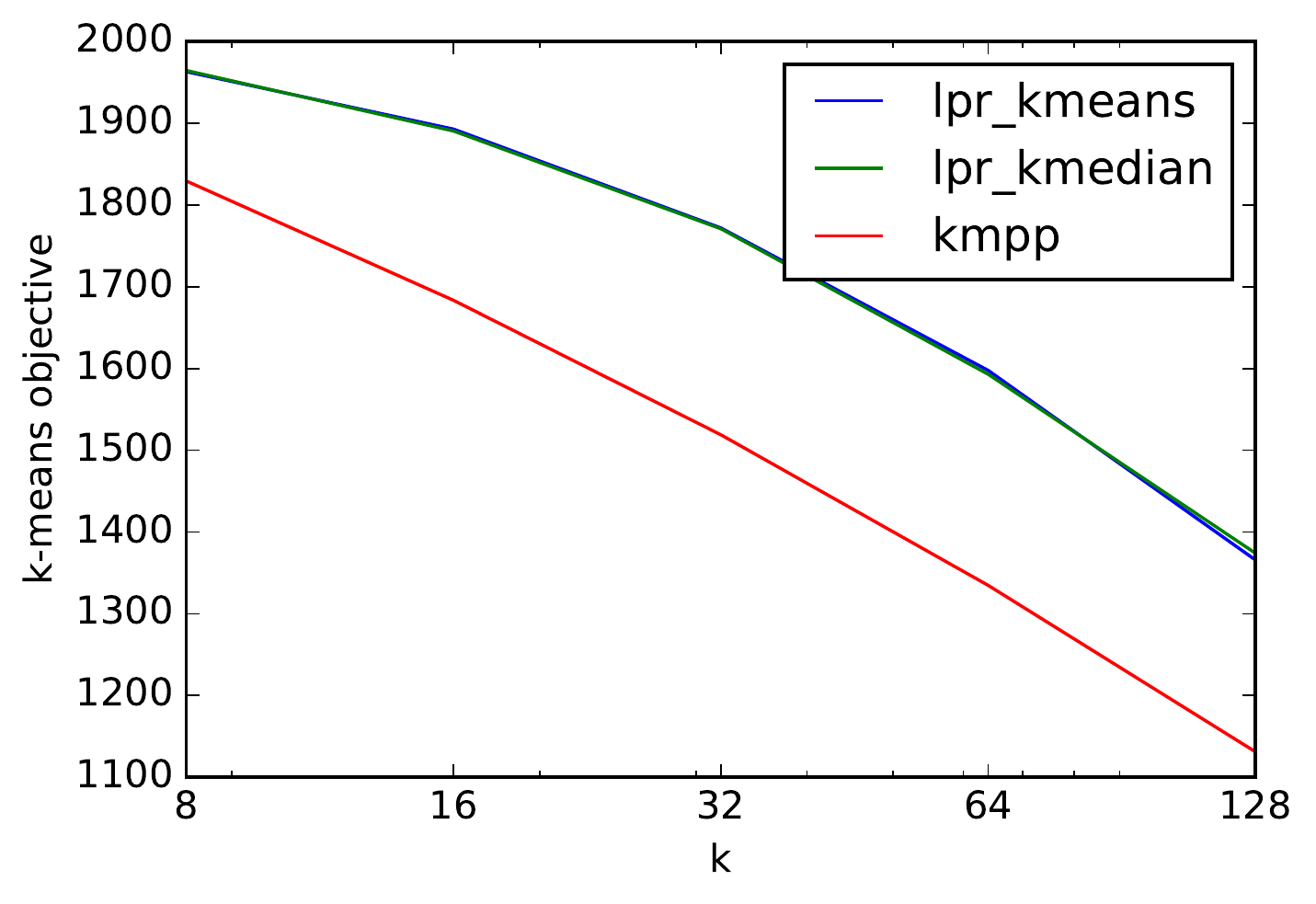}}%
\caption{Comparison of the $k$-means objective value.}
\label{fig:kmeansobjective}
\end{figure*}

\begin{figure*}
  \centering
  \subfigure[][{\scriptsize MNIST-8M}]{\includegraphics[width=0.40\textwidth]{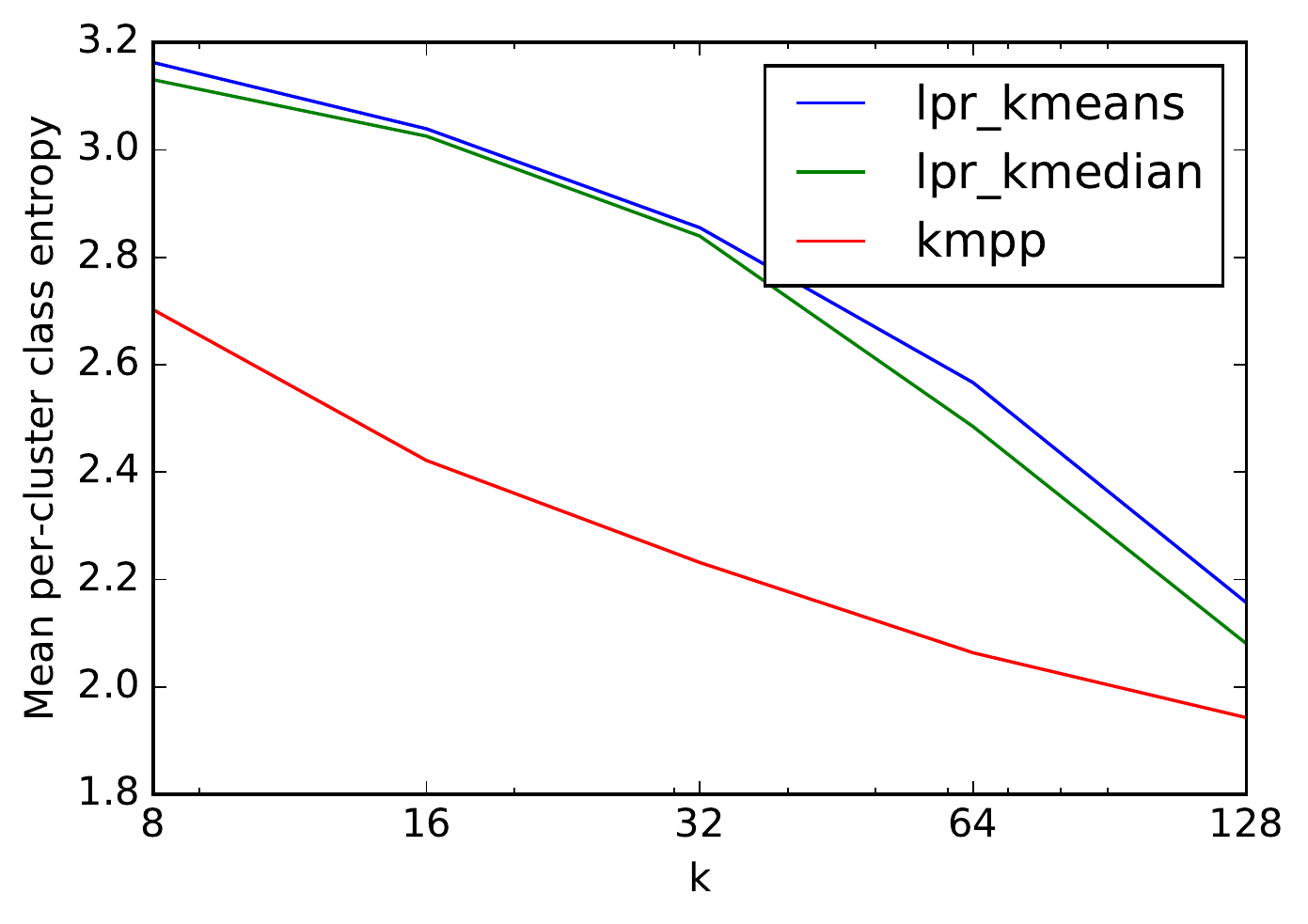}}%
  \subfigure[{\scriptsize CIFAR-10 Early Features}]{\includegraphics[width=0.40\textwidth]{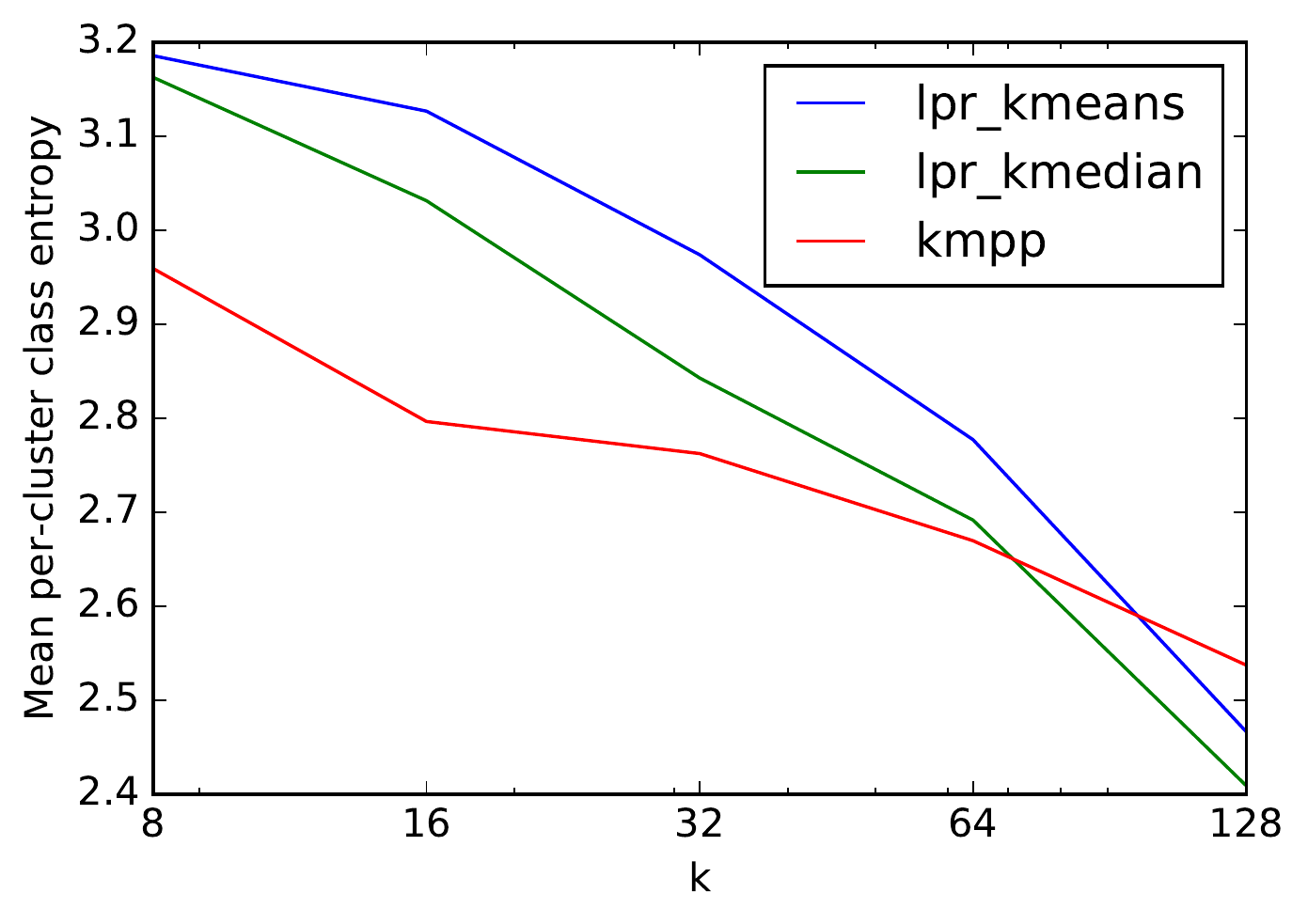}}\\%
  \subfigure[{\scriptsize CIFAR-10 Late Features}]{\includegraphics[width=0.40\textwidth]{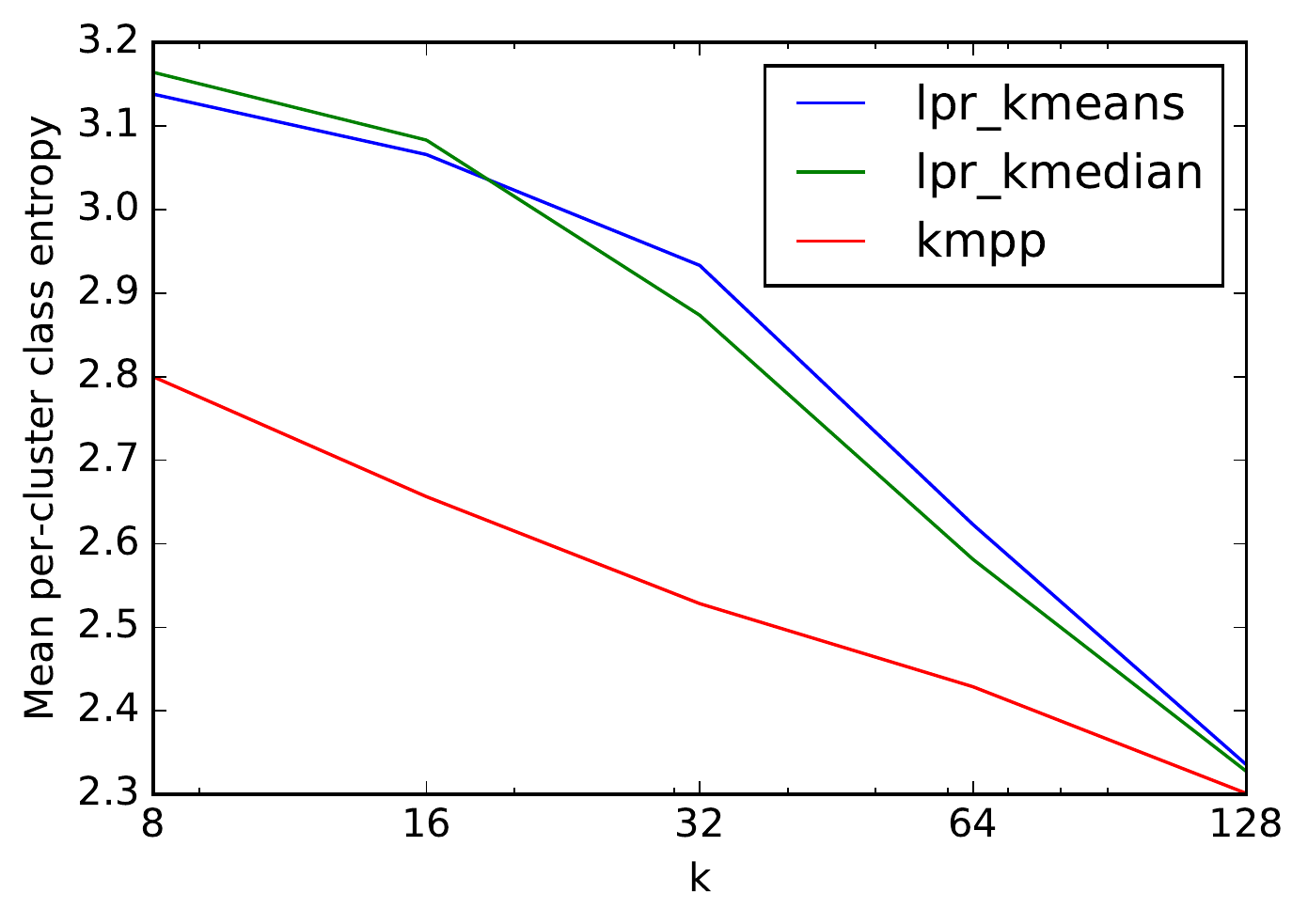}}%
  \subfigure[][{\scriptsize Synthetic Dataset}]{\includegraphics[width=0.40\textwidth]{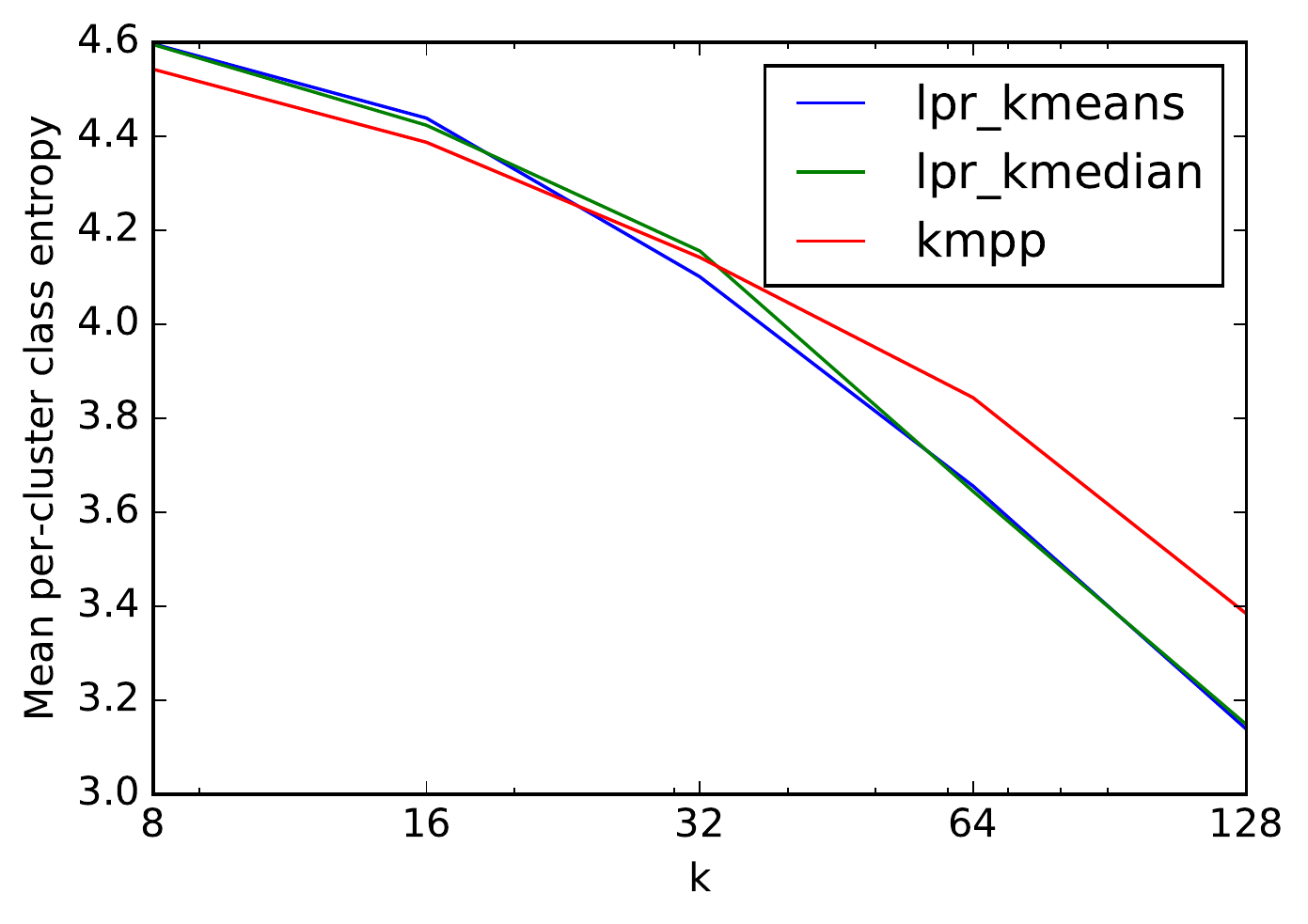}}%
\caption{Comparison of the mean per-cluster class distribution entropy.}
\label{fig:classentropy}
\end{figure*}

\paragraph{Additional Synthetic Distribution for Bounded Partition
  Trees} In this section we present an additional synthetic
distribution for which our algorithm significantly outperforms the
balanced partition tree. The data distribution is uniform on the
100-dimensional rectangle
$[0,10] \times [0,10] \times [0,1] \times \dots \times [0,1]$, where
the first two dimensions have side length 10 and the rest have side
length 1. The class of an example depends only on the first 2
coordinates, which are divided in to a regular $4 \times 4$ grid with
one class for each grid cell, giving a total of 16
classes. Figure~\ref{fig:bptsample} shows a sample from this
distribution projected onto the first two dimensions. We use either
balanced partition trees or our algorithm using $k$-means++ to
partition the data, and then we learn a linear one-vs-all SVM on each
subset. If a subset is small enough to only intersect with one or two
grid cells, then the learning problem is easy. If the subset
intersects with many grid cells, there is not usually a low-error
one-vs-all linear classifier.

\begin{figure}
  \centering
  \includegraphics[width=0.4\columnwidth]{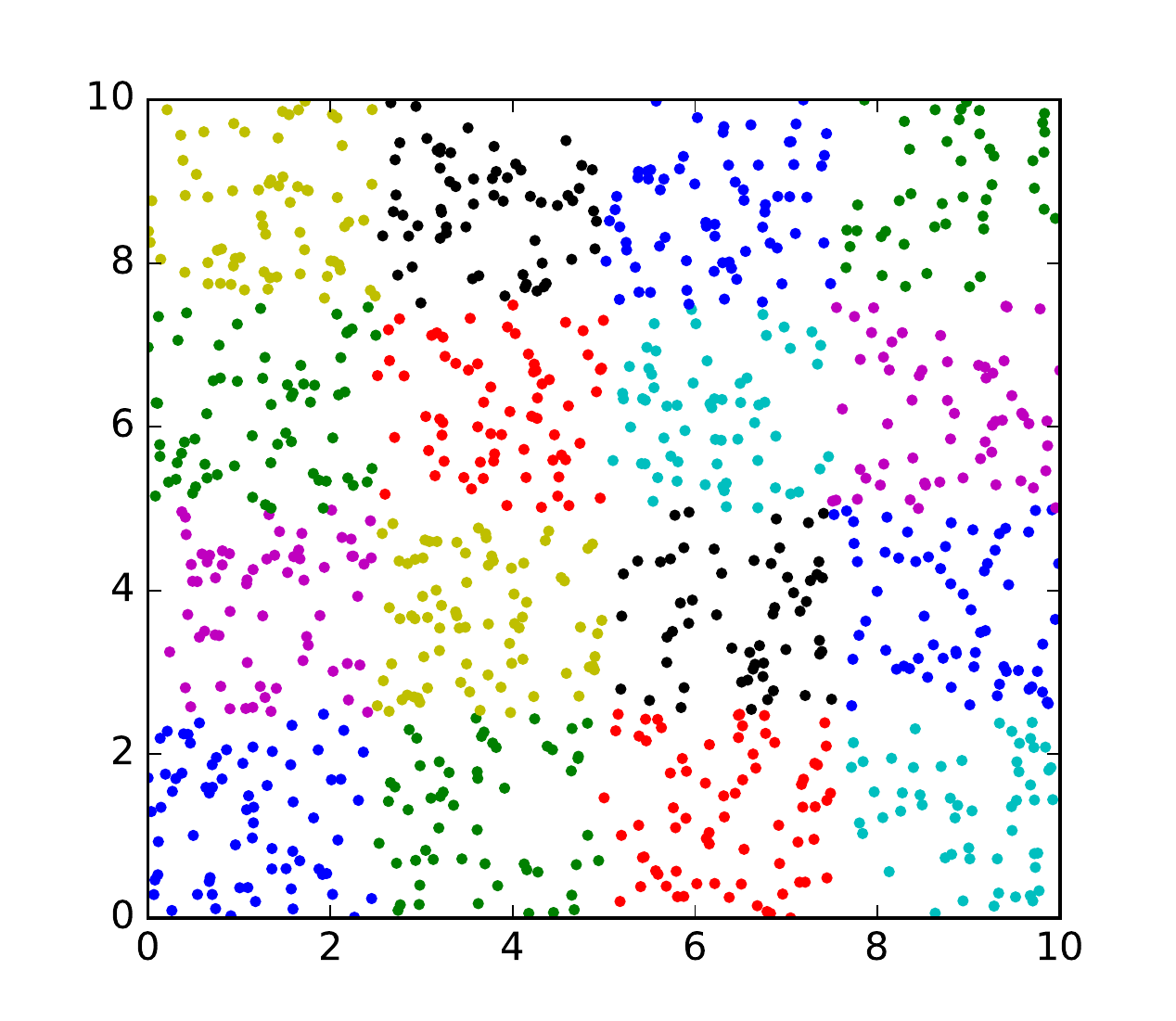}  
  \caption{Scatter plot of the data after projecting onto the first
    two coordinates (note: some colors are reused)}
  \label{fig:bptsample}
\end{figure}

Intuitively, balanced partition trees fail on this dataset because in
order for them to produce a good partitioning, they need to repeatedly
split on one of the first two dimensions. Any time the balanced
partition tree splits on another dimension, the two resulting learning
problems are identical but with half the data. On the other hand,
clustering-based approaches will naturally divide the data into small
groups, which leads to easier learning problems. The accuracies for
the balanced partition trees and $k$-means++ are shown in
Figure~\ref{fig:badbpt}. Our method is run with parameters $p = 1$,
$\ell = 1/(2k)$, and $L = 2/k$.

\begin{figure}
  \centering
  \includegraphics[width=0.3\columnwidth]{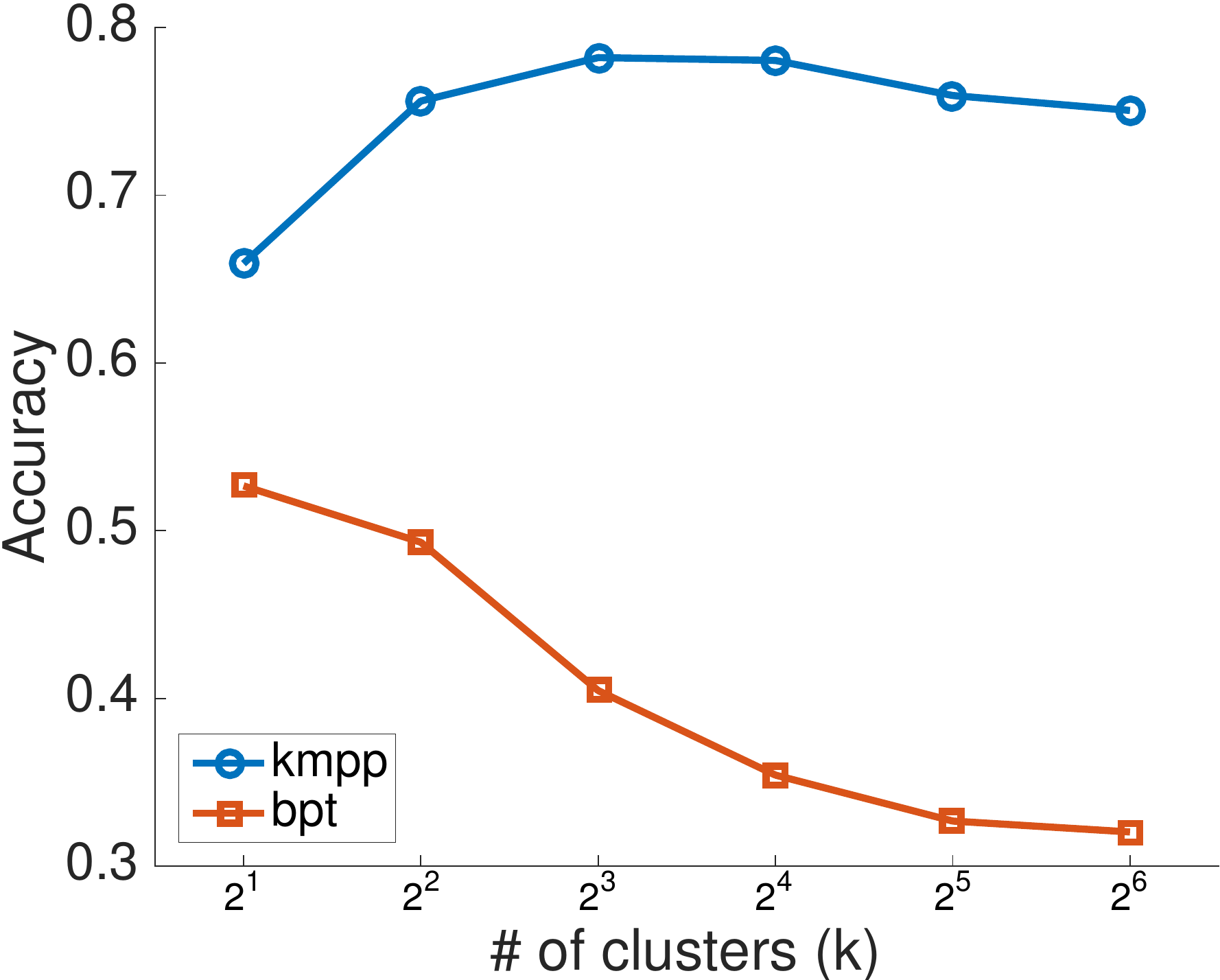}
  \caption{Accuracy of learning using balanced partition trees and
    $k$-means++}
  \label{fig:badbpt}
\end{figure}

\section{Comparison of Clustering Algorithms}\label{app:comparison}
In this section, we empirically and theoretically compare the LP
rounding and $k$-means++ algorithms on a data distribution designed
specifically to show that in some cases, the LP rounding algorithm
gives higher performance.

The synthetic distribution is a mixture of two Gaussians in
$\reals^2$, one centered at $(0,0)$ and the other centered at
$(10,0)$. We set the balancing constraints to be $\ell = 1/10$ and
$L = 1$ so that no cluster can contain fewer than 10\% of the
data. The mixing coefficient for the Gaussian at $(10,0)$ is set to
$0.8 \ell = 0.08$, so in a sample this Gaussian will not contain
enough data points to form a cluster on its own. In an optimal
$2$-clustering of this data with the given constraints, cluster
centered at $(10,0)$ will steal some additional points from the
Gaussian centered at $(0,0)$. Running the $k$-means++ algorithm,
however, will produce a clustering that does not satisfy the capacity
constraints, and the merging heuristic described in
Section~\ref{sec:expt} will simply merge the points into one
cluster. The following labeling function is designed so that there is
no globally accurate one-vs-all linear classifier, but for which there
is an accurate classifier for each cluster in the optimal clustering.
\[
  f(x) = \begin{cases}
    -1 & \hbox{if $x_1 \leq 0$} \\
    1 & \hbox{if $x_1 \in (0,5]$} \\
    2 & \hbox{otherwise}.
  \end{cases}
\]

\begin{figure}[h]
  \centering
  \includegraphics[width=0.7\columnwidth]{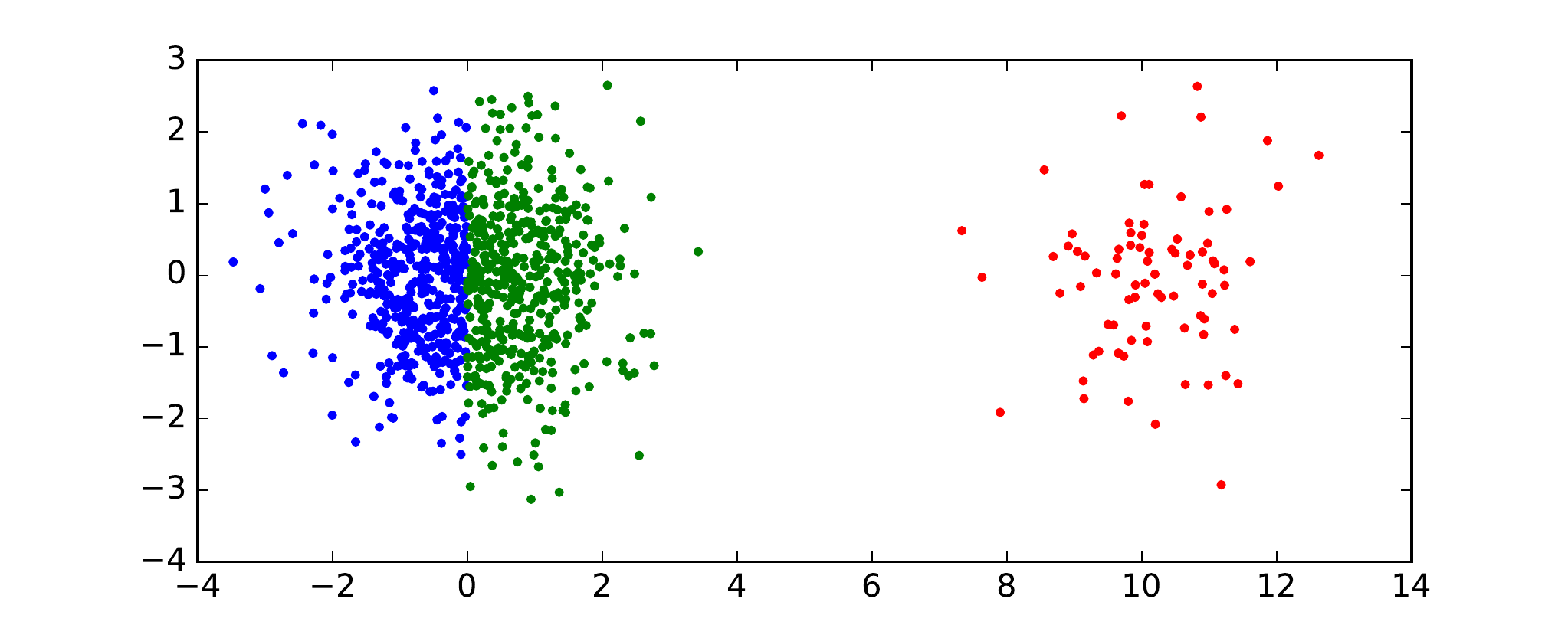}
  \caption{A dataset which has two well defined clusters that are not
    balanced in size.}
  \label{fig:badkmpp_data}
\end{figure}

The LP rounding algorithm requires the replication parameter $p$ to be
at least two, so we run both algorithms with $p=2$ and $k=4$, in which
case the above intuitions still hold but now each of the clusters is
assigned two centers instead of one. Figure~\ref{fig:badkmpp_data}
shows a sample drawn from this distribution labeled according to the
above target function.

We evaluate the LP rounding algorithm, $k$-means++, and an algorithm
that optimally solves the clustering problem by solving the
corresponding integer program (this is only feasible for very small
input sizes). Other aspects of the experiment are identical to the
large scale learning experiments described in
Section~\ref{sec:expt}. In all cases, we set the parameters to be
$k = 4$, $p = 2$. The training size is 10,000, the testing size 1,000,
and the clustering is performed on a sample of size 200. Running
$k$-means++ results in accuracy {\bf 0.768} (averaged over 500 runs),
using the LP rounding algorithm results in accuracy {\bf 0.988}, and
exactly solving the capacitated clustering IP results in accuracy {\bf
  0.988}. Since the LP rounding and IP based algorithms are
deterministic, we did not run the experiment multiple times. The
accuracy of $k$-means++ does not significantly increase even if we
cluster the entire sample of training 10,000 points rather than a
small sample of size $200$. This experiment shows that, while the
$k$-means++ heuristic works effectively for many real-world datasets,
it is possible to construct datasets where its performance is lower
than the LP rounding algorithm.

With a modification to this distribution, but with the same intuition,
we can prove there is a point set in which the LP rounding algorithm
outperforms $k$-means++.

\begin{lemma}
  \label{lem:outperform}
  There exists a distribution such that with constant probability,
  classification using the $k$-median LP rounding algorithm as a
  dispatcher outperforms the $k$-means++ algorithm with a balancing
  heuristic.
\end{lemma}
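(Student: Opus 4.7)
The plan is to formalize the intuition already developed in the preceding discussion of the two-Gaussian dataset. I would take a distribution $\mu$ on $\reals^2$ that is a mixture of two well-separated components: a ``heavy'' component $\mu_1$ supported near the origin with mixing weight $1-\beta$ and a ``light'' component $\mu_2$ supported near a far point $z$ with mixing weight $\beta$, where $\beta < \ell$ is chosen strictly below the lower capacity bound. The target labeling $f$ is defined so that on the support of $\mu_1$ the Bayes-optimal rule is one linear separator (e.g.\ a vertical split) and on the support of $\mu_2$ it is a different linear separator that globally combine into a rule with no low-error single linear classifier (an XOR-like construction, as sketched above the lemma).

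The first step is a concentration argument. For a sample $S$ of size $n$ drawn iid from $\mu$, a Chernoff bound shows that with probability at least $1-e^{-\Omega(n)}$, the number of points in $\mu_2$'s support lies in $(\tfrac{\beta n}{2}, 2\beta n)$, hence strictly less than $\ell n$. Thus any balanced $k$-clustering of $S$ (with $k=2$ and $p=1$, say) must assign some mass from $\mu_1$ to the small cluster, precisely the regime where vanilla $k$-means fails.

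Next I would analyze the $k$-means++ output. By the standard seeding analysis (cf. Arthur--Vassilvitskii) on a two-component well-separated instance, with constant probability the two sampled centers land in distinct components, so the unconstrained $k$-means++ clustering is the natural $(\mu_1,\mu_2)$ partition. But this partition has the small cluster of size below $\ell n$, and the balancing heuristic of Section~\ref{sec:expt} will then merge it into the nearest remaining cluster, collapsing the partition to a single cluster containing all of $S$. Training any one-vs-all linear SVM on this single cluster therefore solves the global classification problem, whose best linear classifier has constant error by construction. In contrast, Theorem~\ref{thm:bicriteria} guarantees that the LP rounding algorithm returns a clustering that satisfies the capacity constraints up to a constant factor, so with constant probability it actually produces two nonempty clusters roughly aligned with $\mu_1$ and $\mu_2$ (possibly with a few ``stolen'' points to meet $\ell$), on each of which the Bayes-optimal rule is linear and achievable with near-zero training error. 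Combining these two facts yields a constant accuracy gap with constant probability.

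The main obstacle will be making the ``no good global linear classifier'' claim quantitatively rigorous. I expect to handle this by placing the two components so that the union of their Bayes-optimal half-planes forms a well-defined XOR pattern with margin $\gamma$, so that standard VC/margin lower bounds force any linear classifier to misclassify at least a constant fraction of $\mu$; a secondary technicality is showing the LP rounding solution does not pathologically mix the two components, which follows because any such mixing would incur $k$-median cost proportional to $\|z\|$, dominating the cost of the natural clustering, contradicting the constant-factor approximation guarantee.
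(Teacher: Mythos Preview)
Your high-level strategy is exactly the paper's: exhibit an instance where $k$-means++ seeds land in distinct components with constant probability, the small component falls below $\ell n$ so the balancing heuristic collapses everything to one cluster, while the LP rounding algorithm respects the lower bound and keeps two meaningful clusters. However, two concrete pieces of your instantiation would not go through as written.

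First, you set $p=1$, but Theorem~\ref{thm:bicriteria} only applies for $p>1$; the bicriteria guarantee you invoke for the LP rounding algorithm is not available at $p=1$. The paper handles this by taking $p=2$ and $k=4$ (so the optimal solution just duplicates the natural $2$-clustering), and you would need the same workaround.

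Second, and more substantively, your XOR labeling is incompatible with the ``stolen points'' phenomenon you yourself identify. Because $\mu_2$ has mass $\beta<\ell$, any balanced clustering---including the LP rounding output---must place at least $(\ell-\beta)n$ points of $\mu_1$ into the small cluster. Those stolen points carry $\mu_1$'s label pattern, which in an XOR construction is the \emph{opposite} of $\mu_2$'s along the separating direction; the small cluster then itself contains an XOR and is not linearly separable, so you cannot claim near-zero per-cluster error for the LP clustering. The construction ``above the lemma'' that you cite is not XOR: it is a three-class threshold labeling $f(x)\in\{-1,1,2\}$ according to whether $x_1\le 0$, $x_1\in(0,5]$, or $x_1>5$. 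The crucial feature is that the stolen points (the rightmost points of the heavy component) all carry label $1$, so the small cluster has only labels $\{1,2\}$, separable by a single threshold. The paper's actual proof of the lemma uses an even cleaner discrete version: three collinear groups $A,B,C$ with labels $-1,0,1$, sized so that the optimal $k$-median clustering is $\{A\}$, $\{B\cup C\}$, each containing at most two labels on a line. The $11$-approximation guarantee is then tight enough to force the LP centers to sit exactly at $A$ and $B$, eliminating any need for an asymptotic ``mixing costs too much'' argument.
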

\begin{proof}
  The point set is as follows. All points are on a line, and there are
  three groups of points, group $A$, $B$, and $C$. Two points in the
  same group are at distance zero.  Group $A$ is distance 1 from group
  $B$, and group $B$ is distance $10$. Group $A$ is distance $101$
  from group $C$. Group $A$ contains 112 points, group $B$ contains
  111 points, and group $C$ contains 1 point. Set $k=2$, $n\ell=112$,
  and $L=1$. For now, $p=1$.  All points in $A$, $B$, and $C$ are
  labeled $-1$, 0, and 1, respectively.

  Then the optimal $k$-median 2-means cluster is to put centers at $A$
  and $B$. Then the points at $A$ and $B$ pay zero, and the points at
  $C$ pay $10\cdot 10=100$ to connect to the center at $B$. So the
  total $k$-median score is 100.

  The LP rounding algorithm is guaranteed to output an
  11-approximation, so it must output a 2-clustering with score
  $\leq 110$ (it only works when $p\geq 2$, but we will modify the
  proof for $p\geq 1$ at the end).  Note, the centers must stay at $A$
  and $B$, because if a center is not at (wlog) $A$, then the center
  must be distance at least 1 away from $A$, which means the score of
  this clustering is $\geq 111$.  Now we know the LP algorithm is
  guaranteed to output centers at $A$ and $B$.  Then the clusters must
  be $A$ and $B\cup C$, because this assignment minimizes the flow in
  the graph the LP algorithm uses to assign points to clusters.
  Therefore, each cluster has $\leq 2$ labels, which can easily be
  classified using a linear separator.

  Now we consider the $k$-means++ algorithm.  First we calculate the
  probability that a center is placed in group $C$.  Note, there is
  zero probability that both centers are in the same group.  So this
  probability is the complement of the probability the centers fall in
  $A$ and $B$.  By a simple calculation, the probability is
  $1-\frac{112}{224}\cdot\frac{1\cdot 111}{1\cdot 111+11\cdot 1}-
  \frac{111}{224}\cdot\frac{1\cdot 112}{1\cdot 112+10\cdot 1}=.09016$.
  However, if a center is placed in group $C$, then the clusters will
  be $A\cup B$ and $C$, which means the $k$-means++ balancing
  heuristic will combine both clusters into a single cluster.  Then,
  there are 3 groups of points with different labels on a line, so a
  linear separator must classify at least one point incorrectly.
\end{proof}

The proof of Lemma \ref{lem:outperform} can be modified for
$k$-means. The probability that $k$-means outputs a bad clustering is
inversely proportional to the approximation factor of the LP
algorithm, but since the LP algorithm has constant-factor
approximation ratios, the probability is constant.  The proof can also
be modified for $p=2$ similar to the explanation in the experimental
evaluation.  Set $p=2$ and $k=4$, and the problem stays the same,
since the optimal clustering puts two centers at $A$ and two centers
at $B$.

\end{document}